\newtheorem{set}{set}[section]
\newtheorem{theorem}[set]{Theorem}
\newtheorem{corollary}[set]{Corollary}
\newtheorem{definition}[set]{Definition}
\newtheorem{lemma}[set]{Lemma}
\newtheorem{proposition}[set]{Proposition}
\newtheorem{remark}[set]{Remark}
\DeclareMathOperator*{\argmin}{arg\,min}
\newcommand{\entmap}[2]{T^\epsilon_{{#1}\rightarrow {#2}}}
\begin{document}


\twocolumn[

\aistatstitle{Synthesis and Analysis of Data as Probability Measures With Entropy-Regularized Optimal Transport}

\aistatsauthor{Brendan Mallery$^{(\dagger)}$, James M. Murphy$^{(\dagger,*)}$, Shuchin Aeron $^{(\square,*)}$ }

\aistatsaddress{Tufts University}]

\begin{abstract}
We consider synthesis and analysis of probability measures using the entropy-regularized Wasserstein-2 cost and its unbiased version, the Sinkhorn divergence. The synthesis problem consists of computing the barycenter, with respect to these costs, of reference measures given a set of coefficients belonging to the simplex. The analysis problem consists of finding the coefficients for the closest barycenter in the Wasserstein-2 distance to a given measure. Under the weakest assumptions on the measures thus far in the literature, we compute the derivative of the entropy-regularized Wasserstein-2 cost.  We leverage this to establish a characterization of barycenters with respect to the entropy-regularized Wasserstein-2 cost as solutions that correspond to a fixed point of an average of the entropy-regularized displacement maps.  This characterization yields a finite-dimensional, convex, quadratic program for solving the analysis problem when the measure being analyzed is a barycenter with respect to the entropy-regularized Wasserstein-2 cost. We show that these coefficients, as well as the value of the barycenter functional, can be estimated from samples with dimension-independent rates of convergence, and that barycentric coefficients are stable with respect to perturbations in the Wasserstein-2 metric. We employ the barycentric coefficients as features for classification of corrupted point cloud data, and show that compared to neural network baselines, our approach is more efficient in small training data regimes.
\end{abstract}

\section{INTRODUCTION}

 Modeling data as probability measures is an emerging theme with applications in signal analysis \citep{cazelles2020wasserstein, cheng2023non}, image processing \citep{rabin2012wasserstein,nadeem2020multimarginal,simon2020barycenters}, natural language processing (NLP) \citep{xu2018distilled} and beyond. In this setting, each data point is treated as a probability distribution over an appropriate domain. We seek to develop useful data processing models and methods---akin to those widely used in statistical signal processing, such as principal component analysis (PCA) \citep{pearson1901liii}, archetypal analysis \citep{cutler1994archetypal}, and non-negative matrix factorization (NMF) \citep{lee2000algorithms}---that allow for \emph{synthesis} of new distributions from a canonical reference set of distributions, as well as \emph{analysis} of a given distribution in terms of these references.

{Wasserstein-2 barycenters} provide a natural method to perform both synthesis and analysis of data using the methods of computational optimal transport (OT) \citep{peyre2019computational}. One can synthesize a new distribution from a given set of reference measures with their barycenter \citep{dognin2019wasserstein,rabin2012wasserstein,chzhen2020fair}, and analyze a measure by assigning it barycentric coefficients with respect to a set of reference measures \citep{bonneel2016wasserstein, schmitz2018wasserstein, werenski2022measure, mueller2023geometrically}.  Wasserstein-2 barycenters can geometrically interpolate between the reference measures, which allows one to model data outside of their support that is useful for image processing \citep{simon2020barycenters} and time-series modeling \citep{cheng2021dynamical, cheng2023non}. This is in contrast to barycenters defined via maximum mean discrepancy distances, which are linear mixtures of the reference measures \citep{cohen2020estimating}, or via information theoretic divergences such as relative entropy \citep{cover1999elements}, which require overlapping support of the underlying measures. 

However, computation of optimal transport costs is not parallelizable and can incur computational complexity that is cubic in the number of samples. Furthermore, synthesizing even an approximate Wasserstein-2 barycenter to a fixed degree of accuracy in arbitrary dimensions cannot be done in polynomial time \citep{altschuler2022wasserstein}, and known rates of estimation of both synthesis and analysis scale exponentially in the ambient dimension $d$ \citep{werenski2022measure} due to the curse of dimensionality that is inherent to estimating the Wasserstein-2 costs and maps \citep{fournier2015rate, hutter2021minimax, pooladian2021entropic, deb2021rates}.

On the other hand, entropy-regularized optimal transport does not suffer from these computational and statistical issues. Fast computation via parallelization is enabled via the Sinkhorn-Knopp algorithm \citep{sinkhorn1967concerning, cuturi2013sinkhorn} and the population entropy-regularized optimal transport costs and maps can be estimated from samples with dimension-free rates of convergence \citep{mena2019,bigot2019central,pooladian2021entropic,rigollet2022sample,werenski2023estimation, masud2023multivariate, stromme2023sampling}.

Motivated by this, in this paper we consider entropy-regularized variants of Wasserstein-2 barycenters, which are defined in (\ref{eqn:syntheqn}) and (\ref{eqn:analysiseqn}). We are particularly concerned with analysis of data via (regularized) barycentric coefficients, which has until recently been relatively under-studied in comparison to synthesis. 

The rest of the paper is organized as follows. In Section \ref{subsec:background_notation} we present the necessary notation and background. We then detail our main contributions and related work in Section \ref{subsec:contributions}. The main technical results are established in Sections \ref{sec:analytic properties}, \ref{sec:sample_complexity_functional}, and \ref{sec:analysis_problem} along with numerical results. Applications to point cloud classification are presented in Section \ref{sec:point-cloud-classification}\footnote{Code to recreate experiments is available at https://github.com/brendanmallery9/Entropic-Barycenters}. We defer our proofs and additional experiments to the Appendix. 

\subsection{Notation and Background}
\label{subsec:background_notation}

\noindent\textbf{Notation:} $\Omega$ is a closed subset of $\mathbb{R}^d$ and $\mathcal{P}(\Omega)$ is the set of probability measures with support contained in $\Omega$. $|\Omega|$ is the diameter of $\Omega$ with respect to the Euclidean distance. For $\mu\in \mathcal{P}(\Omega)$, its expectation is denoted by $\mathbb{E}(\mu)$, its variance is denoted by $\texttt{Var}(\mu)$, and its second moment is denoted by $M_{2}(\mu)$. $\mathcal{P}_2(\Omega)$ denotes the subset of $\mu\in \mathcal{P}(\Omega)$ with finite $M_2(\mu)$. $\mathcal{G}(\Omega)$ denotes the set of subgaussian probability measures with support contained in $\Omega$, and $\mathcal{G}_\sigma(\Omega)$ is the set of subgaussian probability measures with support in $\Omega$ with subgaussian constant $\leq \sigma$. $\mathcal{P}^n(\Omega)$ denotes the subset of $\mathcal{P}(\Omega)$ supported on $n$ or fewer points. We let $\delta_{z}$ denote the Dirac probability measure supported at $z\in\mathbb{R}^{d}$. For a probability measure $\mu$, $\hat{\mu}^n$ is the random (empirical) measure defined by $\hat{\mu}^n=\frac{1}{n}\sum_{i=1}^n \delta_{X_i}$, where $X_1,X_2,...,X_n$ are i.i.d. samples from $\mu$. The ball of radius (with respect to the Euclidean distance) $t$ centered at $x\in \mathbb{R}^d$ is denoted by $B_t(x).$ For a function $f:\mathbb{R}^d\rightarrow \mathbb{R}$ and a finite, signed measure $\mu$ on $\mathbb{R}^d$, $\langle f, \mu\rangle :=\int f d\mu$. For any $\mu \ll \nu$, $\frac{d \mu}{d \nu}$ denotes the Radon-Nikodym derivative of $\mu$ with respect to $\nu$, i.e., a measurable function such that $\int_A \frac{d\mu}{d\nu} d\nu = \mu(A)$, for any measurable $A$. 
 
For $\mu,\nu\in\mathcal{P}_2(\Omega)$, the \emph{Wasserstein-2 distance} \citep{villani2009optimal} between $\mu$ and $\nu$ is:
\begin{equation*}
OT_2(\mu,\nu):=\inf_{\zeta\in \Pi(\mu,\nu)} \sqrt{\int \frac{1}{2}\|x-y\|^2d\zeta(x,y)},
\end{equation*} where $\Pi(\mu,\nu)\subset\mathcal{P}_{2}(\Omega\times\Omega)$ is the set of couplings with marginals $\mu$ and $\nu$. For any $\epsilon>0$, the \emph{entropy-regularized, or entropy-regularized Wasserstein-2 cost} is \citep{nutz2021introduction}: 
\begin{align}
OT_2^\epsilon(\mu,\nu):=&\inf_{\zeta\in\Pi(\mu,\nu)}\int \frac{1}{2}\|x-y\|^2 d\zeta(x,y)\label{eqn:entot} \\
+&\epsilon KL(\zeta\|\mu\otimes\nu)\notag,
\end{align}
where $KL(\zeta\|\mu\otimes\nu)$ is the relative entropy between $\zeta$ and $\mu\otimes \nu$ \citep{cover1999elements}. For any $\epsilon>0$, we define the \emph{Sinkhorn divergence} \citep{genevay2018learning}: $\mathcal{S}_2^\epsilon(\mu,\nu)=OT_2^\epsilon(\mu,\nu)-\frac{1}{2}OT_2^\epsilon(\mu,\mu)-\frac{1}{2}OT_2^\epsilon(\nu,\nu).$ 

The entropy-regularized cost admits a dual formulation given by:
\begin{align}
& OT^\epsilon_2(\mu,\nu)\notag \\ & =\sup_{(f,g)\in L^1(\mu)\times L^1(\nu)}  \int f(x) d\mu(x) +\int g(y) d\nu(y) \notag\\
& -\epsilon\int \left(e^{{-\frac{1}{\epsilon}\left(\frac{1}{2}\|x-y\|^2-f(x)-g(y)\right)}}-1\right)d\mu(x)d\nu(y) \label{eqn:dualform}.
\end{align} 
The optimizers are a pair of functions $(f_{\mu\rightarrow \nu}^\epsilon,g^\epsilon_{\mu\rightarrow \nu})$, unique $\mu$ (resp. $\nu$)-a.e. up to translation $(f_{\mu\rightarrow \nu}^\epsilon,g_{\mu\rightarrow \nu}^\epsilon)\mapsto (f_{\mu\rightarrow \nu}^\epsilon-C,g_{\mu\rightarrow \nu}^\epsilon+C), \,\, C\in\mathbb{R}$, which we refer to as \emph{entropic potentials}. They satisfy:
\begin{align}\label{eqn:fduality}
&f_{\mu\rightarrow\nu}^\epsilon(x)=-\epsilon \log \left(\int e^{-\frac{1}{\epsilon}\left(\frac{1}{2}\|x-y\|^2 -g^\epsilon_{\mu\rightarrow\nu}(y)\right)}d\nu(y)\right),
\end{align}
\begin{align}\label{eqn:gduality}
& g_{\mu\rightarrow\nu}^\epsilon(y)=-\epsilon \log \left(\int e^{-\frac{1}{\epsilon}\left(\frac{1}{2}\|x-y\|^2 -f^\epsilon_{\mu\rightarrow\nu}(x)\right)}d\mu(x)\right),
\end{align}
for $\mu$-a.e. $x\in \mathbb{R}^d$ and $\nu$-a.e. $y\in \mathbb{R}^d$ \citep{nutz2021introduction}.
 These equations allow one to define canonical extensions of $f^\epsilon_{\mu\rightarrow \nu}$ and $g^\epsilon_{\mu\rightarrow \nu}$ to \textit{all} $x,y\in \mathbb{R}^d$ such that \eqref{eqn:fduality} and \eqref{eqn:gduality} are satisfied for all $x,y\in \mathbb{R}^d$. We refer to these as \emph{extended entropic potentials}. These extensions are unique for a fixed pair of entropic potentials $f^\epsilon_{\mu\rightarrow \nu}$ and $g^\epsilon_{\mu\rightarrow \nu}$. 
 
 Henceforth, we will only deal with extended entropic potentials. The extensions of $f^\epsilon_{\mu\rightarrow \nu}$ and $g^\epsilon_{\mu\rightarrow \nu}$ are a priori only finite $\mu$ and $\nu$-almost everywhere respectively, but if $\mu,\nu\in\mathcal{G}(\mathbb{R}^{d})$ then $f_{\mu\rightarrow\nu}^{\epsilon}$ and $g_{\mu\rightarrow\nu}^{\epsilon}$ are finite and smooth everywhere \citep{mena2019}. However it can be shown that $f_{\mu\rightarrow \nu}^\epsilon$ is finite and smooth everywhere if $\nu$ is subgaussian and $\mu$ is in $\mathcal{P}_2(\mathbb{R}^d)$, i.e., only subgaussianity of the ``target'' measure is required for one of the potentials to be smooth and finite everywhere (see Proposition \ref{prop:sinkhorn_differentiable}) 

Let $\mathcal{V}=\{\nu_j\}_{j=1}^m\subset \mathcal{P}_{2}(\mathbb{R}^d)$. Let $\allowdisplaybreaks\lambda$ denote the vector of coefficients belonging to the $(m-1)$-dimensional simplex $\Delta^m = \{(\lambda_1,\lambda_2,...,\lambda_m)\in \mathbb{R}^m : \sum_{i=1}^{m} \lambda_i = 1, \lambda_i\geq 0\}$.

\begin{definition}\label{def:barycenter functionals}
Let $\mu\in\mathcal{P}_2(\mathbb{R}^d)$ and let $\epsilon>0$. 
\begin{itemize}
    \item The \emph{$\epsilon$-entropic barycenter functional} for $(\lambda,\mathcal{V})$ is defined as $F_{\lambda,\mathcal{V}}^\epsilon(\mu):=\sum_{j=1}^m\lambda_jOT^\epsilon_2(\mu,\nu_j)$.
    \item The \emph{$\epsilon$-Sinkhorn barycenter functional} for $(\lambda,\mathcal{V})$ is defined as: $S_{\lambda,\mathcal{V}}^\epsilon(\mu):=\sum_{j=1}^m\lambda_j\mathcal{S}^\epsilon_2(\mu,\nu_j) .$
\end{itemize} 
\end{definition}
We write $\mathcal{F}^\epsilon_{\lambda,\mathcal{V}}$ to refer to either $F^\epsilon_{\lambda,\mathcal{V}}$ or $S^\epsilon_{\lambda,\mathcal{V}}$. We note that $F^\epsilon_{\lambda,\mathcal{V}}$ is convex on $\mathcal{P}_2(\mathbb{R}^d)$, and $S^\epsilon_{\lambda,\mathcal{V}}$ is convex when restricted to the set of subgaussian measures (Proposition 4 in \citep{janati2020debiased}).  We now define the synthesis and analysis problems. 

 Given $\lambda$ and $\mathcal{V}$, the \emph{synthesis} problem refers to solving for:
\begin{equation} \argmin_{\mu\in \mathcal{P}_2(\Omega)}\mathcal{F}^\epsilon_{\lambda,\mathcal{V}}(\mu).\label{eqn:syntheqn}\end{equation} 
Given $\mu$ and $\mathcal{V}$, the \emph{analysis} problem refers to solving for: \begin{equation} \argmin_{\lambda\in \Delta^m}OT_2(\mu,\rho_\lambda),\label{eqn:analysiseqn}\end{equation} where $\rho_\lambda=\argmin_{\rho\in \mathcal{P}_2(\Omega)}\mathcal{F}^\epsilon_{\lambda,\mathcal{V}}(\rho).$ 
Solving (\ref{eqn:syntheqn}) produces a measure that can be viewed as a 
nonlinear combination of the reference measures $\mathcal{V}$. Solving (\ref{eqn:analysiseqn}) produces coefficients $\lambda$ that parameterize the ``projection'' of $\mu$ onto the set of solutions to (\ref{eqn:syntheqn}), which we refer to as the barycentric coefficients of $\mu$ with respect to the reference measures. The synthesis and analysis problems are in some sense inverse to one another when $\mu$ in (\ref{eqn:analysiseqn}) is restricted to the set of minimizers of $\mathcal{F}^\epsilon_{\lambda_*,\mathcal{V}}$, for some $\lambda_*\in \Delta^m$. Indeed, if $\mu\in \argmin_{\rho\in \mathcal{P}_2(\Omega)}\mathcal{F}^\epsilon_{\lambda_*,\mathcal{V}}(\rho)$, then $\min_{\lambda\in \Delta^m}OT_2(\mu,\rho_\lambda)=OT_2(\mu,\rho_{\lambda_*})=0$. In other words, the analysis of a $(\lambda_*,\mathcal{V})$-barycenter recovers $\lambda_*$. In this paper, we show that for this special case it is possible to reduce (\ref{eqn:analysiseqn}) to a convex, quadratic program (see Section \ref{sec:analysis_problem}).

We end this section with definitions from functional calculus.
\begin{definition}
\label{def:derivative_def}
Let $\mathcal{U}\subseteq \mathcal{P}_2(\mathbb{R}^d)$ be convex. A functional $\mathcal{F}:\mathcal{U}\rightarrow\mathbb{R}$ admits a \emph{derivative} at $\mu\in \mathcal{U}$ (is differentiable at $\mu$) if there exists a continuous function $\delta_\mu\mathcal{F}(\mu):\mathbb{R}^d\rightarrow \mathbb{R}$, such that for any displacement $\chi:=\rho-\mu$, where $\rho\in \mathcal{U}$, $\delta_\mu\mathcal{F}(\mu)$ is $\chi$-integrable and satisfies:
\begin{equation}\label{eqn:FirstVariation}
\lim_{t\rightarrow 0^{+}} \frac{\mathcal{F}(\mu+t\chi)-\mathcal{F}(\mu)}{t}=\int \delta_\mu\mathcal{F}(\mu) d\chi.
\end{equation}
A functional which is differentiable for all $\mu\in \mathcal{U}$ is said to be \emph{differentiable on $\mathcal{U}$}. 
\end{definition}
When the argument of $\mathcal{F}$ is clear, we will write $\delta_\mu\mathcal{F}(\mu)$ as $\delta \mathcal{F}(\mu)$. Key to our investigation is the notion of critical points of functionals of probability measures.
\begin{definition}\label{defn:critical_points}
Let $\mathcal{U}$ be either $\mathcal{P}_2(\mathbb{R}^d)$ or $\mathcal{G}(\mathbb{R}^d)$, and let $\mathcal{F}$ be a functional on $\mathcal{U}$ which is differentiable at $\mu\in \mathcal{U}$. We say that $\mu$ is a \emph{critical point} of $\mathcal{F}$ on $\mathcal{U}$ if $\nabla\delta \mathcal{F}(\mu)(x)=0$ for $\mu$-almost every $x$. 
\end{definition}
We will simply say that $\mu$ is a critical point of $\mathcal{F}$ if the domain of $\mathcal{F}$ is clear. Importantly, minimizers of a differentiable functional on $\mathcal{U}$ are critical points (see Corollary \ref{cor:mins_are_fixed}). We remark that under sufficient regularity assumptions on $\mathcal{F}$ (e.g., as made in \citep{chizat2022mean}), $\nabla\delta\mathcal{F}(\mu)$ corresponds to what is referred to as the Wasserstein-2 gradient of the functional $\mathcal{F}$ at $\mu$ \citep{ambrosio2005gradient}.

\subsection{Main Contributions}
\label{subsec:contributions}

\subsubsection{Critical Points and Optimality Criteria for $\mathcal{F}^\epsilon_{\lambda,\mathcal{V}}$} 

In Section \ref{sec:analytic properties}, we rigorously establish analytic properties of $\mathcal{F}^\epsilon_{\lambda,\mathcal{V}}$ for subgaussian $\mathcal{V}$. First, we establish existence of minima for $\min_{\mu\in \mathcal{P}_2(\Omega)}F^\epsilon_{\lambda,\mathcal{V}}$ and $\min_{\mu\in \mathcal{G}_\sigma(\Omega)}S^\epsilon_{\lambda,\mathcal{V}}$ for fixed $\sigma$. We then establish Theorem \ref{thm:entot_differentiable}, in which we prove that the entropic cost $OT_2^\epsilon(\mu,\nu)$ (as a function of $\mu$) is differentiable when $\nu$ is subgaussian. This extends Proposition 3 in \citep{janati2020debiased}, which establishes this when both $\mu$ and $\nu$ are subgaussian, and to our knowledge provides the most general version of this result available in the literature. From this we obtain a first-order characterization of critical points of $\mathcal{F}^\epsilon_{\lambda,\mathcal{V}}$. We show in Proposition \ref{prop:subgauss} that when $\mathcal{V}$ is subgaussian, then any critical point is subgaussian. This generalizes results in \citep{yang2024estimating} for the case of $\mathcal{V}=\{\nu\}$. Finally, we obtain a sufficient criteria to ensure a critical point of $\mathcal{F}^\epsilon_{\lambda,\mathcal{V}}$ is a minimizer in Corollary \ref{cor:optimality}. 

\subsubsection{Sample Complexity for Synthesis and Analysis} 
In Sections \ref{sec:sample_complexity_functional} and \ref{sec:analysis_problem} we investigate the sample complexity of synthesis and analysis for $\mathcal{F}^\epsilon_{\lambda,\mathcal{V}}$. 
 
 Our main result for synthesis is Theorem \ref{thm:synthsampcomplex}, which establishes that estimating the optimal value of the barycenter functionals using samples from the reference measures can be achieved with \emph{dimension-free} error rates. Our proof is a generalization of the arguments in \citep{yang2024estimating} where this result is proved for $m=1$, and contrasts with the Wasserstein-2 case where rates are typically exponential in the dimension (e.g., Theorem 2 in \citep{chizat2020faster}). A similar result for measures supported on a bounded set was achieved in \citep{luise2019sinkhorn}, though they are also able to guarantee bounds with high probability.  We then discuss how a potentially stronger sample complexity result fails to be true in the case of the Sinkhorn divergence barycenter.
 
Next, we consider the analysis problem. In Proposition \ref{prop:analysis_grad}, we establish that when $\mu$ is an $\epsilon$-entropic (resp. Sinkhorn) barycenter for a reference set of measures $\mathcal{V}$, then the analysis problem (\ref{eqn:analysiseqn}) can be solved via a convex, quadratic program. In Theorem \ref{thm:coeff_theorem}, we use this characterization to show that under weak assumptions, solving the analysis problem for an $\epsilon$-entropic barycenter $\mu$ is no harder than estimating $\nabla\delta\mathcal{F}^\epsilon_{\lambda,\mathcal{V}}(\mu)$ from samples. We apply this in two important cases. First, if $\Omega$ is compact, $\lambda$ may be estimated for $\mathcal{F}^\epsilon_{\lambda,\mathcal{V}}$ at a rate proportional to $n^{-1/2}$ via results obtained in \citep{rigollet2022sample}. Second, if $\Omega=\mathbb{R}^d$ and the reference measures are strongly log-concave with mean zero, then $\lambda$ may be estimated for $\mathcal{F}^\epsilon_{\lambda,\mathcal{V}}$ at a rate proportional to $n^{-1/3}$, via results from \citep{werenski2023estimation}. Finally, we show that the barycentric coefficients are stable (with respect to the Wasserstein-2 distance) when the analyzed measure is perturbed (Proposition \ref{prop:stab_analysis}). We numerically verify the rates in Theorem \ref{thm:coeff_theorem} in Section \ref{sec:numericalexp} with several choices of reference measures, observing even faster convergence rates than established in Theorem \ref{thm:coeff_theorem}.  We provide an additional demonstration of the synthesis and analysis pipeline with probability measures obtained from MNIST digits \citep{lecun1998mnist} in Appendix \ref{sec:synth_and_analysis_MNIST}.

\subsubsection{Application to Point Cloud Classification with Barycentric coefficients} 

We demonstrate the utility of our approach by using the estimated barycentric coefficients as features for point cloud classification. Our method is described in Algorithm \ref{alg:point_cloud_classification}, that makes use of Proposition \ref{prop:analysis_grad}. In our experiments (detailed in Section \ref{sec:point-cloud-classification}), our method is shown to achieve high accuracy using a small set of reference measures from each class, even when the test data is corrupted with various forms of noise and occlusions. We compare our method to PointNet \citep{qi2017pointnet}, a well-known neural network architecture designed for point cloud classification. We observe that our method is able to outperform PointNet even when PointNet is trained on a much larger data set than is required by our method, suggesting the effectiveness of our approach in low-data settings. We compare our method to the analogous classification scheme using the unregularized barycenter functional (in which case, our method agrees with the method studied in \citep{gunsilius2024tangential}) and the doubly-regularized barycenter functional introduced in \citep{chizat2023doubly}. We provide an application to point cloud completion in Appendix \ref{sec:completion}.

\subsection{Related Work}\label{subsec:relatedwork}

\noindent Synthesis for $F^\epsilon_{\lambda,\mathcal{V}}$ was first studied in \citep{cuturi2014fast} as a computationally efficient surrogate for Wasserstein-2 barycenters. Synthesis for the functional $S^\epsilon_{\lambda,\mathcal{V}}$ was first considered in the fixed support setting in \citep{luise2018differential}. These barycenters were studied at length in \citep{janati2020debiased} under a subgaussian assumption on the reference measures. Several free support synthesis algorithms have been proposed \citep{luise2019sinkhorn},\citep{shen2020sinkhorn}. Recent work of \citep{chizat2023doubly} introduced \emph{doubly-regularized} entropic barycenters, which have good regularity and sample complexity guarantees \citep{vavskevivcius2023computational}. 

The analysis problem has received comparatively less attention. The earliest work in this setting is \citep{bonneel2016wasserstein}, where barycentric coefficients are applied to perform regression with histogram data for computer vision applications. In \citep{schmitz2018wasserstein} barycentric coefficients are applied to dictionary learning for histograms, and these ideas are extended in \citep{mueller2023geometrically} where the focus is on sparse representations. In \citep{werenski2022measure} the analysis problem using unregularized Wasserstein-2 barycenters is considered with finite sample guarantees but under strong regularity assumptions on measures. In \citep{gunsilius2024tangential} similar ideas utilizing a model based on the tangential structure of the Wasserstein-2 space are pursued with applications to causal inference, albeit without finite sample guarantees. In \citep{janati2020debiased}, barycentric coefficients for the Sinkhorn barycenter functional are used to classify image data, modeled as probability measures on a fixed support. To our knowledge, our work is the first to consider analysis with regularized barycenter functionals in the free support setting under mild assumptions on the measures, with finite sample guarantees as well as provable stability guarantees under measure perturbations.

\section{ANALYTIC PROPERTIES OF $\mathcal{F}^\epsilon_{\lambda,\mathcal{V}}$}\label{sec:analytic properties}

We begin by establishing existence results for the functionals $\mathcal{F}^\epsilon_{\lambda,\mathcal{V}}$:

\begin{proposition}\label{prop:existence}
Let $\mathcal{V}=\{\nu_{j}\}_{j=1}^{m}\subset\mathcal{P}_{2}(\Omega)$. Then for any $\epsilon>0$, $F^\epsilon_{\lambda,\mathcal{V}}$ admits a minimizer over $\mathcal{P}_2(\Omega)$, and for any $\sigma>0$, $S^\epsilon_{\lambda,\mathcal{V}}$ admits a minimizer over $\mathcal{G}_\sigma(\Omega)$. If $\Omega$ is bounded, then the minimizer of $S^\epsilon_{\lambda,\mathcal{V}}$ is unique. 
\end{proposition}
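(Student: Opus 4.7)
The plan is to apply the direct method of the calculus of variations for both existence claims, and to upgrade convexity to strict convexity for the uniqueness claim. For $F^\epsilon_{\lambda,\mathcal{V}}$ on $\mathcal{P}_2(\Omega)$, observe that $F^\epsilon_{\lambda,\mathcal{V}}\geq 0$ since both summands in \eqref{eqn:entot} are non-negative, and that evaluating at $\bar\nu:=\sum_j \lambda_j\nu_j\in\mathcal{P}_2(\Omega)$ shows the infimum is finite. I would extract tightness of a minimizing sequence $(\mu_k)$ from coercivity: since the entropic penalty in \eqref{eqn:entot} is non-negative, $OT_2^\epsilon(\mu_k,\nu_j)\geq OT_2(\mu_k,\nu_j)^2$, and the triangle inequality for $OT_2$ bounds $M_2(\mu_k)$ in terms of $F^\epsilon_{\lambda,\mathcal{V}}(\mu_k)$ and the second moments of the $\nu_j$. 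Chebyshev and Prokhorov then give a weak subsequential limit $\mu^*\in\mathcal{P}_2(\Omega)$ (using closedness of $\Omega$), and the standard weak lower semi-continuity of $\mu\mapsto OT_2^\epsilon(\mu,\nu_j)$, which can be read off the dual \eqref{eqn:dualform} as a supremum of weakly continuous linear functionals minus a weakly upper semi-continuous penalty, implies $\mu^*$ is a minimizer.

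For $S^\epsilon_{\lambda,\mathcal{V}}$ on $\mathcal{G}_\sigma(\Omega)$, the domain itself is weakly sequentially compact: subgaussianity with constant $\leq \sigma$ gives uniform tail bounds and hence tightness, and the subgaussian bound is preserved under weak limits via Fatou applied to the moment-generating function. The main obstacle is lower semi-continuity of $S^\epsilon_{\lambda,\mathcal{V}}$: writing
\begin{equation*}
S^\epsilon_{\lambda,\mathcal{V}}(\mu)=\sum_j \lambda_j OT_2^\epsilon(\mu,\nu_j)-\tfrac{1}{2}OT_2^\epsilon(\mu,\mu)+C,
\end{equation*}
with $C$ independent of $\mu$, the cross terms are lower semi-continuous but the self-coupling term appears with the opposite sign and is a priori only upper semi-continuous. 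The plan is to upgrade $\mu\mapsto OT_2^\epsilon(\mu,\mu)$ to a continuous map on $\mathcal{G}_\sigma(\Omega)$ using the uniform regularity of entropic potentials over subgaussian measures developed in \citep{mena2019} and \citep{janati2020debiased}, which permits passage to the weak limit in \eqref{eqn:dualform}. Combining continuity of the self-coupling term with lsc of the cross terms yields lsc of $S^\epsilon_{\lambda,\mathcal{V}}$ on $\mathcal{G}_\sigma(\Omega)$, and existence follows from the direct method on this weakly compact set.

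For uniqueness when $\Omega$ is bounded, every measure in $\mathcal{P}(\Omega)$ is subgaussian with constant depending only on $|\Omega|$, so the minimization reduces to the convex set $\mathcal{P}(\Omega)$ for $\sigma$ chosen large enough. By Proposition 4 of \citep{janati2020debiased}, $S^\epsilon_{\lambda,\mathcal{V}}$ is convex on this set; I would upgrade this to strict convexity by computing the second variation along linear interpolations $\mu_t:=(1-t)\mu_0+t\mu_1$ between distinct $\mu_0,\mu_1\in\mathcal{P}(\Omega)$ and extracting a strictly positive contribution from the entropic penalty terms in $OT_2^\epsilon$, relying on smoothness of the entropic potentials on compact supports. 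Once established, strict convexity rules out two distinct minimizers. This quantitative second-variation estimate is the principal technical hurdle.
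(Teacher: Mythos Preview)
Your approach for $F^\epsilon_{\lambda,\mathcal{V}}$ is essentially the paper's: coercivity via $OT_2^\epsilon\geq OT_2^2$, tightness, Prokhorov, and weak lower semi-continuity read off the dual.

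For $S^\epsilon_{\lambda,\mathcal{V}}$ on $\mathcal{G}_\sigma(\Omega)$ your observation that the domain is already tight (uniform second-moment bound from the paper's definition of subgaussianity) and weakly closed is correct and in fact slightly simpler than the paper's dedicated tightness lemma. Where you diverge is in handling the self-coupling term. Rather than invoking regularity of potentials, the paper uses the quantitative stability estimate of \citep{eckstein2022quantitative},
\begin{equation*}
|OT_2^\epsilon(\mu,\nu)-OT_2^\epsilon(\mu',\nu')|\leq \sqrt{2}\big(\sqrt{M_2(\mu)}+\sqrt{M_2(\mu')}+\sqrt{M_2(\nu)}+\sqrt{M_2(\nu')}\big)\sqrt{OT_2(\mu,\mu')^2+OT_2(\nu,\nu')^2},
\end{equation*}
which makes $\mu\mapsto S^\epsilon_{\lambda,\mathcal{V}}(\mu)$ Lipschitz in $OT_2$ on sets of uniformly bounded second moment. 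The paper then shows that on $\mathcal{G}_\sigma$ weak convergence forces convergence of all moments (dominated convergence against the uniform subgaussian tail), and weak plus second-moment convergence is equivalent to $OT_2$ convergence. This yields continuity of $S^\epsilon_{\lambda,\mathcal{V}}$ along the minimizing subsequence without touching potentials; your route via potential regularity should also work but is less direct.

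On uniqueness you are working harder than necessary. The strict convexity you identify as the ``principal technical hurdle'' is already known: $\mu\mapsto -\tfrac{1}{2}OT_2^\epsilon(\mu,\mu)$ is strictly convex on $\mathcal{P}(\Omega)$ for bounded $\Omega$ by Proposition~4 of \citep{feydy2019interpolating}. Since $\mu\mapsto \sum_j \lambda_j OT_2^\epsilon(\mu,\nu_j)$ is convex, $S^\epsilon_{\lambda,\mathcal{V}}$ is strictly convex on $\mathcal{P}(\Omega)$ and uniqueness is immediate---no second-variation computation is required. Note also that the strictness comes from the debiasing term $-\tfrac{1}{2}OT_2^\epsilon(\mu,\mu)$, not from the entropic penalty in the cross terms $OT_2^\epsilon(\mu,\nu_j)$, which are only convex.
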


Next, we characterize the derivatives and critical points for the functionals $F^\epsilon_{\lambda,\mathcal{V}}$ and $S^\epsilon_{\lambda,\mathcal{V}}$ when $\mathcal{V}\subset \mathcal{G}(\mathbb{R}^d)$. The following is a generalization of Proposition 3 in \citep{janati2020debiased}, which characterizes the derivatives of $OT_2^\epsilon(\mu,\nu)$ when both $\mu$ and $\nu$ are subgaussian. We relax this assumption, and show that as long as $\nu$ is subgaussian, $\mu\mapsto OT_2^\epsilon(\mu,\nu)$ admits a derivative in the sense of Definition \ref{def:derivative_def}.
\begin{theorem}\label{thm:entot_differentiable}
Let $\nu\in\mathcal{G}(\mathbb{R}^d)$. Then $OT^\epsilon_2(\cdot,\nu): \mathcal{P}_2(\mathbb{R}^d) \rightarrow \mathbb{R}$ is differentiable, with derivative $\delta_{\mu} OT^{\epsilon}_{2}(\mu,\nu)=f^\epsilon_{\mu\rightarrow \nu}$, which is unique up to an additive constant. 
\end{theorem}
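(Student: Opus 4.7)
My plan is to prove the theorem by combining the dual formulation (\ref{eqn:dualform}) with an envelope argument applied along the displacement path $\mu_t := (1-t)\mu + t\rho$, for a competitor $\rho\in\mathcal{P}_2(\mathbb{R}^d)$ and $\chi := \rho-\mu$. As a preliminary step I need the extended entropic potential $f^\epsilon_{\mu\rightarrow\nu}$ to have at most quadratic growth on all of $\mathbb{R}^d$, say $|f^\epsilon_{\mu\rightarrow\nu}(x)| \leq C(1+\|x\|^2)$. This bound would come from the log-integral representation (\ref{eqn:fduality}): inside the exponential one has $-\tfrac{1}{2}\|x-y\|^2 + g^\epsilon_{\mu\rightarrow\nu}(y)$, and the subgaussian tail of $\nu$ controls the Gaussian-in-$y$ integral together with standard at-most-quadratic growth bounds on $g^\epsilon_{\mu\rightarrow\nu}$. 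The growth bound guarantees $f^\epsilon_{\mu\rightarrow\nu}\in L^1(\rho)\cap L^1(\mu)$ and hence $\chi$-integrability, as required by Definition~\ref{def:derivative_def}.

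For the lower variation, I would plug $(f^\epsilon_{\mu\rightarrow\nu}, g^\epsilon_{\mu\rightarrow\nu})$ into the dual (\ref{eqn:dualform}) for $OT^\epsilon_2(\mu_t,\nu)$ as a feasible pair. The key simplification is that the extended potentials satisfy (\ref{eqn:fduality}) pointwise on all of $\mathbb{R}^d$, which is equivalent to $\int \exp(-\epsilon^{-1}(\tfrac{1}{2}\|x-y\|^2 - f^\epsilon_{\mu\rightarrow\nu}(x) - g^\epsilon_{\mu\rightarrow\nu}(y)))\, d\nu(y)=1$ for every $x$. Thus the regularization term in (\ref{eqn:dualform}) vanishes \emph{for any} marginal on the $x$-side, and the dual value at this fixed pair reduces to $\int f^\epsilon_{\mu\rightarrow\nu}\, d\mu_t + \int g^\epsilon_{\mu\rightarrow\nu}\, d\nu$. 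Combined with the dual equality at $\mu$, this yields $OT_2^\epsilon(\mu_t,\nu) - OT_2^\epsilon(\mu,\nu) \geq t\,\langle f^\epsilon_{\mu\rightarrow\nu},\chi\rangle$, and hence $\liminf_{t\to 0^+} t^{-1}[OT_2^\epsilon(\mu_t,\nu)-OT_2^\epsilon(\mu,\nu)] \geq \langle f^\epsilon_{\mu\rightarrow\nu},\chi\rangle$. Symmetrically, testing the dual for $OT_2^\epsilon(\mu,\nu)$ with the optimal pair $(f^\epsilon_{\mu_t\rightarrow\nu}, g^\epsilon_{\mu_t\rightarrow\nu})$ (normalized so that $\int g^\epsilon_{\mu_t\rightarrow\nu}\, d\nu = 0$) produces the matching upper bound $t^{-1}[OT_2^\epsilon(\mu_t,\nu)-OT_2^\epsilon(\mu,\nu)] \leq \langle f^\epsilon_{\mu_t\rightarrow\nu},\chi\rangle$.

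The main obstacle is then to show $\langle f^\epsilon_{\mu_t\rightarrow\nu},\chi\rangle \to \langle f^\epsilon_{\mu\rightarrow\nu},\chi\rangle$ as $t\to 0^+$. This requires a stability result for extended entropic potentials with respect to $W_2$-perturbations of the source measure: pointwise (or locally uniform) convergence $f^\epsilon_{\mu_t\rightarrow\nu}(x) \to f^\epsilon_{\mu\rightarrow\nu}(x)$, together with a \emph{uniform-in-}$t$ quadratic growth bound that permits dominated convergence against the signed measure $\chi$. The pointwise convergence I would extract from the log-integral formula (\ref{eqn:fduality}) applied to appropriately normalized converging $g$-potentials, and the uniform growth bound from the fact that $\{\mu_t\}_{t\in[0,1]}$ is $W_2$-compact and $\nu$ is subgaussian; this is the step where the growth estimates of the first paragraph must be redone with constants independent of $t$. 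Once the limit is identified, the directional derivative in Definition~\ref{def:derivative_def} exists and equals $\langle f^\epsilon_{\mu\rightarrow\nu},\chi\rangle$ for every admissible $\chi$, identifying $\delta_\mu OT^\epsilon_2(\mu,\nu) = f^\epsilon_{\mu\rightarrow\nu}$; continuity follows from the smoothness noted after (\ref{eqn:gduality}). Uniqueness up to an additive constant is immediate since $\int C\,d\chi = 0$ for any $C\in\mathbb{R}$, because $\chi$ has total mass zero.
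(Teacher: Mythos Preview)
Your envelope argument is structurally identical to the paper's: both sandwich $\Delta_t := t^{-1}(OT_2^\epsilon(\mu_t,\nu)-OT_2^\epsilon(\mu,\nu))$ between $\langle f^\epsilon_{\mu\rightarrow\nu},\chi\rangle$ and $\langle f^\epsilon_{\mu_t\rightarrow\nu},\chi\rangle$ by testing dual feasibility, use the extended Sinkhorn relation (\ref{eqn:fduality}) to make the regularization term vanish for \emph{any} $x$-marginal, and close the gap via a uniform-in-$t$ quadratic envelope (the paper's Corollary~\ref{cor:absolute_bounds}) plus dominated convergence.

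The step you defer---obtaining ``appropriately normalized converging $g$-potentials''---is precisely the technical heart of the argument, and as written your plan is circular: to prove $g^\epsilon_{\mu_t\rightarrow\nu}\to g^\epsilon_{\mu\rightarrow\nu}$ via (\ref{eqn:gduality}) you would need convergence of $f^\epsilon_{\mu_t\rightarrow\nu}$ integrated against the \emph{varying} measures $\mu_t$, which is what you are trying to establish. The paper breaks this loop by invoking the Schr\"odinger-potential stability results of \citet{nutz2023stability} (packaged as Lemmas~\ref{lem:corollary2.4nutz}--\ref{lem:theorem2.1nutz}) to first obtain convergence \emph{in $\nu$-probability} of $g^\epsilon_{\mu_t\rightarrow\nu}$ (Proposition~\ref{prop:convergence_in_prob}); the hypotheses of those lemmas are checked directly from the linear structure $\mu_t=(1-t)\mu+t\rho$ (total-variation convergence, uniform integrability via Lemma~\ref{lem:4.9 from nutz}). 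Convergence in probability is then upgraded to pointwise-everywhere convergence of $f^\epsilon_{\mu_t\rightarrow\nu}$ via (\ref{eqn:fduality}), the subgaussianity of $\nu$, and a subsequence argument (Proposition~\ref{prop:continuous_subgauss_target}). The paper also notes an alternative route via Arzel\`a--Ascoli---uniform gradient bounds on $\{f^\epsilon_{\mu_t\rightarrow\nu}\}_{t\in[0,1]}$ give local equicontinuity, hence subsequential pointwise limits---which may be closer to what you had in mind, but one must still identify the limit as $f^\epsilon_{\mu\rightarrow\nu}$ under the chosen normalization.
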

We sketch the proof, which is deferred to Section \ref{sec:proof_of_entot_diff}. We essentially follow the standard argument used in [Proposition B.1, \citep{janati2020debiased}]. The most technical step is establishing pointwise convergence of $f_{\mu+t\chi\rightarrow \nu}^\epsilon$ to $f_{\mu\rightarrow\nu}^\epsilon$ as $t\rightarrow 0^+$ for $\chi:=(\rho-\mu)$ for any $\rho\in \mathcal{P}_2(\mathbb{R}^d)$ using \emph{only} the subgaussian assumption on $\nu$, in contrast to \citep{janati2020debiased} where both measures are assumed to be subgaussian. We achieve this by using recent results from \citep{nutz2022entropic} to establish convergence in probability of the potentials, which can then be upgraded to pointwise convergence using (\ref{eqn:fduality}) and the subgaussian condition on $\nu$ that in turn implies that $|f_{\mu\rightarrow\nu}^\epsilon (x)|$ is at most of quadratic growth in $x$ when $\nu$ is subgaussian.

We remark that the proof of Theorem \ref{thm:entot_differentiable} also follows by following a strategy pioneered in \citep{mena2019} in the context of convergence of potentials associated to empirical measures. 
 Indeed, it can be shown that for each fixed $x$, $\nabla f_{\mu+t\chi\rightarrow\nu}(x)$ is uniformly bounded over $t$.  Hence, any family of potentials $\{f_{\mu+t\chi\rightarrow\nu}\}_{t\in[0,1]}$ considered in the context of Theorem \ref{thm:entot_differentiable} is locally equicontinuous. By the Arzela-Ascoli Lemma \citep{royden2010real} this implies that there exists a subsequence that converges pointwise to $f_{\mu\rightarrow\nu}$. Then the pointwise convergence of the original sequence $\{f_{\mu_{t}\rightarrow\nu}\}_{t\in[0,1]}$ can be deduced under the normalization conditions imposed as in our proof of Theorem \ref{thm:entot_differentiable}. We leave the details to the reader. We note that proof of Theorem \ref{thm:entot_differentiable} as supplied does not require the following Proposition 2.3 to establish the gradients of the potentials, so it is of independent interest and may generalize to other scenarios for other functionals.

We now note the following proposition regarding regularity properties of $f_{\mu\rightarrow\nu}^{\epsilon}$.

\begin{proposition}\label{prop:sinkhorn_differentiable}
Let $\nu\in \mathcal{G}(\mathbb{R}^d)$ and let $\mu\in \mathcal{P}_2(\mathbb{R}^d)$. Then $f^\epsilon_{\mu\rightarrow \nu}$ is infinitely differentiable, with gradient $\nabla f^\epsilon_{\mu\rightarrow \nu}(x)=x-T^\epsilon_{\mu\rightarrow \nu}(x)$, where\begin{equation}\label{eqn:gradformula}T^\epsilon_{\mu\rightarrow \nu}(x):=\dfrac{\displaystyle\int y e^{\left(\frac{1}{\epsilon}\left(-\frac{1}{2}\|x-y\|^2+g^\epsilon_{\mu\rightarrow \nu}(y)\right)\right)}d\nu(y)}{\displaystyle\int e^{\left(\frac{1}{\epsilon}\left(-\frac{1}{2}\|x-y\|^2+g^\epsilon_{\mu\rightarrow \nu}(y)\right)\right)}d\nu(y)}.\end{equation}
\end{proposition}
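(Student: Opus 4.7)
The plan is to start from the self-consistency equation \eqref{eqn:fduality}, which writes $f^\epsilon_{\mu\rightarrow\nu}$ directly as a log-integral against $\nu$. Setting
$$h(x) := \int \exp\!\Bigl(-\tfrac{1}{\epsilon}\bigl(\tfrac{1}{2}\|x-y\|^2 - g^\epsilon_{\mu\rightarrow\nu}(y)\bigr)\Bigr)\,d\nu(y),$$
we have $f^\epsilon_{\mu\rightarrow\nu}(x)=-\epsilon\log h(x)$, so the proposition reduces to showing $h\in C^\infty(\mathbb{R}^d)$ with $h>0$ and then differentiating the logarithm. Expanding the Gaussian factor via $\tfrac{1}{2}\|x-y\|^2=\tfrac{1}{2}\|x\|^2-\langle x,y\rangle+\tfrac{1}{2}\|y\|^2$ rewrites $h(x)=e^{-\|x\|^2/(2\epsilon)}\int e^{\langle x,y\rangle/\epsilon}\,d\tilde\nu(y)$, where $d\tilde\nu(y):=\exp\bigl(\tfrac{1}{\epsilon}(g^\epsilon_{\mu\rightarrow\nu}(y)-\tfrac{1}{2}\|y\|^2)\bigr)\,d\nu(y)$. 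The task thus becomes showing that $\tilde\nu$ is a finite positive measure whose Laplace transform $L_{\tilde\nu}(\xi):=\int e^{\langle \xi,y\rangle}d\tilde\nu(y)$ is finite for every $\xi\in\mathbb{R}^d$.

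Next I would obtain the growth control on $g^\epsilon_{\mu\rightarrow\nu}$ needed to bound $\tilde\nu$. Applying Jensen's inequality (using convexity of $-\log$) to \eqref{eqn:gduality} gives
$$g^\epsilon_{\mu\rightarrow\nu}(y)\leq \int\Bigl(\tfrac{1}{2}\|x-y\|^2-f^\epsilon_{\mu\rightarrow\nu}(x)\Bigr)d\mu(x) = \tfrac{1}{2}\|y\|^2-\langle y,\mathbb{E}(\mu)\rangle+\tfrac{1}{2}M_2(\mu)-\langle f^\epsilon_{\mu\rightarrow\nu},\mu\rangle,$$
which uses only $\mu\in\mathcal{P}_2(\mathbb{R}^d)$ and $f^\epsilon_{\mu\rightarrow\nu}\in L^1(\mu)$ from the dual formulation \eqref{eqn:dualform}. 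Subtracting $\tfrac{1}{2}\|y\|^2$ yields $g^\epsilon_{\mu\rightarrow\nu}(y)-\tfrac{1}{2}\|y\|^2 \leq A+B\|y\|$ for constants $A,B$ depending on $\mu$ and $f^\epsilon_{\mu\rightarrow\nu}$. Hence $d\tilde\nu/d\nu$ is pointwise bounded by $e^{A/\epsilon}e^{B\|y\|/\epsilon}$, and subgaussianity of $\nu$ (which ensures $\int e^{c\|y\|}d\nu(y)<\infty$ for every $c>0$) implies both $\tilde\nu(\mathbb{R}^d)<\infty$ and $L_{\tilde\nu}(\xi)\leq C\int e^{(\|\xi\|+B/\epsilon)\|y\|}d\nu(y)<\infty$ for every $\xi\in\mathbb{R}^d$.

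Finally I would differentiate under the integral sign. For each multi-index and for $x$ in any bounded ball $B_R\subset\mathbb{R}^d$, the corresponding partial derivatives of $e^{\langle x,y\rangle/\epsilon}$ in $x$ are of the form $P_k(y)\,e^{\langle x,y\rangle/\epsilon}$ for a polynomial $P_k$, and are bounded in absolute value on $B_R\times\mathbb{R}^d$ by $|P_k(y)|\,e^{R\|y\|/\epsilon}$; this dominating function is $\tilde\nu$-integrable by the previous step. Iterated dominated convergence yields $x\mapsto L_{\tilde\nu}(x/\epsilon)\in C^\infty(\mathbb{R}^d)$; multiplying by the smooth positive factor $e^{-\|x\|^2/(2\epsilon)}$ gives $h\in C^\infty(\mathbb{R}^d)$, and $h>0$ since its integrand is strictly positive. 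Therefore $f^\epsilon_{\mu\rightarrow\nu}=-\epsilon\log h$ is smooth, and differentiating the original representation of $h$ under the integral gives
$$\nabla f^\epsilon_{\mu\rightarrow\nu}(x)=-\epsilon\frac{\nabla h(x)}{h(x)} = \frac{\int (x-y)\,e^{-\frac{1}{\epsilon}\left(\frac{1}{2}\|x-y\|^2-g^\epsilon_{\mu\rightarrow\nu}(y)\right)}\,d\nu(y)}{\int e^{-\frac{1}{\epsilon}\left(\frac{1}{2}\|x-y\|^2-g^\epsilon_{\mu\rightarrow\nu}(y)\right)}\,d\nu(y)} = x-T^\epsilon_{\mu\rightarrow\nu}(x).$$

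The main obstacle is the second paragraph: without a symmetric subgaussian hypothesis on $\mu$ one cannot directly mirror the Mena--Niles-Weed argument, and the \emph{linear} (rather than quadratic) excess bound on $g^\epsilon_{\mu\rightarrow\nu}-\tfrac{1}{2}\|\cdot\|^2$ must be extracted carefully via Jensen on the log-sum-exp. A merely quadratic leading-order bound would be insufficient here, since we need the Laplace transform of $\tilde\nu$ to be finite at \emph{every} $\xi\in\mathbb{R}^d$, not just in a neighborhood of $0$.
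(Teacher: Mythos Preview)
Your proposal is correct and follows essentially the same strategy as the paper's proof: both use the Jensen/Nutz upper bound $g^\epsilon_{\mu\rightarrow\nu}(y)\le\tfrac12\int\|x-y\|^2\,d\mu(x)$ to obtain a linear excess $g^\epsilon_{\mu\rightarrow\nu}(y)-\tfrac{1}{2}\|y\|^2\leq A+B\|y\|$, then exploit subgaussianity of $\nu$ to build an integrable dominating function and apply dominated convergence. Your Laplace-transform factorization is a clean way to package the same computation that the paper carries out more directly via the mean value theorem on finite differences.
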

\noindent Following \citep{pooladian2021entropic}, we refer to $T^\epsilon_{\mu\rightarrow \nu}$ as the \emph{entropic map} from $\mu$ to $\nu$. Proposition \ref{prop:sinkhorn_differentiable} allows us to state the following result that characterizes the critical points of the barycenter functionals in terms of entropic maps.

\begin{corollary}\label{cor:first_var for_functionals}\label{lem:SufficientFixed}
Let $\mathcal{V}\subset\mathcal{G}(\mathbb{R}^d)$.
\begin{itemize}[leftmargin=*]
  \item Let $\mu\in \mathcal{P}_2(\mathbb{R}^d)$. Then $F^\epsilon_{\lambda,\mathcal{V}}$ admits a derivative at $\mu$, given by $\delta F_{\lambda,\mathcal{V}}^\epsilon(\mu)=\sum_{j=1}^m \lambda_j f^\epsilon_{\mu\rightarrow \nu_j}$. Hence $\mu^\epsilon$ is a critical point for $F^\epsilon_{\lambda,\mathcal{V}}$on $\mathcal{P}_2(\mathbb{R}^d)$ if and only if $x-\sum_{j=1}^m\lambda_jT^\epsilon_{\mu^\epsilon\rightarrow\nu_j}(x)=0$ for $\mu^\epsilon$-almost every $x$.
  \item Let $\mu \in \mathcal{G}(\mathbb{R}^d)$. Then $S^\epsilon_{\lambda,\mathcal{V}}$ admits a derivative at $\mu$, given by $\delta S_{\lambda,\mathcal{V}}^\epsilon(\mu)=\sum_{j=1}^m\lambda_j f^\epsilon_{\mu\rightarrow \nu_j}-f^\epsilon_{\mu\rightarrow \mu}$. Hence $\mu^\epsilon\in \mathcal{G}(\mathbb{R}^d)$ is a critical point for $S^\epsilon_{\lambda,\mathcal{V}}$ on $\mathcal{G}(\mathbb{R}^d)$ if and only if $T^\epsilon_{\mu^\epsilon\rightarrow \mu^\epsilon}(x)-\sum_{j=1}^m\lambda_j T^\epsilon_{\mu^\epsilon\rightarrow \nu_j}(x)=0$ for $\mu^\epsilon$-almost every $x$.
\end{itemize} 
\end{corollary}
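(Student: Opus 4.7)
The plan is to reduce Corollary 2.4 to a direct combination of Theorem 2.2 (existence of derivative $f^\epsilon_{\mu\to\nu}$ of $OT_2^\epsilon(\cdot,\nu)$) and Proposition 2.3 (smoothness and gradient formula $\nabla f^\epsilon_{\mu\to\nu}(x)=x-T^\epsilon_{\mu\to\nu}(x)$). Since the derivative operator defined by Definition 2.1 is linear in the functional (the definition is the limit of a difference quotient, and limits respect finite linear combinations), the first claim follows immediately: each $\mu\mapsto OT_2^\epsilon(\mu,\nu_j)$ is differentiable by Theorem 2.2 since $\nu_j\in \mathcal{G}(\mathbb{R}^d)$, so $\delta F_{\lambda,\mathcal{V}}^\epsilon(\mu)=\sum_{j=1}^m \lambda_j f^\epsilon_{\mu\to\nu_j}$. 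Taking gradients via Proposition 2.3 gives
\[
\nabla \delta F_{\lambda,\mathcal{V}}^\epsilon(\mu^\epsilon)(x)=\sum_{j=1}^m\lambda_j\bigl(x-T^\epsilon_{\mu^\epsilon\to\nu_j}(x)\bigr)=x-\sum_{j=1}^m\lambda_j T^\epsilon_{\mu^\epsilon\to\nu_j}(x),
\]
where the second equality uses $\sum_j \lambda_j=1$. Setting this to $0$ for $\mu^\epsilon$-a.e. $x$, as required by Definition 2.1, yields the fixed-point characterization.

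For the Sinkhorn case, I would start by writing
\[
S^\epsilon_{\lambda,\mathcal{V}}(\mu)=F^\epsilon_{\lambda,\mathcal{V}}(\mu)-\tfrac{1}{2}\,OT_2^\epsilon(\mu,\mu)-\tfrac{1}{2}\sum_{j=1}^m \lambda_j\, OT_2^\epsilon(\nu_j,\nu_j),
\]
so that after the constant last term drops out, only the diagonal term $H(\mu):=OT_2^\epsilon(\mu,\mu)$ needs fresh analysis. The key observation is that by the symmetry of $OT_2^\epsilon$ in its two arguments, the optimal potential pair for $OT_2^\epsilon(\mu,\mu)$ can be chosen with $f^\epsilon_{\mu\to\mu}=g^\epsilon_{\mu\to\mu}$ (exploiting the additive-constant ambiguity). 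I would then compute the derivative of $H$ by splitting the increment as
\[
H(\mu+t\chi)-H(\mu)=\bigl[OT_2^\epsilon(\mu+t\chi,\mu+t\chi)-OT_2^\epsilon(\mu,\mu+t\chi)\bigr]+\bigl[OT_2^\epsilon(\mu,\mu+t\chi)-OT_2^\epsilon(\mu,\mu)\bigr],
\]
differentiating each bracket separately via Theorem 2.2 and using symmetry on the second bracket. Each partial derivative contributes $\int f^\epsilon_{\mu\to\mu}\,d\chi$, so $\delta H(\mu)=2f^\epsilon_{\mu\to\mu}$; combining with the first claim gives $\delta S^\epsilon_{\lambda,\mathcal{V}}(\mu)=\sum_j\lambda_j f^\epsilon_{\mu\to\nu_j}-f^\epsilon_{\mu\to\mu}$ as claimed. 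Taking gradients via Proposition 2.3 and again using $\sum_j\lambda_j=1$ to cancel the two copies of $x$ leaves $T^\epsilon_{\mu^\epsilon\to\mu^\epsilon}(x)-\sum_j\lambda_j T^\epsilon_{\mu^\epsilon\to\nu_j}(x)$.

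The main obstacle is the first bracket in the splitting above: it is the increment of $OT_2^\epsilon(\cdot,\mu+t\chi)$ at $\mu$ with a \emph{target that also varies with} $t$, so Theorem 2.2 does not literally apply. To handle this I would add and subtract the first-order Taylor term for the fixed target $\mu+t\chi$, using Theorem 2.2 to get the main contribution $\int f^\epsilon_{\mu\to\mu+t\chi}\,d\chi$, and then invoke stability of entropic potentials under weak convergence of the target (as in \citep{nutz2022entropic}, already used in the proof of Theorem 2.2) combined with the quadratic-growth control on $f^\epsilon_{\mu\to\cdot}$ inherited from subgaussianity of the target to pass to the limit $\int f^\epsilon_{\mu\to\mu}\,d\chi$. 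Once this is in place, the rest is bookkeeping: applying Proposition 2.3 to get pointwise gradients, substituting, and simplifying using $\sum_j\lambda_j=1$.
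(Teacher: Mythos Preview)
Your treatment of $F^\epsilon_{\lambda,\mathcal{V}}$ is exactly the paper's argument: linearity of the derivative together with Theorem~2.2 and Proposition~2.3.

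For $S^\epsilon_{\lambda,\mathcal{V}}$, you are doing more than the paper does. The paper's one-line proof (``immediately follows from Theorem~2.2, Proposition~2.3, and linearity of derivatives'') does not explicitly treat the diagonal term $\mu\mapsto OT_2^\epsilon(\mu,\mu)$, and implicitly relies on the fact that its differentiability with derivative $2f^\epsilon_{\mu\to\mu}$ is already known in the subgaussian case from \citep{janati2020debiased}, Proposition~2---which is why the corollary restricts to $\mu\in\mathcal{G}(\mathbb{R}^d)$. Your proposal instead re-derives this diagonal derivative from scratch via the splitting. That is a legitimate and more self-contained route; what it buys you is an argument that does not black-box the Janati et al.\ result.

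Two technical remarks on your diagonal argument. First, the phrase ``first-order Taylor term'' is misleading: Theorem~2.2 gives a limit, not a Taylor expansion with controlled remainder, so you cannot simply ``add and subtract'' at finite $t$. What you should actually use are the \emph{sandwich bounds} from the proof of Theorem~2.2 (suboptimality of the wrong potentials in the dual), which for target $\mu_t$ read
\[
\int f^\epsilon_{\mu\to\mu_t}\,d\chi \;\le\; \frac{OT_2^\epsilon(\mu_t,\mu_t)-OT_2^\epsilon(\mu,\mu_t)}{t} \;\le\; \int f^\epsilon_{\mu_t\to\mu_t}\,d\chi,
\]
and then squeeze. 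Second, the upper bound requires stability of $f^\epsilon_{\mu_t\to\mu_t}$ in \emph{both} arguments, not just the target; you should invoke source stability (the paper's Proposition~B.7) in addition to target stability, both available since $\mu_t\in\mathcal{G}(\mathbb{R}^d)$ for all $t$. With those two fixes your argument goes through.
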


We highlight that Theorem \ref{thm:entot_differentiable} allows us to establish differentiability of $F_{\lambda,\mathcal{V}}^\epsilon$ on the entirety of $\mathcal{P}_2(\mathbb{R}^d)$, but does not allow us to conclude the same for $S^\epsilon_{\lambda,\mathcal{V}},$ which we leave as an open problem.

The critical point conditions in Corollary \ref{cor:first_var for_functionals} are crucial for our method of solving the analysis problem, as explained in Section \ref{sec:analysis_problem}. We give the following sufficient condition for a critical point to be optimal.

\begin{corollary}\label{cor:optimality}
Let $\epsilon>0$, and let $\nu_j\in \mathcal{G}(\mathbb{R}^d)$ $1\leq j\leq n.$ Let $\mu^\epsilon\in\mathcal{P}_2(\mathbb{R}^d)$ (resp. $\mu^\epsilon \in \mathcal{G}(\mathbb{R}^d)$) be a critical point for $F^\epsilon_{\lambda,\mathcal{V}}$ (resp. $S^\epsilon_{\lambda,\mathcal{V}}$). If $\texttt{supp}(\mu^\epsilon)=\mathbb{R}^d$, then $\mu^\epsilon$ minimizes $F^\epsilon_{\lambda,\mathcal{V}}$ (resp. minimizes $S^\epsilon_{\lambda,\mathcal{V}}$ over the set $\mathcal{G}(\mathbb{R}^d)$). 
\end{corollary}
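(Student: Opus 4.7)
The plan is to combine convexity of $\mathcal{F}^\epsilon_{\lambda,\mathcal{V}}$ with smoothness of its first variation to upgrade the pointwise critical-point condition at $\mu^\epsilon$ into a genuine global minimality statement. The core observation is that, together with the full-support hypothesis, the critical-point condition forces $\delta \mathcal{F}^\epsilon_{\lambda,\mathcal{V}}(\mu^\epsilon)$ to be a \emph{constant} function on $\mathbb{R}^d$, and a constant first variation suffices for convexity to guarantee optimality.

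First, I would verify that $\nabla\delta \mathcal{F}^\epsilon_{\lambda,\mathcal{V}}(\mu^\epsilon)$ is continuous on all of $\mathbb{R}^d$. Under the standing hypothesis $\mathcal{V}\subset \mathcal{G}(\mathbb{R}^d)$ (and, in the Sinkhorn case, $\mu^\epsilon\in \mathcal{G}(\mathbb{R}^d)$), Proposition \ref{prop:sinkhorn_differentiable} applies to each potential $f^\epsilon_{\mu^\epsilon\to\nu_j}$ and, in the Sinkhorn case, also to $f^\epsilon_{\mu^\epsilon\to \mu^\epsilon}$, yielding smooth gradients of the form $x\mapsto x-T^\epsilon_{\mu^\epsilon\to\nu_j}(x)$. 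The explicit expressions in Corollary \ref{cor:first_var for_functionals} then present $\nabla\delta \mathcal{F}^\epsilon_{\lambda,\mathcal{V}}(\mu^\epsilon)$ as a finite linear combination of such continuous maps. Combining this with the critical-point condition (Definition \ref{defn:critical_points}) $\nabla\delta \mathcal{F}^\epsilon_{\lambda,\mathcal{V}}(\mu^\epsilon)(x)=0$ for $\mu^\epsilon$-a.e.\ $x$ and the assumption $\texttt{supp}(\mu^\epsilon)=\mathbb{R}^d$, I would argue that every non-empty open set has positive $\mu^\epsilon$-measure, so the zero set of the continuous map $\nabla\delta \mathcal{F}^\epsilon_{\lambda,\mathcal{V}}(\mu^\epsilon)$ is dense in $\mathbb{R}^d$, and being closed it equals $\mathbb{R}^d$. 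Since $\mathbb{R}^d$ is connected, $\delta \mathcal{F}^\epsilon_{\lambda,\mathcal{V}}(\mu^\epsilon)$ is a constant, which I denote by $C$.

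For the final step, I would invoke the convexity of $F^\epsilon_{\lambda,\mathcal{V}}$ on $\mathcal{P}_2(\mathbb{R}^d)$ and of $S^\epsilon_{\lambda,\mathcal{V}}$ on $\mathcal{G}(\mathbb{R}^d)$ (both recorded after Definition \ref{def:barycenter functionals}) to obtain the standard first-order inequality. For any admissible $\rho$, convexity of the scalar function $t\mapsto \mathcal{F}^\epsilon_{\lambda,\mathcal{V}}((1-t)\mu^\epsilon+t\rho)$ on $[0,1]$ together with the identification of $\delta \mathcal{F}^\epsilon_{\lambda,\mathcal{V}}(\mu^\epsilon)$ as a right-derivative at $t=0$ gives
\[
\mathcal{F}^\epsilon_{\lambda,\mathcal{V}}(\rho)-\mathcal{F}^\epsilon_{\lambda,\mathcal{V}}(\mu^\epsilon) \geq \int \delta \mathcal{F}^\epsilon_{\lambda,\mathcal{V}}(\mu^\epsilon)\,d(\rho-\mu^\epsilon) = C\bigl(\rho(\mathbb{R}^d)-\mu^\epsilon(\mathbb{R}^d)\bigr) = 0,
\]
so $\mu^\epsilon$ is optimal (in the Sinkhorn case, over the convex set $\mathcal{G}(\mathbb{R}^d)$, which is closed under the interpolation $(1-t)\mu^\epsilon+t\rho$). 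The only delicate point is the passage from $\mu^\epsilon$-a.e.\ vanishing of $\nabla\delta\mathcal{F}^\epsilon_{\lambda,\mathcal{V}}(\mu^\epsilon)$ to vanishing everywhere, but once continuity of the gradient is secured by Proposition \ref{prop:sinkhorn_differentiable}, the full-support hypothesis makes this a short topological observation rather than a substantive analytic estimate.
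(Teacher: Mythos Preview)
Your proposal is correct and follows essentially the same approach as the paper: use smoothness of the first variation (via Proposition~\ref{prop:sinkhorn_differentiable}) together with full support to upgrade the $\mu^\epsilon$-a.e.\ vanishing of $\nabla\delta\mathcal{F}^\epsilon_{\lambda,\mathcal{V}}(\mu^\epsilon)$ to vanishing everywhere, deduce that $\delta\mathcal{F}^\epsilon_{\lambda,\mathcal{V}}(\mu^\epsilon)$ is constant on the connected domain $\mathbb{R}^d$, and then invoke convexity. The only cosmetic difference is that the paper cites Proposition~\ref{prop:optimality} for the final step whereas you write out the first-order convexity inequality directly.
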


It is unknown what conditions on $\mathcal{V}$ guarantee the existence of a critical point with support equal to $\mathbb{R}^d$. On the other hand, there always exists a singular critical point given by a Dirac mass.

\begin{lemma}\label{lemma:diracfixed}
Let $\nu_j\in \mathcal{G}(\mathbb{R}^d)$ for all $1\leq j\leq m$. Let $x_\mathcal{V}:=\sum_{j=1}^m\lambda_j\mathbb{E}(\nu_j)$. Then the Dirac mass $\delta_{x_\mathcal{V}}$ is a critical point for $F^\epsilon_{\lambda,\mathcal{V}}.$
\end{lemma}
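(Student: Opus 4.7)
The plan is to apply the critical point characterization from Corollary \ref{cor:first_var for_functionals} (first bullet). Since $F^\epsilon_{\lambda,\mathcal{V}}$ admits a derivative at every $\mu\in\mathcal{P}_2(\mathbb{R}^d)$ with $\delta F^\epsilon_{\lambda,\mathcal{V}}(\mu) = \sum_{j=1}^m \lambda_j f^\epsilon_{\mu\to\nu_j}$, showing that $\delta_{x_\mathcal{V}}$ is a critical point reduces to verifying the pointwise identity
\[
x_\mathcal{V} - \sum_{j=1}^m \lambda_j T^\epsilon_{\delta_{x_\mathcal{V}}\to\nu_j}(x_\mathcal{V}) = 0,
\]
since $\delta_{x_\mathcal{V}}$-a.e.\ means at the single point $x_\mathcal{V}$. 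Thus the entire proof reduces to computing $T^\epsilon_{\delta_z\to\nu}(z)$ for a Dirac source $\delta_z$ and arbitrary $\nu\in\mathcal{G}(\mathbb{R}^d)$, and then summing.

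The key computation is the evaluation of the entropic potentials at a Dirac source, which I expect to be the main (though still quite mild) technical step. For any $\nu\in\mathcal{G}(\mathbb{R}^d)$ and $z\in\mathbb{R}^d$, I would first determine $g^\epsilon_{\delta_z\to\nu}$ directly from the Schr\"odinger equation \eqref{eqn:gduality}: because the outer integral is against $\delta_z$, the logarithm collapses and one obtains
\[
g^\epsilon_{\delta_z\to\nu}(y) = \tfrac{1}{2}\|z-y\|^2 - f^\epsilon_{\delta_z\to\nu}(z),
\]
so that $g^\epsilon_{\delta_z\to\nu}(y)$ equals $\tfrac{1}{2}\|z-y\|^2$ up to an additive constant. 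Substituting this expression into the entropic map formula \eqref{eqn:gradformula} at $x=z$, the quadratic terms cancel inside the exponentials and the additive constant factors out of both numerator and denominator, giving
\[
T^\epsilon_{\delta_z\to\nu}(z) \;=\; \frac{\int y\,d\nu(y)}{\int d\nu(y)} \;=\; \mathbb{E}(\nu).
\]
This is the only place where actual computation is required; everything else is algebraic manipulation.

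Applying this to each $\nu_j$ with $z = x_\mathcal{V}$, the critical point identity becomes
\[
x_\mathcal{V} - \sum_{j=1}^m \lambda_j\, \mathbb{E}(\nu_j) \;=\; x_\mathcal{V} - x_\mathcal{V} \;=\; 0,
\]
by the definition of $x_\mathcal{V}$. Invoking the first bullet of Corollary \ref{cor:first_var for_functionals} concludes the proof. I expect no serious obstacle beyond the justification that $g^\epsilon_{\delta_z\to\nu}$ is indeed given by the expression above: one must verify that this candidate, together with the corresponding $f^\epsilon_{\delta_z\to\nu}$ defined via \eqref{eqn:fduality}, solves the dual problem, which follows from uniqueness of entropic potentials up to translation (as recalled after \eqref{eqn:dualform}) combined with the fact that $\Pi(\delta_z,\nu) = \{\delta_z\otimes\nu\}$ forces the Sinkhorn scaling to be trivial.
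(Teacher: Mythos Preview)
Your proposal is correct and takes essentially the same approach as the paper: both reduce via Corollary \ref{cor:first_var for_functionals} to showing $T^\epsilon_{\delta_{x_\mathcal{V}}\to\nu_j}(x_\mathcal{V})=\mathbb{E}(\nu_j)$ and then summing. The paper obtains this value by observing that the unique coupling in $\Pi(\delta_{x_\mathcal{V}},\nu_j)$ is $\delta_{x_\mathcal{V}}\otimes\nu_j$ and reading off the conditional expectation, whereas you compute the potential $g^\epsilon_{\delta_z\to\nu}$ explicitly from \eqref{eqn:gduality} and substitute into \eqref{eqn:gradformula}; both justifications are equally short and valid.
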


Finally, we establish a control on the growth of critical points of $F^\epsilon_{\lambda,\mathcal{V}}$, which will be useful for establishing our sample complexity results in the sequel:
\begin{proposition}\label{prop:subgauss}
Suppose $\mathcal{V}\subset \mathcal{G}_\sigma(\mathbb{R}^d),$ and let $\mu^\epsilon$ be a critical point for $F^\epsilon_{\lambda,\mathcal{V}}$. Then $\mu^\epsilon$ is $\sigma$-subgaussian. In particular, $\mu^*\in\argmin_{\mu\in \mathcal{P}_2(\mathbb{R}^d)}F_{\lambda,\mathcal{V}}(\mu)$ is $\sigma$-subgaussian. 
\end{proposition}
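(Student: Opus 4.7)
The plan is to combine the critical-point characterization of $\mu^\epsilon$ from Corollary \ref{cor:first_var for_functionals} with the probabilistic interpretation of the entropic map as a conditional expectation under the entropic optimal plan. Reading the formula \eqref{eqn:gradformula}, one identifies $T^\epsilon_{\mu^\epsilon\to\nu_j}(x)=\mathbb{E}_{\pi_j}[Y\mid X=x]$, where $\pi_j\in\Pi(\mu^\epsilon,\nu_j)$ is the entropic optimal plan: the denominator is precisely the normalizing constant of the conditional density $d\pi_j(y\mid X=x)\propto \exp\!\bigl((g^\epsilon_{\mu^\epsilon\to\nu_j}(y)-\tfrac{1}{2}\|x-y\|^2)/\epsilon\bigr)d\nu_j(y)$. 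The critical-point condition from Corollary \ref{cor:first_var for_functionals} then reads
\[
X=\sum_{j=1}^m\lambda_j\,\mathbb{E}_{\pi_j}[Y\mid X]\qquad \mu^\epsilon\text{-a.s.},
\]
and taking $\mathbb{E}_{\mu^\epsilon}$ on both sides (together with the fact that the second marginal of $\pi_j$ is $\nu_j$) yields $\bar{m}:=\mathbb{E}(\mu^\epsilon)=\sum_{j=1}^m\lambda_j\,\mathbb{E}(\nu_j)$.

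To prove $\sigma$-subgaussianity I will fix arbitrary $v\in\mathbb{R}^d$ and $t\in\mathbb{R}$ and bound the moment generating function of $\langle v,X-\bar{m}\rangle$ by two applications of Jensen's inequality. First, convexity of $\exp$ together with $\lambda\in\Delta^m$ gives the pointwise bound $e^{t\langle v,X-\bar{m}\rangle}\leq \sum_{j=1}^m\lambda_j\, e^{t\langle v,\mathbb{E}_{\pi_j}[Y\mid X]-\mathbb{E}(\nu_j)\rangle}$, and the conditional Jensen inequality applied to the convex map $y\mapsto e^{t\langle v,y\rangle}$ gives $e^{t\langle v,\mathbb{E}_{\pi_j}[Y\mid X]\rangle}\leq \mathbb{E}_{\pi_j}[e^{t\langle v,Y\rangle}\mid X]$. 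Taking $\mathbb{E}_{\mu^\epsilon}$ of the resulting inequality, applying the tower property on the right-hand side, and invoking the $\sigma$-subgaussianity of each $\nu_j$ with mean $\mathbb{E}(\nu_j)$ then produces
\[
\mathbb{E}_{\mu^\epsilon}\!\left[e^{t\langle v,X-\bar{m}\rangle}\right]\;\leq\; \sum_{j=1}^m\lambda_j\, e^{-t\langle v,\mathbb{E}(\nu_j)\rangle}\,\mathbb{E}_{\nu_j}\!\left[e^{t\langle v,Y\rangle}\right]\;\leq\; e^{\sigma^2 t^2\|v\|^2/2},
\]
which is the defining inequality of $\sigma$-subgaussianity for $\mu^\epsilon$. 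The ``in particular'' claim then follows since any minimizer of $F^\epsilon_{\lambda,\mathcal{V}}$ over $\mathcal{P}_2(\mathbb{R}^d)$ is a critical point (Corollary \ref{cor:mins_are_fixed}).

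The main (minor) obstacle is that a priori the MGF on the left is not known to be finite, so Jensen cannot be applied blindly at the level of integrals. This is handled by carrying out the Jensen chain as a pointwise inequality between nonnegative functions \emph{before} integrating: once integrated, the finiteness of the right-hand side (guaranteed by $\nu_j\in\mathcal{G}_\sigma(\mathbb{R}^d)$) forces finiteness on the left. Conceptually, the key insight is that the fixed-point equation from Corollary \ref{cor:first_var for_functionals} allows one to bootstrap tail bounds directly from the references $\nu_j$ to $\mu^\epsilon$ without any explicit control on the dual potentials $g^\epsilon_{\mu^\epsilon\to\nu_j}$ themselves.
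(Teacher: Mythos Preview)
Your argument is essentially the paper's proof, rewritten probabilistically. The paper abstracts your two Jensen steps as the convex-order statement $\mu^\epsilon\leq_c\sum_j\lambda_j\nu_j$ (Proposition~\ref{prop:domination}): your conditional-Jensen inequality $\phi(\mathbb{E}_{\pi_j}[Y\mid X])\le\mathbb{E}_{\pi_j}[\phi(Y)\mid X]$ followed by the tower property is precisely Lemma~\ref{lemma:10.5yang}, i.e.\ $(T^\epsilon_{\mu\to\nu})_\#\mu\leq_c\nu$, and your first Jensen over the barycentric weights $\lambda$ is the remaining step. Subgaussianity then follows by applying the convex-order inequality to the convex function defining the subgaussian norm (Lemma~\ref{lemma:subgauss dom}).

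One caveat: the paper's Definition~\ref{def:subgauss} of $\sigma$-subgaussian is the Orlicz condition $\sup_{v\in S^{d-1}}\mathbb{E}\exp(|\langle X,v\rangle|^2/\sigma^2)\le 2$, \emph{not} the centered MGF bound $\mathbb{E}\,e^{t\langle v,X-\bar m\rangle}\le e^{\sigma^2t^2\|v\|^2/2}$ that you establish. The two are equivalent only up to absolute constants, so as written your argument yields $\mu^\epsilon\in\mathcal{G}_{C\sigma}(\mathbb{R}^d)$ rather than $\mathcal{G}_\sigma(\mathbb{R}^d)$. The fix is immediate and makes your argument literally the paper's: drop the centering by $\bar m$ and run the same two Jensen steps with the convex function $y\mapsto\exp(|\langle v,y\rangle|^2/\sigma^2)$, giving $\mathbb{E}_{\mu^\epsilon}\exp(|\langle v,X\rangle|^2/\sigma^2)\le\sum_j\lambda_j\,\mathbb{E}_{\nu_j}\exp(|\langle v,Y\rangle|^2/\sigma^2)\le 2$ directly.
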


\section{SAMPLE COMPLEXITY OF ESTIMATING \textbf{$\min_\mu\mathcal{F}^\epsilon_{\lambda,\mathcal{V}}(\mu)$}}
\label{sec:sample_complexity_functional}

In this section, we show that the rate of estimating $\min_{\mu\in \mathcal{P}_2(\mathbb{R}^d)}\mathcal{F}^\epsilon_{\lambda,\mathcal{V}}(\mu)$ as a function of the number of samples from the reference measures does not depend on the dimension. 
\begin{theorem}\label{thm:synthsampcomplex}
Let $\mathcal{V}=\{\nu_1,...,\nu_m\}\subset\mathcal{G}_{\sigma}(\mathbb{R}^d)$ and let $\hat{\mathcal{V}}^n=\{\hat{\nu}^n_1,...,\hat{\nu}^n_m\}.$ Let $\mu^{n}$ denote a (random) minimizer of $\min_{\mu\in \mathcal{P}^n(\mathbb{R}^d)}F^\epsilon_{\lambda,\hat{\mathcal{V}}^n}(\mu)$. Then: \begin{equation}\mathbb{E}\left(\bigg|\min_{\mu\in \mathcal{P}_2(\mathbb{R}^d)}F^\epsilon_{\lambda,\mathcal{V}}(\mu)-F^\epsilon_{\lambda,\mathcal{V}}(\mu^n)\bigg|\right)\leq mC^*_{d,\epsilon,\sigma}/\sqrt{n}.\label{eqn:thm3.1F}\end{equation}
Suppose that, for any $\mathcal{V}\subset \mathcal{G}(\mathbb{R}^d)$,  $U_n:=\argmin_{\mu\in \mathcal{P}^n(\mathbb{R}^d)}S^\epsilon_{\lambda,\mathcal{V}}(\mu)$ is nonempty and furthermore $U_n\subset \mathcal{G}_{\tilde{\sigma}}(\mathbb{R}^d)$, where $\tilde{\sigma}=\max_{1\leq j \leq m}\{\|\nu_j\|_\mathcal{G}\}$. Let $\mu^n \in \argmin_{\mu\in \mathcal{P}^n(\mathbb{R}^d)}S^\epsilon_{\lambda,\hat{\mathcal{V}^n}}(\mu)$. Then:
\begin{equation} \mathbb{E}\left(\bigg|\min_{\mu\in \mathcal{G}_\sigma(\mathbb{R}^d)}S^\epsilon_{\lambda,\mathcal{V}}(\mu)-S^\epsilon_{\lambda,\mathcal{V}}(\mu^n)\bigg|\right)\leq mC^*_{d,\epsilon,\sigma}/\sqrt{n}.\label{eqn:thm3.1S} \end{equation}
Here, $C^*_{d,\epsilon,\sigma}$ is a constant depending only on $d,\epsilon$ and $\sigma.$ 
\end{theorem}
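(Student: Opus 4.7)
The plan is to sandwich $F^{\epsilon}_{\lambda,\mathcal{V}}(\mu^{n})$ between $\min_{\mu}F^{\epsilon}_{\lambda,\mathcal{V}}$ (trivial from below) and that minimum plus a dimension-free error of order $m/\sqrt{n}$. Let $\mu^{*}\in\argmin_{\mu\in\mathcal{P}_{2}(\mathbb{R}^{d})}F^{\epsilon}_{\lambda,\mathcal{V}}$, which exists by Proposition \ref{prop:existence}, and let $\widehat{\mu}^{*,n}$ be the empirical measure drawn from $n$ i.i.d.\ samples of $\mu^{*}$, so that $\widehat{\mu}^{*,n}\in\mathcal{P}^{n}(\mathbb{R}^{d})$. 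I would write
\[
F^{\epsilon}_{\lambda,\mathcal{V}}(\mu^{n})-F^{\epsilon}_{\lambda,\mathcal{V}}(\mu^{*}) \;=\; A + B + C,
\]
with $A=F^{\epsilon}_{\lambda,\mathcal{V}}(\mu^{n})-F^{\epsilon}_{\lambda,\widehat{\mathcal{V}}^{n}}(\mu^{n})$, $B=F^{\epsilon}_{\lambda,\widehat{\mathcal{V}}^{n}}(\mu^{n})-F^{\epsilon}_{\lambda,\widehat{\mathcal{V}}^{n}}(\widehat{\mu}^{*,n})$, and $C=F^{\epsilon}_{\lambda,\widehat{\mathcal{V}}^{n}}(\widehat{\mu}^{*,n})-F^{\epsilon}_{\lambda,\mathcal{V}}(\mu^{*})$. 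Because $\mu^{n}$ minimizes $F^{\epsilon}_{\lambda,\widehat{\mathcal{V}}^{n}}$ over $\mathcal{P}^{n}(\mathbb{R}^{d})$ and $\widehat{\mu}^{*,n}$ is a feasible competitor, $B\le 0$, so only $A$ and $C$ need to be controlled in expectation.

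For $C$, the triangle inequality gives $|C|\le\sum_{j}\lambda_{j}\,|OT^{\epsilon}_{2}(\widehat{\mu}^{*,n},\widehat{\nu}^{n}_{j})-OT^{\epsilon}_{2}(\mu^{*},\nu_{j})|$. By Proposition \ref{prop:subgauss}, $\mu^{*}$ is $\sigma$-subgaussian, and the samples used for $\widehat{\mu}^{*,n}$ and $\widehat{\nu}^{n}_{j}$ are mutually independent, so the Mena--Weed dimension-free sample-complexity bound for entropic OT between subgaussian measures \citep{mena2019} controls each summand in expectation by $C^{*}_{d,\epsilon,\sigma}/\sqrt{n}$, giving $\mathbb{E}|C|\le mC^{*}_{d,\epsilon,\sigma}/\sqrt{n}$. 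An analogous expansion of $A$ yields $|A|\le\sum_{j}\lambda_{j}\,|OT^{\epsilon}_{2}(\mu^{n},\nu_{j})-OT^{\epsilon}_{2}(\mu^{n},\widehat{\nu}^{n}_{j})|$.

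The main obstacle is bounding $\mathbb{E}|A|$, because $\mu^{n}$ depends on the samples defining $\widehat{\nu}^{n}_{j}$, so Mena--Weed cannot be invoked pointwise under independence. The plan is to argue that the empirical reference measures $\widehat{\mathcal{V}}^{n}$ inherit a controlled subgaussian constant from the $\nu_{j}$, and to apply the empirical-measure counterpart of Proposition \ref{prop:subgauss} to conclude that $\mu^{n}$ lies in a fixed class $\mathcal{G}_{\sigma'}(\mathbb{R}^{d})$ with $\sigma'$ depending only on $\sigma$. Since \citep{mena2019} proves the $1/\sqrt{n}$ rate through explicit polynomial growth bounds on the dual entropic potentials that are uniform over subgaussian targets, the estimate
\[
\mathbb{E}\bigl|OT^{\epsilon}_{2}(\mu^{n},\nu_{j})-OT^{\epsilon}_{2}(\mu^{n},\widehat{\nu}^{n}_{j})\bigr|\le C^{*}_{d,\epsilon,\sigma}/\sqrt{n}
\]
then holds irrespective of the coupling between $\mu^{n}$ and $\widehat{\nu}^{n}_{j}$; summing over $j$ finishes \eqref{eqn:thm3.1F}.

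For the Sinkhorn inequality \eqref{eqn:thm3.1S}, the same three-term decomposition applies, with additional self-transport contributions of the form $OT^{\epsilon}_{2}(\mu,\mu)$ that are controlled by the same Mena--Weed bound. The stated hypothesis $U_{n}\subset\mathcal{G}_{\widetilde{\sigma}}(\mathbb{R}^{d})$ plays the role of Proposition \ref{prop:subgauss} from the previous paragraph (no Sinkhorn analog having been established here), guaranteeing that the empirical minimizer remains in a fixed subgaussian class so that the uniform version of the Mena--Weed bound applies term by term and yields the same dimension-free $m/\sqrt{n}$ rate.
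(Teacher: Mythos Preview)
Your three-term decomposition with $B\le 0$ is essentially the strategy the paper uses; it packages things as a two-term triangle inequality together with a separate proposition comparing the two minimum values, but the content is the same, and your identification of $A$ as the delicate term is correct.

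The gap is your claim that $\mu^{n}$ lies in a \emph{fixed} class $\mathcal{G}_{\sigma'}$ with deterministic $\sigma'$ depending only on $\sigma$. This is false: for $\sigma$-subgaussian $\nu_{j}$ the empirical measure $\widehat{\nu}_{j}^{n}$ has subgaussian norm $\|\widehat{\nu}_{j}^{n}\|_{\mathcal{G}}$ that is almost-surely finite but \emph{random} and not deterministically bounded---a single extreme sample can make it arbitrarily large. Proposition~\ref{prop:subgauss} applied to the empirical references therefore yields only $\mu^{n}\in\mathcal{G}_{\tilde\sigma}$ with $\tilde\sigma=\max\{\sigma,\|\widehat{\nu}_{1}^{n}\|_{\mathcal{G}},\ldots,\|\widehat{\nu}_{m}^{n}\|_{\mathcal{G}}\}$ random. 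The Mena--Weed machinery (their Proposition~2) does give a bound uniform over the source once a common subgaussian constant is fixed, but that bound is a \emph{polynomial in $\tilde\sigma$} over $\sqrt{n}$, not a universal constant over $\sqrt{n}$. What is missing from your plan is to take expectations of this polynomial and invoke the separate moment estimate $\mathbb{E}\bigl[\|\widehat{\nu}^{n}\|_{\mathcal{G}}^{2k}\bigr]\le L_{k}\sigma^{2k}$ (Lemma~A.4 in \citep{mena2019}); bounding $\mathbb{E}[\tilde\sigma^{k}]\le\sum_{j}\mathbb{E}[\|\widehat{\nu}_{j}^{n}\|_{\mathcal{G}}^{k}]$ is in fact where the $m$-dependent prefactor in the theorem actually originates. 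The paper isolates this step as Lemma~\ref{lemma:mena_weed_4}. The same issue is implicitly present in the two-sample estimate underlying your treatment of $C$, though there it can be absorbed into whatever two-sample statement from \citep{mena2019} you are citing as a black box.
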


Note that in contrast to $F_{\lambda, \mathcal{V}}^{\epsilon}$, for $S^\epsilon_{\lambda,\mathcal{V}}$ in (\ref{eqn:thm3.1S}) we restrict the minimization to the set $\mathcal{G}_\sigma(\mathbb{R}^d)$. This choice is justified by Proposition \ref{prop:existence} where we show existence of minimizers for $S^\epsilon_{\lambda,\mathcal{V}}$ when restricted to $\mathcal{G}_\sigma(\mathbb{R}^d)$. Our statistical results for synthesis in \ref{thm:synthsampcomplex} are \emph{algorithm independent} and apply when the support of the measures can be unbounded.  This improves upon [Theorem 7, \citep{luise2019sinkhorn}], which is restricted to the bounded support setting. However, their result comes with an algorithm that provably achieves the bound. It is natural to therefore ask whether there exist algorithms that can achieve the bounds in Theorem \ref{thm:synthsampcomplex}. We leave this to future work.

A natural question is whether a stronger sample complexity result is true. Let $\mu^*$ denote a minimizer of $\mathcal{F}^\epsilon_{\lambda,\mathcal{V}}$ on $\mathcal{P}_{2}(\Omega)$ with $\Omega$ bounded. Is it true that some $\mu^*_n\in\argmin_{\mu\in \mathcal{P}^n(\Omega)}\mathcal{F}^\epsilon_{\lambda,\hat{\mathcal{V}}^n}$ approximates $\mu^*$ in $OT_2$ on $\mathcal{P}_2(\Omega)$ with parametric sample complexity, i.e., $\mathbb{E}\left(OT_2(\mu^*,\mu^*_n)\right)\lesssim \frac{1}{\sqrt{n}}$? We claim this cannot be true for the functional $S^\epsilon_{\lambda,\mathcal{V}}$. Indeed, let $\mathcal{V}=\{\nu\}$ for any $\nu\in \mathcal{P}_2(\Omega)$ for some bounded $\Omega$. By Proposition \ref{prop:existence}, $\argmin_{\mu\in \mathcal{P}_2(\Omega)}S^\epsilon_{\lambda,\mathcal{V}}(\mu)=\nu$, as $S^\epsilon_{\lambda,\mathcal{V}}$ is debiased and strictly convex on $\mathcal{P}_2(\Omega)$. 
 Similarly, if $\mathcal{V}=\{\nu^{n}\}$ for any $\nu^n\in \mathcal{P}^n(\Omega)$, then $\argmin_{\mu\in\mathcal{P}^n(\Omega)}S^\epsilon_{\lambda,\mathcal{V}}(\mu)=\nu^n$, again by debiasing and strict convexity. Thus, \begin{align*} &\mathbb{E}(OT_2(\argmin_{\mu\in \mathcal{P}_2(\Omega)}S^\epsilon_{\lambda,\{\nu\}}(\mu),\argmin_{\mu\in \mathcal{P}^n(\Omega)}S^\epsilon_{\lambda,\{\hat{\nu}^n\}}(\mu))) \\  = &\mathbb{E}(OT_2(\nu,\hat{\nu}^n))\gtrsim \frac{1}{n^{1/d}},\end{align*}
where the inequality is classical \citep{dudley1969speed}. 

\section{SAMPLE COMPLEXITY AND STABILITY FOR THE ANALYSIS PROBLEM}\label{sec:analysis_problem}

As explained in Section \ref{subsec:background_notation}, if $\mu^*\in \argmin_{\mu\in \mathcal{P}_{2}(\mathbb{R}^d)}\mathcal{F}^\epsilon_{\lambda_*,\mathcal{V}}(\mu)$, solving the analysis problem (\ref{eqn:analysiseqn}) is equivalent to computing $\lambda_*\in \Delta^m$. Using the criteria from Corollary \ref{cor:first_var for_functionals}, we adapt the proof of Proposition 1 in \citep{werenski2022measure} to show that solving the analysis program is equivalent to solving a convex, quadratic program.

\begin{proposition}\label{prop:analysis_grad}
Let $\mathcal{V}\subset\mathcal{G}(\mathbb{R}^d)$.\vspace{-5pt}
\begin{itemize}[leftmargin=10pt]
    \item Let $\mu\in \mathcal{P}_2(\mathbb{R}^d)$, and define $A^\epsilon_\mu\in \mathbb{R}^{m\times m}$ by $[A^\epsilon_\mu]_{ij}:=\int_{\mathbb{R}^d} \langle T^\epsilon_{\mu\rightarrow\nu_i}(x)-x,\;T^\epsilon_{\mu\rightarrow \nu_j}(x)-x\rangle d\mu(x)$. Then $\|\nabla \delta F^\epsilon_{\lambda,\mathcal{V}}\|^2_{L^2(\mu)}=\lambda^\top A^\epsilon_\mu \lambda$, and hence $\mu$ is a critical point iff $\min_{\lambda\in \Delta^m} \lambda^\top A^\epsilon_\mu \lambda=0$.
    \vspace{-5pt}
    \item  Let $\tilde{\mu}\in\mathcal{G}(\mathbb{R}^d)$, and define $S^\epsilon_{\tilde{\mu}}\in \mathbb{R}^{m\times m}$ by $[S^\epsilon_{\tilde{\mu}}]_{ij}:=\int_{\mathbb{R}^d} \langle T^\epsilon_{\tilde{\mu}\rightarrow\nu_i}(x)-T^\epsilon_{\tilde{\mu}\rightarrow \tilde{\mu}}(x),\;T^\epsilon_{\tilde{\mu}\rightarrow \nu_j}(x)-T^\epsilon_{\tilde{\mu}\rightarrow \tilde{\mu}}(x) \rangle d\tilde{\mu}(x)$. Then $\|\nabla\delta S^\epsilon_{\lambda,\mathcal{V}}\|^2_{L^2(\tilde{\mu})}=\lambda^\top S^\epsilon_{\tilde{\mu}} \lambda$,  and hence $\tilde{\mu}$ is a critical point of $S^\epsilon_{\lambda,\mathcal{V}}$ iff $\min_{\lambda\in \Delta^m} \lambda^\top S^\epsilon_{\tilde{\mu}} \lambda=0$.
\end{itemize}
Furthermore, if $\texttt{supp}(\mu)=\mathbb{R}^d$ (resp. $\texttt{supp}(\tilde{\mu})=\mathbb{R}^d$), then $\mu$ is an entropic barycenter (resp. $\tilde{\mu}$ is a Sinkhorn barycenter).
\end{proposition}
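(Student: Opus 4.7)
The plan is to combine Corollary \ref{cor:first_var for_functionals}, which identifies the first variations of the two barycenter functionals as weighted sums of entropic potentials, with Proposition \ref{prop:sinkhorn_differentiable}, which provides the explicit formula $\nabla f^\epsilon_{\mu\rightarrow\nu_j}(x) = x - T^\epsilon_{\mu\rightarrow\nu_j}(x)$. Once the gradients of $\delta F^\epsilon_{\lambda,\mathcal{V}}(\mu)$ and $\delta S^\epsilon_{\lambda,\mathcal{V}}(\tilde{\mu})$ are written as linear-in-$\lambda$ vector fields, squaring them pointwise yields a bilinear form in $\lambda$ whose coefficient matrix is precisely the Gram matrix of the constituent displacement maps. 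Integrating against $\mu$ (resp.\ $\tilde{\mu}$) then identifies $A^\epsilon_\mu$ and $S^\epsilon_{\tilde{\mu}}$ as these Gram matrices.

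Concretely, for the first bullet, I would write
\begin{equation*}
\nabla\delta F^\epsilon_{\lambda,\mathcal{V}}(\mu)(x) = \sum_{j=1}^m \lambda_j \nabla f^\epsilon_{\mu\rightarrow\nu_j}(x) = -\sum_{j=1}^m \lambda_j \bigl(T^\epsilon_{\mu\rightarrow\nu_j}(x)-x\bigr),
\end{equation*}
using $\sum_j\lambda_j = 1$ to absorb the identity terms. Taking the $L^2(\mu)$ norm and expanding the square,
\begin{equation*}
\|\nabla\delta F^\epsilon_{\lambda,\mathcal{V}}\|_{L^2(\mu)}^2 = \sum_{i,j=1}^m \lambda_i\lambda_j \int \langle T^\epsilon_{\mu\rightarrow\nu_i}(x)-x,\; T^\epsilon_{\mu\rightarrow\nu_j}(x)-x\rangle \,d\mu(x) = \lambda^\top A^\epsilon_\mu \lambda,
\end{equation*}
which is the first identity. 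For the critical-point equivalence, I note that $A^\epsilon_\mu$ is positive semidefinite as a Gram matrix, so $\lambda^\top A^\epsilon_\mu\lambda \geq 0$ for all $\lambda\in\Delta^m$; by Corollary \ref{cor:first_var for_functionals}, $\mu$ is a critical point of $F^\epsilon_{\lambda,\mathcal{V}}$ for some $\lambda\in\Delta^m$ exactly when $\nabla\delta F^\epsilon_{\lambda,\mathcal{V}}(\mu)=0$ $\mu$-a.e., i.e., when the $L^2(\mu)$ norm above vanishes, which is equivalent to $\min_{\lambda\in\Delta^m}\lambda^\top A^\epsilon_\mu\lambda=0$.

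The second bullet proceeds identically, except that Corollary \ref{cor:first_var for_functionals} contributes an extra self-potential term, giving
\begin{equation*}
\nabla\delta S^\epsilon_{\lambda,\mathcal{V}}(\tilde{\mu})(x) = -\sum_{j=1}^m \lambda_j \bigl(T^\epsilon_{\tilde{\mu}\rightarrow\nu_j}(x)-T^\epsilon_{\tilde{\mu}\rightarrow\tilde{\mu}}(x)\bigr),
\end{equation*}
after again using $\sum_j\lambda_j=1$ to collect the $T^\epsilon_{\tilde{\mu}\rightarrow\tilde{\mu}}$ terms. The same expansion and integration yields $\|\nabla\delta S^\epsilon_{\lambda,\mathcal{V}}\|_{L^2(\tilde{\mu})}^2 = \lambda^\top S^\epsilon_{\tilde{\mu}}\lambda$, and the critical-point characterization follows as before. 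Finally, for the closing claim, if in either case $\mu$ (resp.\ $\tilde{\mu}$) has full support, then the critical point is actually a minimizer by Corollary \ref{cor:optimality}, i.e., an entropic (resp.\ Sinkhorn) barycenter.

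The proof is essentially bookkeeping once the first-variation formulas of Corollary \ref{cor:first_var for_functionals} and the gradient formula of Proposition \ref{prop:sinkhorn_differentiable} are in hand; the only mild subtlety is verifying that the integrals defining $[A^\epsilon_\mu]_{ij}$ and $[S^\epsilon_{\tilde{\mu}}]_{ij}$ are finite so that expanding $\|\cdot\|_{L^2}^2$ as a sum of pairwise inner products is justified by Fubini. This reduces to a quadratic growth bound on $T^\epsilon_{\mu\rightarrow\nu_j}$, which follows from Proposition \ref{prop:sinkhorn_differentiable} and the subgaussian assumption on $\mathcal{V}$ (combined with $\mu\in\mathcal{P}_2(\mathbb{R}^d)$); this is the only place where extra care is required, and it is the step I expect to be the sole (minor) technical obstacle.
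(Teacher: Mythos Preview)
Your proposal is correct and follows essentially the same approach as the paper: both use Corollary \ref{cor:first_var for_functionals} and Proposition \ref{prop:sinkhorn_differentiable} to write $\nabla\delta F^\epsilon_{\lambda,\mathcal{V}}(\mu) = Id - \sum_j \lambda_j T^\epsilon_{\mu\rightarrow\nu_j}$ (equivalently your $-\sum_j\lambda_j(T^\epsilon_{\mu\rightarrow\nu_j}-x)$ via $\sum_j\lambda_j=1$), expand the $L^2(\mu)$ square bilinearly to identify $A^\epsilon_\mu$ as the Gram matrix, and then invoke Corollary \ref{cor:optimality} for the full-support claim. Your added remark about verifying integrability of the entries of $A^\epsilon_\mu$ is a point the paper's proof does not address explicitly.
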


Our main result in this section is that the sample complexity of solving the analysis problem for a critical point $\mu$ is equivalent to the sample complexity of estimating $\nabla \delta\mathcal{F}^\epsilon_{\lambda,\mathcal{V}}$.
\begin{algorithm}[h!]
    \caption{Coefficient Recovery}\label{alg:coefficient_recovery}
    \KwData{Input measure $\mu\in \mathcal{P}(\Omega)$; \\
            Reference measures $\nu_1,...,\nu_m\in \mathcal{P}(\Omega)$; \\
            Regularization parameter $\epsilon>0$;\\ 
            Number of samples $n$; \\
            Functional $\mathcal{F}^\epsilon_{\lambda,\mathcal{V}}\in \{F^\epsilon_{\lambda,\mathcal{V}},S^\epsilon_{\lambda,\mathcal{V}}\}$;\\
            Estimators $\hat{T}(\hat{\mu}^n,\hat{\nu}_j^n)$ for $T_{\mu\rightarrow\nu_{j}}^{\epsilon}$, $\forall \;1\leq j \leq m$; \\
            Estimator $ \hat{T}(\hat{\mu}^n)$ for $T_{\mu\rightarrow\mu}^{\epsilon}$.
    }
    \vspace{2mm} 
    \hrule 
    \vspace{2mm} 

    Sample $X_1,X_2,...,X_{2n}\sim \mu$\;
    
    \For{$1\leq j \leq m$}{
        Sample $Y_1^j,Y_2^j,...,Y_n^j\sim \nu_j$\;
        $\hat{\nu}^n_j\leftarrow$ uniform measure on $Y_1^j,Y_2^j,...,Y_n^j$\;
    }
                $\hat{\mu}^{n}\leftarrow$ uniform measure on $X_1,X_2,...,X_n$\;

    \For{$1\leq i,j\leq m$}{
        \If{$\mathcal{F}^\epsilon_{\lambda,\mathcal{V}}==F^\epsilon_{\lambda,\mathcal{V}}$}{

            $[\hat{M}_\mu]_{ij}\leftarrow \frac{1}{n}\sum_{k=n+1}^{2n}\langle \hat{T}(\hat{\mu}^n,\hat{\nu}_i^n)(X_k)-X_k, \hat{T}(\hat{\mu}^n, \hat{\nu}_j^n)(X_k)-X_k\rangle$\;
        }
        \Else{

            $[\hat{M}_\mu]_{ij}\leftarrow\frac{1}{n}\sum_{k=n+1}^{2n}\langle \hat{T}(\hat{\mu}^n,\hat{\nu}_i^n)(X_k)-\hat{T}(\hat{\mu}^n)(X_k), \hat{T}(\hat{\mu}^n, \hat{\nu}_j^n)(X_k)-\hat{T}(\hat{\mu}^{n})(X_k)\rangle$\;
        }
    }
    \vspace{1mm} 
    \hrule 
    \vspace{1mm}
    \textbf{Output:} $\hat{\lambda}^n=\argmin_{\lambda\in \Delta^m}\lambda^\top\hat{M}^\epsilon_\mu\lambda$\;
\end{algorithm}

\begin{theorem}\label{thm:coeff_theorem}
Let $\Omega\subseteq \mathbb{R}^d$ and let $\mathcal{V}\subset \mathcal{G}_\sigma(\Omega).$  Let $\mu\in \mathcal{P}_2(\Omega)$ and suppose that $A_\mu^\epsilon$ has an eigenvalue of 0 with unique eigenvector $\lambda_*\in \Delta^{m}$. Suppose there exists an estimator $\hat{T}(\hat{\mu}^n,\hat{\nu}^n_j)$ and a nonincreasing function $\theta(n):\mathbb{N}\rightarrow \mathbb{R}_{\geq 0}$ such that
\begin{equation}\mathbb{E}\left(\|T^\epsilon_{\mu\rightarrow \nu_j}-\hat{T}(\hat{\mu}^n,\hat{\nu}^n_j)\|_{L^2(\mu)}^2\right)\leq \theta(n),\label{eqn:thetabound1}\end{equation}for all $1\leq j\leq m.$ Let $\hat{\lambda}^n$ be the output of Algorithm \ref{alg:coefficient_recovery}, applied to $F^\epsilon_{\lambda,\mathcal{V}}$ at $\mu$. Then:
\begin{equation}
\mathbb{E}\left(\|\hat{\lambda}^n-\lambda_*\|^2_2\right)\leq \frac{Jm^3 d\sigma^2}{\alpha_2}\max\left\{\frac{1}{\sqrt{n}},\sqrt{\theta(n)+\theta(n)^2}\right\},\label{eqn:ent_coeff_recovery}
\end{equation}
where $J$ is an absolute constant, and $\alpha_2$ is the smallest nonzero eigenvalue of $A_\mu^\epsilon$. 

Let $\mu \in \mathcal{G}_\sigma(\Omega)$, and suppose that $S^\epsilon_\mu$ has an eigenvalue of $0$ with multiplicity $1$ with eigenvector $\lambda_*\in \Delta^m$. Suppose that (\ref{eqn:thetabound1}) holds for all $1\leq j\leq m$, and furthermore that there exists an estimator $\hat{T}(\hat{\mu}^n)$ such that:
\begin{equation}
\mathbb{E}\left(\|T^\epsilon_{\mu\rightarrow \mu}-\hat{T}(\hat{\mu}^n)\|_{L^2(\mu)}^2\right)\leq \theta(n).\label{eqn:thetabound2}
\end{equation}
Let $\hat{\lambda}^n$ be the output of Algorithm \ref{alg:coefficient_recovery}, applied to $S^\epsilon_{\lambda,\mathcal{V}}$ at $\mu$. Then:
\begin{equation}
\mathbb{E}\left(\|\hat{\lambda}^n-\lambda_*\|^2_2\right)\leq \frac{\tilde{J}m^3d\sigma^2}{\alpha_2}\max\left\{\frac{1}{\sqrt{n}},\sqrt{\theta(n)+\theta(n)^2}\right\},\label{eqn:sink_coeff_recovery}
\end{equation}
where $\tilde{J}$ is an absolute constant, and $\alpha_2$ is the smallest nonzero eigenvalue of $S^\epsilon_\mu$. 
\end{theorem}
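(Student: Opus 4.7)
The plan is to reduce the coefficient estimation to a matrix perturbation bound on $\|\hat M^\epsilon_\mu - A^\epsilon_\mu\|_{\mathrm{op}}$ and then control the latter via a sample-split Monte Carlo argument, exploiting the quadratic program from Proposition \ref{prop:analysis_grad}.

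The first step is to bound $\|\hat\lambda^n - \lambda_*\|_2^2$ by $\|\hat M^\epsilon_\mu - A^\epsilon_\mu\|_{\mathrm{op}}$ and the spectral gap $\alpha_2$. Since $\hat\lambda^n$ minimizes $\lambda^\top\hat M^\epsilon_\mu\lambda$ over $\Delta^m$ and $A^\epsilon_\mu\lambda_*=0$, one has $(\hat\lambda^n)^\top\hat M^\epsilon_\mu\hat\lambda^n\le\lambda_*^\top\hat M^\epsilon_\mu\lambda_*$, whence $(\hat\lambda^n)^\top A^\epsilon_\mu\hat\lambda^n\le 2\|\hat M^\epsilon_\mu - A^\epsilon_\mu\|_{\mathrm{op}}$. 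Decomposing $\hat\lambda^n = c\lambda_*+w$ with $w\perp\lambda_*$ gives $(\hat\lambda^n)^\top A^\epsilon_\mu\hat\lambda^n = w^\top A^\epsilon_\mu w\ge \alpha_2\|w\|^2$ by the spectral gap, and the simplex constraints $\mathbf 1^\top\hat\lambda^n = \mathbf 1^\top\lambda_* = 1$ together with $w\perp\lambda_*$ force $|c-1|\le\sqrt m\|w\|$, so $\|\hat\lambda^n - \lambda_*\|_2^2 \le (m+1)\|w\|^2\le\frac{2(m+1)}{\alpha_2}\|\hat M^\epsilon_\mu - A^\epsilon_\mu\|_{\mathrm{op}}$.

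The second step is to bound $\mathbb E\|\hat M^\epsilon_\mu - A^\epsilon_\mu\|_{\mathrm{op}}\le m\,\mathbb E\max_{ij}|[\hat M^\epsilon_\mu - A^\epsilon_\mu]_{ij}|$ entrywise. Writing $\phi_{ij}(x) = \langle\hat T_i(x)-x,\hat T_j(x)-x\rangle$ and $\phi^*_{ij}(x) = \langle T^\epsilon_{\mu\to\nu_i}(x)-x, T^\epsilon_{\mu\to\nu_j}(x)-x\rangle$, and letting $\mathcal D_n$ denote the data used to construct $\hat T_i,\hat T_j$, I decompose $[\hat M^\epsilon_\mu - A^\epsilon_\mu]_{ij}$ as the sum of a Monte Carlo part $\tfrac{1}{n}\sum_{k=n+1}^{2n}\phi_{ij}(X_k) - \mathbb E[\phi_{ij}(X)\mid\mathcal D_n]$ and a bias part $\mathbb E[\phi_{ij}(X) - \phi^*_{ij}(X)\mid\mathcal D_n]$. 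Conditionally on $\mathcal D_n$, the Monte Carlo part is a centered i.i.d. average (this is precisely why Algorithm \ref{alg:coefficient_recovery} splits the $2n$ samples from $\mu$ into halves), so its conditional $L^1$ norm is at most $n^{-1/2}\sqrt{\mathbb E[\phi_{ij}(X)^2\mid\mathcal D_n]}$; subgaussianity of $\mu$ (from Proposition \ref{prop:subgauss}) together with the smoothness of entropic maps (Proposition \ref{prop:sinkhorn_differentiable}) yields dimension-polynomial fourth-moment control, giving the $d\sigma^2/\sqrt n$ contribution. For the bias part, expand $\phi_{ij}-\phi^*_{ij} = \langle\hat T_i - T^\epsilon_{\mu\to\nu_i},\hat T_j-\mathrm{id}\rangle + \langle T^\epsilon_{\mu\to\nu_i}-\mathrm{id},\hat T_j - T^\epsilon_{\mu\to\nu_j}\rangle$ and apply Cauchy--Schwarz in $L^2(\mu)$: each product pairs an estimator error of expected square at most $\theta(n)$ (by (\ref{eqn:thetabound1})) with either $\|T^\epsilon_{\mu\to\nu_j}-\mathrm{id}\|_{L^2(\mu)}$ (yielding a $\sqrt{\theta(n)}$ contribution times a subgaussian-controlled moment factor) or with $\|\hat T_j - T^\epsilon_{\mu\to\nu_j}\|_{L^2(\mu)}$ (yielding a cross $\theta(n)$ contribution), which Jensen consolidates into a $\sqrt{\theta(n)+\theta(n)^2}$ bound. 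Combining with the first step and collecting powers of $m$ establishes (\ref{eqn:ent_coeff_recovery}).

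The Sinkhorn bound (\ref{eqn:sink_coeff_recovery}) follows by the same argument after substituting $T^\epsilon_{\mu\to\mu}(x)$ for $x$ in $\phi_{ij}$ and invoking (\ref{eqn:thetabound2}) to control the extra estimator-error terms symmetrically. The main obstacle will be the bias-part moments: the cross terms are not clean $L^2$ squares, so one must carefully leverage subgaussianity of $\mu$ and the quadratic-growth properties of entropic potentials to keep $\mathbb E\|T^\epsilon_{\mu\to\nu_i}(X)-X\|^2$ and their fourth-moment analogues dimension-polynomial rather than dimension-exponential. A secondary technical point is making the conditional-expectation manipulation for the Monte Carlo step rigorous, which is precisely the purpose of the $2n$-sample split in Algorithm \ref{alg:coefficient_recovery}.
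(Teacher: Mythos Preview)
Your overall architecture---a spectral-gap perturbation bound followed by an entrywise control of $\hat M^\epsilon_\mu - A^\epsilon_\mu$ via a Monte Carlo/bias split---is exactly what the paper does (its Lemma \ref{lemma: generic eigenvector bounds} plays the role of your first step, and Lemmas \ref{lem:ent_mat_bound}--\ref{lemma:sinkmatbound} are the entrywise bounds). However, the specific way you set up the Monte Carlo step has a genuine gap.

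You center the fluctuation on the \emph{estimated} maps, writing the Monte Carlo part as $\tfrac{1}{n}\sum_k \phi_{ij}(X_k)-\mathbb E[\phi_{ij}(X)\mid\mathcal D_n]$ with $\phi_{ij}(x)=\langle\hat T_i(x)-x,\hat T_j(x)-x\rangle$, and then bound its conditional $L^1$ norm by $n^{-1/2}\sqrt{\mathbb E[\phi_{ij}(X)^2\mid\mathcal D_n]}$. Controlling this second moment requires $\mathbb E\|\hat T_i(X)-X\|^4$, i.e.\ fourth-moment control of the estimator. But assumption (\ref{eqn:thetabound1}) only gives $\mathbb E\|\hat T_i-T^\epsilon_{\mu\to\nu_i}\|_{L^2(\mu)}^2\le\theta(n)$, and Proposition \ref{prop:sinkhorn_differentiable} concerns the smoothness of the \emph{true} entropic map $T^\epsilon_{\mu\to\nu}$, not of the arbitrary estimator $\hat T$. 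There is no mechanism in the hypotheses to convert $L^2$ control of $\hat T_i - T^\epsilon_{\mu\to\nu_i}$ into the fourth-moment bound you need here.

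The paper's fix is to swap the roles: it splits $[\hat A^\epsilon_\mu-A^\epsilon_\mu]_{ij}$ into $\big(\tfrac{1}{n}\sum_k\phi^*_{ij}(X_k)-\int\phi^*_{ij}\,d\mu\big)+\big(\tfrac{1}{n}\sum_k(\phi_{ij}-\phi^*_{ij})(X_k)\big)$, i.e.\ the Monte Carlo fluctuation is taken at the \emph{true} maps $\phi^*_{ij}$. Then the variance bound only needs $\mathbb E\|T^\epsilon_{\mu\to\nu_i}(X)-X\|^4$, which is available via convex ordering $(T^\epsilon_{\mu\to\nu_i})_\#\mu\le_c\nu_i$ (Lemma \ref{lemma:10.5yang}) and subgaussianity of $\nu_i$ and of $\mu$ (Proposition \ref{prop:subgauss}). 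The second piece---your bias part, now evaluated on samples rather than in expectation---only requires \emph{second} moments of the estimator error, exactly what (\ref{eqn:thetabound1}) provides, together with second moments of $T^\epsilon_{\mu\to\nu_i}(X)-X$. Your bias expansion is essentially correct and matches the paper; it is only the placement of the Monte Carlo step that needs to be moved to the true maps.
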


We present two applications of Theorem \ref{thm:coeff_theorem} based on \citep{rigollet2022sample} and \citep{werenski2023estimation}. In both cases we may conclude that the rate of estimation of regularized barycentric coefficients is not cursed by dimensionality.

\begin{corollary}\label{cor:bounded analysis}
Let $\Omega\subset\mathbb{R}^d$ be bounded, and let $\mathcal{V}\subset \mathcal{P}_2(\Omega)$. Let $X_1,X_2,...,X_{2n}\sim \mu$, and let $\hat{\mu}^n$ denote the uniform distribution on $X_1,...,X_n$, $\hat{\mu}_1$ denote the uniform distribution on $X_1,...,X_{\lfloor n/2\rfloor}$ and $\hat{\mu}_2$ denote the uniform distribution on $X_{\lfloor n/2\rfloor+1},...,X_{2\lfloor n/2\rfloor}$. Then Theorem \ref{thm:coeff_theorem} holds, with $\hat{T}(\hat{\mu}^n,\hat{\nu}_j^n)=T^\epsilon_{\hat{\mu}^n\rightarrow \hat{\nu}_j^n}$, $\hat{T}(\hat{\mu}^n)=\hat{T}^\epsilon_{\hat{\mu}_1\rightarrow \hat{\mu}_2}$ and $\theta(n)=\frac{\tilde{C}_{\Omega,\epsilon}}{n}$, where $\tilde{C}_{\Omega,\epsilon}$ is a constant depending only on $|\Omega|$ and $\epsilon$. 
\end{corollary}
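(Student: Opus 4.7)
The plan is to verify the two sample-complexity hypotheses (\ref{eqn:thetabound1}) and (\ref{eqn:thetabound2}) of Theorem \ref{thm:coeff_theorem} with the specific rate $\theta(n)=\tilde{C}_{\Omega,\epsilon}/n$, after which the corollary follows by direct substitution. Because $\Omega$ is bounded, every $\nu\in\mathcal{P}_2(\Omega)$ is automatically $\sigma$-subgaussian with $\sigma$ depending only on $|\Omega|$, so $\mathcal{V}\subset\mathcal{G}_\sigma(\Omega)$ and the ambient hypotheses of Theorem \ref{thm:coeff_theorem} are satisfied (the eigenvalue hypothesis on $A_\mu^\epsilon$ or $S_\mu^\epsilon$ is assumed in the statement). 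All the work is thus in matching the estimation rate.

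For (\ref{eqn:thetabound1}), fix $1\le j\le m$ and observe that the samples $\{X_i\}_{i=1}^n$ defining $\hat{\mu}^n$ and $\{Y_i^j\}_{i=1}^n$ defining $\hat{\nu}_j^n$ are independent draws from $\mu$ and $\nu_j$ respectively. The plug-in entropic map $T^\epsilon_{\hat{\mu}^n\rightarrow\hat{\nu}_j^n}$ is precisely the estimator analyzed in \citep{rigollet2022sample}, who establish that when the source and target measures are supported in a fixed compact set one has
\begin{equation*}
\mathbb{E}\|T^\epsilon_{\mu\rightarrow\nu_j}-T^\epsilon_{\hat{\mu}^n\rightarrow\hat{\nu}_j^n}\|_{L^2(\mu)}^2\leq \tilde{C}_{\Omega,\epsilon}/n,
\end{equation*}
with a constant depending only on $|\Omega|$ and $\epsilon$, and notably not on the ambient dimension $d$.

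For (\ref{eqn:thetabound2}) we must estimate the self-map $T^\epsilon_{\mu\rightarrow\mu}$. A naive plug-in using the same empirical measure on both marginals would collapse to the identity and introduce uncontrolled bias; sample-splitting circumvents this. Since $\hat{\mu}_1$ and $\hat{\mu}_2$ are built from disjoint halves of $X_1,\dots,X_{2\lfloor n/2\rfloor}$, each is marginally a uniform measure on $\lfloor n/2\rfloor$ i.i.d.\ samples from $\mu$, and the two are independent. Applying the \citep{rigollet2022sample} bound once more to the pair $(\mu,\mu)$ yields
\begin{equation*}
\mathbb{E}\|T^\epsilon_{\mu\rightarrow\mu}-T^\epsilon_{\hat{\mu}_1\rightarrow\hat{\mu}_2}\|_{L^2(\mu)}^2\leq \tilde{C}_{\Omega,\epsilon}/n,
\end{equation*}
absorbing the factor of two from the split into the constant.

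The main obstacle is bookkeeping: verifying that the Rigollet--Stromme hypotheses apply in both invocations. Compact support is automatic since $\mu,\nu_j$ live in $\Omega$; the requisite sample independence is built into Algorithm \ref{alg:coefficient_recovery} via sample-splitting and independent draws across reference measures; and smoothness/regularity of the population potentials follows from Proposition \ref{prop:sinkhorn_differentiable} together with subgaussianity. Substituting $\theta(n)=\tilde{C}_{\Omega,\epsilon}/n$ into (\ref{eqn:ent_coeff_recovery}) and (\ref{eqn:sink_coeff_recovery}) completes the proof, since $\sqrt{\theta(n)+\theta(n)^2}=O(n^{-1/2})$ is dominated by the $n^{-1/2}$ term inside the maximum for large $n$.
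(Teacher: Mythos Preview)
Your proposal is correct and follows essentially the same route as the paper: invoke the Rigollet--Stromme bound (Theorem~4 in \citep{rigollet2022sample}) for the plug-in entropic map to verify (\ref{eqn:thetabound1}), apply it again with the sample-split pair $(\hat{\mu}_1,\hat{\mu}_2)$ to verify (\ref{eqn:thetabound2}), and then substitute $\theta(n)=\tilde{C}_{\Omega,\epsilon}/n$ into Theorem~\ref{thm:coeff_theorem}. One small quibble: your stated motivation for sample-splitting is slightly off, since $T^\epsilon_{\hat{\mu}^n\rightarrow\hat{\mu}^n}$ does \emph{not} collapse to the identity for $\epsilon>0$; the real reason is that the Rigollet--Stromme analysis requires independent source and target samples, which is exactly what the split provides.
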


\begin{corollary}\label{cor:logconcave_analysis}
Let $\nu_j$ be $c$-strongly log-concave with $\mathbb{E}(\nu_j)=0$ for all $1\leq j\leq m$. Then there exists an estimator $\hat{T}$ such that Theorem \ref{thm:coeff_theorem} holds for $F^\epsilon_{\lambda,\mathcal{V}}$, with $\theta(n)=\frac{\tilde{K}_{d,\epsilon,c}}{n^{1/3}}$, where $\tilde{K}_{d,\epsilon,c}$ is a constant depending on $d,\epsilon$ and $c$.
\end{corollary}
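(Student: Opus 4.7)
The plan is to apply Theorem \ref{thm:coeff_theorem} for the entropic barycenter functional $F^\epsilon_{\lambda,\mathcal{V}}$; this requires (i) verifying the ambient subgaussianity hypothesis $\mathcal{V}\subset \mathcal{G}_\sigma(\mathbb{R}^d)$, and (ii) exhibiting an estimator $\hat{T}(\hat{\mu}^n,\hat{\nu}_j^n)$ satisfying the $L^2(\mu)$ error bound \eqref{eqn:thetabound1} with $\theta(n)=\tilde{K}_{d,\epsilon,c}n^{-1/3}$.

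For (i), I would appeal to a standard consequence of strong log-concavity: a $c$-strongly log-concave probability measure satisfies a log-Sobolev inequality with constant $1/c$ (Bakry--Émery), and Herbst's argument then implies that it is $(1/\sqrt{c})$-subgaussian. Combined with the mean-zero hypothesis, each $\nu_j$ lies in $\mathcal{G}_\sigma(\mathbb{R}^d)$ with $\sigma = 1/\sqrt{c}$, so the ambient hypothesis of Theorem \ref{thm:coeff_theorem} is satisfied. Moreover, by Proposition \ref{prop:subgauss}, if $\mu$ is a critical point of $F^\epsilon_{\lambda_*,\mathcal{V}}$ then $\mu$ is itself $(1/\sqrt{c})$-subgaussian, so $\mu$ satisfies the regularity assumptions required in the analysis of \citep{werenski2023estimation}.

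For (ii), I would take $\hat{T}(\hat{\mu}^n,\hat{\nu}_j^n)$ to be precisely the entropic map estimator studied in \citep{werenski2023estimation}, which is tailored to the case of strongly log-concave target measures with zero mean. Their main result provides, for each $1\leq j \leq m$, a bound of the form
\begin{equation*}
\mathbb{E}\left(\|T^\epsilon_{\mu\rightarrow \nu_j}-\hat{T}(\hat{\mu}^n,\hat{\nu}^n_j)\|_{L^2(\mu)}^2\right)\leq \frac{\tilde{K}_{d,\epsilon,c}}{n^{1/3}},
\end{equation*}
where the constant depends only on $d,\epsilon,c$. Setting $\theta(n)=\tilde{K}_{d,\epsilon,c}n^{-1/3}$ thus verifies \eqref{eqn:thetabound1}, and substituting into \eqref{eqn:ent_coeff_recovery} of Theorem \ref{thm:coeff_theorem} yields the claimed sample complexity for $\hat{\lambda}^n$.

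The main thing to check carefully is that the hypotheses of the estimation result in \citep{werenski2023estimation} align with the setting here: strong log-concavity is imposed on the \emph{target} measures $\nu_j$, the source $\mu$ needs only to be subgaussian (guaranteed by Proposition \ref{prop:subgauss}), and the sample-splitting in Algorithm \ref{alg:coefficient_recovery} is consistent with the i.i.d.\ assumptions used to prove their rate. Once these compatibilities are confirmed, the rest of the argument is a direct application of Theorem \ref{thm:coeff_theorem} with no further computation required.
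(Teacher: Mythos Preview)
Your proposal is correct and follows essentially the same approach as the paper's proof: establish subgaussianity of the $\nu_j$ from strong log-concavity (the paper cites Theorem~5 in \citep{werenski2023estimation} rather than Bakry--\'Emery/Herbst, but these are the same fact), invoke Proposition~\ref{prop:subgauss} to transfer subgaussianity to the critical point $\mu$, and then apply the $n^{-1/3}$ entropic map estimation rate from \citep{werenski2023estimation} (Theorem~7 there) as the $\theta(n)$ in Theorem~\ref{thm:coeff_theorem}. The paper's proof is slightly terser but makes the same moves in the same order.
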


Finally, we prove a stability property, which implies that the encoding obtained via solving the analysis problem is stable with respect to perturbations of the measure being analyzed, suggesting an advantage of using regularized barycenter functionals for representation of measures in $\mathcal{P}_2(\Omega)$.
\begin{proposition}\label{prop:stab_analysis}
Let $\Omega\subset \mathbb{R}^d$ be bounded, and let $\mu,\rho\in \mathcal{P}(\Omega)$ and $\mathcal{V}\subset \mathcal{P}(\Omega)$. Suppose that $A_\mu^\epsilon$ (resp. $S^\epsilon_\mu$) has an eigenvalue 0 with unique eigenvector $\lambda_\mu\in\Delta^m$. Let $\lambda_\rho\in \argmin_{\lambda\in \Delta^{m}}\lambda^\top A_\rho^\epsilon \lambda$ (resp. $\lambda_\rho\in \argmin_{\lambda\in \Delta^{m}}\lambda^\top S_\rho^\epsilon \lambda$). Then there exists a constant $H_{\Omega,\epsilon},$ depending only on $\Omega$ and $\epsilon$, such that $\|\lambda_\mu-\lambda_\rho\|_2^2\leq \frac{m^3 H_{\Omega,\epsilon}}{\alpha_2}OT_2(\mu,\rho)$, where $\alpha_2>0$ is the smallest nonzero eigenvalue of $A_\mu^\epsilon$ (resp. $S^\epsilon_\mu$).  
\end{proposition}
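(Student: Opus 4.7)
The plan is to reduce stability of $\lambda_\rho$ around $\lambda_\mu$ to a spectral perturbation for the matrices $A^\epsilon_\mu$ vs.\ $A^\epsilon_\rho$ (and $S^\epsilon_\mu$ vs.\ $S^\epsilon_\rho$), and then to stability of the entropic maps in the source measure on the bounded domain $\Omega$. I focus on the $F^\epsilon_{\lambda,\mathcal{V}}$ case; the Sinkhorn case is analogous.

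First I would use the optimality of $\lambda_\rho$ for $\lambda\mapsto\lambda^\top A^\epsilon_\rho \lambda$ on $\Delta^m$ together with $\lambda_\mu^\top A^\epsilon_\mu \lambda_\mu = 0$ (by the null-eigenvector hypothesis) to write $\lambda_\rho^\top A^\epsilon_\mu\lambda_\rho \leq \lambda_\mu^\top (A^\epsilon_\rho - A^\epsilon_\mu) \lambda_\mu + \lambda_\rho^\top(A^\epsilon_\mu - A^\epsilon_\rho)\lambda_\rho \leq 2\|A^\epsilon_\mu - A^\epsilon_\rho\|_{\mathrm{op}}$, using $\|\lambda_\mu\|_2,\|\lambda_\rho\|_2 \leq 1$ (simplex). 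Writing $\lambda_\rho = c\hat\lambda_\mu + \lambda^\perp$ with $\hat\lambda_\mu := \lambda_\mu/\|\lambda_\mu\|_2$ and $\lambda^\perp \perp \hat\lambda_\mu$, the spectral gap yields $\alpha_2 \|\lambda^\perp\|_2^2 \leq \lambda_\rho^\top A^\epsilon_\mu \lambda_\rho$. The simplex constraints combined with this decomposition give $c - \|\lambda_\mu\|_2 = -\|\lambda_\mu\|_2\,\mathbf{1}^\top\lambda^\perp$, whence $|c - \|\lambda_\mu\|_2| \leq \sqrt{m}\|\lambda^\perp\|_2$ and $\|\lambda_\rho - \lambda_\mu\|_2^2 \leq (m+1)\|\lambda^\perp\|_2^2$. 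Combining,
\begin{equation*}
\|\lambda_\rho - \lambda_\mu\|_2^2 \leq \frac{4m}{\alpha_2}\|A^\epsilon_\mu - A^\epsilon_\rho\|_{\mathrm{op}} \leq \frac{4m^2}{\alpha_2}\max_{i,j}\bigl|[A^\epsilon_\mu - A^\epsilon_\rho]_{ij}\bigr|.
\end{equation*}

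The main technical step is therefore entrywise stability. Setting $\Phi^\mu_{ij}(x) := \langle T^\epsilon_{\mu\to\nu_i}(x) - x,\, T^\epsilon_{\mu\to\nu_j}(x) - x\rangle$, I would split
\begin{equation*}
[A^\epsilon_\mu]_{ij} - [A^\epsilon_\rho]_{ij} = \int \bigl(\Phi^\mu_{ij} - \Phi^\rho_{ij}\bigr)\, d\mu + \int \Phi^\rho_{ij}\, d(\mu - \rho).
\end{equation*}
For the second term, boundedness of $\Omega$ and standard regularity of the Sinkhorn potentials imply that $T^\epsilon_{\rho\to\nu_i}$ is Lipschitz in $x$ with constant depending only on $|\Omega|$ and $\epsilon$, so $\Phi^\rho_{ij}$ is Lipschitz and Kantorovich--Rubinstein duality with $W_1 \leq \sqrt{2}\,OT_2$ yields a bound of order $OT_2(\mu,\rho)$. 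For the first term I would invoke stability of the entropic map in the source measure on bounded $\Omega$, namely $\|T^\epsilon_{\mu\to\nu_i} - T^\epsilon_{\rho\to\nu_i}\|_{L^\infty(\Omega)} \leq C_{\Omega,\epsilon}\, OT_2(\mu,\rho)$, which descends from the uniform Lipschitz and contractive structure of the Sinkhorn potentials $g^\epsilon_{\mu\to\nu_i}, g^\epsilon_{\rho\to\nu_i}$ on compact $\Omega$ via the formula \eqref{eqn:gradformula}. Combined with the $L^\infty$ bound $\|T^\epsilon_{\mu\to\nu_i}(x) - x\| \leq |\Omega|$ and Cauchy--Schwarz, this gives $\bigl|\int(\Phi^\mu_{ij} - \Phi^\rho_{ij})\,d\mu\bigr| \leq C'_{\Omega,\epsilon}\, OT_2(\mu,\rho)$, so altogether $|[A^\epsilon_\mu - A^\epsilon_\rho]_{ij}| \leq H'_{\Omega,\epsilon}\, OT_2(\mu,\rho)$; the proposition follows (absorbing one extra $m$ into $H_{\Omega,\epsilon}$ to match the stated $m^3$).

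The Sinkhorn case for $S^\epsilon_\mu$ is handled identically, with $x$ replaced by $T^\epsilon_{\mu\to\mu}(x)$ (resp.\ $T^\epsilon_{\rho\to\rho}(x)$) throughout the definition of the matrix entries. The only new ingredient is stability of the self-map $T^\epsilon_{\mu\to\mu}$ in $\mu$ on bounded $\Omega$, which follows from the same potential-stability arguments applied with both arguments varying. The main obstacle is precisely this entropic-map stability input: boundedness of $\Omega$ is essential, since it guarantees the Sinkhorn potentials are uniformly bounded and equi-Lipschitz so that a \emph{linear} (rather than square-root) modulus of continuity in $OT_2$ is available; beyond that, everything reduces to the spectral-gap computation above.
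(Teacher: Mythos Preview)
Your approach matches the paper's: reduce $\|\lambda_\mu-\lambda_\rho\|_2^2$ to entrywise stability of $A^\epsilon_\mu$ versus $A^\epsilon_\rho$ via a spectral-gap argument (the paper's Lemma~\ref{lemma: generic eigenvector bounds}), then control each entry by combining stability of the entropic maps in the source measure, Lipschitz regularity of $T^\epsilon_{\rho\to\nu_i}$ in $x$, and Kantorovich--Rubinstein duality (the paper's Lemma~\ref{lemma:matrix entry bounds}, using Lemma~\ref{lem:lipschitz} and Proposition~\ref{thm:schrodmap}). The one substantive difference is the map-stability input: you invoke an $L^\infty$ bound $\|T^\epsilon_{\mu\to\nu_i}-T^\epsilon_{\rho\to\nu_i}\|_{L^\infty(\Omega)}\lesssim OT_2(\mu,\rho)$, which is stronger than needed and for which your justification (``descends from the uniform Lipschitz and contractive structure of the Sinkhorn potentials'') is thin; the paper instead uses the $L^2(\mu)$ bound of Proposition~\ref{thm:schrodmap}, which is a directly citable recent result and already suffices after Cauchy--Schwarz because $\|T^\epsilon_{\cdot\to\nu_i}(x)-x\|\le|\Omega|$ pointwise. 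Swapping your $L^\infty$ input for this $L^2$ one closes the argument with no other changes; the Sinkhorn case is then handled exactly as you say, with the self-map stability obtained by splitting $\|T^\epsilon_{\mu\to\mu}-T^\epsilon_{\rho\to\rho}\|_{L^2(\mu)}$ into a target-perturbation and a source-perturbation term and applying both halves of Proposition~\ref{thm:schrodmap}.
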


\begin{figure}
\centering
    \begin{subfigure}[b]{0.23\textwidth}
        \centering
        \includegraphics[width=\textwidth]{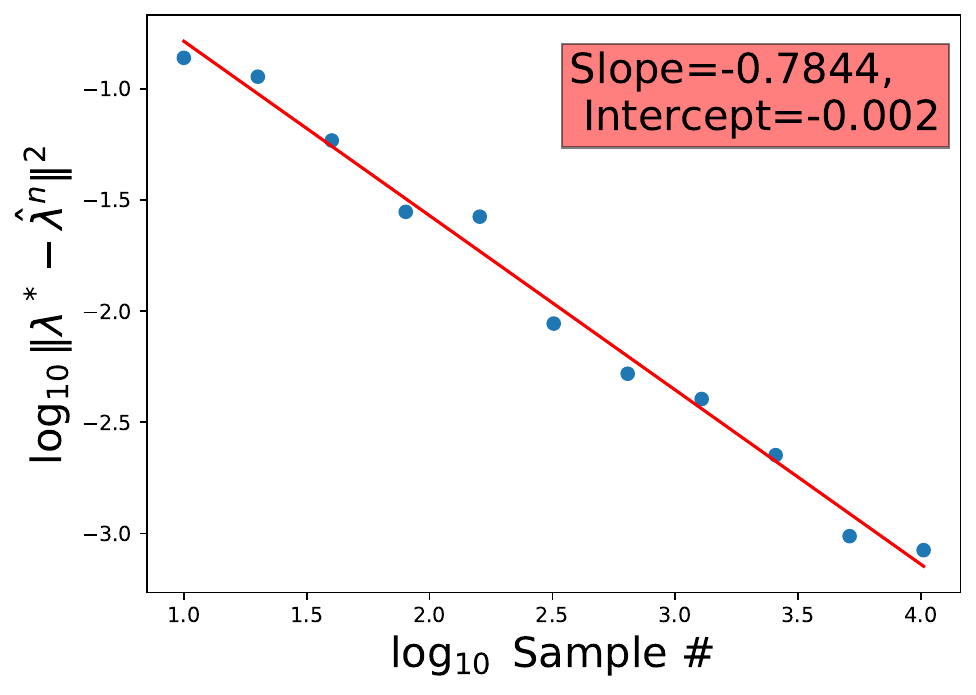}
    \end{subfigure}
    \begin{subfigure}[b]{0.23\textwidth}
        \centering
         \raisebox{0.07cm}{\includegraphics[width=\textwidth]{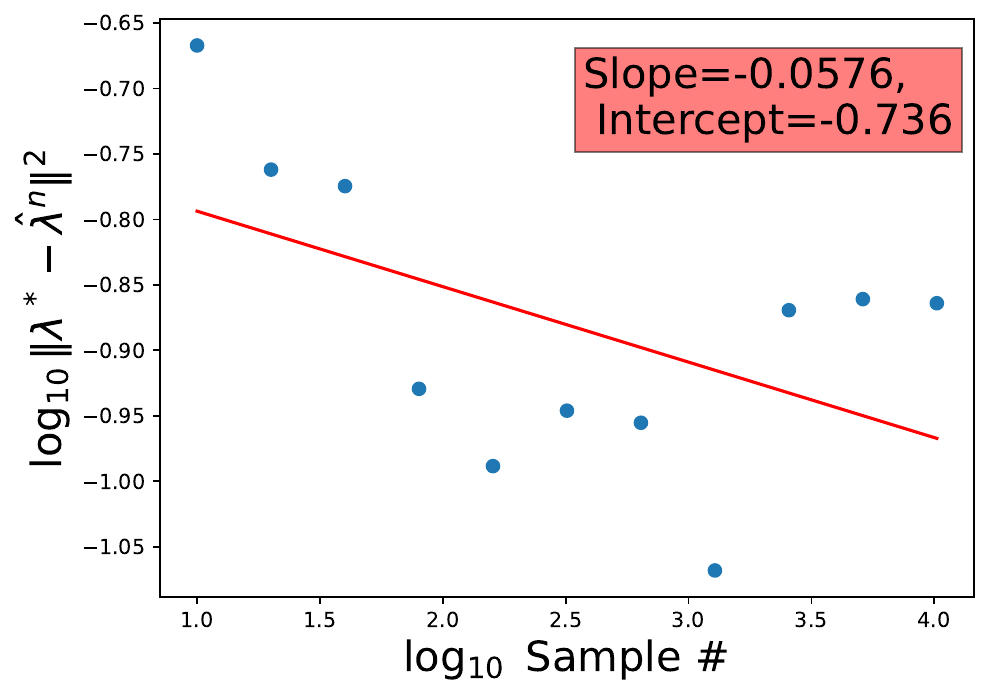}}
    \end{subfigure}
    \begin{subfigure}[b]{0.23\textwidth}
        \centering
        \includegraphics[width=\textwidth]{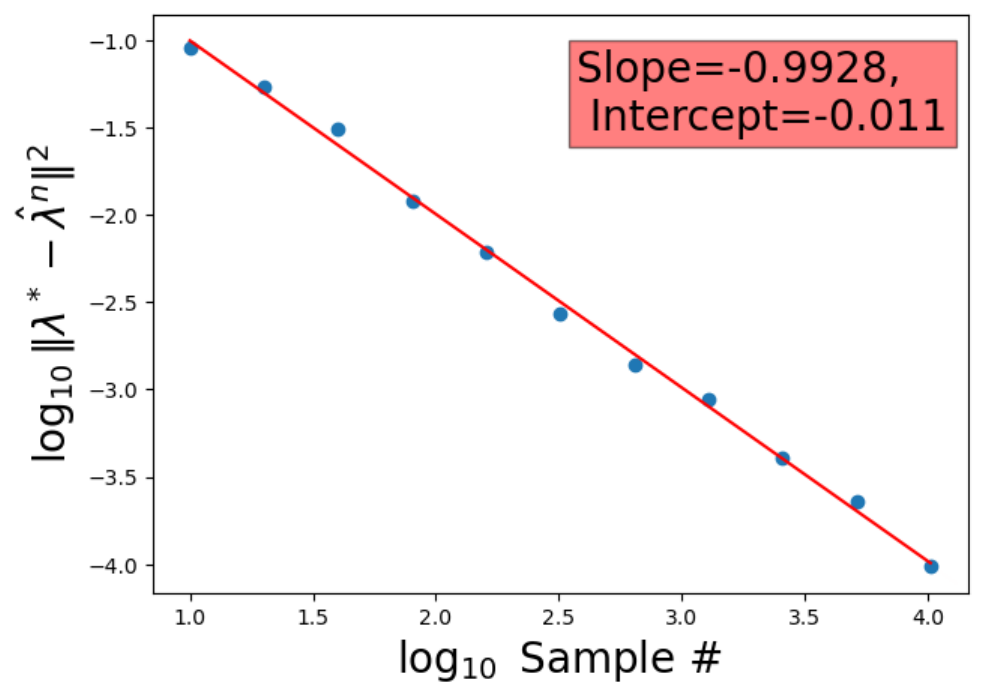}
    \end{subfigure}
    \begin{subfigure}[b]{0.23\textwidth}
        \centering
        \begin{minipage}[b]{\textwidth}
            \hspace{.025cm} 
            \includegraphics[width=\textwidth]{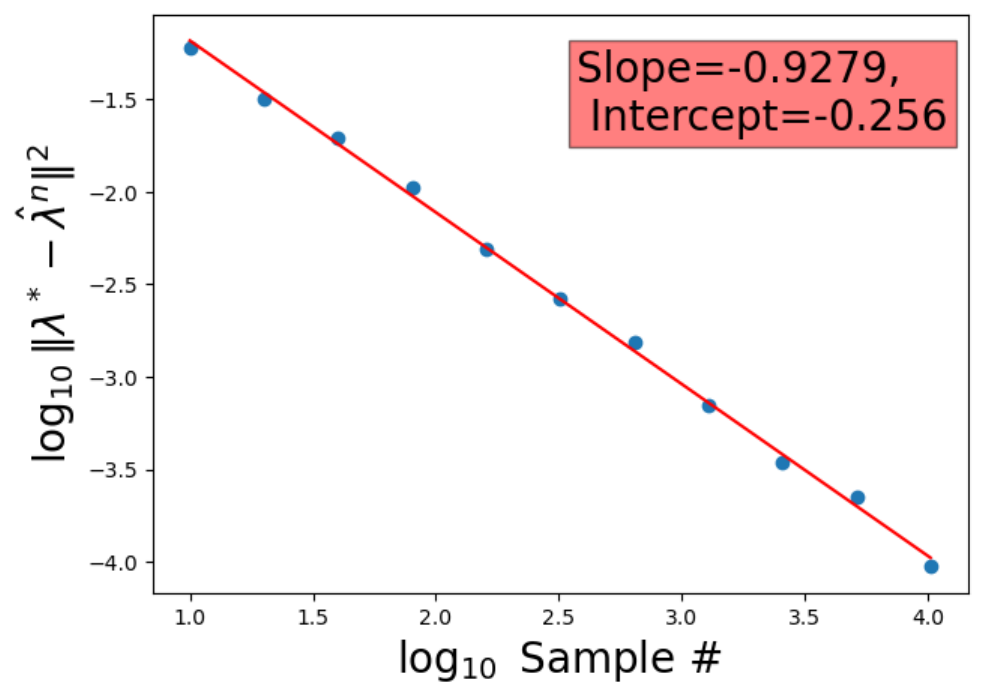}
        \end{minipage}
    \end{subfigure}
    \caption{(Top left) Average $\log$ $\ell^2$-loss for  two random 1D Gaussian measures with random weights, $\epsilon=2$. (Top right) Average  $\log$ $\ell^2$-loss for three random 1D Gaussian measures with random weights, $\epsilon=2$. (Bottom left) Average $\log$ $\ell^2$-loss for three random 5D Gaussian measures, $\lambda=(0.2201,0.0269,0.7530)$, $\epsilon=1$. (Bottom right) Average $\log$ $\ell^2$-loss for three uniform-measures on random 5D cubes, $\lambda=(0.5112,0.4477,0.0411)$, $\epsilon=0.1$. }
    \label{fig:synthetic}
\end{figure}

\subsection{Numerical Verification of Theorem \ref{thm:coeff_theorem}}
\label{sec:numericalexp}

We consider several numerical experiments validating Theorem \ref{thm:coeff_theorem}. In these experiments we focus only on $F^\epsilon_{\lambda,\mathcal{V}}$ and remark that similar results are achievable for $S^\epsilon_{\lambda,\mathcal{V}}$. Our choices of $\mathcal{V}$ are 1-dimensional Gaussians, 5-dimensional Gaussians, and uniform measures on 5-dimensional cubes. In each experiment, we generate a random set of $m$ reference measures and weights $\lambda \in\Delta^m$. We sample from the $(\lambda,\mathcal{V})$-barycenter and the reference measures $\mathcal{V}$, and using these samples we compute $\hat{\lambda}^n$ by applying Algorithm \ref{alg:coefficient_recovery}. Our choice of estimator for the map $T_{\mu\rightarrow \nu_j}^\epsilon$ is $\hat{T}(\hat{\mu}^{n},\hat{\nu}_{j}^{n})=T^\epsilon_{\hat{\mu}^n\rightarrow \hat{\nu}_j^n}$, for each $1\leq j \leq m$. For 1-dimensional Gaussian reference measures, we may sample from $\argmin_{\mu\in \mathcal{P}_2(\mathbb{R})}F^\epsilon_{\lambda,\mathcal{V}}(\mu)$ directly by applying the closed-form expression for entropic barycenters with Gaussian reference measures, given in Theorem 2 in \citep{janati2020debiased}. As we do not have access to closed-form expressions for the entropic barycenters in the higher-dimensional examples, we instead apply a free-support synthesis algorithm (see details in Appendix \ref{sec:synthesis_algo}) to compute approximate critical points $\mu^*$, which we then sample from to estimate $\hat{\lambda}^n$. We then compute  $\|\hat{\lambda}^n-\lambda\|^2_2$. We repeat this over $100$ trials and report the logarithm of the mean-squared error against $\log(n)$ as $n$ ranges over $\{10,20,40,...,10240\}$. The results are plotted in Figure \ref{fig:synthetic}. See Appendix \ref{sec:samplecomplex_implementation} for implementation details. \\

 \noindent\textbf{1-Dimensional Gaussians:}  We see for $m=2$, the errors decays at a rate close to $n^{-1}$, which is faster than predicted by Theorem \ref{thm:coeff_theorem}. In the case of $m=3$, we find that the $\ell^2$-distance between the coefficients essentially does not decay in $n$ (top right). This is due to the fact that in one dimension, it is more likely that the eigenspace of $A^\epsilon_{\mu}$ associated to the eigenvalue zero has dimension greater than 1, which violates the conditions in Theorem \ref{thm:coeff_theorem}, leading to non-uniqueness of coefficients corresponding to the measure being analyzed. Nevertheless, we empirically show (see Appendix \ref{sec:nonunique_coeffs}) that the expected $OT_2^2$-error between the barycenters associated to $\lambda$ and $\hat{\lambda}^n$ rapidly decays as $n$ increases, implying successful recovery of a \emph{valid} set of barycentric coefficients via Algorithm \ref{alg:coefficient_recovery}.\\

  \noindent\textbf{5-Dimensional Gaussians and Uniform Measures:}  For the 5-dimensional Gaussian reference measures, we see that the decay in $\ell^2$ error between coefficients $\lambda$ and $\hat{\lambda}^n$ is at a rate of approximately $n^{-1}$. We conjecture that the improved rate could be due to {the fact that the entropic maps between Gaussians are affine  (see Theorem 1 in \citep{janati2020entropic}).} We further tested our method with reference measures given by three random translates of $\mbox{Unif}([0,2]^5)$. We see that the error decays at a rate of approximately $n^{-9/10}$, again exceeding the rate predicted by our theory, though marginally slower than the Gaussian case.

\begin{table*}[ht]
\centering
\hfill
\resizebox{\textwidth}{!}{
\begin{tabular}{c c c c c c c}
\hline
 & dropout\_local\_1 & dropout\_local\_2 & dropout\_global & jitter & add\_local& clean \\
\hline
$F^\epsilon_{\lambda,\mathcal{V}}$, $\epsilon=0.75$ & $0.397 \pm 0.055$ & $0.387 \pm 0.051$ & $0.399 \pm 0.057$ & $0.392 \pm 0.055$ & $0.389 \pm 0.051$ & $0.397 \pm 0.057$ \\
$S^\epsilon_{\lambda,\mathcal{V}}$, $\epsilon=0.75$ & $0.600 \pm 0.020$ & $0.562 \pm 0.0233$ & $0.626 \pm 0.017$ & $0.758 \pm 0.020$ & $0.578 \pm 0.017$ & $0.740 \pm 0.010$ \\
$F^\epsilon_{\lambda,\mathcal{V}}$, $\epsilon=0.089$ & $0.544 \pm 0.037$ & $0.511 \pm 0.036$ & $0.559 \pm 0.037$ & $0.532 \pm 0.038$ & $0.510 \pm 0.037$ & $0.558 \pm 0.039$ \\
 $S^\epsilon_{\lambda,\mathcal{V}}$, $\epsilon=0.089$ & $0.862 \pm 0.019$ & $0.81 \pm 0.018$ & $0.886 \pm 0.010$ & $0.904 \pm 0.010$ & $0.846 \pm 0.010$ & $0.886 \pm 0.022$ \\
$F^\epsilon_{\lambda,\mathcal{V}}$,$\epsilon=0.009$ & $0.908 \pm 0.020$ & $0.902 \pm 0.022$ & $0.913 \pm 0.018$ & $0.914 \pm 0.018$ & $0.888 \pm 0.019$ & $0.901 \pm 0.017$ \\
$S^\epsilon_{\lambda,\mathcal{V}}$,$\epsilon=0.009$ & $0.905 \pm 0.022$ & \bm{$0.902 \pm 0.021$} & $0.919 \pm 0.020$ & $0.920 \pm 0.018$ & \bm{$0.900 \pm 0.018$} & \bm{$0.908 \pm 0.024$} \\
Unregularized ($\epsilon=0$) & \bm{$0.912 \pm 0.019$} & $0.901 \pm 0.018$& \bm{$0.924 \pm 0.020$} & \bm{$0.921 \pm 0.020$} & $0.897 \pm 0.017$ & \bm{$0.908 \pm 0.018$} \\
Doubly Regularized $\epsilon=0.009,\tau=0.01$ & $0.874 \pm 0.015$ & $0.856 \pm 0.019$ & $0.891 \pm 0.019$ & $0.896 \pm 0.019$ & $0.865 \pm 0.019$ & $0.884 \pm 0.022$ \\
PointNet& $0.68 \pm 0.035$ & $0.672 \pm 0.036$ & $0.661 \pm 0.037$ & $0.684 \pm 0.031$ & $0.6567 \pm 0.029$ & $0.678 \pm 0.029$ \\
\hline
\end{tabular}
}
\vspace{1mm}
\caption{Comparison of PointNet (trained on 400 point clouds) with classification based on barycentric coefficients (with 15 reference measures) for point cloud classification. Confidence intervals calculated as $t_{\alpha,4}*\frac{\hat{\sigma}}{\sqrt{n}}$ with $\hat{\sigma}$ the sample standard deviation and $t_{\alpha,4}$ is the Student's t-value with $\alpha=0.05$ and $4$ degrees of freedom.} \label{table:accuracy}
\end{table*}

\section{POINT CLOUD CLASSIFICATION}
\label{sec:point-cloud-classification}
\begin{algorithm}[t!]
    \caption{Point Cloud Classification Algorithm}\label{alg:point_cloud_classification}
    \KwData{ $m$ classes;\\
        $b$ labeled point clouds for each class: $\mathcal{A}^y = \{P^y_1, P^y_2, \ldots, P^y_{b}\}$ for $1 \leq y \leq m$;\\
        Unlabelled point cloud: $Q$;\\
        Regularization parameter: $\epsilon$;\\
        Functional: $\mathcal{F}^\epsilon_{\lambda, \mathcal{V}} \in \{F^\epsilon_{\lambda, \mathcal{V}}, S^\epsilon_{\lambda, \mathcal{V}}\}$.
    }
    \vspace{2mm} 
    \hrule 
    \vspace{2mm} 

$\mu\leftarrow $Empirical measure supported on $Q$;

    \For{$1 \leq y \leq m$}{
        \For{$1 \leq j \leq b$}{
            Set $\nu^y_{j} \leftarrow$ Empirical measure supported on $P^y_{j}$\;
        }
    }
   $\mathcal{V} \leftarrow\{\nu^1_{1}, \ldots, \nu^1_{b}, \nu^2_{1}, \ldots, \nu^2_{b}, \ldots, \nu^m_{1}, \ldots, \nu^m_{b} \}$\;

  \For{$1\leq i_1,i_2\leq m$ and $1\leq j_1,j_2 \leq b$}{
        \If{$\mathcal{F}^\epsilon_{\lambda,\mathcal{V}}==F^\epsilon_{\lambda,\mathcal{V}}$}{
            $[M_{\mu}]_{b(i_1-1)+j_1,b(i_2-1)+j_2}\leftarrow \frac{1}{n}\sum_{x \in {\texttt{supp}(\mu)}} \langle T^\epsilon_{\mu\rightarrow\nu^{i_1}_{j_1}}(x) - x, T^\epsilon_{\mu\rightarrow \nu^{i_2}_{j_2}} (x)- x\rangle$ ;
        }
        \Else{
            $[M_{\mu}]_{b(i_1-1)+j_1,b(i_2-1)+j_2}\leftarrow \frac{1}{n}\sum_{x \in {\texttt{supp}(\mu)}} \langle T^\epsilon_{\mu\rightarrow \nu_{j_1}^{i_1}}(x) -T^\epsilon_{\mu\rightarrow \mu}(x) , T^\epsilon_{\mu\rightarrow \nu_{j_2}^{i_2}} (x) -T^\epsilon_{\mu\rightarrow \mu} (x) \rangle $ ;
        }
    }
    $\lambda\leftarrow \argmin_{\lambda\in\Delta^{m b}}\lambda^\top M_{\mu}\lambda;$

Define $B\in \mathbb{R}^{m\times mb}$ such that:
    \[
    [B]_{ij} := \begin{cases} 
      1 & \text{if } j \in \{b(i-1)+1,..., bi\}, \\ 
      0 & \text{otherwise}. 
   \end{cases}
    \]
    
$\tilde{\lambda}\leftarrow B\lambda$
\vspace{1mm} 
    \hrule 
    \vspace{1mm}
  \textbf{Output:} Assign point cloud $Q$ the class $ \tilde{y} \leftarrow \texttt{argmax}_{1\leq y \leq m} [\tilde{\lambda}]_{y}$

\end{algorithm}

We demonstrate the utility of the analysis of measures via regularized barycenters for 3D point cloud-classification. We obtained our data from the PointCloud-C dataset \citep{ren2022benchmarking}, a repository of point clouds separated into classes and corrupted with various regimes of noise and occlusions. We select 100 clean (uncorrupted) point clouds from five classes (airplanes, beds, guitars, monitors, vases) and perform the following experiment. We run Algorithm \ref{alg:Classification Experiment} (detailed in Subsection \ref{sec:pointcloud implementation} in the Appendix) on these point clouds with $m=5$, $q=100$, $b=3$, $n_{train}=80$, $n_{test}=20$, $K=5$, and vary over $\mathcal{F}^\epsilon_{\lambda,\mathcal{V}}\in \{F^\epsilon_{\lambda,\mathcal{V}}, S^\epsilon_{\lambda,\mathcal{V}}$\} and $\epsilon\in \{0.009,0.089,0.75\}$. We also augment our \emph{test} set with 500 corrupted point clouds, which are copies of the 100 clean test point clouds with 5 types of corruptions applied. We obtain the corrupted data from the \texttt{dropout$\_$local$\_$1, dropout$\_$local$\_$2,dropout$\_$global$\_$4,jitter$\_$4} and \texttt{add$\_$local$\_$4} datasets; see \citep{ren2022benchmarking} for details. We do this by applying the same train-test splits to the corrupted point clouds, discarding the corrupted training point clouds, and then using the previously selected clean training point clouds as $\mathcal{A}^y_{train}$ in Algorithm \ref{alg:Classification Experiment}.

In Table \ref{table:accuracy}, we display results for classification using $\mathcal{F}^\epsilon_{\lambda,\mathcal{V}}$ with different choices of $\epsilon$. We compare our results to PointNet \citep{qi2017pointnet}, a convolutional neural network (CNN) used for point cloud classification. Our implementation is adapted from \citep{Karaev2023nikitakaraevv}.  We also compare against the same classification method with the unregularized barycenter functional (corresponding to $\mathcal{F}^\epsilon_{\lambda,\mathcal{V}}$, with $\epsilon=0$) and the doubly-regularized barycenter functional \citep{chizat2023doubly}, with inner regularization $\epsilon=0.009$ and outer-regularization $\tau=0.01$. See Section \ref{sec:pointcloud implementation} for details. We see that classification using barycentric coefficients is significantly more accurate than PointNet. Furthermore, our method required $<4\%$ of the data used to train PointNet to achieve this accuracy, suggesting its applicability to problems where training data is rare or expensive to acquire. We also note that the accuracy remained high even in the presence of corrupted data, indicating the robustness of our method. However, we note that PointNet has on the order of $10^6$ parameters, and hence 400 training examples may be insufficient to truly assess its accuracy on our test set. In general we see that classification using the debiased functional $S_{\lambda,\mathcal{V}}^\epsilon$ outperforms classification using $F_{\lambda,\mathcal{V}}^\epsilon$, with a more noticeable difference in performance at high $\epsilon$. In general, the Sinkhorn functional with $\epsilon=0.009$ and the unregularized functional outperform all other methods. Both $F^\epsilon_{\lambda,\mathcal{V}}$ and $S^\epsilon_{\lambda,\mathcal{V}}$ outperformed the doubly regularized functional in the low $\epsilon$ regime, perhaps due to the additional complications introduced by score estimation in this setting. Indeed, the point clouds under consideration can be viewed as samples from a distribution supported on a two-dimensional surface in $\mathbb{R}^3$, which would not admit a Lebesgue density. 

In Appendix \ref{sec:completion}, we further highlight the applicability of the analysis coefficients with a point cloud completion method, which we test on the PointCloud-C dataset. 

\section{CONCLUSIONS AND OPEN PROBLEMS}
\label{sec:Conclusions}

We presented new functional analytic, statistical, and stability results for the synthesis and analysis of probability measures with entropy-regularized optimal transport barycenter functionals with applications to sample-efficient classification of point cloud data. 
Several open problems for future work are to: (a) characterize the \emph{projection} error in analyzing a measure (\ref{eqn:analysiseqn}) when $\mu$ is not a barycenter; (b) derive a stronger sample complexity result for synthesis of the barycenter for $F_{\lambda, \mathcal{V}}^{\epsilon}$, and (c) improve the statistical estimation rates in Theorem \ref{thm:coeff_theorem} to match the observed rates in the experiments.

\vspace{10pt}

\noindent\textbf{Acknowledgments:}  BM, JM, and SA acknowledge partial support from NSF DMS-2309519.  BM and JM acknowledge partial support from NSF DMS-2318894. BM was also supported in part by CCF-1553075.  JM acknowledges partial support from a Tufts Springboard grant. SA acknowledges partial support from the NSF under Cooperative Agreement PHY-2019786 (The NSF AI Institute for Artificial Intelligence and Fundamental Interactions, \url{http://iaifi.org/}).

\bibliographystyle{plainnat}
\bibliography{mybib}{}

\newpage 
\appendix
\onecolumn
\section{BACKGROUND}

\subsection{Background on Subgaussian Measures}
\begin{definition} \label{def:subgauss}
A $\mathbb{R}^d$-valued random variable $X\sim\mu$ is \emph{subgaussian} if any of the following equivalent properties holds \citep{vershynin2018high}:
\begin{enumerate} 
\item There exists $\sigma\geq 0$ such that $\sup_{v\in S^{d-1}} \mathbb{E}_{X\sim \mu}\left(e^{\frac{|\langle X,v\rangle|^2}{\sigma^2}}\right)\leq 2$, where $S^{d-1}$ is the unit sphere in $\mathbb{R}^d.$
\item There exists $\sigma\geq 0$ and an absolute constant $c_G$ such that for all $t\geq 0$ we have: \begin{equation*}
\sup_{v\in S^{d-1}}\mathbb{P}_{X\sim \mu}(|\langle X,v\rangle|\geq t)\leq 2\exp\left(-\frac{c_Gt^2}{\sigma^2}\right).
\end{equation*}
\item  There exists $\sigma\geq 0$ and an absolute constant $C_G$ such that for all $p\geq 1$, we have: \begin{equation*}
\sup_{v\in S^{d-1}}\mathbb{E}_{X\sim \mu}(|\langle X,v\rangle|^p)^{1/p}\leq C_G\sigma \sqrt{p}.
\end{equation*}

\end{enumerate}
The smallest $\sigma$ such that condition 1. holds is referred to as the \emph{subgaussian norm} of $X$, denoted $\|X\|_{\mathcal{G}}.$ Up to the absolute constants $c_G$ and $C_G$, $\|X\|_{\mathcal{G}}$ is the smallest constant that makes any of the above inequalities valid. 

We say that $\mu$ is \emph{subgaussian} if $X\sim \mu$ is subgaussian, and define the \emph{subgaussian norm} of $\mu$ to be $\|\mu\|_\mathcal{G}:=\|X\|_\mathcal{G}$. The set of probability measures on $\mathbb{R}^d$ with subgaussian norm bounded above $\sigma$ is referred to as the set of \emph{$\sigma$-subgaussian measures}, and denoted $\mathcal{G}_\sigma(\mathbb{R}^d).$
\end{definition}

\begin{lemma}\label{lemma:normsubgauss}
(Lemma 1 from \citep{jin2019short}) Let $\mu\in \mathcal{G}_\sigma(\mathbb{R}^d)$ and let $X\sim \mu$. Then there exists an absolute constant $q_G$ such that $\|X\|$ is $q_G\sqrt{d}\sigma$-subgaussian, i.e., $\mathbb{E}\left(\exp\left( \frac{\|X\|^2}{2 q_G d \sigma^2}\right)\right) \leq 2$.  

\end{lemma}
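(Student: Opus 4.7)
The plan is to reduce the multivariate exponential bound on $\|X\|$ to the one-dimensional moment control provided by the equivalent characterizations of subgaussianity in Definition \ref{def:subgauss}, expanding the exponential as a power series and summing the resulting moment estimates. The key observation is that subgaussianity of $X$ at scale $\sigma$ passes to each coordinate marginal $\langle X, e_i\rangle$ (by condition 1 applied with $v=e_i$), and therefore the moment bound of condition 3 is available coordinate-wise.

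First, I would apply condition 3 with $v=e_i$ to obtain, for every integer $p\geq 1$,
\[
\mathbb{E}\bigl[\langle X, e_i\rangle^{2p}\bigr] \leq (C_G \sigma)^{2p}(2p)^p.
\]
Second, I would control the moments of $\|X\|^2 = \sum_{i=1}^d \langle X, e_i\rangle^2$ by the power-mean inequality: for any nonnegative $a_1,\dots,a_d$ and $p\geq 1$, $(\sum_i a_i)^p \leq d^{p-1}\sum_i a_i^p$, which yields
\[
\mathbb{E}\bigl[\|X\|^{2p}\bigr] \leq d^{p-1}\sum_{i=1}^d \mathbb{E}\bigl[\langle X, e_i\rangle^{2p}\bigr] \leq d^p (C_G\sigma)^{2p}(2p)^p.
\]

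Third, I would Taylor-expand the exponential, noting that for any $K>0$,
\[
\mathbb{E}\Bigl[\exp\Bigl(\tfrac{\|X\|^2}{K}\Bigr)\Bigr] = \sum_{p\geq 0}\frac{\mathbb{E}[\|X\|^{2p}]}{K^p p!} \leq 1 + \sum_{p\geq 1}\frac{d^p(C_G\sigma)^{2p}(2p)^p}{K^p p!}.
\]
Using Stirling's estimate $p!\geq (p/e)^p$ collapses the $p$-th summand to at most $\left(\tfrac{2eC_G^2 d\sigma^2}{K}\right)^p$, so the tail is a geometric series in the ratio $r:=2eC_G^2 d\sigma^2/K$. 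Taking $K=2q_G d\sigma^2$ with the absolute constant $q_G$ chosen so that $r \leq 1/2$ (for instance $q_G \geq 2eC_G^2$) bounds the series by $1+r/(1-r)\leq 2$, which is precisely the claimed exponential moment inequality.

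There is no substantive obstacle; the argument is a standard power-series-plus-Stirling calculation, and the only care required is in selecting the absolute constant $q_G$ to ensure the geometric series has ratio at most $1/2$ so the total sum does not exceed $2$. An alternative route via a $1/2$-net of $S^{d-1}$ and union-bounding $\sup_{v\in S^{d-1}}\langle X,v\rangle$ produces the same result up to constants, but the coordinate-wise approach above is cleaner and avoids a dimensional overhead from the covering number.
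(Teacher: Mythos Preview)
Your proof is correct. The paper itself does not give a proof of this lemma; it is stated as a citation to \citep{jin2019short} (Lemma 1 there) and used as a black box. So there is no ``paper's own proof'' to compare against.

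For completeness: each step checks out. Condition 3 of Definition \ref{def:subgauss} applied at $v=e_i$ with exponent $2p$ gives $\mathbb{E}[\langle X,e_i\rangle^{2p}]\le (C_G\sigma)^{2p}(2p)^p$; the power-mean bound $(\sum_i a_i)^p\le d^{p-1}\sum_i a_i^p$ yields $\mathbb{E}[\|X\|^{2p}]\le d^p(C_G\sigma)^{2p}(2p)^p$; and the Stirling lower bound $p!\ge (p/e)^p$ turns the Taylor series into a geometric series with ratio $eC_G^2/q_G$, which is at most $1/2$ once $q_G\ge 2eC_G^2$. The one point worth a sentence of justification is that the interchange of expectation and infinite sum in the Taylor expansion is licensed by monotone convergence, since every term is nonnegative; you implicitly used this but did not flag it.
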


We remark that $\mathcal{G}_\sigma(\mathbb{R}^d)$ is closed with respect to the weak topology on probability measures:

\begin{definition}
We say that $\{\mu_n\}_{n=1}^\infty\subset \mathcal{P}(\mathbb{R}^d)$ \emph{converges weakly} (or converges in the weak topology) to $\mu\in \mathcal{P}(\mathbb{R}^d)$ if $\int \phi d\mu_n\rightarrow \int \phi d\mu$ as $n\rightarrow \infty$ for all bounded, continuous functions $\phi.$
\end{definition}

\begin{lemma}\label{lemma:weakly closed}
Let $\{\mu_n\}_{n=1}^\infty\subset \mathcal{G}_\sigma(\mathbb{R}^d)$ be a sequence weakly converging to $\mu\in \mathcal{P}(\mathbb{R}^d)$ as $n\rightarrow \infty$. Then:
\begin{enumerate}
    \item $\mu\in \mathcal{G}_\sigma(\mathbb{R}^d)$;
    \item $\int \|x\|^pd\mu_n\rightarrow \int\|x\|^p d\mu$ as $n\rightarrow \infty$ for all $p\geq 1.$
\end{enumerate}
\end{lemma}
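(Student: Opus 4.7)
The plan is to treat the two statements separately, with both reducing to a truncation argument that transforms weak convergence of $\mu_n\to \mu$ (which only controls bounded continuous test functions) into convergence statements for the unbounded test functions $e^{|\langle \cdot, v\rangle|^2/\sigma^2}$ and $\|\cdot\|^p$, leveraging the uniform subgaussian bound $\|\mu_n\|_\mathcal{G}\leq \sigma$ throughout.

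For (1), I fix $v\in S^{d-1}$ and define the bounded, continuous truncation $\phi_{M,v}(x):=\min\{e^{|\langle x,v\rangle|^2/\sigma^2},M\}$ for each $M>0$. Since $\mu_n\in \mathcal{G}_\sigma(\mathbb{R}^d)$, Definition \ref{def:subgauss}(1) gives $\int \phi_{M,v}\,d\mu_n\leq \int e^{|\langle x,v\rangle|^2/\sigma^2}\,d\mu_n\leq 2$. Weak convergence then yields $\int \phi_{M,v}\,d\mu\leq 2$, and sending $M\to\infty$ via monotone convergence produces $\int e^{|\langle x,v\rangle|^2/\sigma^2}\,d\mu\leq 2$. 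Since $v\in S^{d-1}$ was arbitrary, $\mu\in\mathcal{G}_\sigma(\mathbb{R}^d)$.

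For (2), I first invoke Lemma \ref{lemma:normsubgauss} to obtain $\sup_n\int e^{\|x\|^2/(2q_Gd\sigma^2)}\,d\mu_n\leq 2$, which implies $C:=\sup_n \int \|x\|^{p+1}\,d\mu_n<\infty$ for every $p\geq 1$. Let $\eta_R:\mathbb{R}^d\to [0,1]$ be a continuous cutoff equal to $1$ on $B_R(0)$ and $0$ outside $B_{R+1}(0)$, and decompose
\begin{equation*}
\int \|x\|^p\,d\mu_n = \int \|x\|^p\eta_R(x)\,d\mu_n+\int \|x\|^p(1-\eta_R(x))\,d\mu_n.
\end{equation*}
The first term integrates a bounded continuous function, so it converges to $\int \|x\|^p\eta_R(x)\,d\mu$ by weak convergence. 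The tail term is bounded uniformly in $n$ by $R^{-1}C$, since $\|x\|^p(1-\eta_R(x))\leq R^{-1}\|x\|^{p+1}$; by (1) combined with Lemma \ref{lemma:normsubgauss}, the analogous tail bound holds for $\mu$. Taking first $n\to\infty$ and then $R\to\infty$ yields $\int \|x\|^p\,d\mu_n\to \int \|x\|^p\,d\mu$.

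The main obstacle is ensuring the uniform tail control is strong enough to exchange limits. In (1), the bound $\int \phi_{M,v}\,d\mu\leq 2$ must survive $M\to\infty$; nonnegativity and monotonicity in $M$ allow monotone convergence to handle this cleanly. In (2), the crux is that the uniform subgaussian norm $\sigma$ (rather than only finite second moments) provides uniform integrability of $\|x\|^p$ against the sequence $\{\mu_n\}$, which is precisely what converts weak convergence into convergence of $p$-th moments.
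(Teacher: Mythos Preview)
Your proof is correct and takes a genuinely different route from the paper's. For part (1), the paper passes to the one-dimensional random variables $Z_n^v:=|\langle v,X_n\rangle|$, invokes Portmanteau and the continuous mapping theorem to get convergence in distribution, and then applies the layer-cake formula $\mathbb{E}(|Z_n^v|^p)=\int_0^\infty pt^{p-1}\mathbb{P}(|Z_n^v|\geq t)\,dt$ together with the subgaussian tail bound (characterization 2 in Definition \ref{def:subgauss}) and dominated convergence to show that the limit satisfies characterization 3. You instead truncate the exponential test function directly and verify characterization 1 via monotone convergence, which is shorter and avoids the detour through the equivalence of characterizations. For part (2), the paper repeats the layer-cake plus dominated-convergence argument for $\|X_n\|$ (using Lemma \ref{lemma:normsubgauss} for the tail domination), while you run a standard uniform-integrability argument with a spatial cutoff and a higher-moment bound. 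Both approaches are valid; yours is arguably more elementary and self-contained, whereas the paper's has the minor structural advantage that the same layer-cake machinery handles both parts in parallel.
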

\begin{proof}
Fix $v\in S^{d-1}$ and let $Z_n^v:=|\langle v,X_n\rangle|$ with $X_n\sim \mu_n$ and $Z^v:=|\langle v, X\rangle|$ with $X\sim \mu$. Since $\mu_n$ converges to $\mu$ weakly, $X_n$ converges to $X$ in distribution by the Portmanteau Lemma [Theorem 2.1 in \citep{billingsley2013convergence}], and by the continuous mapping theorem [Theorem 2.7 in \citep{billingsley2013convergence}], $Z_n^v$ converges to $Z^v$ in distribution. By the layer-cake decomposition, we may write for any $p\ge 1$:
\begin{align*}
\mathbb{E}\left(|Z_n^v|^p\right)& =\int_0^\infty \mathbb{P}(|Z_n^v|^p\geq t) dt\\& =\int_0^\infty pt^{p-1}\mathbb{P}(|Z_n^v|\geq t) dt.
\end{align*}
Hence we have:
\begin{align}
\lim_{n\rightarrow\infty}\mathbb{E}\left(|Z_n^v|^p\right) & = \lim_{n\rightarrow\infty}\int_0^\infty pt^{p-1}\mathbb{P}(|Z_n^v|\geq t) dt. \label{eqn:limexc}
\end{align}
To interchange the limit and integral in 
 (\ref{eqn:limexc}), we apply the concentration bound on $\mathbb{P}(|Z_n^v|\ge t)$ (i.e., characterization 2. in Definition \ref{def:subgauss}) to get that for all $t\in [0,\infty)$ and all $n$,
\begin{align*} pt^{p-1}\mathbb{P}(|Z_{n}^{v}|\ge t)\le2pt^{p-1}\exp\left(-\frac{c_{G}t^{2}}{\sigma^{2}}\right),\end{align*}which is integrable for $p\ge 1$. 
 Therefore
\begin{align}
(\mathbb{E}(|Z^{v}|^{p}))^{\frac{1}{p}} =&\left(\int_0^\infty  p t^{p-1}\mathbb{P}(|Z^v|\ge t)dt\right)^{\frac{1}{p}}\notag\\
=&\left(\int_0^\infty \lim_{n\rightarrow \infty} p t^{p-1}\mathbb{P}(|Z_n^v|\ge t)dt\right)^{\frac{1}{p}}\label{eqn:ConvDist}\\
=&\left(\lim_{n\rightarrow \infty} \int_0^\infty p t^{p-1}\mathbb{P}(|Z_n^v|\ge t)dt\right)^{\frac{1}{p}}\label{eqn:LDCT}\\
=&\left(\lim_{n\rightarrow \infty}\mathbb{E}(|Z_n^v|^p)\right)^{\frac{1}{p}}\notag\\
\le &C_{G}\sigma \sqrt{p}\notag,
\end{align}
where (\ref{eqn:ConvDist}) follows by convergence in distribution of $|Z_{n}^{v}|$ to $|Z^{v}|$ and (\ref{eqn:LDCT}) follows by the Dominated Convergence Theorem.  Since this holds for all $v\in S^{d-1}$, we conclude $\mu$ is $\sigma$-subgaussian. 

The proof of the second statement is similar.  Note that since $\mu$ and $\{\mu_n\}_{n=1}^{\infty}$ are $\sigma$-subgaussian, Lemma \ref{lemma:normsubgauss} establishes that both $\|X\|$ and $\{\|X_n\|\}_{n=1}^{\infty}$ are $q_G\sqrt{d}\sigma$-subgaussian. Hence we may apply characterization 2 in Definition \ref{def:subgauss} to conclude for all $t\in [0,\infty)$, and $n$:
\begin{equation*}
pt^{p}\mathbb{P}(\|X_n\|\geq t)\leq 2pt^{p-1}\exp\left(-\frac{c_Gt^2}{q_G^2 d\sigma^2}\right)
\end{equation*}
which is integrable for all $p\geq 1$. Hence we may once again apply the layer-cake decomposition, the dominated convergence theorem, the convergence in distribution of $X_n $ to $X$ (and therefore by the continuous mapping theorem of $\|X_{n}\|$ to $\|X\|$) to conclude that:
\begin{align*}
\lim_{n\rightarrow \infty}\mathbb{E}(\|X_n\|^p)& =\lim_{n\rightarrow \infty}\int_0^\infty pt^{p-1}\mathbb{P}(\|X_n\|\geq t)dt\\&  =\int_0^\infty pt^{p-1}\lim_{n\rightarrow \infty}\mathbb{P}(\|X_n\|\geq t)dt \\&  =\int_0^\infty pt^{p-1}\mathbb{P}(\|X\|\geq t)dt = \mathbb{E}(\|X\|^p).
\end{align*}
\end{proof}

The following variance bound holds uniformly for $\sigma$-subgaussian measures:

\begin{lemma}\label{lemma:subgauss_var_bound}
Let $\mu\in \mathcal{G}_\sigma(\mathbb{R}^d)$. Then $\texttt{Var}(\mu)\leq \tilde{q}_Gd\sigma^{2}$, where $\tilde{q}_{G}$ is an absolute constant.
\end{lemma}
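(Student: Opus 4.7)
The plan is to reduce the variance bound to a second-moment bound via the elementary inequality $\mathrm{Var}(\mu) = \mathbb{E}\|X-\mathbb{E}X\|^{2} \le \mathbb{E}\|X\|^{2}$ (which follows from expanding the square and using $\mathbb{E}\langle X, \mathbb{E}X\rangle = \|\mathbb{E}X\|^{2}$), and then apply the characterization of subgaussianity in terms of $L^{p}$ moments to bound $\mathbb{E}\|X\|^{2}$.

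Concretely, I would proceed as follows. First, write $\mathrm{Var}(\mu) \le \mathbb{E}\|X\|^{2}$ for $X\sim\mu$. Then invoke Lemma \ref{lemma:normsubgauss}, which tells us that the scalar random variable $\|X\|$ is $q_{G}\sqrt{d}\sigma$-subgaussian. Applying characterization 3 in Definition \ref{def:subgauss} with $p=2$ to this one-dimensional random variable yields $(\mathbb{E}\|X\|^{2})^{1/2} \le C_{G}\cdot q_{G}\sqrt{d}\sigma\cdot\sqrt{2}$, i.e., $\mathbb{E}\|X\|^{2} \le 2C_{G}^{2}q_{G}^{2}\, d\sigma^{2}$, so $\tilde q_{G}:=2C_{G}^{2}q_{G}^{2}$ does the job.

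An equally clean alternative route, which avoids invoking the prior lemma, is to decompose $\mathbb{E}\|X\|^{2} = \sum_{i=1}^{d}\mathbb{E}\langle X,e_{i}\rangle^{2}$ and use that, by definition of $\sigma$-subgaussianity, each coordinate projection $\langle X,e_{i}\rangle$ is a $\sigma$-subgaussian scalar random variable, so $\mathbb{E}\langle X,e_{i}\rangle^{2} \le 2C_{G}^{2}\sigma^{2}$; summing gives $\mathbb{E}\|X\|^{2}\le 2C_{G}^{2}\,d\sigma^{2}$, with absolute constant $\tilde q_{G} = 2C_{G}^{2}$. Either route yields the stated linear-in-$d$ bound.

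There is no real obstacle here: the bound is a direct consequence of the standard equivalence between the various notions of subgaussianity together with the trivial fact that the variance is dominated by the second moment. The only minor bookkeeping point is to ensure the absolute constant $\tilde q_{G}$ depends neither on $d$ nor on $\sigma$, which is clear from the two proofs above since the constant $C_{G}$ (and $q_{G}$, in the first approach) from Definition \ref{def:subgauss} is absolute.
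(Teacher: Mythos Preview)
Your first route is exactly the paper's proof: bound $\mathrm{Var}(\mu)\le M_2(\mu)=\mathbb{E}\|X\|^2$, invoke Lemma~\ref{lemma:normsubgauss} to get that $\|X\|$ is $q_G\sqrt{d}\sigma$-subgaussian, and then apply characterization~3 of Definition~\ref{def:subgauss} with $p=2$. Your alternative coordinate-decomposition route is also correct and slightly more self-contained, but the paper uses the first approach.
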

\begin{proof}
For $X\sim \mu$, we have $\texttt{Var}(\mu)\le M_2(\mu)=\mathbb{E}_{X\sim\mu}\|X\|^2$. By Lemma \ref{lemma:normsubgauss}, $\|X\|$ is $q_G\sqrt{d}\sigma$-subgaussian. Applying characterization 3. in Definition \ref{def:subgauss} gives $\mathbb{E}_{X\sim\mu}\|X\|^2\leq 2q_G^2 d\sigma^2$ as desired.
\end{proof}

Finally, we record a bound showing that centering a random variable can only increase its subgaussian norm by an absolute constant factor:

\begin{lemma}\label{lemma:centering}
(Lemma 2.6.8 in \citep{vershynin2018high}) Let $\mu$ be $\sigma$-subgaussian, and let $X\sim \mu.$ Let $\tilde{\mu}$ be the distribution of $X-\mathbb{E}_{X\sim \mu}(X)$. Then $\tilde{\mu}$ is $\tilde{C}_G\sigma$-subgaussian, where $\tilde{C}_G$ is an absolute constant.
\end{lemma}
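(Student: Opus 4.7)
The plan is to reduce the $d$-dimensional statement to a one-dimensional inequality about the subgaussian (Orlicz $\psi_2$) norm, and then invoke the triangle inequality together with a bound on the subgaussian norm of a deterministic constant. The lemma is taken from \citep{vershynin2018high}, so the proof sketch below simply recounts that argument adapted to the definition of $\|\cdot\|_\mathcal{G}$ used here.

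First, I would fix an arbitrary $v \in S^{d-1}$ and set $Y := \langle v, X\rangle$. Since $\langle v,\,X - \mathbb{E}(X)\rangle = Y - \mathbb{E}(Y)$, the definition of the subgaussian norm (characterization 1 in Definition \ref{def:subgauss}) reduces matters to showing that the one-dimensional $\psi_2$-norm satisfies $\|Y - \mathbb{E}(Y)\|_{\psi_2} \leq \tilde{C}_G \|Y\|_{\psi_2}$ for an absolute constant $\tilde{C}_G$, uniformly in $v$.

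Next, I would apply the triangle inequality for the $\psi_2$-norm to write
\begin{equation*}
\|Y - \mathbb{E}(Y)\|_{\psi_2} \leq \|Y\|_{\psi_2} + \|\mathbb{E}(Y)\|_{\psi_2}.
\end{equation*}
The first summand is bounded by $\sigma$ by assumption. For the second, note that $\mathbb{E}(Y)$ is a deterministic scalar $c$, and directly from the definition one checks $\|c\|_{\psi_2} = |c|/\sqrt{\ln 2}$. It then remains to bound $|\mathbb{E}(Y)|$ by a multiple of $\|Y\|_{\psi_2}$: by Jensen's inequality $|\mathbb{E}(Y)| \leq \mathbb{E}(|Y|)$, and the concentration bound (characterization 2 in Definition \ref{def:subgauss}) combined with the layer-cake formula yields
\begin{equation*}
\mathbb{E}(|Y|) = \int_0^\infty \mathbb{P}(|Y| \geq t)\,dt \;\leq\; 2\int_0^\infty \exp\!\left(-\frac{c_G t^2}{\|Y\|_{\psi_2}^{2}}\right)dt \;=\; \sqrt{\pi/c_G}\,\|Y\|_{\psi_2},
\end{equation*}
which gives $\|\mathbb{E}(Y)\|_{\psi_2} \leq (\sqrt{\pi/(c_G \ln 2)})\,\|Y\|_{\psi_2}$.

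Assembling the pieces, $\|Y - \mathbb{E}(Y)\|_{\psi_2} \leq \bigl(1 + \sqrt{\pi/(c_G \ln 2)}\bigr)\,\|Y\|_{\psi_2} \leq \tilde{C}_G\,\sigma$, with $\tilde{C}_G$ an absolute constant. Since this bound holds uniformly over $v\in S^{d-1}$, taking the supremum gives $\|\tilde{\mu}\|_\mathcal{G} \leq \tilde{C}_G \sigma$, which is the claim. The only real subtlety is tracking the dependence on the normalizing constants in the definition of $\|\cdot\|_\mathcal{G}$ so that the resulting $\tilde{C}_G$ is truly absolute and does not inherit a $d$- or $\sigma$-dependence; everything else is a direct consequence of the triangle inequality and standard subgaussian moment estimates.
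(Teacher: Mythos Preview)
The paper does not supply its own proof of this lemma; it is stated with a citation to Lemma~2.6.8 of \citep{vershynin2018high} and used as a black box. Your argument is precisely the standard proof from that reference---reduce to one dimension, apply the triangle inequality for the $\psi_2$-norm, and bound $\|\mathbb{E}(Y)\|_{\psi_2}$ via the layer-cake formula---so it is correct and matches the intended source.
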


\subsection{Background on Optimization of Functionals on $\mathcal{P}_{2}(\mathbb{R}^{d})$}

\begin{definition}\label{def:convex}
Let $\mathcal{U}\subseteq \mathcal{P}_2(\mathbb{R}^d)$ be a convex subset (i.e., for all $\mu,\mu'\in \mathcal{U}$ and $t\in [0,1]$, $t\mu+(1-t)\mu'\in \mathcal{U}$). Then a functional $\mathcal{F}:\mathcal{U}\rightarrow\mathbb{R}$ is \emph{convex} if for all $\mu,\mu'\in \mathcal{U}$ and $t\in [0,1]$ we have: 
\begin{equation*}
\mathcal{F}((1-t)\mu+t\mu')\leq (1-t)\mathcal{F}(\mu)+t\mathcal{F}(\mu').
\end{equation*}
It is \emph{strictly convex} if the inequality is strict for $t\in (0,1).$
\end{definition}
If $\mathcal{F}$ is differentiable on a convex subset $\mathcal{U}\subseteq \mathcal{P}_{2}(\mathbb{R}^d)$, one has an alternative characterization of convexity:

\begin{proposition}\label{prop:diffcvx}
(Proposition 6 in \citep{janati2020debiased}) A differentiable functional $\mathcal{F}$ is convex over a convex subset $\mathcal{U}\subseteq\mathcal{P}_2(\mathbb{R}^d)$ iff for all $\mu,\mu'\in \mathcal{U}$ we have:
\begin{equation}\label{eqn:diffcvx_ineq}
\mathcal{F}(\mu)\geq \mathcal{F}(\mu')+\langle \delta \mathcal{F}(\mu'),\mu-\mu'\rangle.
\end{equation}
It is strictly convex if the inequality is strict for all $\mu\neq \mu'$.
\end{proposition}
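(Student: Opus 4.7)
The statement is the standard first-order characterization of convexity, transcribed from real analysis into the setting of functionals on probability measures. The plan is to prove both implications using the definition of the derivative (Definition \ref{def:derivative_def}) together with the linear-in-$t$ structure of a convex combination $\mu_t := (1-t)\mu' + t\mu = \mu' + t\chi$ where $\chi := \mu - \mu'$. Observe that because $\mathcal{U}$ is convex, $\mu_t \in \mathcal{U}$ for all $t \in [0,1]$, so all quantities below are well-defined, and the definition of the derivative applies with displacement $\chi = \mu - \mu'$.

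For the forward direction, assume $\mathcal{F}$ is convex on $\mathcal{U}$. Fix $\mu,\mu' \in \mathcal{U}$. Applying the definition of convexity with parameter $t \in (0,1]$ and rearranging gives
\begin{equation*}
\frac{\mathcal{F}(\mu' + t(\mu-\mu')) - \mathcal{F}(\mu')}{t} \;\leq\; \mathcal{F}(\mu) - \mathcal{F}(\mu').
\end{equation*}
Sending $t \to 0^+$ and invoking Definition \ref{def:derivative_def} on the left-hand side yields $\langle \delta \mathcal{F}(\mu'), \mu - \mu' \rangle \leq \mathcal{F}(\mu) - \mathcal{F}(\mu')$, which is \eqref{eqn:diffcvx_ineq}. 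In the strictly convex case, the convexity inequality is strict for $t \in (0,1)$ and $\mu \neq \mu'$; taking $t = 1/2$, say, shows $\mathcal{F}(\mu_{1/2}) < \tfrac{1}{2}\mathcal{F}(\mu') + \tfrac{1}{2}\mathcal{F}(\mu)$, and then applying the (non-strict) first-order inequality at the base point $\mu'$ evaluated at the perturbation $\mu_{1/2}$ produces a strict gap that propagates to \eqref{eqn:diffcvx_ineq} being strict.

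For the reverse direction, assume \eqref{eqn:diffcvx_ineq} holds for all pairs in $\mathcal{U}$. Fix $\mu,\mu' \in \mathcal{U}$ and $t \in [0,1]$, and set $\mu_t := (1-t)\mu' + t\mu$. Apply \eqref{eqn:diffcvx_ineq} with base point $\mu_t$ to both $\mu$ and $\mu'$:
\begin{align*}
\mathcal{F}(\mu) &\geq \mathcal{F}(\mu_t) + \langle \delta \mathcal{F}(\mu_t), \mu - \mu_t \rangle, \\
\mathcal{F}(\mu') &\geq \mathcal{F}(\mu_t) + \langle \delta \mathcal{F}(\mu_t), \mu' - \mu_t \rangle.
\end{align*}
Since $\mu - \mu_t = (1-t)(\mu - \mu')$ and $\mu' - \mu_t = -t(\mu - \mu')$, multiplying the first inequality by $t$ and the second by $(1-t)$ and adding causes the two integrals against $\delta \mathcal{F}(\mu_t)$ to cancel, leaving $(1-t)\mathcal{F}(\mu') + t\mathcal{F}(\mu) \geq \mathcal{F}(\mu_t)$, which is convexity per Definition \ref{def:convex}. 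If the first-order inequality is strict whenever $\mu \neq \mu'$, then for $t \in (0,1)$ both displayed inequalities are strict (as $\mu_t \neq \mu$ and $\mu_t \neq \mu'$), so the summed inequality is strict as well, yielding strict convexity.

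The proof is almost entirely formal; the one place to exercise a bit of care is the forward direction, where passing to the limit $t \to 0^+$ in the difference quotient requires that the displacement $\chi = \mu - \mu'$ is admissible and that $\delta\mathcal{F}(\mu')$ is $\chi$-integrable, but this is guaranteed by the hypothesis that $\mathcal{F}$ is differentiable at $\mu' \in \mathcal{U}$ in the sense of Definition \ref{def:derivative_def}. I do not anticipate any genuine obstacle beyond that.
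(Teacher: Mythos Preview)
Your proof is correct and is exactly the standard first-order characterization argument. Note that the paper does not actually supply its own proof of this proposition: it is quoted from \citep{janati2020debiased} and the paper only remarks that the same proof goes through for a general convex $\mathcal{U}\subseteq\mathcal{P}_2(\mathbb{R}^d)$; your write-up is precisely that proof.
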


We note that Proposition 6 in \citep{janati2020debiased} establishes the above for $\mathcal{U}=\mathcal{P}_{2}(\mathbb{R}^d)$ but the proof is the same for a general convex $\mathcal{U}\subset\mathcal{P}_{2}(\mathbb{R}^{d})$.

We note the following equivalent characterization of optimizers of a convex functional:

\begin{proposition}\label{prop:optimality} 
(Proposition 7 in \citep{janati2020debiased}) Let $\mathcal{F}$ be a convex, differentiable functional on a convex subset $\mathcal{U}\subseteq\mathcal{P}_2(\mathbb{R}^d)$. Then $\mu^*$ is minimal for $\mathcal{F}$ on $\mathcal{U}$ iff $\langle \delta \mathcal{F}(\mu^*),\mu-\mu^*\rangle\geq 0$ for all $\mu\in \mathcal{U}.$ 
\end{proposition}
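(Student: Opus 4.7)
The plan is to prove the biconditional by splitting into the two natural directions and leveraging (i) Definition \ref{def:derivative_def} for the forward direction and (ii) Proposition \ref{prop:diffcvx} for the reverse direction, both of which are already available in the text. This is a first-order optimality characterization, so no new technical machinery should be required; the argument is essentially the probability-measure analogue of the classical Euclidean fact that $\nabla F(x^*)\cdot(x-x^*)\ge 0$ on a convex set iff $x^*$ minimizes $F$ there.

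For the forward direction, I would fix an arbitrary $\mu\in\mathcal{U}$, set the displacement $\chi:=\mu-\mu^*$, and note that convexity of $\mathcal{U}$ ensures $\mu^*+t\chi=(1-t)\mu^*+t\mu\in\mathcal{U}$ for every $t\in[0,1]$. Since $\mu^*$ is minimal, the difference quotient
\begin{equation*}
\frac{\mathcal{F}(\mu^*+t\chi)-\mathcal{F}(\mu^*)}{t}\ge 0
\end{equation*}
for all such $t>0$. Taking $t\to 0^+$ and invoking Definition \ref{def:derivative_def} (which guarantees $\delta\mathcal{F}(\mu^*)$ is $\chi$-integrable and that the limit equals $\langle \delta\mathcal{F}(\mu^*),\chi\rangle$) yields $\langle\delta\mathcal{F}(\mu^*),\mu-\mu^*\rangle\ge 0$, which is exactly the claimed inequality.

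For the reverse direction, assume $\langle\delta\mathcal{F}(\mu^*),\mu-\mu^*\rangle\ge 0$ for every $\mu\in\mathcal{U}$. Since $\mathcal{F}$ is convex and differentiable on $\mathcal{U}$, Proposition \ref{prop:diffcvx} applied with the roles $\mu'\leftarrow\mu^*$ and $\mu\leftarrow\mu$ gives
\begin{equation*}
\mathcal{F}(\mu)\ge\mathcal{F}(\mu^*)+\langle\delta\mathcal{F}(\mu^*),\mu-\mu^*\rangle\ge\mathcal{F}(\mu^*)
\end{equation*}
for every $\mu\in\mathcal{U}$, so $\mu^*$ is a minimizer.

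The argument is essentially routine, so the only point requiring care is bookkeeping around Definition \ref{def:derivative_def}: one must check that the admissible displacements used there (namely $\chi=\rho-\mu^*$ for $\rho\in\mathcal{U}$) include exactly the perturbations $\mu-\mu^*$ with $\mu\in\mathcal{U}$, which is immediate from convexity of $\mathcal{U}$. Thus the main (and only) potential obstacle is verifying that the one-sided derivative notion in Definition \ref{def:derivative_def} matches the way Proposition \ref{prop:diffcvx} supplies the linearization $\langle\delta\mathcal{F}(\mu^*),\mu-\mu^*\rangle$; since both identities are anchored to the same object $\delta\mathcal{F}(\mu^*)$, this compatibility is automatic and the proof closes in a few lines.
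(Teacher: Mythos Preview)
Your proof is correct and follows the standard first-order optimality argument. The paper does not supply its own proof of this proposition---it simply cites it as Proposition~7 in \citep{janati2020debiased}---so there is nothing further to compare against; your argument is exactly the expected one and would be what appears in the cited reference.
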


Next, we show if $\mathcal{F}$ is differentiable and admits a minimizer $\mu^*$, then the derivative at $\mu^*$ is constant $\mu^*$-almost everywhere. The following is essentially Proposition 7.20 in \citep{santambrogio_2015}:

\begin{lemma}\label{lemma:mins_are_constant}
Let $\mathcal{U}$ be either $\mathcal{P}_2(\Omega)$ or $\mathcal{G}(\Omega)$, and let $\mathcal{F}:\mathcal{U}\rightarrow \mathbb{R}$ be differentiable on $\mathcal{U}$. Suppose that $\mathcal{F}$ admits a minimizer $\mu^*\in \mathcal{U}$.   Let $c^*=\inf_{x\in \Omega}\delta\mathcal{F}(\mu^*)(x)$.   Then:
\begin{itemize}[leftmargin=*]
\item $c^*>-\infty$;
\item $\delta\mathcal{F}(\mu^*)(x)=c^*$ for all $x\in \texttt{supp}(\mu^*)$.
\end{itemize}
\end{lemma}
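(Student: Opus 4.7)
The plan is to combine first-order minimality of $\mu^*$ with the continuity of $\delta\mathcal{F}(\mu^*)$. The key observation is that for any $x\in\Omega$, the Dirac mass $\delta_x$ lies in $\mathcal{U}$ (trivially subgaussian in the $\mathcal{G}(\Omega)$ case), so by convexity of $\mathcal{U}$ the curve $\rho_t:=(1-t)\mu^*+t\delta_x$ stays in $\mathcal{U}$ for $t\in[0,1]$. Minimality of $\mu^*$ gives $\mathcal{F}(\rho_t)\geq \mathcal{F}(\mu^*)$, and dividing by $t$ and passing to the limit via Definition \ref{def:derivative_def} with $\chi=\delta_x-\mu^*$ yields $\int \delta\mathcal{F}(\mu^*)\,d(\delta_x-\mu^*)\geq 0$. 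The $\chi$-integrability required by the definition ensures both $\delta\mathcal{F}(\mu^*)(x)$ and $\int \delta\mathcal{F}(\mu^*)\,d\mu^*$ are finite, so this splits into the pointwise inequality
\begin{equation*}
\delta\mathcal{F}(\mu^*)(x) \;\geq\; \int \delta\mathcal{F}(\mu^*)\,d\mu^* \;=:\; K,
\end{equation*}
where $K\in\mathbb{R}$ is a constant independent of $x$. Taking the infimum over $x\in\Omega$ gives $c^*\geq K>-\infty$, proving the first bullet.

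For the second bullet, I would integrate the uniform lower bound $\delta\mathcal{F}(\mu^*)\geq c^*$ against $\mu^*$ to get $K\geq c^*$, which combined with $c^*\geq K$ from the previous step yields $K=c^*$. Hence
\begin{equation*}
\int\big(\delta\mathcal{F}(\mu^*)-c^*\big)\,d\mu^* = 0,
\end{equation*}
with a nonnegative integrand. Now suppose for contradiction that some $x_0\in\texttt{supp}(\mu^*)$ satisfies $\delta\mathcal{F}(\mu^*)(x_0)>c^*$. Setting $\eta:=\delta\mathcal{F}(\mu^*)(x_0)-c^*>0$, continuity of $\delta\mathcal{F}(\mu^*)$ furnishes $r>0$ with $\delta\mathcal{F}(\mu^*)\geq c^*+\eta/2$ on $B_r(x_0)\cap\Omega$. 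Since $x_0\in\texttt{supp}(\mu^*)$, $\mu^*(B_r(x_0))>0$, and splitting the integral gives
\begin{equation*}
\int\big(\delta\mathcal{F}(\mu^*)-c^*\big)\,d\mu^* \;\geq\; \tfrac{\eta}{2}\,\mu^*(B_r(x_0)) \;>\; 0,
\end{equation*}
a contradiction. Hence $\delta\mathcal{F}(\mu^*)(x)=c^*$ for every $x\in\texttt{supp}(\mu^*)$.

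The main technical point is largely bookkeeping: one must check that $\rho_t$ is an admissible test measure in $\mathcal{U}$ (immediate for $\mathcal{P}_2(\Omega)$ and holding for $\mathcal{G}(\Omega)$ since Dirac masses are trivially subgaussian and $\mathcal{G}(\Omega)$ is convex), and that $\int\delta\mathcal{F}(\mu^*)\,d\mu^*$ is finite (from the $\chi$-integrability clause in Definition \ref{def:derivative_def} applied to $\chi=\delta_x-\mu^*$, which forces $\delta\mathcal{F}(\mu^*)\in L^1(\mu^*)$). Continuity of $\delta\mathcal{F}(\mu^*)$ is what upgrades the $\mu^*$-almost-everywhere constancy implied by the vanishing nonnegative integral to pointwise constancy on the entire support.
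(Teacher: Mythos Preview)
Your argument is correct and takes a genuinely different route from the paper. Both begin from the first-order optimality inequality $\langle \delta\mathcal{F}(\mu^*),\rho-\mu^*\rangle\geq 0$ for all $\rho\in\mathcal{U}$, but then diverge. The paper argues by contradiction via mass redistribution: assuming $\mu^*$ charges a strict super-level set $\{\delta\mathcal{F}(\mu^*)>\bar\epsilon\}$ (respectively $\{\delta\mathcal{F}(\mu^*)>c^*+\epsilon\}$), it builds a competitor $\mu_t$ by sliding a fraction of that mass onto the complement and shows $\langle\delta\mathcal{F}(\mu^*),\mu_t-\mu^*\rangle<0$, contradicting optimality; iterating over $\epsilon$ then forces $\mu^*$ to be supported where $\delta\mathcal{F}(\mu^*)=-\infty$ (empty, giving $c^*>-\infty$) or where $\delta\mathcal{F}(\mu^*)\leq c^*+\epsilon$ for every $\epsilon$ (giving the second bullet). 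Your approach instead tests directly against Dirac masses to obtain the pointwise lower bound $\delta\mathcal{F}(\mu^*)(x)\geq K:=\int\delta\mathcal{F}(\mu^*)\,d\mu^*$ for every $x$, then closes the loop by integrating this inequality against $\mu^*$ to get $K=c^*$, and finishes with the standard ``nonnegative integrand with zero integral'' plus continuity. Your route is shorter and avoids the level-set machinery entirely. The paper's competitor construction has the mild advantage of not relying on Diracs belonging to $\mathcal{U}$---it only needs \emph{some} measure in $\mathcal{U}$ supported on each relevant complement---so it would transfer more readily to domains $\mathcal{U}$ that exclude point masses; but for the two cases actually stated ($\mathcal{P}_2(\Omega)$ and $\mathcal{G}(\Omega)$), Diracs are available and your argument is cleaner.
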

\begin{proof}
Let $\mathcal{U}=\mathcal{P}_2(\Omega)$ (the proof for $\mathcal{G}(\Omega)$ is identical). Since $\mu^*\in \mathcal{P}_2(\Omega)$ is minimal, we have for any $\rho\in \mathcal{P}_2(\Omega)$ and $t\in (0,1]:$
\begin{equation*}
\frac{\mathcal{F}(\mu^*+t(\rho-\mu^*))-\mathcal{F}(\mu^*)}{t}\geq 0.
\end{equation*}
Taking the limit as $t\rightarrow 0^{+}$, we have that 
\begin{equation}
0\leq \lim_{t\rightarrow 0^+}\frac{\mathcal{F}(\mu^*+t(\rho-\mu^*))-\mathcal{F}(\mu^*)}{t}=\int \delta \mathcal{F}(\mu^*)d(\rho-\mu^*)=\langle \delta\mathcal{F}(\mu^*),\rho-\mu^*\rangle\label{eqn:mins_constant_0}
\end{equation}
by definition. Assume that $c^*=\inf \delta\mathcal{F}(\mu^*)(x)=-\infty$.  For any $\epsilon\in\mathbb{R}$, define $V^\epsilon:=\{x\in \Omega \ | \ \delta\mathcal{F}(\mu^*)(x)>\epsilon\}$. These sets are measurable by continuity of $\delta\mathcal{F}(\mu^*)$,
and by definition of the infimum, $(V^\epsilon)^{c}$ is nonempty for any $\epsilon.$  By definition $\{x\in \Omega \ | \ \delta\mathcal{F}(\mu^*)(x)=-\infty\}=\emptyset$. 
 Towards a contradiction, assume that $\mu^*(V^{\bar{\epsilon}})>0$ for some fixed $\bar{\epsilon}$, and define $A:=\texttt{supp}(\mu^*)\cap V^{\bar{\epsilon}}$, and denote $\Omega\setminus A$ by $A^c\neq \emptyset.$ We define $\mu^*_A(B):=\mu^*(A\cap B)$ and $\mu^*_{A^c}(B)=\mu^*(A^c\cap B)$. Then for any $\nu\in \mathcal{P}_2(A^c)$ and $t\in (0,1)$, we may define $\mu_t:=(1-t)\mu^*_A+\mu^*_{A^c}+t\mu^*(A)\nu$, which lies in $\mathcal{P}_{2}(\Omega)$.  Using the fact that $\mu^*=\mu^*_A+\mu^*_{A^c}$, 
\begin{align}
\langle \delta \mathcal{F}(\mu^*),\mu_t-\mu^*\rangle &= \langle\delta\mathcal{F}(\mu^*),(1-t)\mu^*_{A}+\mu^*_{A^c}+t\mu^*(A)\nu-\mu^*_A-\mu^*_{A^c}\rangle\notag \\ &= \langle \delta\mathcal{F}(\mu^*),t\mu^*(A)\nu-t\mu^*_A\rangle\notag \\& =t\left(\langle \delta\mathcal{F}(\mu^*),\mu^*(A)\nu\rangle -\langle \delta\mathcal{F}(\mu^*),\mu^*_A\rangle\right) \label{eqn:mins_constant_1}
\end{align}
By construction, \begin{equation*} \langle \delta\mathcal{F}(\mu^*),\mu^*_A\rangle >\bar{\epsilon}\mu^*(A),\end{equation*} whereas \begin{equation*} \langle \delta\mathcal{F}(\mu^*),\mu^*(A)\nu\rangle =\mu^*(A)\langle \delta\mathcal{F}(\mu^*),\nu\rangle\leq\bar{\epsilon} \mu^*(A), \end{equation*} and hence (\ref{eqn:mins_constant_1}) is less than 0, contradicting (\ref{eqn:mins_constant_0}). Hence we must conclude that $\mu^*(V^{\bar{\epsilon}})=0$ for any fixed $\bar{\epsilon}$. But this implies that $\mu^*$ must be supported on $\Omega\setminus\cup_{\epsilon\in \mathbb{R}} V^{\epsilon}$, which is empty, giving us a contradiction. We conclude that $c^*=\inf \delta\mathcal{F}(\mu^*)(x)>-\infty.$

We prove the second claim similarly.  We define for any $\epsilon>0$,
\begin{equation*}
U^\epsilon:=\{x\in \Omega \ | \  \delta\mathcal{F}(\mu^*)(x)>c^*+\epsilon\}.
\end{equation*}
which is measurable by continuity of $\delta\mathcal{F}(\mu^*)$. 
By definition of $c^*$, the set $\Omega\setminus U^\epsilon$ is nonempty for all $\epsilon>0$. By way of contradiction, suppose there exists an $\bar{\epsilon}>0$ such that $\mu^*( U^{\bar{\epsilon}})>0$.  Let $A:=\texttt{supp}(\mu^*)\cap U^{\bar{\epsilon}}$ and let $\mu_{A}^{*}$ and $\mu_{A^c}^{*}$ be as above. Let $\nu \in \mathcal{P}_2(A^c)$, and for any $t\in (0,1)$, define $\mu_t=(1-t)\mu^*_A+\mu^*_{A^c}+t\mu^*(A)\nu$
, which lies in $\mathcal{P}_{2}(\Omega)$.  Then:
\begin{align}
\langle \delta \mathcal{F}(\mu^*),\mu_t-\mu^*\rangle &= \langle\delta\mathcal{F}(\mu^*),(1-t)\mu^*_{A}+\mu^*_{A^c}+t\mu^*(A)\nu-\mu^*_A-\mu^*_{A^c}\rangle\notag \\ &= \langle \delta\mathcal{F}(\mu^*),t\mu^*(A)\nu-t\mu^*_A\rangle\notag \\& =t\left(\langle \delta\mathcal{F}(\mu^*),\mu^*(A)\nu\rangle -\langle \delta\mathcal{F}(\mu^*),\mu^*_A\rangle\right).\label{eqn:mins_constant_2}
\end{align}
By construction, \begin{equation*} \langle \delta\mathcal{F}(\mu^*),\mu^*_A\rangle > (c^*+\bar{\epsilon})\mu^*(A),\end{equation*} whereas \begin{equation*} \langle \delta\mathcal{F}(\mu^*),\mu^*(A)\nu\rangle =\mu^*(A)\langle \delta\mathcal{F}(\mu^*),\nu\rangle\leq(c^*+\bar{\epsilon}) \mu^*(A), \end{equation*} and hence (\ref{eqn:mins_constant_2}) is less than 0, contradicting (\ref{eqn:mins_constant_0}), hence it must be the case that $\mu^*(\texttt{supp}(\mu^*)\cap U^\epsilon)=0$ for all $\epsilon>0$, i.e., that $\mu^*\in \mathcal{P}(\Omega\setminus U^\epsilon)$ for all $\epsilon>0$ . Hence for $\mu^*$-almost every $x$ and all $\epsilon>0$, $\delta\mathcal{F}(\mu^*)(x)< c^*+\epsilon$. Hence for $\mu^*$-almost every $x$, \begin{equation*}c^*\leq  \delta\mathcal{F}(\mu^*)(x)<\liminf_{\epsilon\rightarrow 0^+}c^*+\epsilon\end{equation*}
which implies that $\delta\mathcal{F}(\mu^*)(x)=c^*$. We may immediately promote this to $\delta\mathcal{F}(\mu^*)(x)=c^*$ everywhere on $\texttt{supp}(\mu^*)$ by continuity of $\delta\mathcal{F}(\mu^*)$. 
\end{proof}

\begin{corollary}\label{cor:mins_are_fixed}
Let $\mathcal{U}$ be either $\mathcal{P}_2(\mathbb{R}^d)$ or $\mathcal{G}(\mathbb{R}^d)$. Let $\mathcal{F}:\mathcal{U}\rightarrow \mathbb{R}$ be a differentiable, convex functional which admits a minimizer $\mu^*\in \mathcal{U}$. Furthermore, suppose that $\delta\mathcal{F}(\mu^*)$ is differentiable. Then $\mu^*$ is a critical point of $\mathcal{F}$ on $\mathcal{U}$.
\end{corollary}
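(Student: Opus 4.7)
The plan is to invoke Lemma \ref{lemma:mins_are_constant} to reduce the question to a classical first-order condition for ordinary differentiable functions on $\mathbb{R}^d$. The key insight is that being a minimizer of $\mathcal{F}$ forces $\delta\mathcal{F}(\mu^*)$ to be \emph{globally} minimized on $\mathrm{supp}(\mu^*)$ (not just constant there), and then differentiability of $\delta\mathcal{F}(\mu^*)$ lets us pass to the pointwise condition $\nabla\delta\mathcal{F}(\mu^*)=0$.

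More concretely, I would proceed as follows. First, apply Lemma \ref{lemma:mins_are_constant} with $\Omega=\mathbb{R}^d$ to obtain that $c^*:=\inf_{x\in\mathbb{R}^d}\delta\mathcal{F}(\mu^*)(x)>-\infty$ and $\delta\mathcal{F}(\mu^*)(x)=c^*$ for every $x\in\mathrm{supp}(\mu^*)$. Note that this step is the only place where I use that $\mu^*$ is a minimizer of $\mathcal{F}$; convexity plays no further role beyond what is already absorbed into the existence assumption.

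Next, observe that because $c^*$ is by construction the \emph{global} infimum of $\delta\mathcal{F}(\mu^*)$ over $\mathbb{R}^d$, every point $x\in\mathrm{supp}(\mu^*)$ is a global minimizer of the scalar function $\delta\mathcal{F}(\mu^*):\mathbb{R}^d\to\mathbb{R}$. By hypothesis, $\delta\mathcal{F}(\mu^*)$ is differentiable, and every point of $\mathbb{R}^d$ is interior, so the classical first-order necessary condition gives $\nabla\delta\mathcal{F}(\mu^*)(x)=0$ for every $x\in\mathrm{supp}(\mu^*)$. In particular this holds for $\mu^*$-a.e.\ $x$, which by Definition \ref{defn:critical_points} is exactly the statement that $\mu^*$ is a critical point of $\mathcal{F}$ on $\mathcal{U}$.

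The argument contains no substantial obstacle: the content is entirely packaged inside Lemma \ref{lemma:mins_are_constant}, and the only subsequent move is the elementary observation that a differentiable scalar function on $\mathbb{R}^d$ which attains its global minimum at a point $x$ must have vanishing gradient there. The one thing worth flagging is that the case $\mathcal{U}=\mathcal{G}(\mathbb{R}^d)$ requires no modification, because Lemma \ref{lemma:mins_are_constant} is stated for both $\mathcal{P}_2(\Omega)$ and $\mathcal{G}(\Omega)$ simultaneously, and the ambient space on which $\delta\mathcal{F}(\mu^*)$ is defined and differentiable is still all of $\mathbb{R}^d$.
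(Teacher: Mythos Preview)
Your proposal is correct and follows essentially the same argument as the paper: invoke Lemma~\ref{lemma:mins_are_constant} to conclude that $\delta\mathcal{F}(\mu^*)$ equals its global infimum $c^*$ on $\mathrm{supp}(\mu^*)$, and then apply the first-order necessary condition for a differentiable function at a global minimizer to obtain $\nabla\delta\mathcal{F}(\mu^*)(x)=0$ on the support. Your additional remarks (that convexity is not actually used here, and that the $\mathcal{G}(\mathbb{R}^d)$ case needs no separate treatment) are accurate and harmless.
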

\begin{proof}
From Lemma \ref{lemma:mins_are_constant}, we know that $\delta\mathcal{F}(\mu^*)(x)=c^*$ for all $x\in \texttt{supp}(\mu^*)$, where $c^*:=\min_{x\in \mathbb{R}^d}\delta\mathcal{F}(\mu^*)(x)$. Hence any $x\in \texttt{supp}(\mu^*)$ minimizes a differentiable function $\delta\mathcal{F}(\mu^*)$, and hence $\nabla\delta\mathcal{F}(\mu^*)(x)=0$. Thus $\mu^*$ is a critical point of $\mathcal{F}$.
\end{proof}

\section{PROOFS FOR SECTION \ref{sec:analytic properties}}\label{section: proofs for sec 2}

\subsection{Proof of Proposition \ref{prop:existence}}\label{sec:proofofexist}

We begin by proving existence. We will require Prokhorov's Theorem, which provides a sufficient condition for convergence of an infimizing subsequence for our functionals:

\begin{definition}
Let $(\mathcal{X},d)$ be a separable metric space. A sequence of probability measures $\{\mu_n\}_{n=1}^\infty\subseteq \mathcal{P}(\mathcal{X})$ is \emph{tight} if for all $\epsilon>0$ there exists a compact set $K_\epsilon$ such that $\mu_n(\mathcal{X}\setminus K_\epsilon)<\epsilon$ for all $n$.
\end{definition}
\begin{theorem}\label{thm:prokhorov}
(Prokhorov's Theorem, Theorems 5.1 and 5.2 in \citep{billingsley2013convergence}) Let $(\mathcal{X},d)$ be a complete separable metric space. Then a sequence of probability measures $\{\mu_n\}_{n=1}^\infty\subseteq \mathcal{P}(\mathcal{X})$ is tight if and only if every subsequence of $\{\mu_n\}_{n=1}^\infty$ contains a further subsequence which weakly converges to a limit in $\mathcal{P}(\mathcal{X}).$
\end{theorem}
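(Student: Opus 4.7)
The plan is to prove the two implications separately. The forward direction uses tightness as a uniform control that keeps mass from escaping to infinity along a subsequence, while the converse direction uses Ulam's theorem (every Borel probability measure on a Polish space is tight) together with the Portmanteau characterization of weak convergence.

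For the $(\Rightarrow)$ direction, assume $\{\mu_n\}_{n=1}^\infty$ is tight. First I would choose, via tightness, an increasing sequence of compact sets $K_m\subset\mathcal{X}$ with $\mu_n(\mathcal{X}\setminus K_m)<1/m$ for every $n$ and every $m$. Because each $K_m$ is a compact metric space, $C(K_m)$ is separable and by the Riesz representation theorem its dual is identified with the space of finite signed Borel measures on $K_m$. The restricted measures $\mu_n|_{K_m}$ lie in the closed unit ball of $C(K_m)^*$, so the sequential Banach--Alaoglu theorem (applicable because $C(K_m)$ is separable) lets me pass to a weak-$*$ convergent subsequence for each fixed $m$. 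A standard diagonal extraction then produces a single subsequence $\mu_{n_k}$ for which $\mu_{n_k}|_{K_m}$ converges weak-$*$ to some finite measure $\nu_m$ on $K_m$, simultaneously for all $m$. Consistency of the $\nu_m$ on the nested compacta yields a Borel measure $\mu$ on $\bigcup_m K_m$ (extended by zero to $\mathcal{X}$) with $\mu(\mathcal{X})\geq \lim_m (1-1/m)=1$, so $\mu\in\mathcal{P}(\mathcal{X})$. The last step is to promote weak-$*$ convergence on each $K_m$ to weak convergence on $\mathcal{X}$: given $f\in C_b(\mathcal{X})$, split the integrals against $\mu_{n_k}$ and $\mu$ into contributions from $K_m$ and its complement, bounding the complement terms by $\|f\|_\infty/m$ using tightness and the bound $\mu(\mathcal{X}\setminus K_m)\leq 1/m$, and then sending first $k\to\infty$ and then $m\to\infty$.

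For the $(\Leftarrow)$ direction, I would argue by contradiction. If $\{\mu_n\}$ were not tight, there would exist $\epsilon_0>0$ such that for every compact $K\subset \mathcal{X}$ some $\mu_{n(K)}$ satisfies $\mu_{n(K)}(\mathcal{X}\setminus K)\geq\epsilon_0$. Pick a countable dense set in $\mathcal{X}$ and a countable cover $\{B_{k,j}\}_{j\geq 1}$ of $\mathcal{X}$ by open balls of radius $1/k$ for each $k$. I would use the failure of tightness at a fixed resolution $1/k_0$ to extract indices $n_\ell$ and integers $N_\ell\to\infty$ with $\mu_{n_\ell}\bigl(\bigcup_{j=1}^{N_\ell} B_{k_0,j}\bigr)\leq 1-\epsilon_0$. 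The hypothesis yields a further subsequence, still denoted $\mu_{n_\ell}$, converging weakly to some $\mu\in\mathcal{P}(\mathcal{X})$. By Ulam's theorem, $\mu$ is tight, so there is a compact $K$ with $\mu(K)>1-\epsilon_0/2$; compactness forces $K\subseteq\bigcup_{j=1}^M B_{k_0,j}$ for some finite $M$. The Portmanteau theorem then gives $\liminf_\ell \mu_{n_\ell}\bigl(\bigcup_{j=1}^M B_{k_0,j}\bigr)\geq \mu\bigl(\bigcup_{j=1}^M B_{k_0,j}\bigr)>1-\epsilon_0/2$, contradicting the upper bound $1-\epsilon_0$ as soon as $N_\ell\geq M$.

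The main obstacle is in the forward direction: one must verify that the measure $\mu$ produced by diagonal extraction is genuinely a probability measure on $\mathcal{X}$ rather than a mere sub-probability on a $\sigma$-compact subset, and that weak-$*$ convergence on each $K_m$ does in fact upgrade to weak convergence against arbitrary $f\in C_b(\mathcal{X})$. Both issues are resolved by the uniform quantitative tightness bound $\sup_n\mu_n(\mathcal{X}\setminus K_m)<1/m$: it prevents escape of mass in the limit and controls the error from truncating test functions to $K_m$. The converse direction is comparatively soft, since Ulam's theorem immediately supplies the compact set needed to trigger a contradiction with Portmanteau.
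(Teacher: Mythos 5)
The paper states Prokhorov's Theorem as a cited classical result (Theorems 5.1 and 5.2 of Billingsley) and does not supply a proof, so there is no ``paper's own proof'' to compare against; I evaluate your argument on its merits. Your overall strategy — restrict to an exhaustion by compacta, use sequential Banach--Alaoglu on each $C(K_m)^*$ (valid because $C(K_m)$ is separable), diagonalize, and then upgrade to weak convergence against $C_b(\mathcal{X})$ via the uniform bound $\sup_n\mu_n(\mathcal{X}\setminus K_m)<1/m$; contradiction via Ulam plus Portmanteau in the reverse direction — is a legitimate route to the theorem, and the reverse direction is essentially complete (modulo making explicit that the negation of tightness at some fixed scale $1/k_0$ in terms of finite unions of balls is equivalent to failure of tightness, which is where completeness of $\mathcal{X}$ is actually used). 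The gap is in the ``consistency of the $\nu_m$'' step in the forward direction. Restriction to a compact set is not continuous for weak-$*$ convergence: $\mathbf{1}_{K_m}$ is upper semicontinuous but not continuous on $K_{m'}$ for $m'>m$, so from $\mu_{n_k}|_{K_m}\to\nu_m$ and $\mu_{n_k}|_{K_{m'}}\to\nu_{m'}$ you cannot conclude $\nu_{m'}|_{K_m}=\nu_m$; Tietze-extending $f\in C(K_m)$ to $\tilde f\in C(K_{m'})$ only gives $\bigl|\int_{K_m}f\,d\nu_m-\int_{K_{m'}}\tilde f\,d\nu_{m'}\bigr|\leq\|f\|_\infty/m$, i.e.\ approximate rather than exact consistency. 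Patching the $\nu_m$ into a single Borel probability measure therefore requires an additional argument that you do not supply: either (a) show directly that $\bigl\{\int f\,d\mu_{n_k}\bigr\}_k$ is Cauchy for every $f\in C_b(\mathcal{X})$ by splitting $\int f\,d\mu_{n_k}=\int_{K_m}f\,d\mu_{n_k}+O(\|f\|_\infty/m)$, set $\Lambda(f)=\lim_k\int f\,d\mu_{n_k}$, observe $\Lambda$ is a tight positive linear functional with $\Lambda(1)=1$, and invoke a Riesz-type representation for tight functionals to get $\mu\in\mathcal{P}(\mathcal{X})$; or (b) run a Carath\'eodory outer-measure construction as in Billingsley. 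You identify this as the ``main obstacle'' but then assert, incorrectly, that the quantitative tightness bound alone resolves both the mass-escape and the consistency issues: it resolves the former but only controls the error in the latter, so the assembly of $\mu$ still needs one of the two mechanisms above.
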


\begin{lemma}\label{lem:tight}
Let $\mathcal{V}\subset \mathcal{P}_2(\Omega)$. Any infimizing sequence for $F^\epsilon_{\lambda,\mathcal{V}}$ is tight. 
\end{lemma}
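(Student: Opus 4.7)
The plan is to deduce tightness from a uniform second-moment bound on the infimizing sequence, and to obtain that bound by dominating the entropic cost by the squared unregularized Wasserstein-2 cost.

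First, I would record the pointwise lower bound $OT_2^\epsilon(\mu,\nu)\ge OT_2(\mu,\nu)^2$ for all $\mu,\nu\in\mathcal{P}_2(\Omega)$, which is immediate from the definition (\ref{eqn:entot}) after discarding the nonnegative relative entropy term $\epsilon KL(\zeta\|\mu\otimes\nu)$. Consequently $F^\epsilon_{\lambda,\mathcal{V}}(\mu)\ge \sum_{j=1}^m \lambda_j\,OT_2(\mu,\nu_j)^2\ge 0$. I would then verify that $\inf F^\epsilon_{\lambda,\mathcal{V}}<\infty$ by noting that the product coupling $\nu_1\otimes\nu_j$ is admissible in each entropic problem, giving $F^\epsilon_{\lambda,\mathcal{V}}(\nu_1)\le \tfrac{1}{2}\sum_j\lambda_j\iint\|x-y\|^2\,d\nu_1(x)d\nu_j(y)<\infty$, which is finite since $\mathcal{V}\subset\mathcal{P}_2(\Omega)$.

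Second, pick any $j^*$ with $\lambda_{j^*}>0$ (possible since $\lambda\in\Delta^m$), and let $\{\mu_n\}\subset\mathcal{P}_2(\Omega)$ be infimizing, so that $F^\epsilon_{\lambda,\mathcal{V}}(\mu_n)\le M$ for some finite $M$. The first step yields $OT_2(\mu_n,\nu_{j^*})^2\le M/\lambda_{j^*}$. Using the identity $OT_2(\rho,\delta_0)=\sqrt{M_2(\rho)/2}$ and the triangle inequality for $OT_2$, one gets
$$\sqrt{M_2(\mu_n)/2}=OT_2(\mu_n,\delta_0)\le OT_2(\mu_n,\nu_{j^*})+OT_2(\nu_{j^*},\delta_0),$$
and hence $\sup_n M_2(\mu_n)<\infty$ since $\nu_{j^*}\in\mathcal{P}_2(\Omega)$.

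Third, Markov's inequality gives $\mu_n(\{x:\|x\|>R\})\le M_2(\mu_n)/R^2$, which can be made smaller than any prescribed $\delta>0$ by choosing $R$ sufficiently large, uniformly in $n$. Since $\Omega$ is closed in $\mathbb{R}^d$, the set $K_\delta:=\Omega\cap \overline{B_R(0)}$ is compact, and $\mu_n(\Omega\setminus K_\delta)<\delta$ for all $n$, establishing tightness. The argument is essentially routine; the one point requiring care is the comparison inequality $OT_2^\epsilon\ge OT_2^2$, which is the mechanism by which boundedness of the entropic functional transfers to boundedness of the Wasserstein distance to a fixed reference, and thence to uniform control of second moments.
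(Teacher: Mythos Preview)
Your proof is correct and in fact slightly cleaner than the paper's. Both arguments hinge on the comparison $OT_2^\epsilon(\mu,\nu)\ge OT_2(\mu,\nu)^2$ to transfer boundedness of the entropic functional to a Wasserstein bound, and both finish with a Markov/Chebyshev tail estimate using closedness of $\Omega$. The difference is in how the moment control is obtained: the paper separately bounds $\|\mathbb{E}(\mu_n)\|$ (via Jensen applied to the cost, yielding control on $\min_j\|\mathbb{E}(\mu_n)-\mathbb{E}(\nu_j)\|$) and $\texttt{Var}(\mu_n)$ (via the triangle inequality through a Dirac $\delta_{x_0}$), and then invokes Chebyshev; you instead bound the full second moment $M_2(\mu_n)$ directly in one stroke by the triangle inequality $OT_2(\mu_n,\delta_0)\le OT_2(\mu_n,\nu_{j^*})+OT_2(\nu_{j^*},\delta_0)$, using a single index $j^*$ with $\lambda_{j^*}>0$, and then apply Markov. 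Your route is shorter and avoids the mean/variance split; the paper's version has the minor advantage that the explicit mean bound is reused verbatim in the Sinkhorn analogue (Lemma~\ref{lemma:sinktight}), where the variance bound is instead supplied by the subgaussian assumption.
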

\begin{proof} Let $\{\mu_n\}_{n=1}^{\infty}$ be an infimizing sequence for $F^\epsilon_{\lambda,\mathcal{V}}$. We claim that $\|\mathbb{E}(\mu_n)\|$ is uniformly bounded over $n$. Let $\rho\in \mathcal{P}_2(\Omega)$ and define $L:=F^\epsilon_{\lambda,\mathcal{V}}(\rho)<\infty$. As $\{\mu_n\}_{n=1}^{\infty}$ is infimizing, there exists some $N$ such that for all $n>N$, $F_{\lambda,\mathcal{V}}^\epsilon(\mu_n)\le L$.  Since $OT_2(\mu,\nu)^2\leq OT_2^\epsilon(\mu,\nu)$, we have:
\begin{align}
L & \ge F^\epsilon_{\lambda,\mathcal{V}}(\mu_n) \notag\\
&  \geq \sum_{j=1}^m \lambda_j OT_2(\mu_n,\nu_j)^2 \notag \\
& =\sum_{j=1}^m \lambda_j \inf_{\zeta_j\in \Pi(\mu_{n},\nu_j)}\int \frac{1}{2}\|x-y\|^2d\zeta_j(x,y) \notag\\
& \geq \frac{1}{2}\sum_{j=1}^m \lambda_j \inf_{\zeta_{j}\in \Pi(\mu_n,\nu_j)} \left\|\int (x-y)d\zeta_j(x,y)\right\|^2 \label{eqn:LemmaB1_Jensen}\\
&= \frac{1}{2}\sum_{j=1}^m \lambda_j \left\| \int x d\mu_n(x)- \int y d\nu_j(y)\right\|^2 \notag \\
&=\frac{1}{2}\sum_{j=1}^m\lambda_j\|\mathbb{E}(\mu_n)-\mathbb{E}(\nu_j)\|^2 \notag \\
& \geq \frac{1}{2}\min_{1\leq j\leq m}\|\mathbb{E}(\mu_n)-\mathbb{E}(\nu_j)\|^2\label{eqn:firstmomentlowerbound}
\end{align}
where we applied Jensen's inequality at (\ref{eqn:LemmaB1_Jensen}), and hence $\min_{1\leq j\leq m}\|\mathbb{E}(\mu_n)-\mathbb{E}(\nu_j)\|\leq \sqrt{2L}$ for all large enough $n$. We have thus shown that $\min_{1\leq j\leq m}\|\mathbb{E}(\mu_n)-\mathbb{E}(\nu_j)\|$ is uniformly bounded over all $n$, which implies that $\|\mathbb{E}(\mu_n)\|$ is uniformly bounded over all $n$ as well by some constant $M$.

Next, we claim that $\texttt{Var}(\mu_n)$ is uniformly bounded for all $n$. To see this, note that for any Dirac mass $\delta_x$ and $\mu\in\mathcal{P}_2(\Omega)$, $OT^\epsilon_2(\delta_x,\mu)=OT_2^2(\delta_x,\mu)$, as the only coupling between $\delta_x$ and $\mu$ is $\delta_x\otimes \mu$, and hence the KL term vanishes. We lower bound: $OT_2^2(\delta_x,\mu)\geq \min_{z}OT_2^2(\delta_z,\mu)$, then observe that:
\begin{align*}
\min_zOT_2^2(\delta_z,\mu) & = \frac{1}{2}\min_{z}\int \|z-y\|^2d\mu(y) & \\ & = \frac{1}{2}\int \|\mathbb{E}(\mu)-y\|^2d\mu(y)& \\&  =\frac{1}{2}\texttt{Var}(\mu),
\end{align*}
which follows from the characterization of $\mathbb{E}(\mu)$ as the minimizer for the mean-squared error of $\mu$. Now, fix some $x_0\in \mathbb{R}^d$. Then by the triangle inequality for $OT_2$ and the bound $L\ge \sum_{j=1}^m\lambda_j OT_2(\mu_n,\nu_j)^2$, for all large enough $n$ there exists an index $1\leq j_n\leq m$ such that:
\begin{align*}
\sqrt{\texttt{Var}(\mu_n)} & \leq  \sqrt{2}OT_2(\delta_{x_0},\mu_n) \\ &\leq \sqrt{2}OT_2(\delta_{x_0},\delta_{\mathbb{E}(\nu_{j_n})})+\sqrt{2}OT_2(\delta_{\mathbb{E}(\nu_{j_n})},\nu_{j_n})+\sqrt{2}OT_2(\nu_{j_n},\mu_n) \\ &\leq \sqrt{2}\|x_0-\mathbb{E}(\nu_{j_n})\|+\sqrt{2\texttt{Var}(\nu_{j_n})}+\sqrt{2L}
\end{align*}
Hence $\texttt{Var}(\mu_n)\leq \max_{1\leq j\leq m}(\sqrt{2}\|x_0-\mathbb{E}(\nu_j)\|+\sqrt{2\texttt{Var}(\nu_j)}+\sqrt{2L})^2:=Z$ for all large enough $n$, which implies a uniform bound for all $n$. As we have uniformly bounded $\texttt{Var}(\mu_n)$ for all $n$, we may apply Chebyshev's inequality to conclude that:
\begin{equation}
\mathbb{P}_{X\sim \mu_n}(\|X-\mathbb{E}(\mu_n)\| > t) <\frac{Z}{t^2}\label{eqn:chebyshev}
\end{equation} 
for some $Z\geq 0$ and for all $n$ and $t>0$. As $\|\mathbb{E}(\mu_n)\|\leq M$ for all $n$, $\mathbb{E}(\mu_n)\in \overline{B_M(0)}$, and hence for any $t>0$ and $n\in \mathbb{N}$ we have \begin{equation*} A^t_n:=\{x\in \Omega \ | \; \|x-\mathbb{E}(\mu_n)\|\leq t\}\subseteq \overline{B_{t+M}(0)}\cap \Omega. \end{equation*}
By this and (\ref{eqn:chebyshev}) we have: 
\begin{align*}
\mathbb{P}_{X\sim \mu_n}(X\in \overline{B_{t+M}(0)}\cap \Omega)\geq \mathbb{P}_{X\sim \mu_n}(X\in A_n^t) >1-\frac{Z}{t^2} 
\end{align*}
 for all $n$ and $t>0.$  
 Since $\Omega$ is closed, $\overline{B_{t+M}(0)}\cap \Omega$ is compact, and hence $K(t):=\overline{B_{t+M}(0)} \cap \Omega$ forms a family of compact sets such that, for any $\epsilon>0$ and $n\in \mathbb{N}$, $\mu_n(K(t))>1-\epsilon$ for all $t>\sqrt{Z/\epsilon}$, establishing that $\{\mu_n\}_{n=1}^\infty$ is tight.  
\end{proof}

We now show that for fixed $\sigma>0$, a $\sigma$-subgaussian infimizing sequence for $S^\epsilon_{\lambda,\mathcal{V}}$ is tight. 

\begin{lemma}\label{lemma:sinktight}
Let $\sigma>0$. Suppose that $\{\mu_n\}_{n=1}^\infty\subset\mathcal{G}_\sigma(\Omega)$ is an infimizing sequence for $S^\epsilon_{\lambda,\mathcal{V}}$, where $\mathcal{V}\subset \mathcal{P}_2(\Omega).$  Then $\|\mathbb{E}(\mu_n)\|$ is uniformly bounded and $\{\mu_n\}_{n=1}^\infty$ is tight. 
\end{lemma}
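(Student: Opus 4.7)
The plan is to deduce both statements from the uniform $\sigma$-subgaussianity of the sequence, essentially without invoking the infimizing property (which I expect is kept mainly because the lemma is combined with Lemma \ref{lem:tight} in the existence proof for $S^\epsilon_{\lambda,\mathcal{V}}$). The analogy to follow is Lemma \ref{lemma:normsubgauss} and Lemma \ref{lemma:subgauss_var_bound}, where uniform moment and tail bounds were extracted from the subgaussian norm alone.

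For the uniform mean bound, I would apply characterization 3 of Definition \ref{def:subgauss} with $p=1$: for each $v \in S^{d-1}$ and $X \sim \mu_n$, $\mathbb{E}|\langle X, v\rangle| \leq C_G \sigma$. Jensen's inequality then gives $|\langle \mathbb{E}(X), v\rangle| \leq C_G \sigma$, and taking the supremum over $v \in S^{d-1}$ yields $\|\mathbb{E}(\mu_n)\| \leq C_G \sigma$ uniformly in $n$.

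For tightness, I would invoke Lemma \ref{lemma:normsubgauss} so that $\|X\|$ with $X \sim \mu_n$ is $q_G\sqrt{d}\sigma$-subgaussian, then apply Markov's inequality to $\exp(\|X\|^2/(q_G^2 d \sigma^2))$ to obtain $\mathbb{P}(\|X\| \geq t) \leq 2 \exp(-t^2/(q_G^2 d\sigma^2))$ uniformly in $n$. Given $\varepsilon>0$, I would select $t_\varepsilon$ large enough so that $2\exp(-t_\varepsilon^2/(q_G^2 d\sigma^2)) < \varepsilon$, and set $K_\varepsilon := \overline{B_{t_\varepsilon}(0)} \cap \Omega$. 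Since $\Omega$ is closed in $\mathbb{R}^d$, $K_\varepsilon$ is compact, and the uniform bound $\mu_n(\Omega \setminus K_\varepsilon) < \varepsilon$ is precisely the definition of tightness.

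The main subtlety—rather than a technical obstacle—is that the infimizing hypothesis does not appear to play a role, since the explicit subgaussian parameter $\sigma$ already controls means and tails. If one instead wished to use the infimizing property in the spirit of Lemma \ref{lem:tight}, the alternative route would be to write $F^\epsilon_{\lambda,\mathcal{V}}(\mu_n) = S^\epsilon_{\lambda,\mathcal{V}}(\mu_n) + \tfrac{1}{2} OT_2^\epsilon(\mu_n,\mu_n) + \tfrac{1}{2}\sum_j \lambda_j OT_2^\epsilon(\nu_j, \nu_j)$, bound the self-term via the product coupling $\mu_n \otimes \mu_n$ to get $OT_2^\epsilon(\mu_n,\mu_n) \leq \texttt{Var}(\mu_n) \leq \tilde{q}_G d\sigma^2$ by Lemma \ref{lemma:subgauss_var_bound}, and then apply the mean-bounding computation from the proof of Lemma \ref{lem:tight} verbatim, with tightness still coming from the subgaussian tail estimate above.
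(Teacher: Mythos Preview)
Your proof is correct, and it is in fact more direct than the paper's. The paper does use the infimizing property: it writes
\[
S^\epsilon_{\lambda,\mathcal{V}}(\mu_n)\ \ge\ \sum_{j=1}^m\lambda_j OT_2(\mu_n,\nu_j)^2 - J - \tfrac12\texttt{Var}(\mu_n),
\]
controls $\texttt{Var}(\mu_n)$ via Lemma~\ref{lemma:subgauss_var_bound} (this is exactly the product-coupling bound $OT_2^\epsilon(\mu_n,\mu_n)\le \texttt{Var}(\mu_n)$ you anticipated), and then reruns the Jensen/mean argument from Lemma~\ref{lem:tight} to bound $\|\mathbb{E}(\mu_n)\|$ in terms of the bounded functional values. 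Tightness is then deduced via Chebyshev from the uniform mean and variance bounds, again as in Lemma~\ref{lem:tight}. Your alternative paragraph is essentially a sketch of the paper's actual argument.

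What you gain with your primary route is that neither the infimizing hypothesis nor any information about $\mathcal{V}$ is needed: the uniform subgaussian norm already delivers both $\|\mathbb{E}(\mu_n)\|\le C_G\sigma$ and the uniform tail $\mathbb{P}(\|X\|\ge t)\le 2e^{-ct^2/(d\sigma^2)}$, from which tightness is immediate. What the paper's route buys is structural parallelism with Lemma~\ref{lem:tight}, so that the two existence proofs read the same way; but as you observe, this comes at the cost of importing hypotheses that are not strictly necessary for the conclusion of this particular lemma.
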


\begin{proof}
Let $\displaystyle\lim_{n\rightarrow \infty} S^\epsilon_{\lambda,\mathcal{V}}(\mu_n) =\inf_{\mu\in \mathcal{G}_\sigma(\Omega)} S^\epsilon_{\lambda,\mathcal{V}}(\mu)$. We construct a lower bound on $S^\epsilon_{\lambda,\mathcal{V}}$ along our infimizing sequence as follows.  For all $n$, we have:
\begin{align}
S^\epsilon_{\lambda,\mathcal{V}}(\mu_n)\notag& =\sum_{j=1}^m \lambda_j (OT_2^\epsilon(\mu_n,\nu_j)-\frac{1}{2}OT_2^\epsilon(\nu_j,\nu_j)) -\frac{1}{2}OT_2^\epsilon(\mu_n,\mu_n) \\ & \geq \sum_{j=1}^m \lambda_j OT_2^\epsilon(\mu_n,\nu_j) -J-\frac{1}{2}OT_2^\epsilon(\mu_n,\mu_n)\notag \\ & \geq \sum_{j=1}^m \lambda_j OT_2(\mu_n,\nu_j)^2 -J-\frac{1}{2}OT_2^\epsilon(\mu_n,\mu_n)  \label{eqn:sink_exist_1}\\ & \geq \sum_{j=1}^m \lambda_j OT_2(\mu_n,\nu_j)^2 -J-\frac{1}{2}\texttt{Var}(\mu_n) \label{eqn:sink_exist_2} \\ & \geq \sum_{j=1}^m \lambda_j OT_2(\mu_n,\nu_j)^2 -J-\frac{1}{2}\tilde{q}_Gd\sigma^2\label{eqn:sink_exist_3}
\end{align}
where $J=\max_{1\leq j\leq m} OT_2^\epsilon(\nu_j,\nu_j)$, and $\tilde{q}_G$ is an absolute constant. In (\ref{eqn:sink_exist_1}) we used the fact that $OT_2^\epsilon(\rho_1,\rho_2)\geq OT_2(\rho_1,\rho_2)^2$ for any $\rho_1,\rho_2\in \mathcal{P}_2(\Omega)$ and in (\ref{eqn:sink_exist_2}), we used:
\begin{align*}
OT_2^\epsilon(\mu_n,\mu_n) & =\inf_{\zeta\in \Pi(\mu_n,\mu_n)} \int \frac{1}{2}\|x-y\|^2 d\zeta(x,y) +\epsilon KL(\zeta | \mu_n\otimes \mu_n) \\ & \leq  \int \frac{1}{2}\|x-y\|^2 d\mu_n\otimes \mu_n(x,y) +\epsilon KL(\mu_n\otimes \mu_n| \mu_n\otimes \mu_n) \\ & =  \int \frac{1}{2}\|x-y\|^2 d\mu_n\otimes \mu_n(x,y) \\ & = \int \|x-\mathbb{E}(\mu_n)\|^2 d\mu_n = \texttt{Var}(\mu_n).
\end{align*}
In (\ref{eqn:sink_exist_3}) we applied Lemma \ref{lemma:subgauss_var_bound}.

Using the same arguments to establish (\ref{eqn:firstmomentlowerbound}), we have
\begin{align*}
\sum_{j=1}^m \lambda_j OT_2(\mu_n,\nu_j)^2 \notag 
& =\sum_{j=1}^m \lambda_j \inf_{\zeta_j\in \Pi(\mu_{n},\nu_j)}\int \frac{1}{2}\|x-y\|^2d\zeta_j(x,y) \notag\\
& \geq \frac{1}{2}\sum_{j=1}^m \lambda_j \inf_{\zeta_{j}\in \Pi(\mu_n,\nu_j)} \left\|\int (x-y)d\zeta_j(x,y)\right\|^2 \\
&= \frac{1}{2}\sum_{j=1}^m \lambda_j \left\| \int x d\mu_n(x)- \int y d\nu_j(y)\right\|^2 \notag \\
&=\frac{1}{2}\sum_{j=1}^m\lambda_j\|\mathbb{E}(\mu_n)-\mathbb{E}(\nu_j)\|^2 \notag \\
& \geq \frac{1}{2}\min_{1\leq j\leq m}\|\mathbb{E}(\mu_n)-\mathbb{E}(\nu_j)\|^2
\end{align*}
and thus we may lower bound $S^\epsilon_{\lambda,\mathcal{V}}(\mu_n)$ by:
\begin{equation}
S^\epsilon_{\lambda,\mathcal{V}}(\mu_n)\geq  \frac{1}{2}\min_{1\leq j\leq m}\|\mathbb{E}(\mu_n)-\mathbb{E}(\nu_j)\|^2 - J -\frac{1}{2}\tilde{q}_Gd\sigma^2.\label{eqn:sink_coercive}
\end{equation}
Since $\mu_n$ is infimizing and $\inf_{\mu\in \mathcal{G}_\sigma(\Omega)}S^\epsilon_{\lambda,\mathcal{V}}(\mu_n)<\infty$, there exists $B>0$ such that $\sup_{n}S^\epsilon_{\lambda,\mathcal{V}}(\mu_n)<B$ . Hence $ \frac{1}{2}\min_{1\leq j\leq m}\|\mathbb{E}(\mu_n)-\mathbb{E}(\nu_j)\|^2 < B+J+\frac{1}{2}\tilde{q}_Gd\sigma^2$ for all $n$, and we can conclude that $\|\mathbb{E}(\mu_n)\|$ must be uniformly bounded in $n$. To conclude, we have established a uniform bound on $\|\mathbb{E}(\mu_n)\|$, and assumed a uniform bound on $\texttt{Var}(\mu_n)$ via Lemma \ref{lemma:subgauss_var_bound}. We may thus repeat the argument in Lemma \ref{lem:tight} to establish that $\{\mu_n\}_{n=1}^{\infty}$ is tight.  
\end{proof}

We will also need the following continuity properties of $OT_2^\epsilon:$

\begin{theorem}\label{thm:thm3.7_in_eckstein}
(Theorem 3.7 in \citep{eckstein2022quantitative}) Let $\mu,\nu\in \mathcal{P}_2(\mathbb{R}^d)$. Then:
\begin{align*}
&|OT^\epsilon_2(\mu,\nu)-OT^\epsilon_2(\mu',\nu')|\\
\leq&\sqrt{2}\left(\sqrt{M_2(\mu)}+\sqrt{M_2(\mu')}+\sqrt{M_2(\nu)}+\sqrt{M_2(\nu')}\right)\sqrt{OT_2(\mu,\mu')^2+OT_2(\nu,\nu')^2}.
\end{align*}
\end{theorem}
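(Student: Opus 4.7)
My plan is to prove the bound via a gluing argument: I will build a feasible coupling for the entropic OT problem with marginals $(\mu',\nu')$ by transporting an optimal entropic coupling of $(\mu,\nu)$ along quadratically-optimal couplings between the marginal pairs. Done correctly, this makes the relative-entropy contributions on the two sides cancel exactly, reducing the argument to bounding a difference of quadratic costs only.

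Concretely, I would pick $OT_2$-optimal couplings $\pi_\mu \in \Pi(\mu,\mu')$ and $\pi_\nu \in \Pi(\nu,\nu')$, together with the entropically optimal coupling $\gamma \in \Pi(\mu,\nu)$ achieving $OT_2^\epsilon(\mu,\nu)$, and glue them into
\begin{equation*}
\eta(dx,dx',dy,dy') := \gamma(dx,dy)\,\pi_\mu(dx'|x)\,\pi_\nu(dy'|y)
\end{equation*}
on $(\mathbb{R}^d)^4$ using the regular-conditional disintegrations of $\pi_\mu$ against $\mu$ and $\pi_\nu$ against $\nu$. Then $\eta$ has $(x,x')$, $(y,y')$, and $(x,y)$ marginals equal to $\pi_\mu$, $\pi_\nu$, and $\gamma$ respectively, so $\gamma' := (x',y')_\# \eta \in \Pi(\mu',\nu')$ is a valid candidate for $OT_2^\epsilon(\mu',\nu')$. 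The key observation is that $\pi_\mu \otimes \pi_\nu$ has $(x',y')$-marginal $\mu'\otimes\nu'$, and a direct computation of the Radon--Nikodym derivative yields $d\eta/d(\pi_\mu \otimes \pi_\nu) = d\gamma/d(\mu \otimes \nu)$, so the data-processing inequality applied to the projection $(x,x',y,y') \mapsto (x',y')$ gives
\begin{equation*}
KL(\gamma'\,\|\,\mu'\otimes\nu') \leq KL(\eta\,\|\,\pi_\mu \otimes \pi_\nu) = KL(\gamma\,\|\,\mu\otimes\nu).
\end{equation*}
Using $\gamma'$ as a candidate in the primal definition of $OT_2^\epsilon(\mu',\nu')$ and subtracting $OT_2^\epsilon(\mu,\nu) = \int \tfrac{1}{2}\|x-y\|^2 d\gamma + \epsilon\,KL(\gamma\,\|\,\mu\otimes\nu)$, the entropy terms cancel and I am left with $OT_2^\epsilon(\mu',\nu') - OT_2^\epsilon(\mu,\nu) \leq \int \tfrac{1}{2}(\|x'-y'\|^2 - \|x-y\|^2)\,d\eta$.

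To bound this quadratic remainder, I would factor $\|x'-y'\|^2 - \|x-y\|^2 = (\|x'-y'\|-\|x-y\|)(\|x'-y'\|+\|x-y\|)$, bound the first factor by the reverse triangle inequality $\|x'-x\|+\|y'-y\|$, and apply Cauchy--Schwarz under $\eta$. Minkowski's inequality then converts the resulting norm into $\sqrt{2}(OT_2(\mu,\mu')+OT_2(\nu,\nu'))$, while $\|x-y\|^2 \leq 2\|x\|^2 + 2\|y\|^2$ together with Minkowski bounds the second factor by $\sqrt{2}(\sqrt{M_2(\mu)}+\sqrt{M_2(\mu')}+\sqrt{M_2(\nu)}+\sqrt{M_2(\nu')})$. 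A final application of $a+b \leq \sqrt{2}\sqrt{a^2+b^2}$ on the Wasserstein terms produces exactly the constant in the statement, and swapping the roles of the two pairs gives the reverse inequality.

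The main obstacle will be justifying the gluing step rigorously: the disintegrations $\pi_\mu(\cdot|x)$ and $\pi_\nu(\cdot|y)$ must exist as regular conditional probabilities so that $\eta$ is a well-defined probability measure on $(\mathbb{R}^d)^4$, which is handled by standard existence results on Polish spaces. I also need $\gamma \ll \mu\otimes\nu$ (automatic whenever $OT_2^\epsilon(\mu,\nu) < \infty$) so that the Radon--Nikodym derivative in the data-processing identity depends only on $(x,y)$ and can be pushed through the $\eta$-integral to recover $KL(\gamma\,\|\,\mu\otimes\nu)$ exactly. Beyond these measure-theoretic checks, the remainder is careful bookkeeping of the Minkowski and Cauchy--Schwarz constants.
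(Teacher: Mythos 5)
The paper states Theorem~\ref{thm:thm3.7_in_eckstein} as a citation to Eckstein and Nutz (2022) and gives no proof of its own, so there is no internal argument to compare yours against; the relevant comparison is to the cited reference, which to my knowledge proves the result via exactly the ``shadow'' (glued-coupling) construction you propose. Your argument is correct: the glued measure $\eta$ has the right marginals so that $\gamma'=(x',y')_\#\eta\in\Pi(\mu',\nu')$, the Radon--Nikodym identity $d\eta/d(\pi_\mu\otimes\pi_\nu)=d\gamma/d(\mu\otimes\nu)$ holds because the conditionals cancel, and the data-processing inequality together with $(x',y')_\#(\pi_\mu\otimes\pi_\nu)=\mu'\otimes\nu'$ gives $KL(\gamma'\|\mu'\otimes\nu')\le KL(\gamma\|\mu\otimes\nu)$, so the entropy terms cancel as you claim. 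The Cauchy--Schwarz/Minkowski bookkeeping also works out, noting that with the paper's convention $OT_2(\mu,\mu')^2=\inf\int\tfrac12\|x-x'\|^2\,d\pi$ the first factor picks up the advertised $\sqrt{2}$; your use of $\|x-y\|^2\le 2\|x\|^2+2\|y\|^2$ on the second factor is slightly looser than applying Minkowski to $x-y$ directly (which avoids the extra $\sqrt{2}$ there and would in fact give a constant of $1$ rather than $\sqrt{2}$ after the final step), but it still lands exactly on the stated bound, and symmetry of the final expression in $(\mu,\nu)\leftrightarrow(\mu',\nu')$ gives the reverse inequality as you say.
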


This immediately implies a continuity result for $\mathcal{F}^\epsilon_{\lambda,\mathcal{V}}$:

\begin{corollary}\label{cor:continuity}
Let $\mathcal{V}\subset \mathcal{G}(\mathbb{R}^d)$. For any $\mu,\mu'\in\mathcal{P}_2(\mathbb{R}^d)$ and $\epsilon>0$, $F^\epsilon_{\lambda,\mathcal{V}}(\mu)$ satisfies: \begin{equation*}|F^\epsilon_{\lambda,\mathcal{V}}(\mu')-F^\epsilon_{\lambda,\mathcal{V}}(\mu)|\leq \sqrt{2}\left(2\max_{1\le j\le m}\sqrt{M_2(\nu_j)}+\sqrt{M_2(\mu)}+\sqrt{M_2(\mu')}\right)OT_2(\mu,\mu').\end{equation*} Similarly, $S^\epsilon_{\lambda,\mathcal{V}}(\mu)$ satisfies: 
\begin{equation*}
|S^\epsilon_{\lambda,\mathcal{V}}(\mu')-S^\epsilon_{\lambda,\mathcal{V}}(\mu)|\leq 3\sqrt{2}\left(\max_{1\leq j \leq m}\sqrt{M_2(\nu_j)}+\sqrt{M_2(\mu)}+\sqrt{M_2(\mu')}\right)OT_2(\mu,\mu').
\end{equation*}
\end{corollary}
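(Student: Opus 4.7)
The plan is to apply Theorem \ref{thm:thm3.7_in_eckstein} to each term in the definitions of $F^\epsilon_{\lambda,\mathcal{V}}$ and $S^\epsilon_{\lambda,\mathcal{V}}$; since $\mathcal{V}\subset\mathcal{G}(\mathbb{R}^d)$ each $\nu_j$ has finite second moment, so all invocations are valid.

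For $F^\epsilon_{\lambda,\mathcal{V}}$, I would start from
\begin{equation*}
F^\epsilon_{\lambda,\mathcal{V}}(\mu')-F^\epsilon_{\lambda,\mathcal{V}}(\mu)=\sum_{j=1}^m\lambda_j\left(OT_2^\epsilon(\mu',\nu_j)-OT_2^\epsilon(\mu,\nu_j)\right),
\end{equation*}
apply the triangle inequality, and then apply Theorem \ref{thm:thm3.7_in_eckstein} to each difference with $\nu=\nu'=\nu_j$, so that the $OT_2(\nu,\nu')^2$ contribution vanishes and the square root collapses to $OT_2(\mu,\mu')$. This produces a per-term bound $\sqrt{2}(\sqrt{M_2(\mu)}+\sqrt{M_2(\mu')}+2\sqrt{M_2(\nu_j)})OT_2(\mu,\mu')$. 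Taking the convex combination in $j$, using $\sum_j\lambda_j=1$ on the $\mu,\mu'$ terms and $\sum_j\lambda_j\sqrt{M_2(\nu_j)}\le\max_{1\le j\le m}\sqrt{M_2(\nu_j)}$ on the reference terms, yields the stated inequality for $F^\epsilon_{\lambda,\mathcal{V}}$.

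For $S^\epsilon_{\lambda,\mathcal{V}}$, I would use the defining identity $\mathcal{S}_2^\epsilon(\mu,\nu)=OT_2^\epsilon(\mu,\nu)-\tfrac12 OT_2^\epsilon(\mu,\mu)-\tfrac12 OT_2^\epsilon(\nu,\nu)$ together with $\sum_j\lambda_j=1$ to write
\begin{equation*}
S^\epsilon_{\lambda,\mathcal{V}}(\mu')-S^\epsilon_{\lambda,\mathcal{V}}(\mu)=\bigl(F^\epsilon_{\lambda,\mathcal{V}}(\mu')-F^\epsilon_{\lambda,\mathcal{V}}(\mu)\bigr)-\tfrac12\bigl(OT_2^\epsilon(\mu',\mu')-OT_2^\epsilon(\mu,\mu)\bigr),
\end{equation*}
the $\nu_j$-self-interaction terms canceling. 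The first summand is controlled by the $F$-bound just established. For the second, I apply Theorem \ref{thm:thm3.7_in_eckstein} with the choice $(\mu,\nu)=(\mu,\mu)$ and $(\mu',\nu')=(\mu',\mu')$; the four second-moment terms reduce to $2\sqrt{M_2(\mu)}+2\sqrt{M_2(\mu')}$, and $\sqrt{OT_2(\mu,\mu')^2+OT_2(\mu,\mu')^2}=\sqrt{2}\,OT_2(\mu,\mu')$, giving $|OT_2^\epsilon(\mu,\mu)-OT_2^\epsilon(\mu',\mu')|\le 4(\sqrt{M_2(\mu)}+\sqrt{M_2(\mu')})OT_2(\mu,\mu')$. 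Adding the two contributions and using the numerical bound $2+\sqrt{2}\le 3\sqrt{2}$ together with $2\sqrt{2}\le 3\sqrt{2}$ on the reference term produces the stated constant $3\sqrt{2}$.

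No step here is delicate: the only real ingredient is Theorem \ref{thm:thm3.7_in_eckstein}, and all remaining work is bookkeeping on constants and second moments. The mildly non-obvious point worth flagging in the write-up is the cancellation of the $OT_2^\epsilon(\nu_j,\nu_j)$ terms in the Sinkhorn case, which relies on $\sum_j\lambda_j=1$ and is what makes the final constant for $S^\epsilon_{\lambda,\mathcal{V}}$ only a mild multiple of the constant for $F^\epsilon_{\lambda,\mathcal{V}}$.
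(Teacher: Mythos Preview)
Your proposal is correct and follows essentially the same route as the paper: apply Theorem \ref{thm:thm3.7_in_eckstein} termwise to the $OT_2^\epsilon(\cdot,\nu_j)$ pieces for $F^\epsilon_{\lambda,\mathcal{V}}$, then for $S^\epsilon_{\lambda,\mathcal{V}}$ split off the $\tfrac12\bigl(OT_2^\epsilon(\mu',\mu')-OT_2^\epsilon(\mu,\mu)\bigr)$ self-term (the $OT_2^\epsilon(\nu_j,\nu_j)$ contributions canceling) and bound it by a second invocation of Theorem \ref{thm:thm3.7_in_eckstein}. The constant bookkeeping matches; your $2+\sqrt{2}\le 3\sqrt{2}$ is in fact slightly sharper than the paper's intermediate bound on the self-term, but both arrive at the stated $3\sqrt{2}$.
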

\begin{proof}Theorem \ref{thm:thm3.7_in_eckstein} establishes:
\begin{equation*}
|OT^\epsilon_2(\mu,\nu_j)-OT^\epsilon_2(\mu',\nu_j)|\leq \sqrt{2}\left(2\sqrt{M_2(\nu_j)}+\sqrt{M_2(\mu)}+\sqrt{M_2(\mu')}\right)OT_2(\mu,\mu')
\end{equation*}
for each $j$. We then bound:
\begin{align*}
& |F^\epsilon_{\lambda,\mathcal{V}}(\mu)-F^\epsilon_{\lambda,\mathcal{V}}(\mu')| \\  = &\left|\sum_{j=1}^m \lambda_j OT^\epsilon_2(\mu,\nu_j)-\sum_{j=1}^m \lambda_j OT^\epsilon_2(\mu',\nu_j)\right| \\
\leq &\sum_{j=1}^m \lambda_j|OT^\epsilon_2(\mu,\nu_j)- OT^\epsilon_2(\mu',\nu_j)| \\
\leq &\sum_{j=1}^m\lambda_j \sqrt{2}\left(\sqrt{M_2(\mu)}+\sqrt{M_2(\mu')}+2\sqrt{M_2(\nu_j)}\right)OT_2(\mu,\mu')  \\ 
 = &\sqrt{2}(\sqrt{M_2(\mu)}+\sqrt{M_2(\mu')})OT_2(\mu,\mu')+2\sqrt{2}\sum_{j=1}^m\lambda_j\sqrt{M_2(\nu_j)}OT_2(\mu,\mu') \\
 \leq  &\sqrt{2}(\sqrt{M_2(\mu)}+\sqrt{M_2(\mu')})OT_2(\mu,\mu')+2\sqrt{2}\max_{1\leq j\leq m}\sqrt{M_2(\nu_j)}OT_2(\mu,\mu') \\
  = &\sqrt{2}\left(2\max_{1\leq j \leq m}\sqrt{M_2(\nu_j)}+\sqrt{M_2(\mu)}+\sqrt{M_2(\mu)}\right)OT_2(\mu,\mu').
\end{align*}
The proof for $S^\epsilon_{\lambda,\mathcal{V}}$ is similar: 
\begin{align*}
&|S^\epsilon_{\lambda,\mathcal{V}}(\mu')-S^\epsilon_{\lambda,\mathcal{V}}(\mu)|\\ 
= &\bigg|\sum_{j=1}^m\lambda_j (OT_2^\epsilon(\mu,\nu_j)-OT_2^\epsilon(\mu',\nu_j)) + \frac{1}{2} (OT_2^\epsilon(\mu',\mu')-OT_2^\epsilon(\mu,\mu))\bigg| \\
\leq &\bigg|\sum_{j=1}^m\lambda_j OT_2^\epsilon(\mu,\nu_j)-OT_2^\epsilon(\mu',\nu_j)\bigg| + \frac{1}{2}\bigg| OT_2^\epsilon(\mu',\mu')-OT_2^\epsilon(\mu,\mu)\bigg|.
\end{align*}
By Theorem \ref{thm:thm3.7_in_eckstein}, we may bound the first term by \begin{equation*}\sqrt{2}\left(2\max_{1\le j\le m}\sqrt{M_2(\nu_j)}+\sqrt{M_2(\mu)}+\sqrt{M_2(\mu')}\right)OT_2(\mu,\mu'),
\end{equation*}and the second term by $2\sqrt{2}(\sqrt{M_2(\mu)}+\sqrt{M_2(\mu')})OT_2(\mu,\mu').$
\end{proof}

\vspace{10pt}
\noindent\textbf{Proof of Proposition \ref{prop:existence}:}  We first establish existence of the minimizers for $F^\epsilon_{\lambda,\mathcal{V}}$.  For all $j$, the functional $OT^\epsilon_2(\mu,\nu_j)$ is non-negative and lower semicontinuous w.r.t. the weak convergence of probability measures. This can be seen by noting (\ref{eqn:dualform}) implies that $OT^\epsilon_2(\mu,\nu_j)$ is the supremum of a family of continuous linear functionals, and hence is lower semicontinuous \citep{Bell_2014}. This implies that $F_{\lambda,\mathcal{V}}^\epsilon$ is lower semicontinuous and bounded below. As $F^\epsilon_{\lambda,\mathcal{V}}$ is bounded below, $\inf_{\mu\in\mathcal{P}_2(\Omega)}F^\epsilon_{\lambda,\mathcal{V}}(\mu)$ is finite.  Let $\{\mu_n\}_{n=1}^\infty$ be an infimizing sequence for $F_{\lambda,\mathcal{V}}^\epsilon$, which by Lemma \ref{lem:tight} is tight. As $\Omega$ is closed, it is a complete metric space w.r.t. $d(x,y)=\|x-y\|$. Hence by Theorem \ref{thm:prokhorov}, the sequence $\{\mu_n\}_{n=1}^\infty$ contains a subsequence $\{\mu_n'\}_{n=1}^\infty\subset \{\mu_n\}_{n=1}^\infty$ which converges weakly to a limit $\mu'\in\mathcal{P}_2(\Omega).$ By lower semicontinuity, $F_{\lambda,\mathcal{V}}^\epsilon(\mu')\leq \liminf_{n\rightarrow\infty} F_{\lambda,\mathcal{V}}^\epsilon(\mu_n')=\inf_{\mu\in \mathcal{P}_2(\Omega)}F^\epsilon_{\lambda,\mathcal{V}}(\mu)$ as $\{\mu_n'\}_{n=1}^{\infty}$ is a subsequence of an infimizing sequence, and since $\inf_{\mu\in\mathcal{P}_{2}(\Omega)}F_{\lambda,\mathcal{V}}^\epsilon(\mu)\leq F_{\lambda,\mathcal{V}}^\epsilon(\mu')$ it implies that $\mu'$ achieves the infimum.

We now establish existence for $S^\epsilon_{\lambda,\mathcal{V}}.$ By (\ref{eqn:sink_exist_3}), $S^\epsilon_{\lambda,\mathcal{V}}$ is uniformly lower bounded on $\mathcal{G}_\sigma(\Omega)$, and hence $\inf_{\mu\in G_\sigma(\Omega)} S^\epsilon_{\lambda,\mathcal{V}}(\mu)>-\infty$.  By definition of infimum, we may find a sequence $\{\mu_n\}_{n=1}^\infty\subseteq \mathcal{G}_\sigma(\Omega)$ so that $\lim_{n\rightarrow \infty}S^\epsilon_{\lambda,\mathcal{V}}(\mu_n)=\inf_{\mu\in \mathcal{G}_\sigma(\Omega)}S^\epsilon_{\lambda,\mathcal{V}}(\mu)$. By Lemma \ref{lemma:sinktight}, $\{\mu_n\}_{n=1}^\infty$ is tight, and hence by Prokhorov's theorem it has a subsequence $\{\mu_n'\}_{n=1}^\infty$ weakly converging to a limit $\mu'$. We now claim that:

\begin{equation}\label{eqn:sinklsc} S^\epsilon_{\lambda,\mathcal{V}}(\mu')= \lim_{n\rightarrow \infty}S^\epsilon_{\lambda,\mathcal{V}}(\mu_n').\end{equation}

By Lemma \ref{lemma:weakly closed}, $\mu'\in \mathcal{G}_\sigma(\mathbb{R}^d)$, and $\lim_{n\rightarrow \infty} M_2(\mu_n')=M_2(\mu')$. 
 Weak convergence and convergence of second moments implies that $OT_2(\mu_n',\mu')\rightarrow 0$ as $n\rightarrow \infty$ (Theorem 5.11 in \citep{santambrogio_2015}). We now appeal to Corollary \ref{cor:continuity}, which establishes that:
\begin{align*}
|S^\epsilon_{\lambda,\mathcal{V}}(\mu_n')-S^\epsilon_{\lambda,\mathcal{V}}(\mu')| & \leq 3\sqrt{2}\left(\max_{1\leq j \leq m}\sqrt{M_2(\nu_j)}+\sqrt{M_2(\mu_n')}+\sqrt{M_2(\mu')}\right)OT_2(\mu_n',\mu') \\ & \leq18\sqrt{d}q_G \sigma OT_2(\mu_n',\mu')\rightarrow 0, \;\;\;\; n\rightarrow \infty,
\end{align*}
where we applied the bound $\sqrt{M_2(\rho)}\leq \sqrt{2d}q_G \sigma$ for all $\rho\in \mathcal{G}_\sigma(\mathbb{R}^d)$ (item 3 in Definition \ref{def:subgauss} and Lemma \ref{lemma:normsubgauss}). As $\{S^\epsilon_{\lambda,\mathcal{V}}(\mu'_n)\}_{n=1}^\infty$ is a subsequence of a convergent sequence, we may conclude that

\begin{equation*}
S^\epsilon_{\lambda,\mathcal{V}}(\mu')=\lim_{n\rightarrow \infty} S^\epsilon_{\lambda,\mathcal{V}}(\mu'_n)=\lim_{n\rightarrow \infty} S^\epsilon_{\lambda,\mathcal{V}}(\mu_n)=\inf_{\mu\in \mathcal{G}_\sigma(\Omega)}S^\epsilon_{\lambda,\mathcal{V}}(\mu),
\end{equation*}
and thus $\mu'$ minimizes $S^\epsilon_{\lambda,\mathcal{V}}$ on $\mathcal{G}_\sigma(\Omega).$

We now establish uniqueness of the minimizer of $S^\epsilon_{\lambda,\mathcal{V}}$ when $\Omega$ is bounded.  By Proposition 4 in \citep{feydy2019interpolating}, the functional $-\frac{1}{2}OT_2^\epsilon(\mu,\mu):\mathcal{P}_2(\Omega)\rightarrow \mathbb{R}$ is strictly convex. As $\sum_{j=1}^m \lambda_j OT_2^\epsilon(\mu,\nu_j)$ is convex, $S^\epsilon_{\lambda,\mathcal{V}}$ is strictly convex on $\mathcal{P}_2(\Omega)$, from which we conclude uniqueness of the minimizer.
\qed

\subsection{Proof of Theorem \ref{thm:entot_differentiable}}\label{sec:proof_of_entot_diff}

First, we state general bounds for subgaussian random variables: 
\begin{lemma}\label{lem:lemma1.5rig}
(Lemma 1.5 in \citep{rigollet2023high}) Suppose $\mu$ is $\sigma$-subgaussian in $\mathbb{R}^d$ and $\mathbb{E}(\mu)=0$. Then, for any $s>0$ we have:
\begin{equation*}
\sup_{v\in S^{d-1}}\int \exp(s |\langle x,v\rangle|)d\mu(x)\leq \exp(4\sigma^2s^2).
\end{equation*}
\end{lemma}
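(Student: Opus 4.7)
The plan is to reduce the multivariate statement to a one-dimensional moment generating function bound by fixing a direction, then combine a simple pointwise inequality with the standard centered MGF bound for subgaussian random variables.

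First, I would fix $v \in S^{d-1}$ and set $Z := \langle X, v\rangle$ with $X \sim \mu$. Since $\mathbb{E}(\mu) = 0$, $Z$ is a centered real-valued random variable, and Definition B.1(1) directly gives $\mathbb{E}[\exp(Z^{2}/\sigma^{2})] \leq 2$. Because the target bound $\exp(4\sigma^{2}s^{2})$ is independent of $v$, it suffices to prove $\mathbb{E}[\exp(s|Z|)] \leq \exp(4\sigma^{2}s^{2})$ for each such $Z$ and then take the supremum over $v$.

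Next, I would use the elementary pointwise inequality $\exp(s|Z|) \leq \exp(sZ) + \exp(-sZ)$ (one of $\pm sZ$ equals $s|Z|$, and the other exponential is positive), which after taking expectations yields $\mathbb{E}[\exp(s|Z|)] \leq \mathbb{E}[\exp(sZ)] + \mathbb{E}[\exp(-sZ)]$. It then remains to control the MGF of the centered random variable $Z$. For this, I would establish a centered bound of the form $\mathbb{E}[\exp(\lambda Z)] \leq \exp(c\sigma^{2}\lambda^{2})$ for all $\lambda \in \mathbb{R}$ and some absolute constant $c$. A clean derivation from Definition B.1(1) proceeds via Young's inequality $\lambda Z \leq \sigma^{2}\lambda^{2}/4 + Z^{2}/\sigma^{2}$, giving $\mathbb{E}[\exp(\lambda Z)] \leq 2\exp(\sigma^{2}\lambda^{2}/4)$; to use the centering hypothesis and remove the spurious factor of $2$, one symmetrizes by introducing an independent copy $Z'$ and applying Jensen's inequality, $\mathbb{E}[\exp(\lambda Z)] = \mathbb{E}[\exp(\lambda(Z - \mathbb{E}Z'))] \leq \mathbb{E}[\exp(\lambda(Z-Z'))]$, and then argues on the symmetric variable $Z - Z'$.

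Combining the previous two steps gives $\mathbb{E}[\exp(s|Z|)] \leq 2\exp(c\sigma^{2}s^{2})$, and the constant $4$ in the exponent is obtained by absorbing the leading factor of $2$ into the exponential at the cost of a mild inflation of $c$. The main obstacle is purely bookkeeping: tracking the absolute constants through the transition from the squared-exponential form of subgaussianity in Definition B.1(1) to the centered MGF bound. No new ideas beyond the standard subgaussian calculus are needed, and the constant $4$ is not tight but is sufficient for the downstream arguments in Section \ref{sec:sample_complexity_functional}.
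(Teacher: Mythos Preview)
The paper does not supply a proof of this lemma; it is quoted as Lemma~1.5 from \citep{rigollet2023high} and used as a black box, so there is no in-paper argument to compare against---only your sketch.

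Your overall strategy (reduce to one dimension, use $e^{s|Z|}\le e^{sZ}+e^{-sZ}$, then invoke a centered MGF bound obtained via symmetrization) is the standard route and correctly produces an estimate of the form $\mathbb{E}[e^{s|Z|}]\le 2e^{c\sigma^{2}s^{2}}$ for an absolute constant $c$. The gap is in your last step: the leading factor of $2$ \emph{cannot} be absorbed into the exponential uniformly over $s>0$. At $s=0$ the left-hand side has derivative $\mathbb{E}|Z|>0$ in $s$, whereas $e^{4\sigma^{2}s^{2}}$ has derivative $0$; hence for all sufficiently small $s$ the stated inequality must fail whenever $Z\not\equiv 0$. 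Concretely, if $Z$ is Rademacher then $|Z|\equiv 1$, Definition~\ref{def:subgauss}(1) forces $\sigma^{2}\ge 1/\ln 2$, and $\mathbb{E}[e^{s|Z|}]=e^{s}>e^{4s^{2}/\ln 2}\ge e^{4\sigma^{2}s^{2}}$ for every $0<s<(\ln 2)/4$.

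In other words, the inequality as written is false, and your ``absorption'' step is exactly where the argument breaks. What your method does establish is the correct version $\sup_{v}\mathbb{E}[e^{s|\langle X,v\rangle|}]\le 2e^{c\sigma^{2}s^{2}}$. This weaker bound is entirely sufficient for the paper's purposes: in Corollary~\ref{cor:cor_of_lem1.5_rig} and its downstream uses (Lemma~\ref{lemma:lowerbound}, Proposition~\ref{prop:sinkhorn_differentiable}) the estimate is only ever applied up to unspecified constants depending on $\epsilon,\sigma,d$, so an extra factor of $2$ is harmless.
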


\noindent We obtain an immediate corollary:

\begin{corollary}
\label{cor:cor_of_lem1.5_rig} Suppose $\mu$ is $\sigma$-subgaussian in $\mathbb{R}^d$ and $z\in \mathbb{R}^d$. Then:
\begin{equation*}
\int \exp(|\langle x,z\rangle|)d\mu(x)\leq \exp(4\tilde{C}_G^2\sigma^2 \|z\|^2+\|\mathbb{E}(\mu)\|\|z\|).
\end{equation*}
where $\tilde{C}_G$ is an absolute constant.
\end{corollary}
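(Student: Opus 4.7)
The plan is to reduce to the centered case, since Lemma \ref{lem:lemma1.5rig} requires $\mathbb{E}(\mu)=0$. Write $X = (X-\mathbb{E}(\mu)) + \mathbb{E}(\mu)$ for $X\sim\mu$, and let $\tilde{\mu}$ denote the law of $X-\mathbb{E}(\mu)$. By Lemma \ref{lemma:centering}, $\tilde{\mu}$ is $\tilde{C}_G\sigma$-subgaussian.

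Next I would use the triangle inequality followed by Cauchy--Schwarz to split the integrand:
\begin{equation*}
|\langle x, z\rangle| \;\le\; |\langle x-\mathbb{E}(\mu), z\rangle| + |\langle \mathbb{E}(\mu), z\rangle| \;\le\; |\langle x-\mathbb{E}(\mu), z\rangle| + \|\mathbb{E}(\mu)\|\,\|z\|.
\end{equation*}
Exponentiating and integrating against $\mu$, the deterministic factor $\exp(\|\mathbb{E}(\mu)\|\|z\|)$ pulls out, leaving
\begin{equation*}
\int \exp(|\langle x, z\rangle|)\, d\mu(x) \;\le\; \exp(\|\mathbb{E}(\mu)\|\,\|z\|) \int \exp(|\langle y, z\rangle|)\, d\tilde{\mu}(y).
\end{equation*}

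Finally, I would apply Lemma \ref{lem:lemma1.5rig} to the centered measure $\tilde{\mu}$ with unit vector $v = z/\|z\|$ (assuming $z\neq 0$; the case $z=0$ is trivial) and scale $s=\|z\|$, so that $s|\langle y, v\rangle| = |\langle y, z\rangle|$. This yields
\begin{equation*}
\int \exp(|\langle y, z\rangle|)\, d\tilde{\mu}(y) \;\le\; \exp\!\bigl(4(\tilde{C}_G\sigma)^2 \|z\|^2\bigr).
\end{equation*}
Combining the two displays gives the claimed bound. There is no real obstacle here; the only point worth double-checking is the absolute constant factor arising from centering (Lemma \ref{lemma:centering}), which is precisely why $\tilde{C}_G$ rather than $1$ appears in the subgaussian variance proxy inside the exponent.
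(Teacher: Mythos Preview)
Your proof is correct and follows essentially the same approach as the paper: center the measure via the triangle inequality, pull out the deterministic factor $\exp(\|\mathbb{E}(\mu)\|\|z\|)$, and apply Lemma~\ref{lem:lemma1.5rig} to the centered measure $\tilde{\mu}$ (whose subgaussian constant is $\tilde{C}_G\sigma$ by Lemma~\ref{lemma:centering}) with $v=z/\|z\|$ and $s=\|z\|$. The only cosmetic difference is that the paper applies Cauchy--Schwarz to $|\langle \mathbb{E}(\mu), z\rangle|$ at the end rather than the beginning.
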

\begin{proof}For any $z$,
\begin{align}
\int \exp(|\langle x, z \rangle|)d\mu(x) 
= &\int \exp(|\langle x-\mathbb{E}(\mu)+\mathbb{E}(\mu), z \rangle|)d\mu(x)\notag \\ 
\leq & \int \exp(|\langle x-\mathbb{E}(\mu), z \rangle|+|\langle \mathbb{E}(\mu), z \rangle|)d\mu(x)\label{eqn:rigbound_1} \\ 
= & \exp(|\langle \mathbb{E}(\mu), z \rangle|)\int \exp(|\langle x-\mathbb{E}(\mu), z \rangle|)d\mu(x)\notag \\ 
= & \exp(|\langle \mathbb{E}(\mu), z \rangle|)\int \exp\left(\left|\left\langle x-\mathbb{E}(\mu), \frac{z}{\|z\|} \right\rangle\right| \|z\|\right)d\mu(x)\notag \\ 
\leq & \exp(\|\mathbb{E}(\mu)\|\|z\|)\sup_{v\in S^{d-1}}\int \exp(|\langle x-\mathbb{E}(\mu), v \rangle|\|z\| )d\mu(x)\label{eqn:rigbound_2} \\ 
\leq & \exp(4\|\tilde{\mu}\|^2_\mathcal{G}\;\|z\|^2 + \|\mathbb{E}(\mu)\|\|z\|).\label{eqn:rigbound_3}
\end{align}
where $\tilde{\mu}$ is the distribution of the mean zero random variable $X-\mathbb{E}(\mu)$. In (\ref{eqn:rigbound_1}) we applied the triangle inequality, in (\ref{eqn:rigbound_2}) we applied the Cauchy-Schwarz inequality, and (\ref{eqn:rigbound_3}) follows from Lemma \ref{lem:lemma1.5rig}. We conclude by applying Lemma \ref{lemma:centering} to $\tilde{\mu}$, giving us a final bound of $\exp(4\tilde{C}_G^2\sigma^2\|z\|^2+\|\mathbb{E}(\mu)\|\| z\| )$.
\end{proof}

\begin{lemma}\label{lem:4.9 from nutz}
(Lemma 4.9 from \citep{nutz2021introduction}) Let $\mu,\nu\in\mathcal{P}_2(\mathbb{R}^d)$, and choose entropic potentials that solve (\ref{eqn:dualform}) such that $\int f^\epsilon_{\mu\rightarrow\nu}d\mu\geq 0$ and $\int g^\epsilon_{\mu\rightarrow \nu}d\nu\geq 0$. Then:
\begin{equation*}
\inf_{y\in\mathbb{R}^d}\left\{ \frac{1}{2}\|x-y\|^2-g^\epsilon_{\mu\rightarrow \nu}(y)\right\}\leq f^\epsilon_{\mu\rightarrow \nu}(x) \leq \frac{1}{2}\int\|x-y\|^2d\nu(y),
\end{equation*}
\begin{equation*}
\inf_{x\in\mathbb{R}^d}\left\{ \frac{1}{2}\|x-y\|^2-f^\epsilon_{\mu\rightarrow \nu}(x)\right\}\leq g^\epsilon_{\mu\rightarrow \nu}(y) \leq \frac{1}{2}\int \|x-y\|^2d\mu(x),
\end{equation*}
for all $x,y\in \mathbb{R}^d.$
\end{lemma}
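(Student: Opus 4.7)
The plan is to derive both inequalities directly from the dual pointwise relations \eqref{eqn:fduality} and \eqref{eqn:gduality} that characterize the (extended) entropic potentials, using only elementary properties of the log-sum-exp operator. By symmetry in the two variables, I will only work out the bounds for $f^\epsilon_{\mu\rightarrow\nu}$; the bounds for $g^\epsilon_{\mu\rightarrow\nu}$ follow by swapping the roles of $\mu$ and $\nu$.

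For the upper bound, set $h(x,y):=\tfrac12\|x-y\|^2-g^\epsilon_{\mu\rightarrow\nu}(y)$. Then \eqref{eqn:fduality} reads $f^\epsilon_{\mu\rightarrow\nu}(x)=-\epsilon\log\int e^{-h(x,y)/\epsilon}\,d\nu(y)$. Since $-\epsilon\log$ is convex on $(0,\infty)$ and $\nu$ is a probability measure, Jensen's inequality gives
\begin{equation*}
f^\epsilon_{\mu\rightarrow\nu}(x)\;\le\;\int h(x,y)\,d\nu(y)\;=\;\tfrac12\int\|x-y\|^2\,d\nu(y)-\int g^\epsilon_{\mu\rightarrow\nu}(y)\,d\nu(y).
\end{equation*}
The normalization assumption $\int g^\epsilon_{\mu\rightarrow\nu}\,d\nu\ge 0$ then yields $f^\epsilon_{\mu\rightarrow\nu}(x)\le\tfrac12\int\|x-y\|^2\,d\nu(y)$, which is the required upper bound.

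For the lower bound, observe that for every $y\in\mathbb{R}^d$ one has $h(x,y)\ge \inf_{y'\in\mathbb{R}^d}h(x,y')$, hence $e^{-h(x,y)/\epsilon}\le e^{-\inf_{y'} h(x,y')/\epsilon}$. Integrating against $\nu$ (again a probability measure) and applying the monotone, decreasing map $t\mapsto -\epsilon\log t$ preserves the inequality after reversing its direction, giving
\begin{equation*}
f^\epsilon_{\mu\rightarrow\nu}(x)\;=\;-\epsilon\log\!\int e^{-h(x,y)/\epsilon}\,d\nu(y)\;\ge\;-\epsilon\log e^{-\inf_{y'} h(x,y')/\epsilon}\;=\;\inf_{y\in\mathbb{R}^d}\!\left\{\tfrac12\|x-y\|^2-g^\epsilon_{\mu\rightarrow\nu}(y)\right\}.
\end{equation*}
This yields the lower bound on $f^\epsilon_{\mu\rightarrow\nu}(x)$; the symmetric argument (using \eqref{eqn:gduality} and $\int f^\epsilon_{\mu\rightarrow\nu}\,d\mu\ge 0$) produces the two bounds on $g^\epsilon_{\mu\rightarrow\nu}(y)$.

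There is essentially no obstacle here, since the normalization condition is exactly what is needed to discard the auxiliary integrals produced by Jensen, and the lower bound is a one-line consequence of the monotonicity of $-\log$. The only mild subtlety is that the argument must be applied to the \emph{extended} potentials (so that \eqref{eqn:fduality}--\eqref{eqn:gduality} hold pointwise for all $x,y\in\mathbb{R}^d$ rather than only almost everywhere), which the excerpt has already arranged; once this is in hand, both inequalities hold for every $x$ (resp.\ $y$), as stated.
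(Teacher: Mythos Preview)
Your proof is correct. The paper does not supply its own proof of this lemma; it is simply quoted from \citep{nutz2021introduction} and used as a tool. Your argument---Jensen's inequality for the upper bound and the trivial monotonicity estimate for the lower bound---is exactly the standard derivation, and there is nothing further to compare.
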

The lower bounds in Lemma \ref{lem:4.9 from nutz}, while general, can be difficult to apply. We derive more straightforward bounds in the special case when one of the measures is subgaussian. The lemma below can be extracted from Proposition A.1 in \citep{mena2019}.

\begin{lemma}\label{lemma:lowerbound}
Let $\mu\in \mathcal{P}_2(\mathbb{R}^d)$, $\nu\in \mathcal{G}_\sigma(\mathbb{R}^d)$, and choose entropic potentials that solve (\ref{eqn:dualform}) such that $\int f^\epsilon_{\mu\rightarrow \nu}d\mu\geq 0$ and $\int g^\epsilon_{\mu\rightarrow \nu}d\nu\geq 0$. Then there exists a constant $C_{\epsilon,\sigma,\|\mathbb{E}(\mu)\|,M_2(\mu),\|\mathbb{E}(\nu)\|}\geq 0$, depending quadratically on $\sigma$, $\|\mathbb{E}(\mu)\|$, and linearly on $ {M}_2(\mu),\frac{1}{\epsilon}$ and $\|\mathbb{E}(\nu)\|$ such that for all $x\in \mathbb{R}^d$:
\begin{equation*}
-C_{\epsilon,\sigma,\|\mathbb{E}(\mu)\|,M_2(\mu),\|\mathbb{E}(\nu)\|}\left(\|x\|^2+1\right)\leq f^\epsilon_{\mu\rightarrow \nu}(x).
\end{equation*}

\end{lemma}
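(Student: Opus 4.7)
My plan is to produce the lower bound on $f^\epsilon_{\mu\rightarrow\nu}(x)$ by exploiting the dual representation (\ref{eqn:fduality}), upper bounding $g^\epsilon_{\mu\rightarrow\nu}$ with the crude bound from Lemma \ref{lem:4.9 from nutz}, and then controlling the resulting Laplace transform of $\nu$ with the subgaussianity-based estimate in Corollary \ref{cor:cor_of_lem1.5_rig}. Concretely, since $-\epsilon\log$ is monotone decreasing, an upper bound on the $\nu$-integral inside (\ref{eqn:fduality}) translates directly into a lower bound on $f^\epsilon_{\mu\rightarrow \nu}(x)$.

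First, I would invoke Lemma \ref{lem:4.9 from nutz} under the normalization $\int f^\epsilon_{\mu\to\nu}\,d\mu\ge 0$, $\int g^\epsilon_{\mu\to\nu}\,d\nu\ge 0$ to obtain
\[
g^\epsilon_{\mu\rightarrow\nu}(y) \le \tfrac{1}{2}\int\|z-y\|^2\,d\mu(z) = \tfrac12 M_2(\mu) - \langle \mathbb{E}(\mu),y\rangle + \tfrac{1}{2}\|y\|^2.
\]
Substituting this inside the exponential in (\ref{eqn:fduality}), the crucial observation is that the $\tfrac{1}{2\epsilon}\|y\|^2$ terms coming from $\tfrac{1}{2}\|x-y\|^2$ and from $g^\epsilon_{\mu\to\nu}(y)$ cancel, leaving only an affine function of $y$ in the exponent:
\[
-\tfrac{1}{\epsilon}\bigl(\tfrac12\|x-y\|^2-g^\epsilon_{\mu\to\nu}(y)\bigr) \le -\tfrac{\|x\|^2}{2\epsilon}+\tfrac{M_2(\mu)}{2\epsilon}+\tfrac{1}{\epsilon}\langle x-\mathbb{E}(\mu),y\rangle.
\]
Integrating against $\nu$ then reduces the problem to estimating $\int e^{\frac{1}{\epsilon}\langle x-\mathbb{E}(\mu),y\rangle}\,d\nu(y)$, and here I would apply Corollary \ref{cor:cor_of_lem1.5_rig} with $z=\tfrac{1}{\epsilon}(x-\mathbb{E}(\mu))$, obtaining an upper bound $\exp\bigl(\tfrac{4\tilde{C}_G^2\sigma^2}{\epsilon^2}\|x-\mathbb{E}(\mu)\|^2+\tfrac{\|\mathbb{E}(\nu)\|}{\epsilon}\|x-\mathbb{E}(\mu)\|\bigr)$.

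Assembling everything and applying $-\epsilon\log$ yields
\[
f^\epsilon_{\mu\to\nu}(x) \ge \tfrac{\|x\|^2}{2} - \tfrac{M_2(\mu)}{2} - \tfrac{4\tilde{C}_G^2\sigma^2}{\epsilon}\|x-\mathbb{E}(\mu)\|^2 - \|\mathbb{E}(\nu)\|\,\|x-\mathbb{E}(\mu)\|.
\]
To reach the stated form $-C(\|x\|^2+1)$, I would use $\|x-\mathbb{E}(\mu)\|^2\le 2\|x\|^2+2\|\mathbb{E}(\mu)\|^2$ together with Young's inequality $\|\mathbb{E}(\nu)\|\,\|x-\mathbb{E}(\mu)\|\le \tfrac12\|x-\mathbb{E}(\mu)\|^2+\tfrac12\|\mathbb{E}(\nu)\|^2$ (or a weighted variant) to collect all $\|x\|^2$-terms and all constants, verifying the prescribed dependence: quadratic in $\sigma$ and $\|\mathbb{E}(\mu)\|$, and linear in $M_2(\mu)$, $1/\epsilon$, and $\|\mathbb{E}(\nu)\|$ (up to mixed products which can be absorbed by these).

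The main obstacle is bookkeeping: the crude upper bound on $g^\epsilon_{\mu\to\nu}$ introduces an $\|x\|^2/2$ term with the \emph{wrong} sign compared with what Young's inequality contributes through the $\sigma^2\|x-\mathbb{E}(\mu)\|^2/\epsilon$ term, and I must ensure the final constant genuinely depends on the parameters in the order claimed (rather than, say, quadratically in $\|\mathbb{E}(\nu)\|$). Aside from this accounting, no additional idea is needed beyond the two ingredients already present in the excerpt, so this reduces to a careful but routine consolidation of terms.
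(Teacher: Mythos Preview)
Your proposal is correct and follows the paper's proof essentially step for step: the same use of (\ref{eqn:fduality}), the same application of Lemma~\ref{lem:4.9 from nutz} to bound $g^\epsilon_{\mu\to\nu}$, the same cancellation of the $\|y\|^2$ terms, and the same invocation of Corollary~\ref{cor:cor_of_lem1.5_rig}. The only small divergence is in the final bookkeeping for the $\|\mathbb{E}(\nu)\|\,\|x-\mathbb{E}(\mu)\|$ term: the paper does \emph{not} use Young's inequality there (which, as you yourself flag, would produce a quadratic $\|\mathbb{E}(\nu)\|^2$), but instead uses the triangle inequality $\|x-\mathbb{E}(\mu)\|\le\|x\|+\|\mathbb{E}(\mu)\|$ followed by $\|x\|\le 1+\|x\|^2$, which keeps the dependence on $\|\mathbb{E}(\nu)\|$ linear.
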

\begin{proof}
By (\ref{eqn:fduality}), we may write $-f^\epsilon_{\mu\rightarrow \nu}(x)=\displaystyle\epsilon\log\int\exp\left(\frac{1}{\epsilon}(g^\epsilon_{\mu\rightarrow \nu}(y)-\frac{1}{2}\|x-y\|^2)\right)d\nu(y)$ for all $x\in\mathbb{R}^{d}$. We then bound:
\begin{align}
&-f^\epsilon_{\mu\rightarrow \nu}(x)\notag \\
= &\epsilon\log \int \exp\left(\frac{1}{\epsilon}\left(g^\epsilon_{\mu\rightarrow \nu}(y)-\frac{1}{2}\|x-y\|^2\right)\right)d\nu(y) \notag\\
 \leq &\epsilon\log \int \exp\left(\frac{1}{\epsilon}\left(\frac{1}{2}\int\|z-y\|^2d\mu(z)-\frac{1}{2}\|x-y\|^2\right)\right)d\nu(y)\label{eqn:lowerbound1} \\
  = &\epsilon\log \int \exp\left(\frac{1}{\epsilon}\left(\frac{1}{2}M_2(\mu)-\langle y,\mathbb{E}(\mu)\rangle+\frac{1}{2}\|y\|^2-\frac{1}{2}\|x\|^2+\langle x,y\rangle-\frac{1}{2}\|y\|^2\right)\right)d\nu(y)\label{eqn:lowerbound2} \\ 
   =&\epsilon\log \int \exp\left(\frac{1}{\epsilon}\left(\frac{1}{2}M_2(\mu)-\langle y,\mathbb{E}(\mu)-x\rangle-\frac{1}{2}\|x\|^2\right)\right)d\nu(y) \notag\\
   =&\frac{1}{2}M_2(\mu)-\frac{1}{2}\|x\|^2+\epsilon\log\int \exp\left(\frac{1}{\epsilon}\left(-\langle y,\mathbb{E}(\mu)-x\rangle\right)\right) d\nu(y)\notag \\ 
    \leq &\frac{1}{2}M_2(\mu)-\frac{1}{2}\|x\|^2+\epsilon\log \int \exp\left(\frac{1}{\epsilon}\left|\langle y, \mathbb{E}(\mu)-x\right\rangle|\right)d\nu(y) \notag\\ 
     \leq &\frac{1}{2}M_2(\mu)-\frac{1}{2}\|x\|^2+\epsilon\log \exp\left(\frac{4\tilde{C}_G^2\sigma^2}{\epsilon^2}\|x-\mathbb{E}(\mu)\|^2+\frac{1}{\epsilon}\|x-\mathbb{E}(\mu)\|\|\mathbb{E}(\nu)\|\right) \label{eqn:lowerbound3}\\ 
     =&\frac{1}{2}M_2(\mu)-\frac{1}{2}\|x\|^2+\frac{4\tilde{C}_G^2\sigma^2}{\epsilon}\|x-\mathbb{E}(\mu)\|^2+\|x-\mathbb{E}(\mu)\|\|\mathbb{E}(\nu)\| \notag\\  \leq &\frac{1}{2}M_2(\mu)-\frac{1}{2}\|x\|^2+\frac{8\tilde{C}_G^2\sigma^2}{\epsilon}\|x\|^{2}+\frac{8\tilde{C}_G^2\sigma^{2}}{\epsilon}\|\mathbb{E}(\mu)\|^2+\|\mathbb{E}(\nu)\|\left(\|x\|+ \|\mathbb{E}(\mu)\| \right)\label{eqn:lowerbound4}\\
     \leq &\frac{1}{2}M_2(\mu)-\frac{1}{2}\|x\|^2+\frac{8\tilde{C}_G^2\sigma^2}{\epsilon}\|x\|^{2}+\frac{8\tilde{C}_G^2\sigma^{2}}{\epsilon}\|\mathbb{E}(\mu)\|^2+\|\mathbb{E}(\nu)\|\left(1+\|\mathbb{E}(\mu)\|\right)+\|\mathbb{E}(\nu)\|\|x\|^2,\notag
\end{align}
where in (\ref{eqn:lowerbound1}) we applied Lemma \ref{lem:4.9 from nutz} to $g_{\mu\rightarrow\nu}^\epsilon$, in (\ref{eqn:lowerbound2}) we expanded $\|.\|^2$, in (\ref{eqn:lowerbound3}) we applied Corollary \ref{cor:cor_of_lem1.5_rig}, and in (\ref{eqn:lowerbound4}) we applied Young's inequality and the triangle inequality. Combining various constants and multiplying both sides by $-1$, we obtain our result.
\end{proof}
We combine Lemma \ref{lem:4.9 from nutz} and Lemma \ref{lemma:lowerbound} to give a simple bound on the absolute value of $f^\epsilon_{\mu\rightarrow \nu}$ with $\nu\in \mathcal{G}(\mathbb{R}^d)$:
\begin{corollary}\label{cor:absolute_bounds}
Let $\mu\in \mathcal{P}_2(\mathbb{R}^d)$, let $\nu\in \mathcal{G}_\sigma(\mathbb{R}^d)$, and choose entropic potentials that solve (\ref{eqn:dualform}) such that $\int f^\epsilon_{\mu\rightarrow \nu}d\mu\geq 0$ and $\int g^\epsilon_{\mu\rightarrow \nu}d\nu\geq 0$. Then there exists a constant $\tilde{C}_{\epsilon,\sigma,\|\mathbb{E}(\mu)\|,M_2(\mu),M_2(\nu)} >0$ only depending quadratically on $\sigma,\|\mathbb{E}(\mu)\|$ and linearly on $\frac{1}{\epsilon},\|\mathbb{E}(\nu)\|,M_2(\mu),M_2(\nu)$, such that:
\begin{equation*}
|f^\epsilon_{\mu\rightarrow \nu}(x)|\leq \tilde{C}_{\epsilon,\sigma,\|\mathbb{E}(\mu)\|,\|\mathbb{E}(\nu)\|,M_2(\mu),M_2(\nu)} (\|x\|^2+1)
\end{equation*}
\end{corollary}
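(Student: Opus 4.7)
The plan is to combine the two one-sided bounds already in hand: Lemma \ref{lem:4.9 from nutz} gives an upper bound for $f^\epsilon_{\mu\rightarrow\nu}(x)$, and Lemma \ref{lemma:lowerbound} gives a matching quadratic lower bound. Since the normalization assumptions on the potentials in Corollary \ref{cor:absolute_bounds} match those of both lemmas, no further reduction is required.

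First, I would handle the upper bound. By Lemma \ref{lem:4.9 from nutz},
\begin{equation*}
f^\epsilon_{\mu\rightarrow\nu}(x)\leq \frac{1}{2}\int \|x-y\|^2 d\nu(y)=\frac{1}{2}\|x\|^2-\langle x,\mathbb{E}(\nu)\rangle+\frac{1}{2}M_2(\nu).
\end{equation*}
Applying Cauchy--Schwarz to $\langle x,\mathbb{E}(\nu)\rangle$ and the elementary inequality $\|x\|\leq \tfrac{1}{2}(\|x\|^2+1)$ then produces
\begin{equation*}
f^\epsilon_{\mu\rightarrow\nu}(x)\leq \tfrac{1}{2}(1+\|\mathbb{E}(\nu)\|)\|x\|^2+\tfrac{1}{2}\|\mathbb{E}(\nu)\|+\tfrac{1}{2}M_2(\nu)\leq C_{1}(\|x\|^2+1),
\end{equation*}
for an explicit constant $C_{1}$ depending linearly on $\|\mathbb{E}(\nu)\|$ and $M_2(\nu)$.

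For the lower bound I would simply invoke Lemma \ref{lemma:lowerbound}, which gives
\begin{equation*}
-C_{\epsilon,\sigma,\|\mathbb{E}(\mu)\|,M_2(\mu),\|\mathbb{E}(\nu)\|}(\|x\|^2+1)\leq f^\epsilon_{\mu\rightarrow\nu}(x),
\end{equation*}
under exactly the same normalization conventions on $(f^\epsilon_{\mu\rightarrow\nu},g^\epsilon_{\mu\rightarrow\nu})$ that are in force in the Corollary. Setting
\begin{equation*}
\tilde{C}_{\epsilon,\sigma,\|\mathbb{E}(\mu)\|,\|\mathbb{E}(\nu)\|,M_2(\mu),M_2(\nu)}:=\max\{C_{1},\,C_{\epsilon,\sigma,\|\mathbb{E}(\mu)\|,M_2(\mu),\|\mathbb{E}(\nu)\|}\}
\end{equation*}
and combining the two bounds yields the stated estimate, with the prescribed dependences on the parameters inherited termwise from the two inputs.

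There is no genuine obstacle here — the result is essentially a bookkeeping combination. The only mild care is in tracking how the constant depends on $\sigma,\epsilon,\|\mathbb{E}(\mu)\|,\|\mathbb{E}(\nu)\|,M_2(\mu),M_2(\nu)$: the upper-bound step contributes linear dependence on $\|\mathbb{E}(\nu)\|$ and $M_2(\nu)$ only, while all quadratic dependence on $\sigma$ and $\|\mathbb{E}(\mu)\|$, and the linear dependence on $1/\epsilon$ and $M_2(\mu)$, come from Lemma \ref{lemma:lowerbound}. Taking the maximum of the two constants therefore preserves the prescribed functional form of $\tilde{C}$.
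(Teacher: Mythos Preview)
Your proposal is correct and follows essentially the same approach as the paper: combine the upper bound from Lemma~\ref{lem:4.9 from nutz} with the lower bound from Lemma~\ref{lemma:lowerbound} and take the maximum of the resulting constants. The only cosmetic difference is that the paper simplifies the upper bound via Young's inequality to $\|x\|^2+M_2(\nu)$ rather than expanding the square and using Cauchy--Schwarz as you do.
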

\begin{proof}
From Lemma \ref{lem:4.9 from nutz}, we have \begin{align*}f^\epsilon_{\mu\rightarrow \nu}(x) & \leq \frac{1}{2}\int \|x-y\|^2d\nu(y) \\ & \leq \|x\|^2+\int \|y\|^2 d\nu(y) \\ & =\|x\|^2+M_2(\nu),\end{align*}
where we applied Young's inequality. From this and Lemma \ref{lemma:lowerbound}, we have 
\begin{equation*}
 |f^\epsilon_{\mu\rightarrow \nu}(x)|\leq \max\{C_{\epsilon,\sigma,\|\mathbb{E}(\mu)\|,M_2(\mu),\|\mathbb{E}(\nu)\|}\left(\|x\|^2+1\right),\; \|x\|^2+M_2(\nu)\},
\end{equation*}
where $C_{\epsilon,\sigma,\|\mathbb{E}(\mu)\|,M_2(\mu)}$ depends quadratically on $\sigma,\|\mathbb{E}(\mu)\|$ and linearly on $\frac{1}{\epsilon}, M_2(\mu)$ and $\|\mathbb{E}(\nu)\|$. We conclude by combining constants.
\end{proof}

We now prove the differentiability of $OT_2^\epsilon(\mu,\nu)$ as a function of $\mu$. A crucial step is establishing the continuity of the entropic potentials along perturbations of $\mu$:
\begin{proposition}\label{prop:continuous_subgauss_target}
Let $\mu,\rho\in \mathcal{P}_2(\mathbb{R}^d)$ and $\nu\in\mathcal{G}(\mathbb{R}^d)$, let $\chi:=\rho-\mu$, and let $\mu_t:=\mu+t\chi.$ For all $t\in(0,1]$, let $f^\epsilon_{\mu_t\rightarrow \nu}$ be the unique entropic potential such that $\int f^\epsilon_{\mu_t\rightarrow \nu} d\mu_t=\frac{1}{2}OT^\epsilon_2(\mu_t,\nu)$. Then $f_{\mu_t\rightarrow \nu}^\epsilon$ converges pointwise to $f^\epsilon_{\mu\rightarrow \nu}$ with $\int f_{\mu\rightarrow\nu}^\epsilon d\mu=\frac{1}{2}OT_2^\epsilon(\mu,\nu)$ as $t\rightarrow 0^{+}.$
\end{proposition}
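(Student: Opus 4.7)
My plan is to pass to the limit $t \to 0^+$ inside the dual integral representation \eqref{eqn:fduality} for $f^\epsilon_{\mu_t \to \nu}$ by dominated convergence. First I would observe that $\mu_t = (1-t)\mu + t\rho$ satisfies $\mu_t \in \mathcal{P}_2(\mathbb{R}^d)$ with $M_2(\mu_t)$ and $\|\mathbb{E}(\mu_t)\|$ bounded uniformly in $t\in[0,1]$ by $M := \max\{M_2(\mu), M_2(\rho)\}$ and $E := \max\{\|\mathbb{E}(\mu)\|, \|\mathbb{E}(\rho)\|\}$ respectively. This also gives $OT_2(\mu_t,\mu)\to 0$ as $t\to 0^+$, since the convex combination converges weakly and the second moments converge.

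Next, I would fix the companion potential $g^\epsilon_{\mu_t\to\nu}$ by imposing $\int g^\epsilon_{\mu_t\to\nu}\, d\nu \geq 0$, which is consistent with $\int f^\epsilon_{\mu_t\to\nu}\, d\mu_t = \tfrac12 OT_2^\epsilon(\mu_t,\nu) \geq 0$. Then Lemma~\ref{lem:4.9 from nutz} yields the upper bound
\begin{equation*}
g^\epsilon_{\mu_t\to\nu}(y) \;\leq\; \tfrac{1}{2}\int \|x-y\|^2\, d\mu_t(x) \;=\; \tfrac{1}{2}\|y\|^2 - \langle y, \mathbb{E}(\mu_t)\rangle + \tfrac{1}{2}M_2(\mu_t).
\end{equation*}
Substituting into the integrand of \eqref{eqn:fduality} and cancelling the $\|y\|^2$ terms produces the uniform estimate
\begin{equation*}
e^{-\frac{1}{\epsilon}\left(\frac{1}{2}\|x-y\|^2 - g^\epsilon_{\mu_t\to\nu}(y)\right)} \;\leq\; e^{\frac{1}{\epsilon}\left(\frac{M}{2} - \frac{\|x\|^2}{2}\right)}\, e^{\frac{1}{\epsilon}|\langle y, x - \mathbb{E}(\mu_t)\rangle|},
\end{equation*}
whose right-hand side lies in $L^1(\nu)$ by Corollary~\ref{cor:cor_of_lem1.5_rig}, with $\nu$-integral bounded uniformly in $t$ by a quantity depending only on $\epsilon, \sigma, M, E, \|\mathbb{E}(\nu)\|$, and $\|x\|$.

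The third step is to invoke the stability results of \citep{nutz2022entropic}: since $\mu_t$ converges to $\mu$ in $OT_2$, $\nu$ is fixed and subgaussian, and we have fixed a consistent normalization, we obtain $g^\epsilon_{\mu_t\to\nu} \to g^\epsilon_{\mu\to\nu}$ in $\nu$-probability as $t\to 0^+$. Given any sequence $t_n \to 0^+$, I would extract a further sub-sequence along which $g^\epsilon_{\mu_{t_n}\to\nu}(y) \to g^\epsilon_{\mu\to\nu}(y)$ for $\nu$-a.e.\ $y$. Applying dominated convergence with the bound above gives, for every $x \in \mathbb{R}^d$,
\begin{equation*}
\int e^{-\frac{1}{\epsilon}\left(\frac{1}{2}\|x-y\|^2 - g^\epsilon_{\mu_{t_n}\to\nu}(y)\right)} d\nu(y) \;\longrightarrow\; \int e^{-\frac{1}{\epsilon}\left(\frac{1}{2}\|x-y\|^2 - g^\epsilon_{\mu\to\nu}(y)\right)} d\nu(y),
\end{equation*}
and taking $-\epsilon\log$ yields $f^\epsilon_{\mu_{t_n}\to\nu}(x) \to f^\epsilon_{\mu\to\nu}(x)$ along the subsequence, since the right-hand integral is precisely $\exp(-f^\epsilon_{\mu\to\nu}(x)/\epsilon)$ by \eqref{eqn:fduality}. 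Because every subsequence has a further sub-subsequence converging to the \emph{same} limit $f^\epsilon_{\mu\to\nu}(x)$, a standard Urysohn-style argument upgrades pointwise convergence to the full family $t\to 0^+$.

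The main obstacle is obtaining a uniform-in-$t$ quadratic upper bound on $g^\epsilon_{\mu_t\to\nu}$ \emph{without} subgaussianity of $\mu_t$. In contrast to \citep{janati2020debiased}, where symmetric bounds on both potentials are available, here only the potential on the side of the subgaussian target $\nu$ can be controlled via the log-exp formula, and convexity of $t\mapsto \mu_t$ is essential for the moment bounds $M_2(\mu_t) \leq M$, $\|\mathbb{E}(\mu_t)\| \leq E$ to hold uniformly. A second delicate point is verifying that the stability result of \citep{nutz2022entropic} indeed yields convergence in $\nu$-measure of the normalized $g^\epsilon_{\mu_t\to\nu}$ under only $OT_2$-convergence of $\mu_t$ to $\mu$.
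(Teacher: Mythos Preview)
Your approach is essentially the paper's own proof. The paper likewise (i) bounds $g^\epsilon_{\mu_t\to\nu}$ via Lemma~\ref{lem:4.9 from nutz} to produce a $\nu$-integrable envelope, (ii) obtains $g^\epsilon_{\mu_t\to\nu}\to g^\epsilon_{\mu\to\nu}$ in $\nu$-probability from the Nutz--Wiesel stability theory, (iii) passes to an a.e.\ subsequence and applies dominated convergence in \eqref{eqn:fduality}, and (iv) upgrades via the sub-subsequence argument. Two small remarks. First, your displayed envelope still carries $\mathbb{E}(\mu_t)$ and so depends on $t$; for DCT you need a fixed dominating function, which the paper obtains by the extra triangle-inequality step $|\langle y,\mathbb{E}(\mu_t)\rangle|\le |\langle y,\mathbb{E}(\mu)\rangle|+|\langle y,\mathbb{E}(\rho)\rangle|$. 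Second, your ``second delicate point'' is exactly where the paper invests its effort: it isolates the convergence-in-probability claim as a separate proposition and verifies in detail the hypotheses of \citep{nutz2023stability} (weak convergence, $\mu\ll\mu_t$, uniform integrability of the potentials, the $\arctan$-normalization limit, and the Radon--Nikodym tail condition via total-variation convergence), and then uses a companion result to identify the limiting normalization $\int f^\epsilon_{\mu\to\nu}\,d\mu=\tfrac12 OT_2^\epsilon(\mu,\nu)$.
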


We defer its proof to Subsection \ref{subsec: proof of continuity}.

\vspace{10pt}
\noindent\textbf{Proof of Theorem \ref{thm:entot_differentiable}:} For convenience, set $\epsilon=1$.  Let $\rho\in\mathcal{P}_2(\mathbb{R}^d)$, $\chi$ and $\mu_t$ be as in the statement of Proposition \ref{prop:continuous_subgauss_target}.  We write $(f_t,g_t):=(f^\epsilon_{\mu_t\rightarrow \nu},g^\epsilon_{\mu_t\rightarrow\nu})$ and $(f,g):=(f^\epsilon_{\mu\rightarrow \nu},g^\epsilon_{\mu\rightarrow \nu})$. Let $\Delta_t:=\frac{1}{t}\left(OT^\epsilon_2(\mu_t,\nu)-OT^\epsilon_2(\mu,\nu)\right)$. By suboptimality of $(f,g)$ for the dual formulation of $OT^\epsilon_2(\mu_t,\nu)$ and of $(f_{t},g_{t})$ for $OT^\epsilon_2(\mu,\nu)$, we have the inequalities:

\begin{align}
OT^\epsilon_2(\mu_t,\nu)&\geq \int f d\mu_t +\int gd\nu - \int\int \left(\exp\left(f(x)+g(y)-\frac{1}{2}\|x-y\|^2\right)-1\right)d\mu_t(x)d\nu(y),\label{eqn:finaldiff_1}
\end{align}
\begin{align}
OT^\epsilon_2(\mu,\nu)&\geq \int f_t d\mu +\int g_{t}d\nu - \int\int \left(\exp\left(f_t(x)+g_t(y)-\frac{1}{2}\|x-y\|^2\right)-1\right)d\mu(x)d\nu(y).\label{eqn:finaldiff_2}
\end{align}
Replacing $OT_2^\epsilon(\mu_t,\nu)$ with the lower bound in (\ref{eqn:finaldiff_1}), we obtain a lower bound on $\Delta_t:$
\begin{align}
\Delta_t& =\frac{1}{t}(OT^\epsilon_2(\mu_t,\nu)-OT^\epsilon_2(\mu,\nu))\notag \\ & \geq \frac{1}{t}\bigg(\int f d\mu_t+\int g d\nu-\int \int \left(\exp\bigg(f(x)+g(y)-\frac{1}{2}\|x-y\|^2\right)-1\bigg)d\mu_t(x)d\nu(y)\notag \\& - \int f d\mu -\int g d\nu +\int \int \left(\exp\bigg(f(x)+g(y)-\frac{1}{2}\|x-y\|^2\right)-1\bigg)d\mu(x)d\nu(y)\bigg)\notag \\& = \int f d\chi - \int \int \left(\exp\bigg(f(x)+g(y)-\frac{1}{2}\|x-y\|^2\right)-1\bigg)d\chi(x)d\nu(y).\label{eqn:finaldiff_3}
\end{align}
Similarly, replacing $OT_2^\epsilon(\mu,\nu)$ with the lower bound in (\ref{eqn:finaldiff_2}), we obtain:
\begin{align}
\Delta_t& =\frac{1}{t}(OT^\epsilon_2(\mu_t,\nu)-OT^\epsilon_2(\mu,\nu))\notag \\ & \leq \frac{1}{t}\bigg(\int f_t d\mu_t+\int g_t d\nu-\int \int \left(\exp\bigg(f_t(x)+g_t(y)-\frac{1}{2}\|x-y\|^2\right)-1\bigg)d\mu_t(x)d\nu(y)\notag \\& - \int f_t d\mu -\int g_t d\nu +\int \int \left(\exp\bigg(f_t(x)+g_t(y)-\frac{1}{2}\|x-y\|^2\right)-1\bigg)d\mu(x) d\nu(y)\bigg)\notag \\& = \int f_t d\chi - \int \int \left(\exp\bigg(f_t(x)+g_t(y)-\frac{1}{2}\|x-y\|^2\right)-1\bigg)d\chi(x)d\nu(y).\label{eqn:finaldiff_4}
\end{align}
Subtracting (\ref{eqn:finaldiff_3}) from (\ref{eqn:finaldiff_4}), we obtain:
\begin{align}\label{eqn:finaldiff_5} 
\int (f_t-f)d\chi & -\int\int \exp\left(f_t(x)+g_t(y)-\frac{1}{2}\|x-y\|^2\right)d\chi(x)d\nu(y)\notag \\ & +\int\int \exp\left(f(x)+g(y)-\frac{1}{2}\|x-y\|^2\right)d\chi(x)d\nu(y),
\end{align}
which by construction is an upper bound on $\displaystyle\limsup_{t\rightarrow 0^{+}}\Delta_t-\displaystyle \liminf_{t\rightarrow 0^{+}}\Delta_t$, and hence if we show (\ref{eqn:finaldiff_5}) converges to 0 then we will have established existence of $\lim_{t\rightarrow 0^{+}}\Delta_t$. We now show that the first term converges to zero. For any $x\in \mathbb{R}^d$, we bound:
\begin{align*}
|f_t(x)-f(x)| \leq |f_t(x)|+|f(x)| \leq C_t^*(\|x\|^2+1),
\end{align*}
where we applied the triangle inequality followed by Corollary \ref{cor:absolute_bounds} and where $C^*_t$ is a constant quadratically depending on $\sigma,\|\mathbb{E}(\mu_t)\|,\|\mathbb{E}(\mu)\|$ and linearly depending on $\frac{1}{\epsilon},\|\mathbb{E}(\nu)\|, M_2(\mu_t),M_2(\mu), M_2(\nu)$. As we have $\|\mathbb{E}(\mu_t)\|\leq \|\mathbb{E}(\mu)\|+\|\mathbb{E}(\rho)\|$ and $M_2(\mu_t)\leq M_2(\mu)+M_2(\rho)$, we can further upper bound $\sup_{t\in [0,1]}C_t^*\leq C^*$, where $C^*$ depends only on $\frac{1}{\epsilon},\sigma,\|\mathbb{E}(\mu)\|,\|\mathbb{E}(\nu)\|,\|\mathbb{E}(\rho)\|,M_2(\mu),M_2(\rho)$, and $M_2(\nu)$, and is independent of $t$ and $x$. Since $\mu,\rho\in \mathcal{P}_2(\mathbb{R}^d)$, $C^*(\|x\|^2+1)$ is $\chi$-integrable as a function of $x$. Hence by the Dominated Convergence Theorem and Proposition \ref{prop:continuous_subgauss_target}, we have that $\lim_{t\rightarrow 0^+}\int |f_t -f |d\chi=\int \lim_{t\rightarrow 0^+}|f_t -f |d\chi= 0$. 

Next, we claim that \begin{equation*} \int\int \exp\left(f_t(x)+g_t(y)-\frac{1}{2}\|x-y\|^2\right)d\chi(x)d\nu(y) =0\end{equation*} for all $t\in (0,1]$.  Applying the Fubini-Tonelli Theorem (Proposition 5.2.1 in \citep{cohn2013measure}), which applies since the integrand is nonnegative and measurable, we have, for all $t\in(0,1]$:
 \begin{align}
 &\int\int \exp\left(f_t(x)+g_t(y)-\frac{1}{2}\|x-y\|^2\right)d\chi(x)d\nu(y)\notag \\
 = & \int \exp(f_t(x))\int \exp\left(g_t(y)-\frac{1}{2}\|x-y\|^2\right)d\nu(y)d\chi(x)\notag \\ 
 = &\int \exp\left(f_t(x)\right)\exp\left(-f_t(x)\right) d\chi(x)\notag\\ =&\int 1d\chi(x) = 0,\label{eqn:nogap_1}
 \end{align} 
 where we applied (\ref{eqn:fduality}) and (\ref{eqn:gduality}) and the fact that $\chi=\rho-\mu$ has total mass zero. By the same reasoning, 
 \begin{align}&\int\int \exp\left(f(x)+g(y)-\frac{1}{2}\|x-y\|^2\right)d\chi(x)d\nu(y)\notag \\ =&\int \exp(f(x))\int \exp\left(g(y)-\frac{1}{2}\|x-y\|^2\right)d\nu(y)\notag \\  = &\int \exp(f(x))\exp(-f(x))d\chi(x)\notag \\=&\int 1 d\chi(x)=0.\label{eqn:nogap_2}
 \end{align}

By (\ref{eqn:nogap_1}) and (\ref{eqn:nogap_2}), we have established that (\ref{eqn:finaldiff_5}) equals $\int (f_t-f)d\chi$ for all $t\in (0,1]$, and hence converges to zero as $t\rightarrow 0^+$ as desired. This establishes that $\lim_{t\rightarrow 0^+}\Delta_t$ exists. and we compute this limit using (\ref{eqn:finaldiff_3}) and (\ref{eqn:gduality}) which reduces to $\int f d\chi$, by (\ref{eqn:nogap_1}), and  we conclude that $\lim_{t\rightarrow 0^{+}}\Delta_t= \int f d\chi$.  Thus we have established that $f$ is a derivative for $OT_2^\epsilon(\mu,\nu)$ as a function of $\mu$. \qed 

\begin{remark}
The above proof hinges on the fact that the Sinkhorn relation (\ref{eqn:fduality}) holds for $f^\epsilon_{\mu\rightarrow \nu}$ for all $x\in \mathbb{R}^d$ (and the same for $f^\epsilon_{\mu_{t}\rightarrow \nu}$). In particular, this is crucial in establishing (\ref{eqn:nogap_2}). Without (\ref{eqn:fduality}), one would need to argue that 
\begin{align*}
\int \int &\exp\left(f_t(x)+g_t(y)-\frac{1}{2}\|x-y\|^2\right)d\chi(x)d\nu(y)\\ \rightarrow \int \int&\exp\left(f(x)+g(y)-\frac{1}{2}\|x-y\|^2\right)d\chi(x)d\nu(y)
\end{align*}
as $t\rightarrow 0^+$ directly, which is challenging as $\exp(f_t(x)-\frac{1}{2}\|x-y\|^2)$ is not a priori uniformly bounded over $t>0$ by a $\chi$-integrable function. Hence it is not obvious whether a naive application of the Dominated Convergence Theorem will allow for the verification of this limit.
\end{remark}

\subsection{Proof of Proposition \ref{prop:continuous_subgauss_target}}\label{subsec: proof of continuity}

To establish Proposition \ref{prop:continuous_subgauss_target}, we will need to show that the sequence of potentials converges in probability:

\begin{proposition}\label{prop:convergence_in_prob}
 Let $\mu,\rho,\nu\in\mathcal{P}_2(\mathbb{R}^d)$, let $\chi:=\rho-\mu$ and let $\mu_t=\mu+t\chi$. For all $t\in [0,1)$, let $(f^\epsilon_{\mu_t\rightarrow \nu},g^\epsilon_{\mu_t\rightarrow\nu})$ be the unique entropic potentials such that $\int f^\epsilon_{\mu_t\rightarrow \nu} d\mu_t=\int g^\epsilon_{\mu_t\rightarrow\nu}d\nu=\frac{1}{2}OT^\epsilon_2(\mu_t,\nu)$. Then:
\begin{itemize}[leftmargin=*]
\item $f^\epsilon_{\mu_t\rightarrow \nu}$ converges in $\mu$-probability as $t\rightarrow 0^{+}$ to the entropic potential $f^\epsilon_{\mu\rightarrow \nu}$ with $\int f^\epsilon_{\mu\rightarrow \nu}d\mu=\frac{1}{2} OT^\epsilon_2(\mu,\nu)$;
\item $g^\epsilon_{\mu_t\rightarrow\nu}$ converges in $\nu$-probability as $t\rightarrow 0^{+}$ to the entropic potential $g^\epsilon_{\mu\rightarrow\nu}$ with $\int g^\epsilon_{\mu\rightarrow\nu}d\nu=\frac{1}{2}OT_2^\epsilon(\mu,\nu)$.
\end{itemize}
\end{proposition}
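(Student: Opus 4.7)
The plan is to combine the Nutz--Wiesel stability theorem for Schr\"odinger potentials from \citep{nutz2022entropic} with a uniform-integrability argument based on Corollary \ref{cor:absolute_bounds} to pin down the additive constant specified by the normalization.

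First, I would check that $\mu_t\to\mu$ in $W_2$: since $\mu_t=(1-t)\mu+t\rho$, the coupling $(1-t)(\mathrm{id},\mathrm{id})_\#\mu+t(\rho\otimes\mu)\in\Pi(\mu_t,\mu)$ gives $W_2^2(\mu_t,\mu)\le 2t(M_2(\mu)+M_2(\rho))\to 0$, so $\mu_t\to\mu$ weakly with convergence of second moments, while $\nu$ is fixed. The Nutz--Wiesel stability theorem then provides, after fixing some common anchoring normalization (e.g.\ $\hat f_t(x_0)=0$ at a reference point $x_0$), convergence in $\mu_t$-probability $\hat f_t\to\hat f$ and in $\nu$-probability $\hat g_t\to\hat g$, where $(\hat f_t,\hat g_t)$ and $(\hat f,\hat g)$ are Schr\"odinger potentials for $(\mu_t,\nu)$ and $(\mu,\nu)$ respectively. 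Because $\mu\le \tfrac{1}{1-t}\mu_t$ for $t\in[0,1)$, convergence in $\mu_t$-probability upgrades automatically to convergence in $\mu$-probability.

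The prescribed potentials differ from $(\hat f_t,\hat g_t)$ by additive constants: $f^\epsilon_{\mu_t\to\nu}=\hat f_t+c_t$ and $g^\epsilon_{\mu_t\to\nu}=\hat g_t-c_t$. The statement's normalization $\int f^\epsilon_{\mu_t\to\nu}\,d\mu_t=\tfrac12 OT_2^\epsilon(\mu_t,\nu)$ pins down $c_t=\tfrac12 OT_2^\epsilon(\mu_t,\nu)-\int \hat f_t\,d\mu_t$, and the remaining (and main) obstacle is to show $c_t\to c$. Continuity of $OT_2^\epsilon$ (Theorem \ref{thm:thm3.7_in_eckstein}) disposes of the first term. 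For the integral, the key observation is that $f^\epsilon_{\mu_t\to\nu}$ itself verifies the hypothesis $\int f^\epsilon_{\mu_t\to\nu}\,d\mu_t\ge 0$, $\int g^\epsilon_{\mu_t\to\nu}\,d\nu\ge 0$ of Corollary \ref{cor:absolute_bounds}, producing a $t$-uniform quadratic envelope $|f^\epsilon_{\mu_t\to\nu}(x)|\le C(\|x\|^2+1)$ (the quantities $\|\mathbb{E}(\mu_t)\|$ and $M_2(\mu_t)$ entering the bound are uniformly controlled by $\|\mathbb{E}(\mu)\|+\|\mathbb{E}(\rho)\|$ and $M_2(\mu)+M_2(\rho)$). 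This forces $\{c_t\}$ to be bounded, so the same envelope (with an enlarged constant) holds for $\hat f_t=f^\epsilon_{\mu_t\to\nu}-c_t$, yielding uniform $\mu_t$-integrability of $\{\hat f_t\}$. Together with the weak $W_2$ convergence $\mu_t\to\mu$ and convergence in $\mu$-probability of $\hat f_t\to\hat f$, a Vitali convergence argument yields $\int\hat f_t\,d\mu_t\to\int\hat f\,d\mu$, whence $c_t\to c$.

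Assembling the pieces, $f^\epsilon_{\mu_t\to\nu}=\hat f_t+c_t\to\hat f+c=f^\epsilon_{\mu\to\nu}$ in $\mu$-probability, and analogously $g^\epsilon_{\mu_t\to\nu}=\hat g_t-c_t\to \hat g-c=g^\epsilon_{\mu\to\nu}$ in $\nu$-probability. The limiting normalization $\int f^\epsilon_{\mu\to\nu}\,d\mu=\int g^\epsilon_{\mu\to\nu}\,d\nu=\tfrac12 OT_2^\epsilon(\mu,\nu)$ then follows by passing to the limit in the analogous identities at level $t$, using continuity of $OT_2^\epsilon$ and the uniform-integrability argument above applied symmetrically in $f$ and $g$.
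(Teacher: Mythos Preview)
Your overall strategy---invoke the Nutz stability results to get convergence in probability up to an additive shift, then identify the shift via the integral normalization---is the same as the paper's. However, there is a genuine gap in the execution.

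The proposition assumes only $\nu\in\mathcal{P}_2(\mathbb{R}^d)$, not $\nu\in\mathcal{G}(\mathbb{R}^d)$. Corollary~\ref{cor:absolute_bounds}, which you invoke to obtain the two-sided envelope $|f^\epsilon_{\mu_t\to\nu}(x)|\le C(\|x\|^2+1)$, explicitly requires $\nu$ to be subgaussian: its lower bound comes from Lemma~\ref{lemma:lowerbound}, which in turn relies on Corollary~\ref{cor:cor_of_lem1.5_rig}. Without subgaussianity you only have the one-sided bound $f^\epsilon_{\mu_t\to\nu}(x)\le\|x\|^2+M_2(\nu)$ from Lemma~\ref{lem:4.9 from nutz}, which does not suffice for your argument as written: you cannot bound $c_t$ from below (via $c_t=f^\epsilon_{\mu_t\to\nu}(x_0)$), nor control the negative part of $\hat f_t$, so the Vitali step breaks down.

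The paper circumvents this by never asking for a two-sided envelope. It uses the $\arctan$-based framework of \citep{nutz2023stability} (recorded as Lemma~\ref{lem:corollary2.4nutz}), first passing to a subsequence along which $\int\arctan(f^\epsilon_{\mu_{t_i}\to\nu})\,d\mu_{t_i}$ converges, then applying Lemma~\ref{lem:theorem2.1nutz}, whose uniform-integrability hypothesis is \emph{one-sided}---exactly what Lemma~\ref{lem:uniform_integrable} extracts from the upper bound alone. This yields $\int f^\epsilon_{\mu_{t_i}\to\nu}\,d\mu_{t_i}\to\int f^{\epsilon,L}_{\mu\to\nu}\,d\mu$ directly, and since the left side equals $\tfrac12 OT_2^\epsilon(\mu_{t_i},\nu)\to\tfrac12 OT_2^\epsilon(\mu,\nu)$, the limiting potential is identified without any lower envelope. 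A metrizability-of-convergence-in-probability argument then upgrades the subsequential statement to the full family. A secondary loose end in your sketch is the pointwise anchoring $\hat f_t(x_0)=0$: the cited Nutz results are not stated under this normalization, and it is the $\arctan$ device (together with the absolute-continuity condition $\mu\ll\mu_t$ and Lemma~\ref{lem:lemma2.5nutz}) that makes the stability theorem applicable at the present generality.
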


\begin{remark}
We note that the normalization $\int f^\epsilon_{\mu_t\rightarrow \nu} d\mu_t=\int g^\epsilon_{\mu_t\rightarrow\nu}d\nu=\frac{1}{2}OT^\epsilon_2(\mu_t,\nu)\geq 0$ is convenient as it allows us to apply Lemma \ref{lem:4.9 from nutz}, and also varies smoothly as a function of $t$ (see e.g., Lemma \ref{lem:theorem2.1nutz}). 
\end{remark}
We postpone the proof of Proposition \ref{prop:convergence_in_prob} to Subsection \ref{subsec:proof of converge in prob}, and continue with the proof of Proposition \ref{prop:continuous_subgauss_target}.
\bigskip

\noindent\textbf{Proof of Proposition \ref{prop:continuous_subgauss_target}:}  For convenience, we will assume that $\epsilon=1$. Let $f_t:=f^\epsilon_{\mu_t\rightarrow \nu}$ and $g_t:=g^\epsilon_{\mu_t\rightarrow \nu}$ for any $t\in (0,1]$ and let $f:=f_{\mu\rightarrow \nu}^\epsilon$ and $g:=g_{\mu\rightarrow \nu}^\epsilon$ be the unique entropic potentials such that $\frac{1}{2}OT_2^\epsilon(\mu,\nu)=\int g d\nu$. Let $T:=\{t_i\}_{i\in \mathbb{N}}\subset [0,1)$, with $t_i\rightarrow 0$ as $i\rightarrow \infty$. By Proposition \ref{prop:convergence_in_prob}, the sequence $\{g_{t_i}\}_{t_i\in T}$ converges in $\nu$-probability to $g$. Since $\{g_{t_i}\}_{t_i\in T}$ converges in probability, there is a subsequence $T'\subset T$ such that $\{g_{t_i'}\}_{t_i'\in T'}$ converges $\nu$-almost-surely to $g$. We now show that this, together with subgaussian assumption on $\nu$, is enough to prove pointwise convergence everywhere of $\{f_{t_i'}\}_{t_i'\in T'}$ to $f$.  Indeed, by (\ref{eqn:fduality}), for any $x\in \mathbb{R}^d$ we may write: \begin{equation*}\lim_{i\rightarrow\infty} \exp\left(-f_{t_i'}(x)\right) =\lim_{i\rightarrow \infty}\int \exp\left(-\frac{1}{2}\|x-y\|^2+g_{t_i'}(y)\right)d\nu(y).
\end{equation*}
By Lemma \ref{lem:4.9 from nutz}, we may bound uniformly: 
\begin{align}&\exp\left(-\frac{1}{2}\|x-y\|^2+g_{t_i'}(y)\right)\notag\\ \leq & \exp\left(-\frac{1}{2}\|x\|^2+\langle x,y\rangle -\frac{1}{2}\|y\|^2+\int \frac{1}{2}\|z-y\|^2 d\mu_{t_i'}(z)\right)\notag \\ = & \exp\left(-\frac{1}{2}\|x\|^2+\langle x,y\rangle -\frac{1}{2}\|y\|^2+\int \left(\frac{1}{2}\|z\|^2 -\langle y,z\rangle +\frac{1}{2}\|y\|^2\right) d\mu_{t_i'}(z) \right)\notag \\ =& 
\exp\left(-\frac{1}{2}\|x\|^2+\frac{1}{2}M_2(\mu_{t_i'})+\langle x-\mathbb{E}(\mu_{t_i'}),y\rangle\right)\notag \\  
\leq &\exp\left(-\frac{1}{2}\|x\|^2+\frac{1}{2}M_2((1-t_i')\mu+t_i'\rho)+|\langle x,y\rangle|+|\langle \mathbb{E}((1-t_i')\mu+t_i'\rho),y\rangle | \right)\notag \\ 
\leq &\exp\left(-\frac{1}{2}\|x\|^2+\frac{1}{2}M_2(\mu)+\frac{1}{2}M_2(\rho)+|\langle x,y\rangle|+|\langle \mathbb{E}(\mu),y\rangle |+|\langle \mathbb{E}(\rho),y\rangle| \right)\label{eqn:integupperbound}
\end{align}
where we applied the triangle inequality twice to obtain $|\langle x-\mathbb{E}(\mu_t),y\rangle|\leq |\langle x,y\rangle|+|\langle \mathbb{E}(\mu_t),y\rangle|$ and $|\langle \mathbb{E}(\mu_t),y\rangle|\leq |\langle \mathbb{E}(\mu),y\rangle|+|\langle \mathbb{E}(\rho),y\rangle|$, and also used that $0< t_i'\le 1$. We now show that the function in (\ref{eqn:integupperbound}) is $\nu$-integrable in $y$.  By two applications of the Cauchy-Schwarz inequality, we bound:
\begin{align*}
& \int \exp\left(|\langle x,y\rangle|+|\langle \mathbb{E}(\mu),y\rangle|+|\langle \mathbb{E}(\rho),y\rangle|\right)d\nu(y) \\ 
 = & \int \exp\left(|\langle x,y\rangle|\right)\exp\left(|\langle \mathbb{E}(\mu),y\rangle|\right)\exp\left(|\langle \mathbb{E}(\rho),y\rangle|\right)d\nu(y)\\ 
\leq & \left(\int \exp\left(2|\langle x,y\rangle|\right)d\nu(y)\right)^{1/2} \left(\int \exp\left(4|\langle \mathbb{E}(\mu),y\rangle|\right)d\nu(y) \int \exp\left(4|\langle \mathbb{E}(\rho),y\rangle|\right)d\nu(y)\right)^{1/4}
\end{align*}
and by Corollary \ref{cor:cor_of_lem1.5_rig} this last expression is finite. Hence (\ref{eqn:integupperbound}) is $\nu$-integrable, and we conclude that for each $x\in \mathbb{R}^d$, $\exp\left(-\frac{1}{2}\|x-y\|^2+g_{t_i'}(y)\right)$ may be uniformly (in $t_i'$) upper bounded by a $\nu$-integrable function of $y$. By the Dominated Convergence Theorem, we have for all $x\in \mathbb{R}^d$:
\begin{align*}
\lim_{i\rightarrow\infty}\exp\left(-f_{t_i'}(x)\right) & = \int \lim_{i\rightarrow \infty}\exp\left(-\frac{1}{2}\|x-y\|^2 +g_{t_i'}(y)\right)d\nu(y) & \\ & = \int \exp\left(-\frac{1}{2}\|x-y\|^2 +g(y)\right)d\nu(y)\\
&= \exp(-f(x)
)\end{align*}
by $\nu$-almost sure convergence of $g_{t_i'}$ to $g$, which, along with continuity of $\log$, establishes pointwise everywhere convergence of $\{f_{t_i'}\}_{t_i'\in T'}.$

We claim this is enough to conclude pointwise everywhere convergence of $\{f_{t_i}\}_{t_i\in T}$ to $f$. Indeed, suppose $\{f_{t_i}\}_{t_i\in T}$ does not converge pointwise everywhere to $f$. Then there exists an $x_0\in \mathbb{R}^d$, a $\delta>0$, and a subsequence $\tilde{T}\subset T$ with $\tilde{t}_i\rightarrow 0$ as $i\rightarrow \infty$, such that $\|f(x_0)-f_{\tilde{t}_i}(x_0)\|>\delta$ for all $\tilde{t}_{i}\in \tilde{T}$.  But $\{g_{\tilde{t}_{i}}\}_{\tilde{t}_{i}\in\tilde{T}}$ converges in $\nu$-probability to $g$, and thus the sequence $\tilde{T}$ contains a subsequence $\tilde{T}'\subset\tilde{T}$ along which $\{g_{\tilde{t}_{i}'}\}_{\tilde{t}_{i}'\in\tilde{T}'}$ converges $\nu$-almost surely to $g$.  Hence we may apply the above argument to conclude that $f_{\tilde{t}_i'}$ converges pointwise to $f$, contradicting our assumption. We conclude that $\{f_{t_{i}}\}_{t_{i}\in T}$ converges pointwise to $f$ as $i\rightarrow\infty$. 
\qed

\subsection{Proof of Proposition \ref{prop:convergence_in_prob}}\label{subsec:proof of converge in prob}

To prove Proposition \ref{prop:convergence_in_prob}, we will first need to verify that our sequence $\mu_t$ contains a subsequence satisfying the conditions in the lemma below:
\begin{lemma}\label{lem:corollary2.4nutz}
(Corollary 2.4 in \citep{nutz2023stability}) Let $T:=\{t_i\}_{i\in\mathbb{N}}\subset [0,1)$ be any sequence such that $t_i\rightarrow 0^+$ as $i\rightarrow \infty.$ Let $\{\mu_{t_i}\}_{t_i\in T}\subset \mathcal{P}_2(\mathbb{R}^d)$ be a sequence of probability measures such that:
\begin{enumerate}[leftmargin=*]
    \item $\mu_{t_i}$ converges weakly to $\mu$ as $i\rightarrow \infty$; 
    \item $\mu\ll\mu_{t_i}$ for all $t_i\in [0,1)$;
    \item $OT_2^\epsilon(\mu_{t_i},\nu)<\infty$  for all $t_i\in [0,1]$;
    \item $\sup_i \int \max\{f_{\mu_{t_i}\rightarrow \nu}^\epsilon ,0\}d\mu_{t_i}<\infty$, $\sup_i \int \max\{g_{\mu_{t_i}\rightarrow \nu}^\epsilon,0\} d\nu<\infty$;
    \item $\displaystyle\lim_{i\rightarrow \infty}\int \arctan(f^\epsilon_{\mu_{t_i}\rightarrow \nu})d\mu_{t_i}=L$ for some $L\in  [-\pi/2,\pi/2]$;
    \item $\displaystyle\limsup_{C\rightarrow\infty}\limsup_{i\rightarrow \infty}\int \mathbbm{1}_{\frac{d\mu}{d\mu_{t_i}}(x)\geq C}d\mu(x) =0.$
\end{enumerate} 
Then:
\begin{itemize}[leftmargin=*]
\item $f_{\mu_{t_i}\rightarrow\nu}^\epsilon$ converges in $\mu$-probability to $f^{\epsilon,L}_{\mu\rightarrow\nu}$, the unique entropic potential with $\int \arctan(f^{\epsilon,L}_{\mu\rightarrow\nu})d\mu=L$;
\item$g^\epsilon_{\mu_{t_i}\rightarrow\nu}$ converges in $\nu$-probability to $g^{\epsilon,L}_{\mu\rightarrow\nu}$, such that $g^{\epsilon,L}_{\mu\rightarrow\nu}$ is the unique entropic potential paired with $f^{\epsilon,L}_{\mu\rightarrow \nu}$.
\end{itemize}
\end{lemma}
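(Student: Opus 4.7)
The plan is to apply Lemma \ref{lem:corollary2.4nutz} along arbitrary sequences $t_i\rightarrow 0^+$ in $[0,1)$, verify its six hypotheses, and then promote subsequential convergence in probability to convergence along the full continuous parameter via a standard subsubsequence argument. Conditions 1 through 4 and 6 are structural and follow from the interpolating form $\mu_{t_i}=(1-t_i)\mu+t_i\rho$; condition 5 will require passing to a further subsequence, and the limit will ultimately be pinned down by the mean-normalization constraint.

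For the structural conditions, weak convergence $\mu_{t_i}\rightarrow\mu$ is immediate, and the bound $\mu_{t_i}\geq(1-t_i)\mu$ yields both $\mu\ll\mu_{t_i}$ with $\frac{d\mu}{d\mu_{t_i}}\leq\frac{1}{1-t_i}$ (condition 2) and the emptiness of $\{\frac{d\mu}{d\mu_{t_i}}\geq C\}$ whenever $t_i<1-1/C$ (condition 6). Finiteness of $OT_2^\epsilon(\mu_{t_i},\nu)$ is automatic for second-moment measures (condition 3). For condition 4, the upper bound $f_{\mu_{t_i}\rightarrow\nu}^\epsilon(x)\leq\frac{1}{2}\int\|x-y\|^2 d\nu(y)\leq\|x\|^2+M_2(\nu)$ from Lemma \ref{lem:4.9 from nutz}, integrated against $\mu_{t_i}$, yields the uniform bound $M_2(\mu)+M_2(\rho)+M_2(\nu)$, and symmetrically for $g_{\mu_{t_i}\rightarrow\nu}^\epsilon$. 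For condition 5, since $|\arctan|\leq\pi/2$, the sequence $\int\arctan(f_{\mu_{t_i}\rightarrow\nu}^\epsilon)d\mu_{t_i}$ admits a convergent subsequence with limit $L\in[-\pi/2,\pi/2]$; pass to this subsequence (still denoted $\{t_i\}$) and apply the lemma to obtain limits $\tilde{f}$ in $\mu$-probability and $\tilde{g}$ in $\nu$-probability.

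To identify $\tilde{f}=f^\epsilon_{\mu\rightarrow\nu}$ with the mean-normalization $\int\tilde{f}d\mu=\frac{1}{2}OT_2^\epsilon(\mu,\nu)$, rather than the arctan-normalization produced by the lemma, I exploit uniqueness of entropic potentials up to an additive constant and compute $\int\tilde{f}d\mu$ by a limiting argument. Starting from the identity $\int f_{\mu_{t_i}\rightarrow\nu}^\epsilon d\mu_{t_i}=\frac{1}{2}OT_2^\epsilon(\mu_{t_i},\nu)$ and Theorem \ref{thm:thm3.7_in_eckstein} giving $OT_2^\epsilon(\mu_{t_i},\nu)\rightarrow OT_2^\epsilon(\mu,\nu)$, I decompose
\[
\int f_{\mu_{t_i}\rightarrow\nu}^\epsilon d\mu_{t_i}=(1-t_i)\int f_{\mu_{t_i}\rightarrow\nu}^\epsilon d\mu+t_i\int f_{\mu_{t_i}\rightarrow\nu}^\epsilon d\rho.
\]
The upper bound above gives $\int f_{\mu_{t_i}\rightarrow\nu}^\epsilon d\rho\leq M_2(\rho)+M_2(\nu)$ and a symmetric bound under $\mu$; combined with the uniform boundedness of the left side, one can show $t_i\int f_{\mu_{t_i}\rightarrow\nu}^\epsilon d\rho\rightarrow 0$. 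A uniform integrability argument then upgrades convergence in $\mu$-probability to $\int f_{\mu_{t_i}\rightarrow\nu}^\epsilon d\mu\rightarrow\int\tilde{f}d\mu$, yielding $\int\tilde{f}d\mu=\frac{1}{2}OT_2^\epsilon(\mu,\nu)$, which fixes $\tilde{f}=f^\epsilon_{\mu\rightarrow\nu}$ with the desired normalization; the argument for $\tilde{g}$ is symmetric. Since the limit is independent of the subsequence extracted in condition 5, every subsequence of the original $\{t_i\}$ has a further subsequence converging in probability to the same canonical potentials, and the standard subsubsequence principle yields convergence in probability of the full sequence.

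The main technical obstacle is the uniform integrability of $\{f_{\mu_{t_i}\rightarrow\nu}^\epsilon\}$ under $\mu$ used in the identification step, specifically controlling the negative part. The positive part is uniformly dominated by the $\mu$-integrable function $\|x\|^2+M_2(\nu)$, but the naive lower bound from Lemma \ref{lem:4.9 from nutz}, namely $f_{\mu_{t_i}\rightarrow\nu}^\epsilon(x)\geq\inf_y\{\frac{1}{2}\|x-y\|^2-g_{\mu_{t_i}\rightarrow\nu}^\epsilon(y)\}$, need not yield quadratic control without finer information on $g_{\mu_{t_i}\rightarrow\nu}^\epsilon$ at infinity. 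When $\nu$ is subgaussian, Lemma \ref{lemma:lowerbound} supplies precisely such a quadratic lower bound uniformly in $i$, which is the regime in which this proposition is invoked downstream in Proposition \ref{prop:continuous_subgauss_target}; in the general $\mathcal{P}_2$ setting a more delicate concentration or Arzela--Ascoli argument is needed to bound the negative parts uniformly.
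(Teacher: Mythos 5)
Your proposal does not prove the stated lemma. The statement you were given is precisely Lemma~\ref{lem:corollary2.4nutz}, which the paper presents as ``(Corollary 2.4 in \citep{nutz2023stability})'' --- an imported, black-box stability result for entropic potentials that the paper does \emph{not} prove but simply cites. Your proposal opens with ``the plan is to apply Lemma~\ref{lem:corollary2.4nutz} along arbitrary sequences $t_i \to 0^+$ and verify its six hypotheses,'' and every subsequent step invokes the lemma's conclusion. In other words, you have assumed the very statement you were asked to establish, so as a proof of Lemma~\ref{lem:corollary2.4nutz} the argument is circular. A genuine proof of the stated lemma would have to reproduce the machinery of the Nutz--Wiesel stability paper (lower semicontinuity of relative entropy, stability of the Schr\"odinger bridge under weak convergence of marginals, extraction of convergent subsequences of potentials via the arctan-normalization and cyclical-invariance arguments), none of which appears in your proposal.

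What you have in fact written is a reasonable sketch of Proposition~\ref{prop:convergence_in_prob}, the next result in the paper, whose proof does exactly what you describe: verify conditions 1--6 along the interpolation $\mu_{t_i}=(1-t_i)\mu + t_i\rho$, apply Lemma~\ref{lem:corollary2.4nutz} to get subsequential convergence in $\mu$- and $\nu$-probability, identify the normalization of the limit, and finish with a subsubsequence argument. Compared with the paper's treatment of that proposition: your direct verification of condition 6 from the pointwise bound $\frac{d\mu}{d\mu_{t_i}}\le\frac{1}{1-t_i}$ is clean and bypasses Lemma~\ref{lem:lemma2.5nutz}. However, your identification step differs in a way that opens a gap you yourself flag. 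You try to pass $\int f^\epsilon_{\mu_{t_i}\to\nu}\,d\mu_{t_i}\to\int\tilde f\,d\mu$ by a uniform-integrability argument in $L^1(\mu)$, but the only uniform control you have without further hypotheses on $\nu$ is the upper bound from Lemma~\ref{lem:4.9 from nutz}; controlling the negative part of $f^\epsilon_{\mu_{t_i}\to\nu}$ uniformly in $i$ requires something like Lemma~\ref{lemma:lowerbound}, which is only available for subgaussian $\nu$. The paper avoids this entirely: instead of taking limits of $\int f^\epsilon_{\mu_{t_i}\to\nu}\,d\mu$ directly, it invokes Lemma~\ref{lem:theorem2.1nutz} (Theorem 2.1(iii) in Nutz--Wiesel), whose uniform-integrability hypothesis is stated against the moving measure $\mu_{t_i}$ rather than the fixed $\mu$, and is discharged by Lemma~\ref{lem:uniform_integrable} using only the one-sided quadratic bound. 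This is what makes the argument go through under the bare $\mathcal{P}_2$ assumption on $\nu$. So even read as a proof of Proposition~\ref{prop:convergence_in_prob}, your proposal has an unfilled gap that the paper's route closes.
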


We remark that, for any $f\in L^1(\mu)$, $\int \arctan(f+a)d\mu$ is a strictly increasing function of $a\in \mathbb{R}$. Hence if there are two entropic potentials $f^{\epsilon}_{\mu\rightarrow\nu},\tilde{f}^\epsilon_{\mu\rightarrow\nu}$ such that:
\begin{itemize}[leftmargin=*]
\item $f^\epsilon_{\mu\rightarrow\nu}+c=\tilde{f}^\epsilon_{\mu\rightarrow \nu}$ for some $c\in \mathbb{R}$;
\item $\int \arctan(f^\epsilon_{\mu\rightarrow\nu})d\mu=\int\arctan(\tilde{f}^\epsilon_{\mu\rightarrow\nu})d\mu$;
\end{itemize}
then $c=0$, and consequently $f^{\epsilon}_{\mu\rightarrow\nu}=\tilde{f}^\epsilon_{\mu\rightarrow\nu}$. This justifies the uniqueness claimed for $f^{\epsilon,L}_{\mu\rightarrow\nu}$ (and hence for $g^{\epsilon,L}_{\mu\rightarrow\nu}$) in Lemma \ref{lem:corollary2.4nutz}.

To verify item 6 we will apply the following Lemma:

\begin{lemma}\label{lem:lemma2.5nutz}
(Lemma 2.5 in \citep{nutz2023stability}) Let $T:=\{t_i\}_{i\in\mathbb{N}}\subset [0,1)$ be any sequence such that $t_i\rightarrow 0^+$ as $i\rightarrow \infty$, and let $\{\mu_{t_i}\}_{t_i\in T}\subset \mathcal{P}_2(\mathbb{R}^d)$ be a sequence of probability measures such that $\mu \ll \mu_{t_i}$ for all $t_{i}\in T$. Suppose that $\mu_{t_i}\rightarrow \mu$ in total variation as $i\rightarrow\infty$. Then:
\begin{equation*}\label{eqn:2.13innutz}
\limsup_{C\rightarrow\infty}\limsup_{i\rightarrow \infty}\int \mathbbm{1}_{\frac{d\mu}{d\mu_{t_i}}(x)\geq C}d\mu(x) =0.
\end{equation*}
\end{lemma}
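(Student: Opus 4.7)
The plan is to reduce the bound on $\mu(\{d\mu/d\mu_{t_i}\ge C\})$ to the obvious Markov estimate under $\mu_{t_i}$, and then transfer it from $\mu_{t_i}$-measure to $\mu$-measure via total variation convergence. Concretely, abbreviate $h_i := d\mu/d\mu_{t_i}$ (which exists by the assumption $\mu\ll\mu_{t_i}$) and $A_{i,C} := \{x\in\mathbb{R}^d : h_i(x)\ge C\}$. Since $\int h_i\,d\mu_{t_i} = \mu(\mathbb{R}^d) = 1$, Markov's inequality applied to the nonnegative function $h_i$ under the probability measure $\mu_{t_i}$ yields
\begin{equation*}
\mu_{t_i}(A_{i,C}) \;\le\; \frac{1}{C}\int h_i\,d\mu_{t_i} \;=\; \frac{1}{C}.
\end{equation*}

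Next I would use the standard characterization of total variation distance as the supremum over measurable sets: $\|\mu - \mu_{t_i}\|_{TV} = \sup_{A \text{ measurable}} |\mu(A) - \mu_{t_i}(A)|$. Applying this to $A = A_{i,C}$ (which is measurable since $h_i$ is a measurable function) gives
\begin{equation*}
\mu(A_{i,C}) \;\le\; \mu_{t_i}(A_{i,C}) + \|\mu - \mu_{t_i}\|_{TV} \;\le\; \frac{1}{C} + \|\mu - \mu_{t_i}\|_{TV}.
\end{equation*}
By the hypothesis that $\mu_{t_i}\to\mu$ in total variation, $\|\mu - \mu_{t_i}\|_{TV}\to 0$ as $i\to\infty$, so taking $\limsup_{i\to\infty}$ on both sides produces $\limsup_{i\to\infty}\mu(A_{i,C})\le 1/C$. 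Finally, letting $C\to\infty$ gives the conclusion
\begin{equation*}
\limsup_{C\to\infty}\limsup_{i\to\infty}\int\mathbbm{1}_{h_i\ge C}\,d\mu \;=\; \limsup_{C\to\infty}\limsup_{i\to\infty}\mu(A_{i,C}) \;=\; 0.
\end{equation*}

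There is no serious obstacle here; the only subtlety is recognizing that total variation convergence is precisely the ingredient that converts the trivial Markov bound under $\mu_{t_i}$ (which is uniform in $i$) into the desired bound under $\mu$, and that the sets $A_{i,C}$ are allowed to depend on $i$ because the total variation distance controls \emph{all} measurable sets simultaneously. Weak convergence alone would not suffice for this step, which is why the stronger total variation hypothesis appears in the statement.
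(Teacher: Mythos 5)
Your proof is correct. The paper itself states this as a cited result (Lemma 2.5 in Nutz and Wiesel) and provides no proof, so there is nothing to compare against, but your two-step argument is exactly the standard one: the $\mu_{t_i}$-measure of $\{h_i\ge C\}$ is controlled uniformly in $i$ by Markov's inequality since $\int h_i\,d\mu_{t_i}=1$, and the total-variation hypothesis is precisely what lets you transfer a bound on $\mu_{t_i}(A_{i,C})$ to a bound on $\mu(A_{i,C})$ for sets $A_{i,C}$ that vary with $i$. Your closing remark that weak convergence would not suffice for the second step is also accurate and worth keeping in mind.
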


After verifying the conditions in Lemma \ref{lem:corollary2.4nutz}, we will need to establish that the limiting potentials satisfy $\int f^{\epsilon, L}_{\mu\rightarrow \nu}d\mu=\frac{1}{2}OT_2^\epsilon(\mu,\nu)$, $\int g^{\epsilon, L}_{\mu\rightarrow \nu}d\mu=\frac{1}{2}OT_2^\epsilon(\mu,\nu)$. The result immediately follows from the Lemma below under the stated assumption in Proposition \ref{prop:convergence_in_prob} that $\int f_t d\mu_t = \int g_t d\nu = \frac{1}{2} OT_2^{\epsilon}(\mu_t, \nu)$.
\begin{lemma}
\label{lem:theorem2.1nutz}
(Theorem 2.1 (iii) in \citep{nutz2023stability}) Let $T:=\{t_i\}_{i\in\mathbb{N}}\subset [0,1)$ be any sequence such that $t_i\rightarrow 0^+$ as $i\rightarrow \infty$, and let $\{\mu_{t_i}\}_{t_i\in T}\subset \mathcal{P}_2(\mathbb{R}^d)$ be a sequence of probability measures, converging weakly to $\mu\in \mathcal{P}_2(\mathbb{R}^d)$, and let $\nu\in \mathcal{P}_2(\mathbb{R}^d).$ 
 Let $(f_{t_i},g_{t_i}):=(f^\epsilon_{\mu_{t_i}\rightarrow \nu},g^\epsilon_{\mu_{t_i}\rightarrow \nu})$ be solutions to (\ref{eqn:dualform}), and suppose that: 

\begin{enumerate}[leftmargin=*]
    \item $\sup_{i}\{\int f_{t_i}^\epsilon d\mu_{t_i},\int g_{t_i}^\epsilon d\mu_{t_i}\}<\infty$.
    \item $\{(f_{t_i},g_{t_i})\}_{t_i\in T}$ are uniformly integrable with respect to $(\mu_{t_{i}},\nu)$: \begin{equation*}
\lim_{C\rightarrow \infty}\sup_{i\in \mathbb{N}}\int f_{t_i} (x)\mathbbm{1}_{f_{t_i}>C}(x)d\mu_{t_i}(x)= 0,\end{equation*}
\begin{equation*} \lim_{C\rightarrow \infty}\sup_{i\in \mathbb{N}}\int g_{t_i}(y) \mathbbm{1}_{g_{t_i}>C}(y)d\nu(y) =0.\end{equation*}
\item $\displaystyle\lim_{i\rightarrow \infty}\int \arctan(f_{t_i})d\mu_{t_i}=\int \arctan(f) d\mu$.
\end{enumerate} 

Then:
\begin{itemize}[leftmargin=*]
    \item $OT^\epsilon_2(\mu_{t_i},\nu)\rightarrow OT^\epsilon_{2}(\mu,\nu)$ as $i\rightarrow \infty.$
    \item $\int f_{t_i}d\mu_{t_i}\rightarrow \int f d\mu$ as $i\rightarrow \infty$.
    \item $\int g_{t_i}d\nu\rightarrow \int g d\nu$ as $i\rightarrow \infty.$
\end{itemize}
\end{lemma}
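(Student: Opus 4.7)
The plan is to invoke Lemma \ref{lem:corollary2.4nutz} to deduce subsequential convergence in $\mu$- and $\nu$-probability of $(f^\epsilon_{\mu_{t_i}\rightarrow\nu},g^\epsilon_{\mu_{t_i}\rightarrow\nu})$ to a pair of limiting potentials, then use Lemma \ref{lem:theorem2.1nutz} together with the imposed normalization $\int f^\epsilon_{\mu_{t_i}\rightarrow\nu}d\mu_{t_i}=\int g^\epsilon_{\mu_{t_i}\rightarrow\nu}d\nu=\tfrac{1}{2}OT_2^\epsilon(\mu_{t_i},\nu)$ to identify these limits as $f^\epsilon_{\mu\rightarrow\nu}$ and $g^\epsilon_{\mu\rightarrow\nu}$ with the matching normalization. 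A standard subsequence-of-subsequences argument then upgrades subsequential convergence to full convergence as $t\rightarrow 0^+$.

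For any sequence $t_i\rightarrow 0^+$, I first verify hypotheses (1)--(4) and (6) of Lemma \ref{lem:corollary2.4nutz}. Since $\mu_{t_i}=(1-t_i)\mu+t_i\rho$, the total-variation bound $\|\mu_{t_i}-\mu\|_{TV}\le 2t_i$ immediately yields weak convergence (hypothesis (1)) and, via Lemma \ref{lem:lemma2.5nutz}, hypothesis (6). Hypothesis (2), $\mu\ll\mu_{t_i}$, follows because $\mu_{t_i}(A)=0$ forces $(1-t_i)\mu(A)=0$ when $t_i<1$. Hypothesis (3) holds via $M_2(\mu_{t_i})\le M_2(\mu)+M_2(\rho)<\infty$. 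For (4), the normalization gives $\int f^\epsilon_{\mu_{t_i}\rightarrow\nu}d\mu_{t_i},\int g^\epsilon_{\mu_{t_i}\rightarrow\nu}d\nu\ge 0$, so Lemma \ref{lem:4.9 from nutz} applies and yields the pointwise upper bounds $f^\epsilon_{\mu_{t_i}\rightarrow\nu}(x)\le\tfrac{1}{2}\int\|x-y\|^2d\nu(y)$ and $g^\epsilon_{\mu_{t_i}\rightarrow\nu}(y)\le\tfrac{1}{2}\int\|x-y\|^2d\mu_{t_i}(x)$; integrating and using $\int h\,d\mu_{t_i}\le\int h\,d\mu+\int h\,d\rho$ for nonnegative $h$ gives $\sup_i\int\max\{f^\epsilon_{\mu_{t_i}\rightarrow\nu},0\}d\mu_{t_i}\le M_2(\mu)+M_2(\rho)+M_2(\nu)$, and analogously for $g^\epsilon_{\mu_{t_i}\rightarrow\nu}$. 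Finally, since $\int\arctan(f^\epsilon_{\mu_{t_i}\rightarrow\nu})d\mu_{t_i}\in[-\pi/2,\pi/2]$, compactness allows me to pass to a subsequence (still denoted $t_i$) along which this integral converges to some $L\in[-\pi/2,\pi/2]$, securing hypothesis (5).

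Lemma \ref{lem:corollary2.4nutz} then gives $f^\epsilon_{\mu_{t_i}\rightarrow\nu}\rightarrow f^{\epsilon,L}_{\mu\rightarrow\nu}$ in $\mu$-probability and $g^\epsilon_{\mu_{t_i}\rightarrow\nu}\rightarrow g^{\epsilon,L}_{\mu\rightarrow\nu}$ in $\nu$-probability along this subsequence. To identify the limits, I invoke Lemma \ref{lem:theorem2.1nutz}: hypothesis (1) is the uniform bound $\int f^\epsilon_{\mu_{t_i}\rightarrow\nu}d\mu_{t_i}=\int g^\epsilon_{\mu_{t_i}\rightarrow\nu}d\nu=\tfrac{1}{2}OT_2^\epsilon(\mu_{t_i},\nu)\le\tfrac{1}{2}(M_2(\mu)+M_2(\rho)+M_2(\nu))$; hypothesis (2), uniform integrability, follows by dominating $f^\epsilon_{\mu_{t_i}\rightarrow\nu}\mathbbm{1}_{f^\epsilon_{\mu_{t_i}\rightarrow\nu}>C}$ by $h(x)\mathbbm{1}_{h(x)>C}$ with $h(x):=\tfrac{1}{2}\int\|x-y\|^2d\nu(y)$, and applying $\int h\mathbbm{1}_{h>C}d\mu_{t_i}\le\int h\mathbbm{1}_{h>C}d\mu+\int h\mathbbm{1}_{h>C}d\rho\to 0$ uniformly in $i$ (with an analogous argument for $g^\epsilon_{\mu_{t_i}\rightarrow\nu}$); hypothesis (3) holds automatically because by Lemma \ref{lem:corollary2.4nutz}'s conclusion $\int\arctan(f^{\epsilon,L}_{\mu\rightarrow\nu})d\mu=L=\lim_i\int\arctan(f^\epsilon_{\mu_{t_i}\rightarrow\nu})d\mu_{t_i}$. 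The conclusion of Lemma \ref{lem:theorem2.1nutz} then gives $\int f^\epsilon_{\mu_{t_i}\rightarrow\nu}d\mu_{t_i}\rightarrow\int f^{\epsilon,L}_{\mu\rightarrow\nu}d\mu$, but the left-hand side equals $\tfrac{1}{2}OT_2^\epsilon(\mu_{t_i},\nu)\rightarrow\tfrac{1}{2}OT_2^\epsilon(\mu,\nu)$, so $\int f^{\epsilon,L}_{\mu\rightarrow\nu}d\mu=\tfrac{1}{2}OT_2^\epsilon(\mu,\nu)$. Since entropic potentials are unique up to an additive constant and this normalization pins down the constant, $f^{\epsilon,L}_{\mu\rightarrow\nu}=f^\epsilon_{\mu\rightarrow\nu}$; symmetrically $g^{\epsilon,L}_{\mu\rightarrow\nu}=g^\epsilon_{\mu\rightarrow\nu}$.

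Finally, to promote subsequential convergence to convergence as $t\rightarrow 0^+$, I argue by contradiction: if $f^\epsilon_{\mu_t\rightarrow\nu}$ did not converge in $\mu$-probability to $f^\epsilon_{\mu\rightarrow\nu}$, there would exist $\delta,\eta>0$ and a sequence $t_i\rightarrow 0^+$ with $\mu(\{|f^\epsilon_{\mu_{t_i}\rightarrow\nu}-f^\epsilon_{\mu\rightarrow\nu}|>\delta\})>\eta$ for every $i$; the construction above, applied to this very sequence, then produces a further subsequence converging in $\mu$-probability to $f^\epsilon_{\mu\rightarrow\nu}$, a contradiction. The same argument applies to $g$. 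The main technical obstacle is verifying the uniform-integrability hypothesis (2) of Lemma \ref{lem:theorem2.1nutz}, which requires uniform control on the positive-part tails of the potentials; this is where the interplay between the second-moment bounds from Lemma \ref{lem:4.9 from nutz} and the convex-combination structure $\mu_{t_i}\le\mu+\rho$ is essential.
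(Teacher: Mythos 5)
You have not addressed the statement you were asked to prove. Lemma~\ref{lem:theorem2.1nutz} is not proved in this paper at all: it is quoted verbatim from the external reference \citep{nutz2023stability} (Theorem 2.1(iii) there) and is used as a black box. There is therefore no in-paper proof to reproduce, and a self-contained derivation would have to replicate the stability analysis of entropic optimal transport carried out in that reference, which is not what your proposal does.

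What your proposal actually establishes is Proposition~\ref{prop:convergence_in_prob}: convergence in $\mu$-probability (resp.\ $\nu$-probability) of the normalized potentials $f^\epsilon_{\mu_t\rightarrow\nu}$ (resp.\ $g^\epsilon_{\mu_t\rightarrow\nu}$) to $f^\epsilon_{\mu\rightarrow\nu}$ (resp.\ $g^\epsilon_{\mu\rightarrow\nu}$) as $t\rightarrow 0^{+}$, upgraded from a subsequence to the full family by a metrizability-plus-contradiction argument. That is not the conclusion of Lemma~\ref{lem:theorem2.1nutz}, which asserts convergence of the scalar quantities $OT^\epsilon_2(\mu_{t_i},\nu)$, $\int f_{t_i}\,d\mu_{t_i}$, and $\int g_{t_i}\,d\nu$. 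Worse, viewed as an argument \emph{for} Lemma~\ref{lem:theorem2.1nutz}, your proposal is circular: you explicitly invoke the lemma mid-proof (``The conclusion of Lemma~\ref{lem:theorem2.1nutz} then gives $\int f^\epsilon_{\mu_{t_i}\rightarrow\nu}d\mu_{t_i}\rightarrow\int f^{\epsilon,L}_{\mu\rightarrow\nu}d\mu$'') to identify the limiting potential. A proof of a statement cannot cite that statement.

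As a proof of Proposition~\ref{prop:convergence_in_prob} your argument is correct and closely follows the paper's: verify the hypotheses of Lemma~\ref{lem:corollary2.4nutz} along a subsequence to obtain $(f^{\epsilon,L}_{\mu\rightarrow\nu},g^{\epsilon,L}_{\mu\rightarrow\nu})$ in probability, feed that same subsequence into Lemma~\ref{lem:theorem2.1nutz} together with the normalization $\int f_{t_i}\,d\mu_{t_i}=\int g_{t_i}\,d\nu=\frac{1}{2}OT^\epsilon_2(\mu_{t_i},\nu)$ to identify $\int f^{\epsilon,L}_{\mu\rightarrow\nu}\,d\mu=\frac{1}{2}OT^\epsilon_2(\mu,\nu)$ (pinning down the additive constant), and conclude by a subsequence-of-subsequences contradiction. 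Your verification of the uniform-integrability hypothesis via the pointwise bound $f^\epsilon_{\mu_{t_i}\rightarrow\nu}\leq\frac{1}{2}\int\|x-y\|^2\,d\nu(y)$ and the domination $\int h\,d\mu_{t_i}\leq\int h\,d\mu+\int h\,d\rho$ for nonnegative $h$ is exactly what the paper isolates as Lemma~\ref{lem:uniform_integrable}. So the work is sound but misdirected: you have reproved a proposition that depends on the lemma, not the lemma itself.
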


To verify item 2 in Lemma \ref{lem:theorem2.1nutz}, we will require the following result:

\begin{lemma}\label{lem:uniform_integrable}
Let $\mu,\rho,\nu\in \mathcal{P}_2(\mathbb{R}^d)$. We let $\chi=\rho-\mu$, and define a sequence of probability measures $\mu_{t_i}:=\mu+t_i\chi$ where $\{t_i\}_{i=1}^\infty=:T\subset [0,1]$ is a sequence of nonnegative real numbers such that $t_i\rightarrow 0^{+}$ as $i\rightarrow \infty$. Then the sequence of potentials $\{(f^\epsilon_{\mu_{t_i}\rightarrow \nu},g^\epsilon_{\mu_{t_i}\rightarrow \nu})\}_{i=1}^\infty$ satisfies:

\begin{equation}\label{eqn:uniform_integ_f}
\lim_{C\rightarrow \infty}\sup_{i \in \mathbb{N}}\int f_{\mu_{t_i}\rightarrow \nu}^\epsilon (x)\mathbbm{1}_{f^\epsilon_{\mu_{t_i}\rightarrow \nu}>C}(x)d\mu_{t_i}(x)= 0,\end{equation}
\begin{equation}\label{eqn:uniform_integ_g} \lim_{C\rightarrow \infty}\sup_{i\in \mathbb{N}}\int g_{\mu_{t_i}\rightarrow \nu}^\epsilon(y) \mathbbm{1}_{g^\epsilon_{\mu_{t_i}\rightarrow \nu}>C}(y)d\nu(y) =0.
\end{equation}
\end{lemma}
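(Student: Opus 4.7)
The plan is to dominate the positive parts of the two sequences of potentials by $i$-independent envelope functions that are integrable against the relevant measures, and then conclude uniform integrability via the Dominated Convergence Theorem. Since Lemma \ref{lem:uniform_integrable} is invoked inside the proof of Proposition \ref{prop:convergence_in_prob}, we inherit the normalization $\int f^\epsilon_{\mu_{t_i}\rightarrow\nu}\,d\mu_{t_i}=\int g^\epsilon_{\mu_{t_i}\rightarrow\nu}\,d\nu=\tfrac{1}{2}OT_2^\epsilon(\mu_{t_i},\nu)$, which is nonnegative since $OT_2^\epsilon\geq 0$. Under this normalization, Lemma \ref{lem:4.9 from nutz} supplies the pointwise upper bounds
\begin{align*}
f^\epsilon_{\mu_{t_i}\rightarrow\nu}(x)&\leq h(x):=\tfrac{1}{2}\int\|x-y\|^2\,d\nu(y),\\
g^\epsilon_{\mu_{t_i}\rightarrow\nu}(y)&\leq \tfrac{1}{2}\int\|x-y\|^2\,d\mu_{t_i}(x)\leq k(y):=\tfrac{1}{2}\int\|x-y\|^2\,d\mu(x)+\tfrac{1}{2}\int\|x-y\|^2\,d\rho(x),
\end{align*}
where the second inequality on the $g$-side uses $(1-t_i),t_i\in[0,1]$. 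Crucially, neither $h$ nor $k$ depends on $i$.

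To establish \eqref{eqn:uniform_integ_f}, I would observe that the pointwise bound $f^\epsilon_{\mu_{t_i}\rightarrow\nu}\leq h$ forces $\{f^\epsilon_{\mu_{t_i}\rightarrow\nu}>C\}\subseteq\{h>C\}$ and hence $f^\epsilon_{\mu_{t_i}\rightarrow\nu}\,\mathbbm{1}_{f^\epsilon_{\mu_{t_i}\rightarrow\nu}>C}\leq h\,\mathbbm{1}_{h>C}$ pointwise for any $C>0$. Expanding $\mu_{t_i}=(1-t_i)\mu+t_i\rho$ and bounding both coefficients by $1$ yields
$$\int f^\epsilon_{\mu_{t_i}\rightarrow\nu}\,\mathbbm{1}_{f^\epsilon_{\mu_{t_i}\rightarrow\nu}>C}\,d\mu_{t_i}\leq \int h\,\mathbbm{1}_{h>C}\,d\mu+\int h\,\mathbbm{1}_{h>C}\,d\rho.$$
Young's inequality gives $h(x)\leq\|x\|^2+M_2(\nu)$, which lies in $L^1(\mu)\cap L^1(\rho)$ because $\mu,\rho\in\mathcal{P}_2(\mathbb{R}^d)$, so the Dominated Convergence Theorem forces each term on the right to $0$ as $C\to\infty$. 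The bound is manifestly $i$-free, so the supremum over $i$ vanishes as well.

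The proof of \eqref{eqn:uniform_integ_g} is entirely analogous: the pointwise bound $g^\epsilon_{\mu_{t_i}\rightarrow\nu}\leq k$ gives
$$\int g^\epsilon_{\mu_{t_i}\rightarrow\nu}\,\mathbbm{1}_{g^\epsilon_{\mu_{t_i}\rightarrow\nu}>C}\,d\nu\leq \int k\,\mathbbm{1}_{k>C}\,d\nu,$$
and a second application of Young yields $\int k\,d\nu\leq M_2(\mu)+M_2(\rho)+2M_2(\nu)<\infty$, so $k\in L^1(\nu)$ and the DCT delivers the desired vanishing, uniformly in $i$ since $k$ does not depend on $i$.

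I anticipate no serious obstacle: the argument is essentially a clean $L^1$-domination. The one step requiring a moment of care is the $i$-uniformity of the envelopes — for $h$ it is automatic, while for $k$ it follows from replacing the convex combination $\mu_{t_i}$ by the majorizing sum $\mu+\rho$, which is valid precisely because the mixing coefficients lie in $[0,1]$.
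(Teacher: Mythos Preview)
Your proposal is correct and follows essentially the same route as the paper: both arguments invoke Lemma \ref{lem:4.9 from nutz} to dominate the potentials by $\tfrac{1}{2}\int\|x-y\|^2\,d\nu(y)$ (resp.\ $\tfrac{1}{2}\int\|x-y\|^2\,d\mu_{t_i}(x)$), exploit the convex decomposition $\mu_{t_i}=(1-t_i)\mu+t_i\rho$ to obtain $i$-free envelopes, and conclude via the tail integrability of $\|\cdot\|^2$ against $\mathcal{P}_2$ measures. Your version is slightly cleaner in that you work directly with the envelope $h$ rather than first passing to $\|x\|^2+M_2(\nu)$, but the underlying mechanism is identical.
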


\begin{proof}
Without loss of generality, let $\epsilon=1$.  We bound:
\begin{align*}
& \int f_{\mu_{t_i}\rightarrow \nu}^\epsilon (x)\mathbbm{1}_{f^\epsilon_{\mu_{t_i}\rightarrow \nu}>C}(x)d\mu_{t_i}(x) \\
\leq &\int\left(\int \frac{1}{2}\|x-y\|^2d\nu(y)\right)\mathbbm{1}_{f^\epsilon_{\mu_{t_i}\rightarrow \nu}>C}(x)d\mu_{t_i}(x)\\\leq &\int\left(\int \|x\|^2+\|y\|^2d\nu(y)\right)\mathbbm{1}_{f^\epsilon_{\mu_{t_i}\rightarrow \nu}>C}(x)d\mu_{t_i}(x)\\=&  \int(\|x\|^2 + M_2(\nu) )\mathbbm{1}_{f^\epsilon_{\mu_{t_i}\rightarrow \nu}>C}(x)d\mu_{t_i}(x)\end{align*}
where we applied Lemma \ref{lem:4.9 from nutz}, the triangle inequality, and Young's inequality.  We also note that:
\begin{align*}
\mathbbm{1}_{f^\epsilon_{\mu_{t_i}\rightarrow \nu}> C}(x) &  \leq \mathbbm{1}_{\frac{1}{2}\int \|x-y\|^2 d\nu(y)>C}(x) & \\ & \leq \mathbbm{1}_{\|.\|^2+M_2(\nu)>C}(x) & \\ & =\mathbbm{1}_{\|.\|^2>C-M_2(\nu)}(x)
\end{align*}
where we applied Lemma \ref{lem:4.9 from nutz}, the triangle inequality, and Young's inequality. As $2\|x\|^2\geq\|x\|^2+M_2(\nu)$ for all $\|x\|^2\geq M_2(\nu)$ we have:
\begin{align}
& \lim_{C\rightarrow \infty}\sup_{i\in \mathbb{N}}\int \left(\|x\|^2+M_2(\nu)\right)\mathbbm{1}_{\|.\|^2>C-M_2(\nu)}(x)d\mu_{t_i}(x) \notag\\
\leq&\lim_{D\rightarrow \infty}\sup_{i \in \mathbb{N}}2\int \|x\|^2\mathbbm{1}_{\|.\|^2>D}(x)d\mu_{t_i}(x)\notag \\  =& 2\lim_{D\rightarrow \infty} \sup_{i \in \mathbb{N}}  \left((1-t_i)\int ||x||^2 \mathbbm{1}_{||.||^2>D}(x)d\mu(x)+t_i\int ||x||^2 \mathbbm{1}_{||.||^2>D}(x)d\rho(x)\right)\notag \\  \leq &2\lim_{D\rightarrow \infty}  \left(\int ||x||^2 \mathbbm{1}_{||.||^2>D}(x)d\mu(x)+\int ||x||^2 \mathbbm{1}_{||.||^2>D}(x)d\rho(x)\right) \label{eqn:c9_1}
\end{align}
Since $\mu \in\mathcal{P}_2(\mathbb{R}^d)$,
\begin{align*}
M_2(\mu)=\int ||x||^2 \mathbbm{1}_{||.||^2 > D}(x) d\mu + \int ||x||^2 \mathbbm{1}_{||.||^2 \leq  D}(x) d\mu<\infty,
\end{align*}
and by the monotone convergence theorem $\lim_{D\rightarrow \infty} \int ||x||^2 \mathbbm{1}_{||.||^2\leq D}(x)d\mu = M_2(\mu)$, and hence\newline $\limsup_{D\rightarrow \infty} \int ||x||^2\mathbbm{1}_{||.||^2>D}(x)d\mu=0$. Similarly, since $\rho \in \mathcal{P}_2(\mathbb{R}^d)$, $\lim_{D\rightarrow \infty} \int ||x||^2\mathbbm{1}_{||.||^2>D}(x)d\rho=0$, and (\ref{eqn:c9_1}) is zero.

The proof for $g^\epsilon_{\mu_{t_{i}}\rightarrow \nu}$ follows similarly. 
\end{proof}

\noindent\textbf{Proof of Proposition \ref{prop:convergence_in_prob}:} For convenience, we will assume that $\epsilon=1$. Let $T:=\{t_i\}_{i\in\mathbb{N}}\subset [0,1)$ be any sequence such that $t_i\rightarrow 0^+$ as $i\rightarrow \infty.$ We begin by verifying that the conditions in Lemma \ref{lem:corollary2.4nutz} hold along a subsequence $T'\subset T$.  Items 1 and 2 clearly hold for any subsequence of $T$ by definition of $\mu_{t_i}=(1-t_i)\mu+t_i\rho$. 

We may establish item 4 for any subsequence of $T$ via Lemma \ref{lem:4.9 from nutz} and Young's inequality: \begin{align*}\int \max\{f_{\mu_{t_i}\rightarrow \nu}^\epsilon ,0\}d\mu_{t_i}(x) &\leq \int\int \frac{1}{2}\|x-y\|^2d\nu(y) d\mu_{t_i}(x) \\ & \leq \int (\|x\|^2+M_2(\nu)) d\mu_{t_i}(x) \\ & = M_2(\mu_{t_i})+M_2(\nu) \\ & \leq M_2(\mu)+M_2(\rho)+M_2(\nu) \end{align*} and hence $\sup_{i}\int \max\{f_{\mu_{t_i}\rightarrow \nu}^{\epsilon}(x),0\}d\mu_{t_i}(x)<\infty$, and the same holds for $\sup_{i}\int \max\{g_{\mu_{t_i}\rightarrow \nu}(y),0\}d\nu(y)$.  This also verifies item 3, since $\int f^\epsilon_{\mu_{t_i}\rightarrow \nu}d\mu_{t_i}+\int g^\epsilon_{\mu_{t_i}\rightarrow \nu}d\nu=OT_2^\epsilon(\mu_{t_i},\nu)$.  

To see item 5, since $\arctan$ is bounded, $\int \arctan(f^\epsilon_{\mu_{t_i}\rightarrow \nu})d\mu_{t_i}$ is uniformly bounded, and hence there exists a convergent subsequence $T'$ with $\displaystyle\lim_{i\rightarrow\infty}\int\arctan(f^\epsilon_{\mu_{t_i'}\rightarrow \nu})d\mu_{t_i'}=L$ for some $L\in [-\pi/2,\pi/2]$. To verify item 6, the sequence $\mu_{t_i}$ clearly converges to $\mu$ in total variation\footnote{The \emph{total variation} distance between $\mu$ and $\nu$ is
$d_{TV}(\mu,\nu)=\sup_{A}|\mu(A)-\nu(A)|$ where the supremum is over all measurable sets $A$.}, and hence we may apply Lemma \ref{lem:lemma2.5nutz} to conclude that:
\begin{equation*}
\limsup_{C\rightarrow\infty}\limsup_{i\rightarrow \infty}\int \mathbbm{1}_{\frac{d\mu}{d\mu_{t_i}}(x)\geq C}d\mu(x) =0.
\end{equation*}
We have thus established that Lemma \ref{lem:corollary2.4nutz} holds for the subsequence indexed by $T'$, we conclude that $\{f^\epsilon_{\mu_{t_i'}\rightarrow \nu}\}_{t_i'\in T'}$ converges in $\mu$-probability to $f^{\epsilon,L}_{\mu\rightarrow\nu}$, and similarly $\{g^\epsilon_{\mu_{t_i'}\rightarrow \nu}\}_{t_i'\in T'}$ converges in $\nu$-probability to $g^{\epsilon,L}_{\mu\rightarrow\nu}.$

Next, we verify the conditions in Lemma \ref{lem:theorem2.1nutz} for the subsequence $(f_{\mu_{t_i'}\rightarrow \nu}^\epsilon,g_{\mu_{t_i'}\rightarrow \nu}^\epsilon)$.  Indeed, item 1 has already been established in the verification of the conditions of \ref{lem:corollary2.4nutz} above, and by definition of the subsequence, item 3 holds with limit potentials $(f^{\epsilon,L}_{\mu\rightarrow \nu},g^{\epsilon,L}_{\mu\rightarrow \nu}$). By Lemma \ref{lem:uniform_integrable}, this subsequence also satisfies (\ref{eqn:uniform_integ_f}) and (\ref{eqn:uniform_integ_g}), establishing item 2.  Hence we have verified Lemma \ref{lem:theorem2.1nutz} for our subsequence, and can conclude that $\int f^\epsilon_{\mu_{t_i'}\rightarrow \nu}d\mu_{t_i'}\rightarrow \int f^{\epsilon,L}_{\mu\rightarrow \nu} d\mu$ and $\int g^\epsilon_{\mu_{t_i'}\rightarrow \nu}d\nu\rightarrow\int g^{\epsilon,L}_{\mu\rightarrow \nu}d\nu.$ By assumption, we have that $\int f^\epsilon_{\mu_{t_i'}\rightarrow \nu}d\mu_{t_i'}=\int g^\epsilon_{\mu_{t_i'}\rightarrow \nu}d\nu=\frac{1}{2}OT_2^\epsilon(\mu_{t_i'},\nu)$, and again by Lemma \ref{lem:theorem2.1nutz} we have $\lim_{i\rightarrow \infty}OT_2^\epsilon(\mu_{t_i'},\nu)=OT_2^\epsilon(\mu,\nu)$. We conclude that \begin{equation}\label{eqn:fixednormalization} \int f^{\epsilon,L}_{\mu\rightarrow \nu}d\mu=\int g^{\epsilon,L}_{\mu\rightarrow\nu}d\nu=\frac{1}{2}OT_2^\epsilon(\mu,\nu).\end{equation}Hence we have identified $(f^{\epsilon,L}_{\mu\rightarrow\nu},g^{\epsilon,L}_{\mu\rightarrow \nu})=(f^\epsilon_{\mu\rightarrow \nu},g^\epsilon_{\mu\rightarrow \nu})$, the unique pair of potentials satisfying (\ref{eqn:fixednormalization}). 

Since our sequence $T$ was arbitrary, we can conclude as follows.  Let $T=\{t_i\}_{i=1}^\infty$ be a sequence with $t_i\rightarrow 0^{+}$ as $i\rightarrow \infty$, and suppose that $f_{\mu_{t_i}\rightarrow \nu}$ does not converge in $\mu$-probability to $f_{\mu\rightarrow \nu}^\epsilon$. Since convergence in probability is metrizable, there exists a metric $d$ and a $\kappa>0$ such that there exists a subsequence $S$ of $T$ with the property that $d(f_{\mu_{s_i}\rightarrow \nu}^\epsilon,f^\epsilon_{\mu\rightarrow \nu})>\kappa$ for all $s_i\in S$. The above proof also applies to $S$, and hence we may find a subsequence $S'\subset S$ so that $f_{\mu_{s_i'}\rightarrow \nu}$ converges to $f_{\mu\rightarrow \nu}^\epsilon$ in $\mu$-probability, and hence $d(f_{\mu_{s_i'}\rightarrow \nu}^\epsilon,f^\epsilon_{\mu\rightarrow \nu})\rightarrow 0$ as $i\rightarrow \infty$ which is a contradiction. Thus we may conclude that $f_{\mu_{t_i}\rightarrow \nu}^\epsilon$ converges to $f_{\mu\rightarrow \nu}^\epsilon$ in $\mu$-probability. 
The same argument allow us to conclude for $g_{\mu_{t_i}\rightarrow \nu}^\epsilon.$ 
\qed

\subsection{Proof of Proposition \ref{prop:sinkhorn_differentiable}}

For convenience, we set $\epsilon=1$.  Recall that we may write 
\begin{equation*}
f^\epsilon_{\mu\rightarrow \nu}(x)=-\log\left(\int \exp\left(-\frac{1}{2}\|x-y\|^2+g^\epsilon_{\mu\rightarrow \nu}(y)\right)d\nu(y)\right).
\end{equation*}
We define $h(x,y):= \exp\left(-\frac{1}{2}\|x-y\|^2+g^\epsilon_{\mu\rightarrow \nu}(y)\right)$, and will show that $\nabla_x\displaystyle\int h(x,y)d\nu(y)$ exists.  Define
\begin{equation*}
h_{t,i}(x,y):=\frac{1}{t}\left(\exp\left(-\frac{1}{2}\|x+te_i-y\|^2+g^\epsilon_{\mu\rightarrow \nu}(y)\right)-\exp\left(-\frac{1}{2}\|x-y\|^2+g^\epsilon_{\mu\rightarrow \nu}(y)\right)\right)
\end{equation*}
where $e_i$ is a standard basis vector in $\mathbb{R}^d$. We then define $h_t=(h_{t,1},...,h_{t,d})$, and observe that:
\begin{align*}
\lim_{t\rightarrow 0^+}h_t(x,y)& =\nabla_x h(x,y) & \\ & = -(x-y)\exp\left(-\frac{1}{2}\|x-y\|^2+g^\epsilon_{\mu\rightarrow \nu}(y)\right).
\end{align*}
At each $x$, we apply the mean value theorem to write $h_t(x,y)$ as $\|\nabla_x h(\theta_x^t,y)\|$ for some $\theta^t_x\in B_t(x):=\{y\in\mathbb{R}^{d} \ | \ \|x-y\|<t\}$. We upper bound: 
\begin{align}
|h_t(x,y)|& = \|\nabla_x h(\theta^t_x,y)\| \notag \\ 
&  = \left\|(\theta^t_x-y)\exp\left(-\frac{1}{2}\|\theta^t_x-y\|^2+g^\epsilon_{\mu\rightarrow \nu}(y)\right)\right\| \notag \\
& \leq \|\theta^t_x-y\| \; \left|\exp\left(-\frac{1}{2}\|\theta^t_x-y\|^2+\int \frac{1}{2}\|z-y\|^2d\mu(z)\right)\right| \label{eqn:nutzbound} \\ 
& = \|\theta^t_x-y\| \; \left|\exp\left(-\frac{1}{2}\|\theta^t_x\|^2+\frac{1}{2}M_2(\mu)+\langle \theta^t_x-\mathbb{E}(\mu),y\rangle\right)\right|\notag
\end{align}
where in (\ref{eqn:nutzbound}) we applied the bound on $g^\epsilon_{\mu\rightarrow \nu}(y)$ from Lemma \ref{lem:4.9 from nutz}.  Since $\theta^t_x\in B_t(x)$, we may assume WLOG that $\|\theta_x^t-x\|\leq 1$ for all small enough $t\leq 1$. Hence we may uniformly (in $y$ and $t$) bound:
\begin{align*}
|h_t(x,y)| & \leq \sup_{z\in \overline{B_1}(x)}\|z-y\| \; \exp\left(-\frac{1}{2}\|z\|^2+\frac{1}{2}M_2(\mu)+\langle z,y\rangle -\langle \mathbb{E}(\mu),y\rangle\right).
\end{align*}
 We may then bound:
\begin{align}
&\int |h_t(x,y)|d\nu(y)\notag\\ 
\leq &\int \sup_{z\in \overline{B_1}(x)}\|z-y\| \exp\left(-\frac{1}{2}\|z\|^2+\frac{1}{2}M_2(\mu)+\langle z-\mathbb{E}(\mu),y\rangle\right) d\nu(y)\label{eqn:differentiable 1} \\
\leq  &\left(\int \sup_{z\in \overline{B_1}(x)}\|z-y\|^2\exp\left(-\|z\|^2+M_2(\mu)\right) d\nu(y)\right)^{1/2} \left(\int \sup_{z\in \overline{B_1}(x)} \exp(2\langle z-\mathbb{E}(\mu),y\rangle)d\nu(y)\right)^{1/2}\label{eqn:differentiable 2}\\  < &\infty\notag,
\end{align}
where we applied the Cauchy-Schwarz inequality in (\ref{eqn:differentiable 1}) and Corollary \ref{cor:cor_of_lem1.5_rig} to conclude that (\ref{eqn:differentiable 2}) is finite. Thus for each $x\in \mathbb{R}^d$ and all $t\in [0,1)$, $h_t(x,y)$ is uniformly bounded by an integrable function of $y$ in a neighborhood of $x$, and hence we may apply the Dominated Convergence Theorem to conclude that $\displaystyle\lim_{t\rightarrow 0^+}\int h_t(x,y)d\nu(y)= \int \lim_{t\rightarrow 0^+}h_t(x,y)d\nu(y)$.
Equivalently, we have shown:\begin{align*}
\nabla_x\int \exp\left(-\frac{1}{2}\|x-y\|^2+g^\epsilon_{\mu\rightarrow \nu}(y)\right)d\nu(y) & =\int \nabla_x \exp\left(-\frac{1}{2}\|x-y\|^2+g^\epsilon_{\mu\rightarrow \nu}(y)\right)d\nu(y) & \\ & = \int (y-x)\exp\left(-\frac{1}{2}\|x-y\|^2+g^\epsilon_{\mu\rightarrow \nu}(y)\right) d\nu(y)
\end{align*} for all $x\in \mathbb{R}^d$. Since $f^\epsilon_{\mu\rightarrow \nu}(x)=-\displaystyle\log \left(\int h(x,y)d\nu(y)\right)$,  $f^\epsilon_{\mu\rightarrow \nu}(x)$ is a composition of two differentiable functions and is hence differentiable by the chain rule. One then easily verifies (\ref{eqn:gradformula}).  The proof that the higher derivatives of $f^\epsilon_{\mu\rightarrow\nu}$ exist follows similarly. 
 
\qed 

\subsection{Proof of Corollary \ref{cor:first_var for_functionals}}
\begin{proof}
The result immediately follows from Theorem \ref{thm:entot_differentiable} followed by Proposition \ref{prop:sinkhorn_differentiable} and linearity of derivatives.
\end{proof}

\subsection{Proof of Corollary \ref{cor:optimality}}

Let $\mu^\epsilon$ be a critical point for $F^\epsilon_{\lambda,\mathcal{V}}$ with $\texttt{supp}(\mu^\epsilon)=\mathbb{R}^d$. By Theorem \ref{thm:entot_differentiable}, $\delta F^\epsilon_{\lambda,\mathcal{V}}$ is smooth, and hence $\nabla\delta F^\epsilon_{\lambda,\mathcal{V}}(x)=0$ for $\mu^\epsilon$-almost every $x$ implies that $\nabla\delta F^\epsilon_{\lambda,\mathcal{V}}(x)=0$ for all $x\in\mathbb{R}^d$. Since $\mathbb{R}^d$ is connected, this implies that there exists a constant $c$ such that $\delta F^\epsilon_{\lambda,\mathcal{V}}(x)=c$ for all $x\in \mathbb{R}^d$. Thus $\langle \delta F^\epsilon_{\lambda,\mathcal{V}}(\mu^\epsilon),\mu-\mu^\epsilon\rangle=0$ for any $\mu\in \mathcal{P}_2(\mathbb{R}^d)$ satisfying the optimality criteria in Proposition \ref{prop:optimality}. The same proof applies to $S^\epsilon_{\lambda,\mathcal{V}}$. \qed 

\subsection{Proof of Lemma \ref{lemma:diracfixed}}

For each $1\leq j\leq m$ there is a unique coupling between $\delta_{x_\mathcal{V}}$ and $\nu_j$, given by $\delta_{x_{\mathcal{V}}}\otimes \nu_j$. We then compute the entropic map $T^\epsilon_{\delta_{x_\mathcal{V}}\rightarrow\nu_j}(x_\mathcal{V})=\mathbb{E}_{(X,Y)\sim \delta_{x_\mathcal{V}}\otimes\nu_j}(Y|X=x_\mathcal{V})=\mathbb{E}(\nu_{j})$. This holds for each $j$, and hence we may compute $\sum_{j=1}^m\lambda_j T^\epsilon_{\delta_{x_\mathcal{V}}\rightarrow\nu_j}(x_\mathcal{V})=x_\mathcal{V}$. 

Hence $\delta_{x_\mathcal{V}}$ satisfies the fixed point condition in Corollary \ref{cor:first_var for_functionals}, and thus is a critical point.
\qed

\subsection{Proof of Proposition \ref{prop:subgauss}}
The proof is essentially an immediate consequence of Proposition \ref{prop:domination} and Lemma \ref{lemma:subgauss dom} stated below.  In order to state the results, we first recall the following definition of convex ordering and a useful lemma.

\begin{definition}
Two probability measures $\mu,\nu\in\mathcal{P}_2(\mathbb{R}^d)$ are in \emph{convex order}, denoted $\mu\leq_c \nu$, if $\displaystyle\int \phi d\mu \leq \displaystyle\int \phi d\nu$ for all convex $\phi:\mathbb{R}^d\rightarrow \mathbb{R}$.
\end{definition}

\begin{lemma}\label{lemma:subgauss dom}
Let $\nu$ be $\sigma$-subgaussian and $\mu\leq_c \nu$. Then $\mu$ is $\sigma$-subgaussian. 
\end{lemma}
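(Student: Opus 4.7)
The plan is to invoke characterization 1 of Definition \ref{def:subgauss} and exploit the fact that the Laplace-transform-type functions appearing there are convex, so that the convex order immediately transfers the subgaussian bound from $\nu$ to $\mu$.

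First, I would fix a direction $v\in S^{d-1}$ and consider the test function $\phi_v(x):=\exp\bigl(|\langle x,v\rangle|^2/\sigma^2\bigr)=\exp\bigl(\langle x,v\rangle^2/\sigma^2\bigr)$. The one-dimensional map $t\mapsto e^{t^2/\sigma^2}$ is convex on $\mathbb{R}$ (a direct second-derivative check), and $x\mapsto \langle x,v\rangle$ is affine, so $\phi_v$ is a composition of a convex function with an affine map and is therefore convex on $\mathbb{R}^d$.

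Next, since $\mu\leq_c\nu$ and $\phi_v$ is convex, the definition of convex order yields
\begin{equation*}
\int \phi_v\, d\mu \;\le\; \int \phi_v\, d\nu.
\end{equation*}
Because $\nu\in\mathcal{G}_\sigma(\mathbb{R}^d)$, characterization 1 in Definition \ref{def:subgauss} gives $\int \phi_v\, d\nu\le 2$ uniformly in $v\in S^{d-1}$. Taking the supremum over $v$ of the inequality above then produces $\sup_{v\in S^{d-1}}\int \phi_v\, d\mu\le 2$, which is precisely the statement that $\mu\in\mathcal{G}_\sigma(\mathbb{R}^d)$.

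There is essentially no technical obstacle here; the entire argument rests on noticing that $x\mapsto e^{\langle x,v\rangle^2/\sigma^2}$ is convex, after which the conclusion is immediate from the definition of $\le_c$. The only mild subtlety is justifying the use of $\phi_v$ as a test function in the convex order, but this is standard since $\phi_v\ge 0$ is continuous and $\int \phi_v\, d\nu<\infty$ by the subgaussianity of $\nu$, so the integrals on both sides are well-defined in $[0,\infty]$.
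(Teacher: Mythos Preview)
Your proof is correct and is essentially identical to the paper's own argument: the paper also fixes $v\in S^{d-1}$, observes that $f_v(x)=\exp(|\langle x,v\rangle|^2/\sigma^2)$ is convex, applies the convex order to get $\mathbb{E}_{\mu}[f_v]\le\mathbb{E}_{\nu}[f_v]\le 2$, and concludes via characterization~1 of Definition~\ref{def:subgauss}.
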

\begin{proof}Let $X\sim \mu$ and $Y\sim\nu$, and let $v\in S^{d-1}$. Then we define the function $f_v(x):=\exp\left(\frac{|\langle x,v\rangle|^2}{\sigma^2}\right)$, which is convex in $x$. Since $\mu\leq_c \nu$, we have $\mathbb{E}_{X\sim \mu}(f_v(X))\leq \mathbb{E}_{Y\sim\nu}(f_v(Y))$ for any $v$, and since $\nu$ is subgaussian, we have that $\sup_{v\in S^{d-1}}\mathbb{E}_{Y\sim\nu}(f_v(Y))\leq 2$. Hence we may conclude that $\sup_{v\in S^{d-1}}\mathbb{E}_{X\sim\mu}(f_v(X))\leq 2$, so that $\mu$ is $\sigma$-subgaussian.
\end{proof}
We note the following result from \citep{yang2024estimating}.
\begin{lemma}\label{lemma:10.5yang}
(Lemma 9.2 in \citep{yang2024estimating}) Let $\mu,\nu\in\mathcal{P}_2(\mathbb{R}^d)$. Then $(T_{\mu\rightarrow \nu}^\epsilon)_{\#}(\mu)\leq_c \nu$.
\end{lemma}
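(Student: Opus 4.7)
\textbf{Proof proposal for Proposition \ref{prop:subgauss}.} My plan is to exploit the fixed-point characterization of critical points in Corollary \ref{cor:first_var for_functionals} together with the convex-order domination in Lemma \ref{lemma:10.5yang} to show that $\mu^\epsilon$ is convex-order dominated by the mixture $\bar{\nu}:=\sum_{j=1}^m\lambda_j\nu_j$, and then deduce subgaussianity from Lemma \ref{lemma:subgauss dom}.

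\emph{Step 1 (Fixed-point identity).} By Corollary \ref{cor:first_var for_functionals}, a critical point satisfies
\[
x \;=\; \sum_{j=1}^m \lambda_j\, T^{\epsilon}_{\mu^\epsilon\to \nu_j}(x) \qquad\text{for $\mu^\epsilon$-a.e. } x.
\]
\emph{Step 2 (Convex-order domination).} For an arbitrary convex $\phi:\mathbb{R}^d\to\mathbb{R}$, applying Jensen's inequality to the convex combination above yields
\[
\phi(x) \;\le\; \sum_{j=1}^m \lambda_j\,\phi\!\bigl(T^{\epsilon}_{\mu^\epsilon\to \nu_j}(x)\bigr) \qquad\text{for $\mu^\epsilon$-a.e. } x.
\]
Integrating against $\mu^\epsilon$ and using a change of variables,
\[
\int \phi\,d\mu^\epsilon \;\le\; \sum_{j=1}^m \lambda_j \int \phi\,d\bigl(T^{\epsilon}_{\mu^\epsilon\to \nu_j}\bigr)_{\#}(\mu^\epsilon) \;\le\; \sum_{j=1}^m \lambda_j \int\phi\,d\nu_j \;=\; \int\phi\,d\bar{\nu},
\]
where the second inequality is Lemma \ref{lemma:10.5yang}. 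Since $\phi$ was an arbitrary convex function, this establishes $\mu^\epsilon\le_c \bar{\nu}$.

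\emph{Step 3 (Subgaussianity of the mixture).} For any $v\in S^{d-1}$, linearity of expectation gives
\[
\mathbb{E}_{X\sim\bar{\nu}}\!\left[\exp\!\left(\tfrac{|\langle X,v\rangle|^2}{\sigma^2}\right)\right] \;=\; \sum_{j=1}^m \lambda_j\,\mathbb{E}_{X\sim\nu_j}\!\left[\exp\!\left(\tfrac{|\langle X,v\rangle|^2}{\sigma^2}\right)\right] \;\le\; 2,
\]
since each $\nu_j\in \mathcal{G}_\sigma(\mathbb{R}^d)$. Thus $\bar{\nu}\in \mathcal{G}_\sigma(\mathbb{R}^d)$. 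Combining Step 2 with Lemma \ref{lemma:subgauss dom} yields $\mu^\epsilon\in \mathcal{G}_\sigma(\mathbb{R}^d)$, which is the main claim.

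\emph{Step 4 (Minimizer case).} For the ``in particular'' statement, observe that $F^\epsilon_{\lambda,\mathcal{V}}$ is convex on $\mathcal{P}_2(\mathbb{R}^d)$ (noted after Definition \ref{def:barycenter functionals}) and differentiable there by Theorem \ref{thm:entot_differentiable}, with derivative $\sum_j\lambda_j f^\epsilon_{\mu\to\nu_j}$ itself smooth by Proposition \ref{prop:sinkhorn_differentiable}. Corollary \ref{cor:mins_are_fixed} then forces any minimizer $\mu^*$ to be a critical point, and the previous steps apply.

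The main obstacle, in my view, is not the measure-theoretic manipulation but justifying the crucial step $(T^{\epsilon}_{\mu^\epsilon\to\nu_j})_{\#}(\mu^\epsilon)\le_c \nu_j$; fortunately this is precisely Lemma \ref{lemma:10.5yang} (whose proof rests on the fact that the entropic map is a conditional expectation under the optimal entropic plan, so Jensen's inequality upgrades to convex domination of the pushforward). Everything else is a clean Jensen-plus-mixture argument that never leaves the convex-order framework.
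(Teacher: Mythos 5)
You have addressed the wrong statement. The target is Lemma \ref{lemma:10.5yang} itself, namely $(T^\epsilon_{\mu\rightarrow\nu})_{\#}(\mu)\leq_c \nu$, but your proposal is headed as, and in substance is, a proof of Proposition \ref{prop:subgauss}. Lemma \ref{lemma:10.5yang} appears in your argument only as an invoked black box in Step~2, accompanied by a single parenthetical sketch at the end: that the entropic map is a conditional expectation under the optimal entropic plan, so Jensen's inequality upgrades to convex-order domination of the pushforward. That sketch identifies the correct mechanism, but it is not a proof. (For completeness: the paper itself gives no proof either, citing Lemma~9.2 of \citep{yang2024estimating} directly, so there is no internal proof against which to compare.)

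To turn your sketch into an actual argument for the target lemma you would need roughly the following three steps. First, from \eqref{eqn:gradformula} and the fact that the optimal entropic coupling $\zeta^*\in\Pi(\mu,\nu)$ has $\frac{d\zeta^*}{d(\mu\otimes\nu)}(x,y)=\exp\bigl((f^\epsilon_{\mu\to\nu}(x)+g^\epsilon_{\mu\to\nu}(y)-\tfrac12\|x-y\|^2)/\epsilon\bigr)$, deduce that for $\mu$-a.e.\ $x$ the conditional law of $Y$ given $X=x$ under $\zeta^*$ has density (w.r.t.\ $\nu$) proportional to $\exp\bigl((g^\epsilon_{\mu\to\nu}(y)-\tfrac12\|x-y\|^2)/\epsilon\bigr)$, whence $T^\epsilon_{\mu\to\nu}(x)=\mathbb{E}_{\zeta^*}[Y\mid X=x]$. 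Second, for convex $\phi:\mathbb{R}^d\to\mathbb{R}$ with $\int\phi\,d\nu<\infty$, conditional Jensen gives $\phi\bigl(T^\epsilon_{\mu\to\nu}(x)\bigr)\le \mathbb{E}_{\zeta^*}[\phi(Y)\mid X=x]$ for $\mu$-a.e.\ $x$. Third, integrate in $\mu$ and apply the tower property together with the marginal constraint (the second marginal of $\zeta^*$ is $\nu$) to obtain $\int\phi\,d(T^\epsilon_{\mu\to\nu})_{\#}\mu\le\int\phi\,d\nu$; when $\int\phi\,d\nu=+\infty$ the inequality is vacuous, so $(T^\epsilon_{\mu\to\nu})_{\#}\mu\leq_c\nu$ follows. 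What you actually wrote out, Steps~1--4 on Proposition \ref{prop:subgauss}, is correct and coincides with the paper's own route via Proposition \ref{prop:domination} and Lemma \ref{lemma:subgauss dom}, but it does not discharge the statement that was asked.
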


We apply the above lemma to obtain the following result for critical points of $F^\epsilon_{\lambda,\mathcal{V}}$:

\begin{proposition}\label{prop:domination}
Suppose $\{\nu_1,...,\nu_m\}=\mathcal{V}\subset \mathcal{G}_\sigma(\mathbb{R}^d),$ and let $\mu^\epsilon$ be a critical point for $F^\epsilon_{\lambda,\mathcal{V}}$. Then $\mu^\epsilon \leq_c \sum_{j=1}^m\lambda_j\nu_j$.
\end{proposition}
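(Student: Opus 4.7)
The plan is to combine the first-order critical point characterization from Corollary \ref{cor:first_var for_functionals} with the push-forward domination result in Lemma \ref{lemma:10.5yang}, and then close the argument using Jensen's inequality.

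First, I would invoke Corollary \ref{cor:first_var for_functionals}, which gives the identity
\[
x \;=\; \sum_{j=1}^{m} \lambda_j\, T^\epsilon_{\mu^\epsilon \to \nu_j}(x)\qquad \text{for } \mu^\epsilon\text{-a.e. } x.
\]
Fix an arbitrary convex function $\phi:\mathbb{R}^d\to\mathbb{R}$. Substituting the identity into $\int\phi\,d\mu^\epsilon$ and then applying Jensen's inequality pointwise $\mu^\epsilon$-a.e. yields
\[
\int \phi(x)\, d\mu^\epsilon(x) \;=\; \int \phi\!\left(\sum_{j=1}^{m}\lambda_j T^\epsilon_{\mu^\epsilon\to\nu_j}(x)\right) d\mu^\epsilon(x) \;\leq\; \sum_{j=1}^{m}\lambda_j \int \phi\!\left(T^\epsilon_{\mu^\epsilon\to\nu_j}(x)\right) d\mu^\epsilon(x).
\]
By the change-of-variables formula, each term on the right equals $\lambda_j \int \phi\, d\bigl((T^\epsilon_{\mu^\epsilon\to\nu_j})_{\#}\mu^\epsilon\bigr)$.

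Next, I would apply Lemma \ref{lemma:10.5yang} to each entropic push-forward $(T^\epsilon_{\mu^\epsilon\to\nu_j})_\#\mu^\epsilon \leq_c \nu_j$, which by the definition of convex order gives $\int \phi\,d\bigl((T^\epsilon_{\mu^\epsilon\to\nu_j})_\#\mu^\epsilon\bigr) \leq \int \phi\,d\nu_j$ for every $j$. Chaining the inequalities gives
\[
\int \phi \, d\mu^\epsilon \;\leq\; \sum_{j=1}^{m} \lambda_j \int \phi \, d\nu_j \;=\; \int \phi \, d\!\left(\sum_{j=1}^{m}\lambda_j\nu_j\right),
\]
and since $\phi$ was an arbitrary convex function, this is exactly the statement $\mu^\epsilon \leq_c \sum_{j=1}^{m}\lambda_j\nu_j$.

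There is no serious obstacle in the argument; the only subtle points are (i) that the critical point equation holds $\mu^\epsilon$-a.e., which is exactly the set on which we are integrating, so Jensen applies with no measurability issue, and (ii) that $T^\epsilon_{\mu^\epsilon\to\nu_j}$ is a well-defined $\mu^\epsilon$-measurable map (guaranteed by Proposition \ref{prop:sinkhorn_differentiable} since each $\nu_j$ is subgaussian), which legitimizes the change-of-variables step. The subgaussian hypothesis on $\mathcal{V}$ enters only through these regularity prerequisites; the convex ordering conclusion itself uses nothing more than the critical point equation, Jensen, and Lemma \ref{lemma:10.5yang}. Finally, combining Proposition \ref{prop:domination} with Lemma \ref{lemma:subgauss dom} (together with the elementary observation that a convex combination of $\sigma$-subgaussian measures is again $\sigma$-subgaussian, via linearity of expectation inside $\sup_v \mathbb{E}[e^{|\langle X,v\rangle|^2/\sigma^2}]$) yields Proposition \ref{prop:subgauss}.
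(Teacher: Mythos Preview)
Your proof is correct and follows essentially the same approach as the paper: invoke the critical point identity from Corollary \ref{cor:first_var for_functionals}, apply Jensen's inequality to the convex combination of entropic maps, use the change-of-variables formula, and then apply Lemma \ref{lemma:10.5yang} to each push-forward. The paper phrases the first step as the push-forward identity $(\sum_{j=1}^m\lambda_j T^\epsilon_{\mu^\epsilon\to\nu_j})_\#\mu^\epsilon=\mu^\epsilon$ rather than the pointwise $\mu^\epsilon$-a.e.\ equation you use, but these are equivalent, and your concluding remark on deriving Proposition \ref{prop:subgauss} via Lemma \ref{lemma:subgauss dom} also matches the paper.
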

\begin{proof}
Let $\mu^\epsilon$ be a critical point of $F_{\lambda,\mathcal{V}}^\epsilon.$ By Lemma \ref{lemma:10.5yang}, $\tilde{\mu}_{j}:=(T^\epsilon_{\mu^\epsilon\rightarrow\nu_j})_{\#}(\mu^\epsilon)\leq_c\nu_j$ for all $1\leq j\leq m$. As $\mu^\epsilon$ is a critical point, we have: $(\sum_{j=1}^m\lambda_j T^\epsilon_{\mu^\epsilon\rightarrow\nu_j})_{\#}(\mu^\epsilon)=\mu^\epsilon$ by Corollary \ref{lem:SufficientFixed}. Therefore, for any convex function $\phi(x)$:
\begin{align}
\int \phi(x) d\mu^\epsilon(x) & =\int \phi(x) d\left(\sum_{j=1}^m \lambda_j T^\epsilon_{\mu^\epsilon\rightarrow\nu_j}\right)_{\#}(\mu^\epsilon)(x) \notag\\ 
& =\int \phi\left(\sum_{j=1}^m \lambda_j T^\epsilon_{\mu^\epsilon\rightarrow\nu_j}(x)\right) d\mu^\epsilon(x) \notag\\ 
& \leq \int \sum_{j=1}^m\lambda_j\phi(T_{\mu^\epsilon\rightarrow \nu_j}^\epsilon(x))d\mu^\epsilon(x) \label{eqn:Jensens} \\ 
& = \sum_{j=1}^m\lambda_j\int \phi(x) d(T_{\mu^\epsilon\rightarrow \nu_j}^\epsilon)_{\#}(\mu^\epsilon)(x) \notag
\\ & = \sum_{j=1}^m\lambda_j\int \phi(x) d\tilde{\mu}_j(x) \notag & \\ & \leq \sum_{j=1}^m \lambda_j \int \phi(x) d\nu_j(x)\label{eqn:cvx order} &\\
& = \int \phi(x)d\left(\sum_{j=1}^m \lambda_j \nu_j\right)(x), \notag
\end{align}
where we applied Jensen's inequality in (\ref{eqn:Jensens}) and Lemma \ref{lemma:10.5yang} to assert that $\tilde{\mu}_j\leq_c\nu_j$ in (\ref{eqn:cvx order}). Thus $\mu^\epsilon\leq_c \sum_{j=1}^m\lambda_j\nu_j$.

\end{proof}

We conclude the proof of Proposition \ref{prop:subgauss} by observing that\begin{equation*} \left\|\sum_{j=1}^m\lambda_j \nu_j\right\|_\mathcal{G}\leq \max_{1\leq j\leq m}\|\nu_j\|_\mathcal{G}\leq \sigma.\end{equation*}\qed

\section{PROOF OF THEOREM \ref{thm:synthsampcomplex} }\label{subsec:proofofapprox_min}

The proof of Theorem 3.1 rests on the following propositions.

\begin{proposition}\label{prop:approx_min}
Let $\mathcal{V}=\{\nu_1,...,\nu_m\}\subset\mathcal{G}_{\sigma}(\mathbb{R}^d)$ and let $\hat{\mathcal{V}}^n=\{\hat{\nu}^n_1,...,\hat{\nu}^n_m\}.$ Then:
\begin{align}
&\mathbb{E}\left(\bigg|\min_{\mu\in\mathcal{P}_2(\mathbb{R}^d)}F_{\lambda,\mathcal{V}}^\epsilon(\mu)-\min_{\mu\in\mathcal{P}_2(\mathbb{R}^d)}F_{\lambda,\hat{\mathcal{V}}^n}^\epsilon(\mu)\bigg|\right) \leq \frac{m^{1/2}R^*_{d,\epsilon,\sigma}}{\sqrt{n}},\label{eqn:synthsampF_exp}
\\& \bigg|\min_{\mu\in\mathcal{P}_2(\mathbb{R}^d)}F_{\lambda,\mathcal{V}}^\epsilon(\mu)-\min_{\mu\in\mathcal{P}^n(\mathbb{R}^d)}F_{\lambda,\mathcal{V}}^\epsilon(\mu)\bigg| \leq \frac{R^*_{d,\epsilon,\sigma}}{\sqrt{n}}\label{eqn:synthsampF_min},
\\ & \mathbb{E}\left(\bigg|\min_{\mu\in\mathcal{P}_2(\mathbb{R}^d)}F_{\lambda,\mathcal{V}}^\epsilon(\mu)-\min_{\mu\in\mathcal{P}^{n_1}(\mathbb{R}^d)}F_{\lambda,\hat{\mathcal{V}}^{n_2}}^\epsilon(\mu)\bigg|\right) \leq 
 mR^*_{d,\epsilon,\sigma}\left(\frac{1}{\sqrt{n_1}}+\frac{1}{\sqrt{n_2}}\right),\label{eqn:synthsampF_double}
\end{align}
where $R^*_{d,\epsilon,\sigma}$ is a constant depending only on $d,\epsilon$ and $\sigma$. 
\end{proposition}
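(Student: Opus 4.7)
The plan is to derive all three bounds from the dimension-free sample complexity of entropic optimal transport between subgaussian measures, combined with Proposition \ref{prop:subgauss}, which guarantees that any minimizer of $F^\epsilon_{\lambda,\mathcal{V}}$ lies in $\mathcal{G}_\sigma(\mathbb{R}^d)$ when $\mathcal{V}\subset \mathcal{G}_\sigma(\mathbb{R}^d)$. The external input will be a one-sample bound of Mena--Weed type \citep{mena2019,stromme2023sampling}: for any $\mu,\nu\in \mathcal{G}_\sigma(\mathbb{R}^d)$,
\begin{equation*}
\mathbb{E}\left|OT_2^\epsilon(\mu,\hat{\nu}^n)-OT_2^\epsilon(\mu,\nu)\right|\leq \tilde{C}(d,\epsilon,\sigma)/\sqrt{n},
\end{equation*}
where the constant depends on $\mu$ only through its subgaussian norm.

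For the deterministic bound (\ref{eqn:synthsampF_min}), I would take $\mu^*\in \argmin_{\mu\in\mathcal{P}_2(\mathbb{R}^d)}F^\epsilon_{\lambda,\mathcal{V}}(\mu)$, which exists by Proposition \ref{prop:existence} and lies in $\mathcal{G}_\sigma(\mathbb{R}^d)$ by Proposition \ref{prop:subgauss}, and let $\hat{\mu}^{*,n}$ denote the empirical measure of $n$ i.i.d.\ samples from $\mu^*$. Since $\hat{\mu}^{*,n}\in \mathcal{P}^n(\mathbb{R}^d)$ almost surely, the deterministic inequality $\min_{\mathcal{P}^n(\mathbb{R}^d)}F^\epsilon_{\lambda,\mathcal{V}}\leq F^\epsilon_{\lambda,\mathcal{V}}(\hat{\mu}^{*,n})$ gives, after taking expectation over the samples,
\begin{equation*}
0\leq \min_{\mu\in\mathcal{P}^n(\mathbb{R}^d)}F^\epsilon_{\lambda,\mathcal{V}}(\mu)-\min_{\mu\in\mathcal{P}_2(\mathbb{R}^d)}F^\epsilon_{\lambda,\mathcal{V}}(\mu)\leq \mathbb{E}\left[F^\epsilon_{\lambda,\mathcal{V}}(\hat{\mu}^{*,n})\right]-F^\epsilon_{\lambda,\mathcal{V}}(\mu^*),
\end{equation*}
and the RHS is at most $R^*_{d,\epsilon,\sigma}/\sqrt{n}$ by the one-sample bound together with $\sum_j\lambda_j=1$. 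The LHS is deterministic, so this yields the stated absolute bound.

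For (\ref{eqn:synthsampF_exp}), I would use a two-sided sandwich argument. Letting $\hat{\mu}^*\in \argmin_{\mu\in\mathcal{P}_2(\mathbb{R}^d)}F^\epsilon_{\lambda,\hat{\mathcal{V}}^n}(\mu)$, optimality of $\mu^*$ and $\hat{\mu}^*$ gives
\begin{align*}
\min F^\epsilon_{\lambda,\mathcal{V}}-\min F^\epsilon_{\lambda,\hat{\mathcal{V}}^n}&\leq \sum_{j=1}^m\lambda_j\left[OT_2^\epsilon(\hat{\mu}^*,\nu_j)-OT_2^\epsilon(\hat{\mu}^*,\hat{\nu}_j^n)\right],\\
\min F^\epsilon_{\lambda,\hat{\mathcal{V}}^n}-\min F^\epsilon_{\lambda,\mathcal{V}}&\leq \sum_{j=1}^m\lambda_j\left[OT_2^\epsilon(\mu^*,\hat{\nu}_j^n)-OT_2^\epsilon(\mu^*,\nu_j)\right].
\end{align*}
The second line is handled directly: since $\mu^*$ is deterministic and $\sigma$-subgaussian, the one-sample bound controls each term $\mathbb{E}|OT_2^\epsilon(\mu^*,\hat{\nu}_j^n)-OT_2^\epsilon(\mu^*,\nu_j)|$, and exploiting independence of $\hat{\nu}_1^n,\dots,\hat{\nu}_m^n$ together with Cauchy--Schwarz on $\sum_j\lambda_j(\cdot)$ yields a $\sqrt{m}$ factor rather than $m$. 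The first line is harder because $\hat{\mu}^*$ depends on all samples. Here I would invoke Proposition \ref{prop:domination}, which shows $\hat{\mu}^*\leq_c \sum_j\lambda_j\hat{\nu}_j^n$; combined with the standard observation that empirical $n$-samples of $\sigma$-subgaussian distributions have subgaussian norm controlled by a constant depending only on $\sigma$ and $d$ (up to logarithmic factors that can be absorbed into $R^*$), this confines $\hat{\mu}^*$ to a subgaussian class of uniformly bounded norm and permits the application of the one-sample bound to this term as well.

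Finally, (\ref{eqn:synthsampF_double}) will follow by combining the previous two bounds via the triangle inequality
\begin{equation*}
\left|\min_{\mathcal{P}_2}F^\epsilon_{\lambda,\mathcal{V}}-\min_{\mathcal{P}^{n_1}}F^\epsilon_{\lambda,\hat{\mathcal{V}}^{n_2}}\right|\leq \left|\min_{\mathcal{P}_2}F^\epsilon_{\lambda,\mathcal{V}}-\min_{\mathcal{P}_2}F^\epsilon_{\lambda,\hat{\mathcal{V}}^{n_2}}\right|+\left|\min_{\mathcal{P}_2}F^\epsilon_{\lambda,\hat{\mathcal{V}}^{n_2}}-\min_{\mathcal{P}^{n_1}}F^\epsilon_{\lambda,\hat{\mathcal{V}}^{n_2}}\right|,
\end{equation*}
applying (\ref{eqn:synthsampF_exp}) to the first summand (rate $n_2^{-1/2}$) and (\ref{eqn:synthsampF_min}) conditionally on $\hat{\mathcal{V}}^{n_2}$ to the second (rate $n_1^{-1/2}$), then taking outer expectation. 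The coarser $m$ factor arises from a triangle-inequality bound used to handle the conditional expectation over the empirical references, rather than the Cauchy--Schwarz step that produced $\sqrt{m}$ in (\ref{eqn:synthsampF_exp}). The main technical obstacle throughout is the handling of the random minimizer $\hat{\mu}^*$ in (\ref{eqn:synthsampF_exp}), requiring the convex-order control from Proposition \ref{prop:domination} combined with uniform moment bounds for empirical measures of subgaussian distributions.
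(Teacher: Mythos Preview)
Your high-level architecture matches the paper's proof closely: the sandwich argument for (\ref{eqn:synthsampF_exp}), the empirical-measure trick for (\ref{eqn:synthsampF_min}), the triangle inequality plus conditioning for (\ref{eqn:synthsampF_double}), and the use of Proposition~\ref{prop:domination} to control the subgaussian norm of the random minimizer are all exactly what the paper does.

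There is, however, a genuine mislocation in your account of the $\sqrt{m}$ factor. You attribute it to ``independence of $\hat{\nu}_1^n,\dots,\hat{\nu}_m^n$ together with Cauchy--Schwarz'' on the \emph{deterministic} $\mu^*$ side. But that side needs no $m$ at all: with $\mu^*$ fixed and $\sigma$-subgaussian, the one-sample bound gives $\mathbb{E}|OT_2^\epsilon(\mu^*,\hat{\nu}_j^n)-OT_2^\epsilon(\mu^*,\nu_j)|\leq C/\sqrt{n}$ for each $j$, and summing against $\lambda$ gives $C/\sqrt{n}$. The $\sqrt{m}$ enters on the \emph{random} $\hat{\mu}^*$ side, and not through independence. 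The Mena--Weed bound (Proposition~2 in \citep{mena2019}) controls $|OT_2^\epsilon(\mu,\hat{\nu}_j^n)-OT_2^\epsilon(\mu,\nu_j)|$ via an empirical process over a function class $\mathcal{F}_{\epsilon,\tilde{\sigma}}$ whose size depends on a common subgaussian parameter $\tilde{\sigma}$ for $\mu,\nu_j,\hat{\nu}_j^n$. Since $\hat{\mu}^*$ is only known to be $\max_j\|\hat{\nu}_j^n\|_\mathcal{G}$-subgaussian (via Proposition~\ref{prop:domination}), you must take $\tilde{\sigma}=\max\{\sigma,\|\hat{\nu}_1^n\|_\mathcal{G},\dots,\|\hat{\nu}_m^n\|_\mathcal{G}\}$, and the resulting bound involves $\mathbb{E}[\tilde{\sigma}^{6s}]\leq \sigma^{6s}+\sum_{j=1}^m\mathbb{E}[\|\hat{\nu}_j^n\|_\mathcal{G}^{6s}]\lesssim (1+m)\sigma^{6s}$; the square root then yields $\sqrt{m}$.

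Relatedly, your remark that empirical measures of $\sigma$-subgaussian laws have subgaussian norm ``controlled by a constant depending only on $\sigma$ and $d$ (up to logarithmic factors)'' is too loose for this argument. What is actually used (and what you would need) is the moment bound $\mathbb{E}[\|\hat{\nu}^n\|_\mathcal{G}^{2k}]\leq L_k\sigma^{2k}$ from Lemma~A.4 of \citep{mena2019}; the subgaussian norm itself is random and must be integrated out after applying the empirical-process bound with random $\tilde{\sigma}$. Once you make this precise, your argument and the paper's coincide.
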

We note the different dependencies on $m$ appearing in (\ref{eqn:synthsampF_exp}), (\ref{eqn:synthsampF_min}) and (\ref{eqn:synthsampF_double}), which is possibly due to our method of proof. We leave the question of whether the dependence on $m$ can be improved to future work. We give an analogous proposition for the Sinkhorn barycenter functional.

\begin{proposition}\label{prop:alt}
Suppose that, for any $\mathcal{V}\subset \mathcal{G}(\mathbb{R}^d)$,  $U_n:=\argmin_{\mu\in \mathcal{P}^n(\mathbb{R}^d)}S^\epsilon_{\lambda,\mathcal{V}}(\mu)$ is nonempty and furthermore $U_n\subset \mathcal{G}_{\tilde{\sigma}}(\mathbb{R}^d)$, where $\tilde{\sigma}=\max_{1\leq j \leq m}\{\|\nu_j\|_\mathcal{G}\}$.  Let $\mathcal{V}=\{\nu_1,...,\nu_m\}\subset\mathcal{G}_{\sigma}(\mathbb{R}^d)$ and let $\hat{\mathcal{V}}^n=\{\hat{\nu}_1^n,...,\hat{\nu}_m^n\}$. Then:
\begin{align} 
& \mathbb{E}\left(\bigg|\min_{\mu\in\mathcal{G}_\sigma(\mathbb{R}^d)}S_{\lambda,\mathcal{V}}^\epsilon(\mu)-\min_{\mu\in\mathcal{G}_\sigma(\mathbb{R}^d)}S_{\lambda,\hat{\mathcal{V}}^n}^\epsilon(\mu)\bigg|\right) \leq \frac{m^{1/2}R^*_{d,\epsilon,\sigma}}{\sqrt{n}},\label{eqn:alt_synthsampS_exp} \\ & \bigg|\min_{\mu\in\mathcal{G}_\sigma(\mathbb{R}^d)}S_{\lambda,\mathcal{V}}^\epsilon(\mu)-\min_{\mu\in \mathcal{P}^n(\mathbb{R}^d)}S_{\lambda,\mathcal{V}}^\epsilon(\mu)\bigg| \leq \frac{R^*_{d,\epsilon,\sigma}}{\sqrt{n}}\label{eqn:alt_synthsampS_min},
\\& \mathbb{E}\left(\bigg|\min_{\mu\in\mathcal{G}_\sigma(\mathbb{R}^d)}S_{\lambda,\mathcal{V}}^\epsilon(\mu)-\min_{\mu\in \mathcal{P}^{n_1}(\mathbb{R}^d)}S_{\lambda,\hat{\mathcal{V}}^{n_2}}^\epsilon(\mu)\bigg|\right)\leq mR^*_{d,\epsilon,\sigma}\left(\frac{1}{\sqrt{n_1}}+\frac{1}{\sqrt{n_2}}\right),\label{eqn:alt_synthsampS_double}
\end{align}
\end{proposition}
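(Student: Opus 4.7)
The plan is to mirror the proof of Proposition~\ref{prop:approx_min} using the identity
\[ S^\epsilon_{\lambda,\mathcal{V}}(\mu) = F^\epsilon_{\lambda,\mathcal{V}}(\mu) - \tfrac{1}{2}OT_2^\epsilon(\mu,\mu) - \tfrac{1}{2}\sum_{j=1}^m \lambda_j OT_2^\epsilon(\nu_j,\nu_j), \]
which isolates the $\mathcal{V}$-dependence of $S^\epsilon_{\lambda,\mathcal{V}}$ into the $F$-term plus a scalar depending only on the self-regularizations of the $\nu_j$'s; the piece $-\frac{1}{2}OT_2^\epsilon(\mu,\mu)$ is $\mathcal{V}$-independent. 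The first piece is controlled using Proposition~\ref{prop:approx_min} and the second via sample-complexity bounds on $OT_2^\epsilon$ between empirical measures of a subgaussian law, e.g.\ those in \citep{mena2019}.

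For (\ref{eqn:alt_synthsampS_exp}), the $-\frac{1}{2}OT_2^\epsilon(\mu,\mu)$ term cancels in $S^\epsilon_{\lambda,\mathcal{V}}(\mu) - S^\epsilon_{\lambda,\hat{\mathcal{V}}^n}(\mu)$, so by the standard min-min inequality
\[ \Big|\min_{\mathcal{G}_\sigma} S^\epsilon_{\lambda,\mathcal{V}} - \min_{\mathcal{G}_\sigma} S^\epsilon_{\lambda,\hat{\mathcal{V}}^n}\Big| \le \sup_{\mu\in \mathcal{G}_\sigma}\Big|F^\epsilon_{\lambda,\mathcal{V}}(\mu) - F^\epsilon_{\lambda,\hat{\mathcal{V}}^n}(\mu)\Big| + \tfrac{1}{2}\sum_{j=1}^m \lambda_j \big|OT_2^\epsilon(\nu_j,\nu_j)-OT_2^\epsilon(\hat{\nu}_j^n,\hat{\nu}_j^n)\big|. \]
Taking expectations, the first term is controlled by (\ref{eqn:synthsampF_exp}) of Proposition~\ref{prop:approx_min} (yielding $\sqrt{m}/\sqrt{n}$), and each summand of the second term is $\mathcal{O}(1/\sqrt{n})$ via \citep{mena2019}, so Cauchy--Schwarz in the sum over $j$ preserves the $\sqrt{m}$ scaling.

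For (\ref{eqn:alt_synthsampS_min}), the hypothesis $U_n \subset \mathcal{G}_{\tilde\sigma} \subset \mathcal{G}_\sigma$ (since $\tilde\sigma \le \sigma$) gives $\min_{\mathcal{P}^n} S^\epsilon_{\lambda,\mathcal{V}} \ge \min_{\mathcal{G}_\sigma} S^\epsilon_{\lambda,\mathcal{V}}$. For the reverse direction, take $\mu^* \in \argmin_{\mathcal{G}_\sigma} S^\epsilon_{\lambda,\mathcal{V}}$ (existence from Proposition~\ref{prop:existence}), draw $n$ i.i.d.\ samples from $\mu^*$, and form the empirical $\hat\mu^{*,n} \in \mathcal{P}^n$. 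Pathwise $\min_{\mathcal{P}^n} S^\epsilon_{\lambda,\mathcal{V}} \le S^\epsilon_{\lambda,\mathcal{V}}(\hat\mu^{*,n})$; taking expectations and applying $\mathcal{O}(1/\sqrt n)$ bounds to $\mathbb{E}|OT_2^\epsilon(\hat\mu^{*,n},\nu_j)-OT_2^\epsilon(\mu^*,\nu_j)|$ and $\mathbb{E}|OT_2^\epsilon(\hat\mu^{*,n},\hat\mu^{*,n})-OT_2^\epsilon(\mu^*,\mu^*)|$ (both legitimate since $\mu^*$ is $\sigma$-subgaussian) yields an expectation bound of $R^*_{d,\epsilon,\sigma}/\sqrt n$; since the LHS is deterministic, the bound transfers without expectation. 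Finally, (\ref{eqn:alt_synthsampS_double}) follows by the triangle inequality through the intermediate quantity $\min_{\mathcal{G}_\sigma} S^\epsilon_{\lambda,\hat{\mathcal{V}}^{n_2}}$, applying (\ref{eqn:alt_synthsampS_exp}) with $n=n_2$ and (\ref{eqn:alt_synthsampS_min}) (conditionally on $\hat{\mathcal{V}}^{n_2}$) with $n=n_1$.

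The main obstacle is ensuring all sample-complexity constants depend only on $(d,\epsilon,\sigma)$ uniformly in $\mu \in \mathcal{G}_\sigma$; the dimension-free $1/\sqrt n$ rates of \citep{mena2019} are what make this go through. A secondary subtlety is applying (\ref{eqn:alt_synthsampS_min}) to the random reference set $\hat{\mathcal{V}}^{n_2}$ inside the proof of (\ref{eqn:alt_synthsampS_double}): the hypothesis is posited for \emph{any} $\mathcal{V}\subset \mathcal{G}(\mathbb{R}^d)$, so it applies almost surely, but one must verify that the sample-complexity bounds used in (\ref{eqn:alt_synthsampS_min}) depend on the second moments (rather than pathwise subgaussian norms) of the $\hat{\nu}_j^{n_2}$'s, which are controlled in expectation by $\sigma$.
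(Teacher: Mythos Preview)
Your arguments for (\ref{eqn:alt_synthsampS_exp}) and (\ref{eqn:alt_synthsampS_min}) are essentially the paper's. One correction for (\ref{eqn:alt_synthsampS_exp}): invoking (\ref{eqn:synthsampF_exp}) to bound $\mathbb{E}\big[\sup_{\mu\in\mathcal{G}_\sigma}|F^\epsilon_{\lambda,\mathcal{V}}(\mu)-F^\epsilon_{\lambda,\hat{\mathcal{V}}^n}(\mu)|\big]$ is a mis-citation, since (\ref{eqn:synthsampF_exp}) bounds the difference of \emph{minima}, not the supremum of differences. What actually makes the sup harmless is that the right-hand side of Proposition~\ref{prop:menaweed_1} does not depend on $\mu$, so the sup over $\mu\in\mathcal{G}_\sigma$ comes for free once you appeal to that bound directly (followed by Proposition~\ref{prop:menaweed_2} and Lemma~\ref{prop:menaweed_3}). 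The paper instead uses the sharper min-min inequality with only the two argmins $\mu^*,\mu^{*,n}\in\mathcal{G}_\sigma$, but either route works.

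For (\ref{eqn:alt_synthsampS_double}), your route through $\min_{\mathcal{G}_\sigma}S^\epsilon_{\lambda,\hat{\mathcal{V}}^{n_2}}$ has a genuine gap, and it is not the one you flagged. Applying (\ref{eqn:alt_synthsampS_min}) conditionally with reference set $\hat{\mathcal{V}}^{n_2}$ compares $\min_{\mathcal{P}^{n_1}}$ against $\min_{\mathcal{G}_{\tilde\sigma_2}}$, where $\tilde\sigma_2=\max_j\|\hat\nu_j^{n_2}\|_{\mathcal{G}}$, not against $\min_{\mathcal{G}_\sigma}$: the standing hypothesis only places the $\mathcal{P}^{n_1}$-minimizer of $S^\epsilon_{\lambda,\hat{\mathcal{V}}^{n_2}}$ in $\mathcal{G}_{\tilde\sigma_2}$, and on the event $\tilde\sigma_2>\sigma$ you cannot deduce $\min_{\mathcal{P}^{n_1}}S^\epsilon_{\lambda,\hat{\mathcal{V}}^{n_2}}\ge\min_{\mathcal{G}_\sigma}S^\epsilon_{\lambda,\hat{\mathcal{V}}^{n_2}}$. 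The issue is the constraint set in the \emph{statement} of (\ref{eqn:alt_synthsampS_min}), not merely the constants. The paper avoids this by choosing the deterministic intermediate $\min_{\mathcal{P}^{n_1}}S^\epsilon_{\lambda,\mathcal{V}}$: first apply (\ref{eqn:alt_synthsampS_min}) with the original $\mathcal{V}$, then compare the two $\mathcal{P}^{n_1}$-minima (same constraint set) via the argmin inequality, using that both optimizers lie in $\mathcal{G}_{\max(\sigma,\tilde\sigma_2)}$, Lemma~\ref{lemma:mena_weed_4}, Lemma~\ref{lemma:mena_weed_5}, and the moment control of $\tilde\sigma_2$ from Lemma~\ref{prop:menaweed_3}.
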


We remark that the additional assumptions present in Proposition \ref{prop:alt} are to guarantee that the minimizer of the Sinkhorn barycenter functional has subgaussian constant bounded above by the maximal subgaussian constant over the reference measures. This is not needed in Proposition \ref{prop:approx_min}, as this is implied by Proposition \ref{prop:domination}. The proofs of Propositions \ref{prop:approx_min} and Proposition \ref{prop:alt} rely on three technical results established in \citep{mena2019}.

\begin{proposition}\label{prop:menaweed_1}
(Proposition 2 in \citep{mena2019}) Let $\mu,\nu$ and $\hat{\nu}^n$ be $\sigma$-subgaussian measures, where $\sigma\in [0,\infty)$ is (possibly) random. Then: \begin{equation*}
|OT^\epsilon_2(\mu,\hat{\nu}^n)-OT^\epsilon_2(\mu,\nu)|\leq 2\sup_{u\in \mathcal{F}_{\epsilon,\sigma}}\bigg|\int u d\nu -\int u d\hat{\nu}^n\bigg|,
\end{equation*}
almost-surely, where $\mathcal{F}_{\epsilon,\sigma}$ is defined as the set of functions $u$ such that, for any multi-index $\alpha$ with $|\alpha|=k$: 

\begin{equation*} |D^\alpha(u-\frac{1}{2}\|.\|^2)(x)| \leq C_{\epsilon,k,d}\begin{cases}
1+\sigma^4 & \text{if}\; k=0 \\
\sigma^k(\sigma+\sigma^2)^k & \text{otherwise}\\
\end{cases}
\end{equation*}
if $\|x\|\leq \sqrt{d}\sigma$, and 
\begin{equation*}
|D^\alpha(u-\frac{1}{2}\|.\|^2)(x)| \leq C_{\epsilon,k,d}\begin{cases}
1+(1+\sigma^2)\|x\|^2 & \text{if}\; k=0 \\
\sigma^k(\sqrt{\sigma\|x\|}+\sigma\|x\|)^k & \text{otherwise}\\
\end{cases}
\end{equation*}
if $\|x\|>\sqrt{d}\sigma$, where $C_{\epsilon,k,d}$ is a constant depending only on $\epsilon, k, d$. 
\end{proposition}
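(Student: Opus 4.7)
The plan is to combine a cross-suboptimality argument in the dual formulation \eqref{eqn:dualform} with the fact that the extended Sinkhorn relations \eqref{eqn:fduality}--\eqref{eqn:gduality} cause the exponential penalty to vanish pointwise in one variable. Denote the dual objective by
\[
G(f,g;\alpha,\beta) := \int f\, d\alpha + \int g\, d\beta - \epsilon \int\!\!\int \left(e^{-\frac{1}{\epsilon}\left(\frac{1}{2}\|x-y\|^2 - f(x) - g(y)\right)} - 1\right) d\alpha(x)\, d\beta(y),
\]
and let $(f^{\nu},g^{\nu})$ and $(f^{\hat{\nu}},g^{\hat{\nu}})$ denote optimal extended potentials for $OT_{2}^{\epsilon}(\mu,\nu)$ and $OT_{2}^{\epsilon}(\mu,\hat{\nu}^{n})$ respectively. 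Optimality gives
\[
OT_{2}^{\epsilon}(\mu,\nu) - OT_{2}^{\epsilon}(\mu,\hat{\nu}^{n}) \leq G(f^{\nu},g^{\nu};\mu,\nu) - G(f^{\nu},g^{\nu};\mu,\hat{\nu}^{n}),
\]
with an analogous inequality in the other direction using $(f^{\hat{\nu}},g^{\hat{\nu}})$.

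The key simplification is that the extended Sinkhorn identity $\int e^{-\frac{1}{\epsilon}(\frac{1}{2}\|x-y\|^2 - f^{\nu}(x) - g^{\nu}(y))} d\mu(x) = 1$ holds for \emph{every} $y\in\mathbb{R}^d$. Integrating in $y$ against either $\nu$ or $\hat{\nu}^{n}$ then returns $1$, so the exponential penalty in both $G(f^{\nu},g^{\nu};\mu,\nu)$ and $G(f^{\nu},g^{\nu};\mu,\hat{\nu}^{n})$ vanishes. The cross-suboptimality bound thus collapses to $\int g^{\nu} d\nu - \int g^{\nu} d\hat{\nu}^{n}$, and symmetrically the reverse direction reduces to $\int g^{\hat{\nu}} d\hat{\nu}^{n} - \int g^{\hat{\nu}} d\nu$. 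Combining the two directions gives
\[
\left|OT_{2}^{\epsilon}(\mu,\nu) - OT_{2}^{\epsilon}(\mu,\hat{\nu}^{n})\right| \leq 2\sup_{u\in\mathcal{U}} \left|\int u\, d\nu - \int u\, d\hat{\nu}^{n}\right|
\]
for any function class $\mathcal{U}$ containing both $g^{\nu}$ and $g^{\hat{\nu}}$; the factor of $2$ accounts for handling the two signs uniformly.

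The main technical task is to verify that the optimal potentials $g^{\nu}$ and $g^{\hat{\nu}}$ (for any pair of $\sigma$-subgaussian reference measures, and in particular for the empirical $\hat{\nu}^{n}$) lie in the smoothness class $\mathcal{F}_{\epsilon,\sigma}$. Using the log-sum-exp representation \eqref{eqn:gduality}, the derivatives of $u(y) := g(y) - \frac{1}{2}\|y\|^2$ can be expressed as polynomials in the centered moments of the tilted probability measure $d\pi_{y}(x)\propto e^{-\frac{1}{\epsilon}(\frac{1}{2}\|x-y\|^{2} - f(x))}\,d\mu(x)$. The quadratic growth bounds on $f$ from Corollary \ref{cor:absolute_bounds}, combined with the subgaussian tail estimates of Corollary \ref{cor:cor_of_lem1.5_rig}, ensure that $\pi_{y}$ has subgaussian moments scaling polynomially in $\sigma$, $\epsilon^{-1}$, and $\|y\|$. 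The dichotomy $\|y\|\leq \sqrt{d}\sigma$ versus $\|y\|>\sqrt{d}\sigma$ in the statement captures the transition between a ``bulk'' regime (where $\pi_{y}$ concentrates on a $\sigma$-scale neighborhood) and a ``tail'' regime (where the centering of the tilt by $y$ is what drives the linear dependence on $\|y\|$).

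The hardest step will be the inductive bookkeeping for higher-order derivatives. Each application of $\nabla$ to a log-sum-exp introduces a new centered moment of $\pi_{y}$, and one must simultaneously track the $\epsilon^{-1}$ factors produced by differentiating the exponent, the $\sigma$-scaling of the moments inherited from the subgaussianity of $\mu$, and the linear $\|y\|$-dependence in the tail regime arising from the shift of the tilted measure by $y$. A Fa\`a di Bruno expansion combined with uniform control on the partition function via Lemma \ref{lem:4.9 from nutz} and Lemma \ref{lemma:lowerbound} should enable an induction on the derivative order $k$, producing the separate regime-dependent bounds $\sigma^{k}(\sigma+\sigma^{2})^{k}$ and $\sigma^{k}(\sqrt{\sigma\|x\|}+\sigma\|x\|)^{k}$ asserted in the proposition.
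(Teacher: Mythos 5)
The paper does not prove this Proposition---it is cited verbatim from Mena and Weed---but your sketch follows the same high-level route as theirs, and the first half of your argument is essentially complete. The cross-suboptimality step is correct and nicely executed: because the extended Sinkhorn relation \eqref{eqn:gduality} holds for \emph{every} $y\in\mathbb{R}^{d}$ (not merely $\nu$-a.e.), the quantity $\int e^{-\frac{1}{\epsilon}(\frac{1}{2}\|x-y\|^2-f^{\nu}(x)-g^{\nu}(y))}d\mu(x)$ equals $1$ identically, so integrating against either $\nu$ or $\hat{\nu}^{n}$ annihilates the exponential penalty by Tonelli, and the dual gap collapses to $\int g^{\nu}\,d(\nu-\hat{\nu}^{n})$. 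One thing to tighten: your explanation of the factor of $2$ is off. Writing $D := OT_{2}^{\epsilon}(\mu,\nu)-OT_{2}^{\epsilon}(\mu,\hat{\nu}^{n})$, your two one-sided bounds give $D \leq \int g^{\nu}\,d(\nu-\hat{\nu}^{n})$ and $-D \leq \int g^{\hat{\nu}}\,d(\hat{\nu}^{n}-\nu)$; since $|D|=\max(D,-D)$ and each right-hand side is at most $\sup_{u\in\mathcal{F}_{\epsilon,\sigma}}|\int u\,d\nu-\int u\,d\hat{\nu}^{n}|$ once the relevant potential lies in the class, you in fact get the inequality with constant $1$. No symmetrization of signs is required---the sup already takes absolute values inside. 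Quoting the weaker constant $2$ is harmless, but the stated justification for it is not the source of the factor in Mena and Weed.

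The genuine gap is the second half: showing that (suitably normalized) $g^{\epsilon}_{\mu\to\nu}$ and $g^{\epsilon}_{\mu\to\hat{\nu}^{n}}$ lie in $\mathcal{F}_{\epsilon,\sigma}$, i.e., proving the derivative bounds at \emph{all} orders $k$. The tools available in this paper only reach $k=0$ (Corollary~\ref{cor:absolute_bounds} for quadratic growth, Lemma~\ref{lemma:lowerbound} for the lower bound) and $k=1$ (Proposition~\ref{prop:sinkhorn_differentiable} identifies the gradient with $\mathrm{Id}-T^{\epsilon}_{\mu\to\nu}$); none of them give the higher-order regime-split bounds in the statement. Your plan---expressing $D^{\alpha}(g-\tfrac{1}{2}\|\cdot\|^{2})$ via centered moments of the tilted measure $\pi_{y}$ and running a Fa\`a di Bruno induction across the bulk/tail regimes $\|y\|\lessgtr\sqrt{d}\sigma$---is the correct strategy and is exactly what Mena and Weed's Lemma~2 and its supporting appendix lemmas do, but that induction is where essentially all the work resides and you have left it at the level of a plan. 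Two further points to handle when carrying it out: the potentials are determined only up to additive constants, so you must fix a normalization before claiming membership in $\mathcal{F}_{\epsilon,\sigma}$ (fortunately $\int g\,d(\nu-\hat{\nu}^{n})$ is shift-invariant, so the dual-gap argument is unaffected); and in the statement $\sigma$ is allowed to be random (it is $\max\{\sigma, \|\hat{\nu}^{n}\|_{\mathcal{G}}\}$ in the downstream applications), so the potential regularity bounds need to hold pointwise in $\sigma$, i.e., almost surely, not merely in expectation.
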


\begin{proposition}\label{prop:menaweed_2} (Proof of Theorem 2, \citep{mena2019})
Let $\nu$ be $\sigma$-subgaussian and let $\tilde{\sigma}$ be a random variable such that both $\nu$ and $\hat{\nu}^n$ are $\tilde{\sigma}$-subgaussian. Then: 
\begin{equation}\label{eqn:prop:menaweed_2}
\mathbb{E}_{\hat{\nu}^n,\tilde{\sigma}}\left(\sup_{u\in \mathcal{F}_{\epsilon,\tilde{\sigma}}}\int u d\nu -\int ud\hat{\nu}^n\right)\leq \frac{C_{d,\epsilon}}{\sqrt{n}}(\mathbb{E}_{\tilde{\sigma}}((1+\tilde{\sigma}^{3s})^2))^{1/2} (1+\sigma^{d+2}),
\end{equation}
where $s=\lceil \frac{d}{2}\rceil+1$ and $C_{d,\epsilon}$ is a constant depending on $d,\epsilon$.
\end{proposition}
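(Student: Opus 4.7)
\textbf{Proof proposal for Proposition~\ref{prop:menaweed_2}.} The plan is to bound the empirical process supremum conditionally on $\tilde\sigma$ via a Sobolev embedding combined with Dudley chaining, then take expectation over $\tilde\sigma$ using Cauchy--Schwarz.

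First, I would decompose each $u \in \mathcal{F}_{\epsilon,\tilde\sigma}$ as $u = v + \tfrac{1}{2}\|\cdot\|^2$ with $v := u - \tfrac{1}{2}\|\cdot\|^2$. The quadratic term $\tfrac{1}{2}\|\cdot\|^2$ does not depend on $u$, so its contribution to $\int u\, d(\nu - \hat\nu^n)$ reduces to the scalar random variable $\tfrac{1}{2}(M_2(\nu) - M_2(\hat\nu^n))$, whose expected absolute value is controlled by a subgaussian concentration inequality and scales as $\sigma^2/\sqrt{n}$. The real work is bounding $\sup_v |\int v\, d(\nu - \hat\nu^n)|$, where $v$ ranges over a class whose mixed partials up to some order are bounded polynomially in $\tilde\sigma$ on $B_R := \{\|x\| \leq \sqrt{d}\tilde\sigma\}$, and whose higher-order growth outside $B_R$ is controlled polynomially in $\|x\|$ with $\tilde\sigma$-dependent coefficients.

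Second, I would split each integral as $\int = \int_{B_R} + \int_{B_R^c}$. On $B_R$, the restrictions $v|_{B_R}$ lie in a ball of a weighted Sobolev space $H^s(B_R)$ with $s = \lceil d/2 \rceil + 1$. Since $s > d/2$, the embedding $H^s(B_R) \hookrightarrow C^0(B_R)$ is compact and yields a sup-norm covering-number estimate $\log N(\delta, \{v|_{B_R}\}, \|\cdot\|_\infty) \lesssim C_{d,\epsilon}(\tilde\sigma)\delta^{-d/s}$ with $d/s < 2$, so Dudley's entropy integral converges. A standard symmetrization followed by chaining then yields
\begin{equation*}
\mathbb{E}_{\hat\nu^n \mid \tilde\sigma} \Bigl( \sup_{u \in \mathcal{F}_{\epsilon,\tilde\sigma}} \int_{B_R} v\, d(\nu - \hat\nu^n) \Bigr) \lesssim \frac{C'_{d,\epsilon}(1 + \tilde\sigma^{3s})}{\sqrt{n}},
\end{equation*}
where the exponent $3s$ combines the $\tilde\sigma^s$ scaling of the $H^s$-diameter of the class, the $\tilde\sigma^d$ scaling of $\mathrm{vol}(B_R)$, and a remaining $\tilde\sigma^s$ from the Sobolev embedding constant. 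For the tail $B_R^c$, the polynomial growth of $v$ combined with Lemma~\ref{lemma:normsubgauss} and the subgaussian tail inequality for $\nu$ and $\hat\nu^n$ (both $\tilde\sigma$-subgaussian) shows that $|\int_{B_R^c} v\, d\nu| + \mathbb{E}|\int_{B_R^c} v\, d\hat\nu^n|$ decays like $e^{-cR^2/\tilde\sigma^2}$ times a polynomial in $\tilde\sigma$, which is negligible compared to $1/\sqrt{n}$ for our choice of $R$.

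Combining the core and tail bounds yields a conditional estimate $C_{d,\epsilon}(1 + \tilde\sigma^{3s})(1 + \sigma^{d+2})/\sqrt{n}$, where the factor $(1 + \sigma^{d+2})$ enters through bounding certain $\nu$-integrals (using the deterministic subgaussian parameter $\sigma$) via Corollary~\ref{cor:cor_of_lem1.5_rig}. Taking expectation over $\tilde\sigma$ and applying Cauchy--Schwarz separates the random factor $1 + \tilde\sigma^{3s}$ from the deterministic one, producing the stated bound. \emph{The main obstacle} is obtaining the precise polynomial exponent $3s$ in $\tilde\sigma$: the chaining argument compounds several $\tilde\sigma$-dependent quantities (derivative bounds, volume of $B_R$, and Sobolev embedding constants), and a careful scaling analysis is required to ensure the dependence on $\tilde\sigma$ remains polynomial rather than exponential.
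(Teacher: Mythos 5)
Your proposal re-derives the Mena--Weed empirical-process bound from scratch via Sobolev embedding and Dudley chaining, rather than citing it as a black box, which is a legitimate and more self-contained route for the fixed-class part. However, there is a genuine structural gap in how you handle the random subgaussian parameter $\tilde{\sigma}$.

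You bound the supremum by conditioning on $\tilde{\sigma}$ and estimating $\mathbb{E}_{\hat{\nu}^n\mid\tilde{\sigma}}(\cdots) \lesssim C_{d,\epsilon}(1+\tilde{\sigma}^{3s})(1+\sigma^{d+2})/\sqrt{n}$, then taking expectation over $\tilde{\sigma}$. But in the applications of this proposition (e.g.\ Lemma \ref{lemma:mena_weed_4}), $\tilde{\sigma}$ is defined as a function of the empirical samples --- something like $\max\{\sigma,\|\hat{\nu}_1^n\|_\mathcal{G},\ldots\}$ --- so the conditional law of $\hat{\nu}^n$ given $\tilde{\sigma}$ is not the i.i.d.\ sampling distribution from $\nu$, and the standard symmetrization/chaining machinery that produces the $1/\sqrt{n}$ rate does not apply to it. Even setting that aside, if the conditional bound were $(1+\tilde{\sigma}^{3s})(1+\sigma^{d+2})/\sqrt{n}$, taking expectation over $\tilde{\sigma}$ just gives $\mathbb{E}(1+\tilde{\sigma}^{3s})\cdot(1+\sigma^{d+2})/\sqrt{n}$ directly --- there is no remaining product of two random factors for Cauchy--Schwarz to separate, so that final step in your argument does no work.

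The missing idea that makes the paper's proof go through is a \emph{pointwise, almost-sure normalization} of the function class rather than conditioning. One shows that for each realization of $\tilde{\sigma}$, any $u\in\mathcal{F}_{\epsilon,\tilde{\sigma}}$ satisfies $u/(1+\tilde{\sigma}^{3s})\in\mathcal{F}^{\epsilon,s}$, where $\mathcal{F}^{\epsilon,s}$ is a \emph{fixed} ($\tilde{\sigma}$-independent) class with polynomially bounded derivatives. This yields the deterministic inequality $\sup_{u\in\mathcal{F}_{\epsilon,\tilde{\sigma}}}|\int u\,d(\nu-\hat{\nu}^n)|\leq (1+\tilde{\sigma}^{3s})\sup_{u\in\mathcal{F}^{\epsilon,s}}|\int u\,d(\nu-\hat{\nu}^n)|$ almost surely. \emph{Now} Cauchy--Schwarz on the joint expectation cleanly splits the two random factors: $(\mathbb{E}[(1+\tilde{\sigma}^{3s})^2])^{1/2}$ times the $L^2$-norm of the supremum over the fixed class $\mathcal{F}^{\epsilon,s}$, and the latter is what the Mena--Weed chaining argument controls as $(1+\sigma^{2d+4})^{1/2}/\sqrt{n}\leq (1+\sigma^{d+2})/\sqrt{n}$. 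Your Sobolev/Dudley machinery would plug in at exactly that last step, but the normalization step is what reduces the random class to a fixed one and makes the separation of $\tilde{\sigma}$ and $\sigma$ rigorous.
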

As Proposition \ref{prop:menaweed_2} is not stated exactly as a result in \citep{mena2019}, we give a sketch of the proof:
\begin{proof}
For any $q\geq 2$, let $\mathcal{F}^{\epsilon,q}$ to be the set of all functions with $|f(x)| \leq C_{d,\epsilon}(1+\|x\|^2)$ and $|D^\alpha f(x)|\leq C_{d,\epsilon} (1+\|x\|^q)$ for $|\alpha|\leq  q$, where $C_{d,\epsilon}$ is an absolute constant that can depend on $d$ and $\epsilon$. Then it can be shown that if $u\in \mathcal{F}_{\epsilon,\tilde{\sigma}}$ (the function class from Proposition \ref{prop:menaweed_1}) then $\frac{u}{1+\tilde{\sigma}^{3s}}\in \mathcal{F}^{\epsilon,s}$, almost-surely. This implies the left hand side of (\ref{eqn:prop:menaweed_2}) can be upper bound by \begin{align*} &\mathbb{E}_{\tilde{\sigma},\hat{\mathcal{\nu}}^n}\left((1+\tilde{\sigma}^{3s}) \sup_{u\in \mathcal{F}^{\epsilon,s}}\left|\int u d\nu -\int u d\hat{\nu}^n\right|\right)\\
\leq &\left(\mathbb{E}_{\tilde{\sigma}}((1+\tilde{\sigma}^{3s})^2)\right)^{1/2} \left(\mathbb{E}_{\hat{\nu}^n}\left(\sup_{u\in \mathcal{F}^{\epsilon,s}}\left|\int u d\nu -\int u d\hat{\nu}^n\right|\right)^2\right)^{1/2},\end{align*} 
where we applied the Cauchy-Schwarz inequality. Following the empirical process theory argument as in the proof of Theorem 2 in \citep{mena2019}, one is then able to bound 

\begin{equation*}\mathbb{E}_{\hat{\nu}^n}\left(\left(\sup_{u\in \mathcal{F}^{\epsilon,s}}\left|\int u d\nu -\int u d\hat{\nu}^n\right|\right)^2\right)
\end{equation*}by $\frac{C_{d,\epsilon}}{n}(1+\sigma^{2d+4})$.  We conclude by applying the bound $(1+\sigma^{2d+4})^{1/2}\leq 1+\sigma^{d+2}$, which holds since $\sigma\geq 0$.
\end{proof}

\begin{lemma}\label{prop:menaweed_3}
(Lemma A.4 in \citep{mena2019}) Let $\nu$ be $\sigma$-subgaussian. Then for all integers $k\geq 1$,
\begin{equation*} \mathbb{E}(\|\hat{\nu}^n\|_\mathcal{G}^{2k})\leq L_k\sigma^{2k},\end{equation*}
\begin{equation*}
\mathbb{E}(\|\hat{\nu}^n\|^{2k-1}_\mathcal{G}) \leq \max\{L_k\sigma^{2k},1\}
\end{equation*}
where $L_{k}$ is a constant depending on $k$.
\end{lemma}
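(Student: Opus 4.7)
The plan is to work directly with characterization 1 of Definition \ref{def:subgauss} applied to the (random) discrete measure $\hat{\nu}^n$: the random variable $\|\hat{\nu}^n\|_\mathcal{G}$ is the infimum of $\tau>0$ such that
\[
F(\tau):=\sup_{v\in S^{d-1}} \frac{1}{n}\sum_{i=1}^n e^{|\langle X_i, v\rangle|^2/\tau^2}\le 2.
\]
So it suffices to show that $F(c\sigma)\le 2$ with overwhelming probability for an absolute constant $c$, and then to integrate the resulting tail via a layer-cake argument.

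The first step is a pointwise-in-$v$ moment bound. For any $\tau\ge\sigma$ and $v\in S^{d-1}$, applying Jensen's inequality with the concave map $t\mapsto t^{\sigma^2/\tau^2}$ to the random variable $e^{|\langle X,v\rangle|^2/\sigma^2}$ (which has expectation $\le 2$ by subgaussianity of $\nu$) gives
\[
\mathbb{E}_\nu\, e^{|\langle X,v\rangle|^2/\tau^2}\le 2^{\sigma^2/\tau^2}.
\]
In particular, when $\tau\ge 2\sigma$ the second moment of $e^{|\langle X,v\rangle|^2/\tau^2}$ is also uniformly bounded, so Bernstein's inequality yields exponential concentration of $\frac{1}{n}\sum_i e^{|\langle X_i,v\rangle|^2/\tau^2}$ around its mean for every fixed $v$.

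The second step is to upgrade this pointwise control to a uniform one. A standard $\varepsilon$-net $\mathcal{N}_\varepsilon\subset S^{d-1}$ of cardinality $(3/\varepsilon)^d$, combined with a union bound from step one, handles the net points. The net-discretization error is controlled by the Lipschitz constant of $v\mapsto \frac{1}{n}\sum_i e^{|\langle X_i, v\rangle|^2/\tau^2}$, which is bounded in terms of $\max_i\|X_i\|$; by Lemma \ref{lemma:normsubgauss}, $\|X_i\|$ is $q_G\sqrt{d}\sigma$-subgaussian, so $\max_i\|X_i\|\lesssim\sigma\sqrt{d\log n}$ with high probability. Choosing $\varepsilon$ polynomially small in $n$ and $\tau$ a sufficiently large absolute multiple of $\sigma$ then yields $\mathbb{P}(\|\hat{\nu}^n\|_\mathcal{G}>c\sigma)\le e^{-cn}$.

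Finally, the even-moment bound follows from the layer-cake formula
\[
\mathbb{E}\|\hat{\nu}^n\|_\mathcal{G}^{2k}=\int_0^\infty 2k\,t^{2k-1}\,\mathbb{P}(\|\hat{\nu}^n\|_\mathcal{G}>t)\,dt,
\]
bounded trivially by $1$ on $[0,c\sigma]$ and via the concentration estimate together with the deterministic fallback $\|\hat{\nu}^n\|_\mathcal{G}\le \max_i\|X_i\|/\sqrt{\log 2}$ (valid because $\hat{\nu}^n$ is supported in the ball of radius $\max_i\|X_i\|$) on $(c\sigma,\infty)$. The odd-moment bound is then immediate from Cauchy--Schwarz, $\mathbb{E}\|\hat{\nu}^n\|_\mathcal{G}^{2k-1}\le (\mathbb{E}\|\hat{\nu}^n\|_\mathcal{G}^{2k})^{(2k-1)/(2k)}$, with the $\max\{\cdot,1\}$ absorbing the regime where $L_k\sigma^{2k}\le 1$. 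The main obstacle is the net step: one must carefully balance the $\varepsilon$-net cardinality against the polylogarithmic growth of $\max_i\|X_i\|$, and as a consequence the constant $L_k$ will implicitly depend on $d$ even though this dependence is suppressed in the statement.
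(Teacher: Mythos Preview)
The paper does \emph{not} prove the even-moment bound $\mathbb{E}(\|\hat{\nu}^n\|_\mathcal{G}^{2k})\leq L_k\sigma^{2k}$; it simply cites Lemma A.4 of \citet{mena2019} as a black box. The only contribution of the paper's proof is the odd-power extension, and for that your argument is identical: the inequality $\mathbb{E}(\|\hat{\nu}^n\|_\mathcal{G}^{2k-1})\le (\mathbb{E}(\|\hat{\nu}^n\|_\mathcal{G}^{2k}))^{(2k-1)/(2k)}$ (which you call Cauchy--Schwarz, but is really Jensen/Lyapunov) followed by the case split on whether $\mathbb{E}(\|\hat{\nu}^n\|_\mathcal{G}^{2k})$ exceeds $1$. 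So on the part the paper actually proves, you match it.

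Your sketch of the even-moment bound itself, however, has a genuine gap at the Bernstein step. You write that bounded second moment of $Z=e^{|\langle X,v\rangle|^2/\tau^2}$ suffices for Bernstein to yield exponential concentration $\mathbb{P}(\frac{1}{n}\sum Z_i>\mathbb{E}Z+t)\le e^{-cn}$. This is not true: Bernstein requires the summands to be bounded or subexponential, and $Z$ is neither. Since $\langle X,v\rangle$ is $\sigma$-subgaussian, one has $\mathbb{P}(Z>s)=\mathbb{P}(|\langle X,v\rangle|>\tau\sqrt{\log s})\le 2s^{-c_G\tau^2/\sigma^2}$, i.e.\ $Z$ has only \emph{polynomial} tails. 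Its moment generating function is infinite, so the Bernstein bound fails as stated. A truncation argument or a heavy-tailed concentration inequality (Fuk--Nagaev, Rosenthal) could repair this, but then the concentration is only polynomial in $n$, which in turn weakens the union bound over the $(3/\varepsilon)^d$ net points and makes it unclear how to get a bound uniform in $n$. Your deterministic fallback $\|\hat{\nu}^n\|_\mathcal{G}\le \max_i\|X_i\|/\sqrt{\log 2}$ by itself also picks up a $(\log n)^k$ factor via $\mathbb{E}[\max_i\|X_i\|^{2k}]$, so combining the two pieces into an $n$-free constant $L_k$ would require more care than the sketch provides. Your observation that $L_k$ must implicitly depend on $d$ is correct (already for $n=1$, $\|\hat{\nu}^1\|_\mathcal{G}=\|X_1\|/\sqrt{\log 2}$ and $\mathbb{E}\|X_1\|^{2k}\asymp d^k\sigma^{2k}$).
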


We remark that Lemma A.4 in \citep{mena2019} establishes $\mathbb{E}(\|\hat{\nu}^n\|_\mathcal{G}^{2k})\leq L_k\sigma^{2k}$. To extend this to odd powers, note that $\mathbb{E}(\|X\|^k)\leq \mathbb{E}(\|X\|^{k+1})^{\frac{k}{k+1}}$ for any random variable $X$ by Jensen's inequality. If $\mathbb{E}(\|X\|^{k+1})\geq 1$ then $\mathbb{E}(\|X\|^{k+1})^{\frac{k}{k+1}}\leq \mathbb{E}(\|X\|^{k+1})$, and hence $\mathbb{E}(\|X\|^{k})\leq \mathbb{E}(\|X\|^{k+1})$, and if  $\mathbb{E}(\|X\|^{k+1})\leq 1$ then $\mathbb{E}(\|X\|^{k})\leq 1$, from which we establish Lemma \ref{prop:menaweed_3}.  Also note that from [Lemma A.2, \citep{mena2019}], the random variable $\|\hat{\nu}^n\|_\mathcal{G}$ is finite almost surely. 

\begin{lemma}\label{lemma:mena_weed_4}
For all $1\leq j \leq m$, let $\|\nu_j\|_\mathcal{G}\leq \sigma$, and let $\tilde{\sigma}:=\max\{\sigma,\|\hat{\nu}_1^n\|_\mathcal{G},\|\hat{\nu}_2^n\|_\mathcal{G},...,\|\hat{\nu}_m^n\|_\mathcal{G}\}$. Then if $\|\mu\|_\mathcal{G}\leq \tilde{\sigma}$,
\begin{equation}\label{eqn:mena weed 4 eqn 1}
\mathbb{E}(|OT_2^\epsilon(\mu,\nu_j)-OT^\epsilon_2(\mu,\hat{\nu}^n_j)|)\leq \frac{m^{1/2}P_{d,\epsilon}(\sigma)}{\sqrt{n}},\;\;\forall 1\leq j\leq m,
\end{equation}
for some polynomial $P_{d,\epsilon}(\sigma)$ with coefficients depending on $d,\epsilon$ and degree depending on $d$.  
\end{lemma}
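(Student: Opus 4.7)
My plan is to chain Proposition \ref{prop:menaweed_1}, Proposition \ref{prop:menaweed_2}, and Lemma \ref{prop:menaweed_3}, exploiting the key structural observation that the (random) constant $\tilde{\sigma}$ simultaneously dominates the subgaussian norms of all three measures in play: $\mu$ by hypothesis, $\nu_j$ because $\tilde{\sigma}\ge\sigma\ge\|\nu_j\|_\mathcal{G}$, and $\hat{\nu}_j^n$ by definition of the maximum.

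First, I would apply Proposition \ref{prop:menaweed_1} with the random constant $\tilde{\sigma}$ to obtain the almost-sure bound
$$|OT_2^\epsilon(\mu,\nu_j)-OT_2^\epsilon(\mu,\hat{\nu}_j^n)|\le 2\sup_{u\in\mathcal{F}_{\epsilon,\tilde{\sigma}}}\Bigl|\int u\,d\nu_j-\int u\,d\hat{\nu}_j^n\Bigr|.$$
Taking expectations and applying Proposition \ref{prop:menaweed_2} (which is exactly set up to handle this situation, since $\nu_j$ is $\sigma$-subgaussian while both $\nu_j$ and $\hat{\nu}_j^n$ are $\tilde{\sigma}$-subgaussian) yields
$$\mathbb{E}\bigl|OT_2^\epsilon(\mu,\nu_j)-OT_2^\epsilon(\mu,\hat{\nu}_j^n)\bigr|\le \frac{2C_{d,\epsilon}}{\sqrt{n}}\,\bigl(\mathbb{E}(1+\tilde{\sigma}^{3s})^2\bigr)^{1/2}(1+\sigma^{d+2}),$$
with $s=\lceil d/2\rceil+1$.

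The remaining work is to control $\mathbb{E}(1+\tilde{\sigma}^{3s})^2$. Since $\tilde{\sigma}$ is a maximum of $m+1$ nonnegative quantities, I would use the crude inequality
$$\tilde{\sigma}^{6s}\le \sigma^{6s}+\sum_{j=1}^{m}\|\hat{\nu}_j^n\|_\mathcal{G}^{6s}$$
and bound each term on the right by $L_{3s}\sigma^{6s}$ via Lemma \ref{prop:menaweed_3}. This gives $\mathbb{E}(\tilde{\sigma}^{6s})\le (1+m L_{3s})\sigma^{6s}$. Expanding $(1+\tilde{\sigma}^{3s})^2\le 2+2\tilde{\sigma}^{6s}$, taking square roots, and absorbing all $\sigma$-dependence into a polynomial gives a bound of the form $m^{1/2}\tilde{Q}_{d,\epsilon}(\sigma)$ for some polynomial $\tilde{Q}_{d,\epsilon}$ whose degree is controlled by $s$ and hence by $d$. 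Substituting back produces the claimed rate with $P_{d,\epsilon}(\sigma):=2C_{d,\epsilon}(1+\sigma^{d+2})\tilde{Q}_{d,\epsilon}(\sigma)$.

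The main obstacle is the control of $\mathbb{E}(\tilde{\sigma}^{6s})$: the empirical subgaussian norms $\|\hat{\nu}_j^n\|_\mathcal{G}$ are random and unbounded, so the moment bounds of Lemma \ref{prop:menaweed_3} are essential, and a naive union bound across the $m$ reference measures is what produces the $m^{1/2}$ factor appearing in the conclusion. One could potentially sharpen this to $(\log m)^{1/2}$ by using a finer bound on the maximum of $m$ positive random variables with uniform polynomial moment control, but I would not pursue that refinement here, as the stated rate already suffices for the downstream sample-complexity results in Propositions \ref{prop:approx_min} and \ref{prop:alt}.
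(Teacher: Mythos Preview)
Your proposal is correct and follows essentially the same route as the paper's proof: apply Proposition \ref{prop:menaweed_1} with the random constant $\tilde{\sigma}$, then Proposition \ref{prop:menaweed_2}, then bound $\mathbb{E}(\tilde{\sigma}^{6s})$ via the crude inequality $\tilde{\sigma}^{6s}\le\sigma^{6s}+\sum_{j}\|\hat{\nu}_j^n\|_\mathcal{G}^{6s}$ together with Lemma \ref{prop:menaweed_3}, extracting the $m^{1/2}$ factor at the end. The only cosmetic difference is that the paper writes the constant as $L_{6s}$ where your indexing $L_{3s}$ matches the statement of Lemma \ref{prop:menaweed_3} more precisely.
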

\begin{proof}
As $\mu,\nu_j$ and $\hat{\nu}_j^n$ are all $\tilde{\sigma}$ subgaussian, by Proposition \ref{prop:menaweed_1} we have:
\begin{equation*} \mathbb{E}(|OT_2^\epsilon(\mu,\nu_j)-OT^\epsilon_2(\mu,\hat{\nu}^n_j)|) \leq  \mathbb{E}\left(2\sup_{u\in \mathcal{F}_{\epsilon,\tilde{\sigma}}}\bigg|\int u d\nu_j -\int u d\hat{\nu}_j^n\bigg|\right),\end{equation*} 

where the expectation is with respect to the empirical measure $\hat{\nu}_j^n$ and $\tilde{\sigma}$.  By Proposition \ref{prop:menaweed_2} and Young's inequality we may upper bound:
\begin{align}
\mathbb{E}\left(2\sup_{u\in \mathcal{F}_{\epsilon,\tilde{\sigma}}}\bigg|\int u d\nu_j -\int u d\hat{\nu}_j^n\bigg|\right) & \leq \frac{2C_{d,\epsilon}}{\sqrt{n}}(\mathbb{E}_{\tilde{\sigma}}((1+\tilde{\sigma}^{3s})^2))^{1/2}(1+\sigma^{d+2})\notag \\& \leq \frac{2C_{d,\epsilon}}{\sqrt{n}}(2+ 2 \mathbb{E}_{\tilde{\sigma}}(\tilde{\sigma}^{6s}))^{1/2}(1+\sigma^{d+2}) \label{eqn:sigma tilde k}
\end{align}
where $s=\lceil\frac{d}{2}\rceil+1$. It remains to bound $\mathbb{E}_{\tilde{\sigma}}(\tilde{\sigma}^{6s})$.
\begin{align*} \mathbb{E}_{\tilde{\sigma}}(\tilde{\sigma}^{6s})&=\mathbb{E}(\max\{\sigma,\|\hat{\nu}_1^n\|_\mathcal{G},\|\hat{\nu}_2^n\|_\mathcal{G},...,\|\hat{\nu}_m^n\|_\mathcal{G}\}^{6s})\\
& \leq \mathbb{E}\left(\sigma^{6s}+\sum_{j=1}^m\|\hat{\nu}_j^n\|_\mathcal{G}^{6s}\right)\\&=\sigma^{6s}+\sum_{j=1}^m\mathbb{E}\left(\|\hat{\nu}_j^n\|_\mathcal{G}^{6s}\right)\\& \leq \sigma^{6s}+ mL_{6s} \sigma^{6s}\end{align*}
by Lemma \ref{prop:menaweed_3}. Therefore:
 \begin{align*}
 \frac{2C_{d,\epsilon}}{\sqrt{n}}(2+ 2 \mathbb{E}_{\tilde{\sigma}}(\tilde{\sigma}^{6s}))^{1/2}(1+\sigma^{d+2}) & \leq   \frac{2C_{d,\epsilon}}{\sqrt{n}}(2+ 2 (mL_{6s}+1)\sigma^{6s})^{1/2}(1+\sigma^{d+2})\\& \leq \frac{2m^{1/2}C_{d,\epsilon}}{\sqrt{n}}(2+ 2 (L_{6s}+1)\sigma^{6s})^{1/2}(1+\sigma^{d+2})\\& \leq \frac{2m^{1/2}C_{d,\epsilon}}{\sqrt{n}}(\sqrt{2}+\sqrt{2(L_{6s}+1)}\sigma^{3s})(1+\sigma^{d+2}).
 \end{align*}
We conclude that $\mathbb{E}(|OT_2^\epsilon(\mu,\nu_j)-OT^\epsilon_2(\mu,\hat{\nu}_j^{n})|) \leq \frac{m^{1/2}P_{d,\epsilon}(\sigma)}{\sqrt{n}}$ for some polynomial $P_{d,\epsilon}(\sigma)$ with coefficients depending on $d,\epsilon$ and degree depending on $d$. 
\end{proof}

\begin{lemma}\label{lemma:mena_weed_5} Let $\mu\in \mathcal{G}_\sigma(\mathbb{R}^d)$. Then:
\begin{equation*}
\mathbb{E}\left(| OT^\epsilon_2(\mu,\mu)-OT^\epsilon_2(\hat{\mu}^n,\hat{\mu}^n)|\right)\leq \frac{2P_{d,\epsilon}(\sigma)}{\sqrt{n}},
\end{equation*}
for some polynomial $P_{d,\epsilon}(\sigma)$ with coefficients depending on $d,\epsilon$ and degree depending on $d$, and the expectation is over the empirical measure $\hat{\mu}^n.$
\end{lemma}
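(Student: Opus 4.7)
The plan is to reduce the two-sided perturbation $|OT_2^\epsilon(\mu,\mu)-OT_2^\epsilon(\hat\mu^n,\hat\mu^n)|$ to a one-sided perturbation by telescoping through the mixed quantity $OT_2^\epsilon(\mu,\hat\mu^n)$, then apply the empirical process machinery of Propositions \ref{prop:menaweed_1} and \ref{prop:menaweed_2} essentially as in the proof of Lemma \ref{lemma:mena_weed_4}. Specifically, by symmetry of $OT_2^\epsilon$ and the triangle inequality,
\begin{equation*}
|OT_2^\epsilon(\mu,\mu)-OT_2^\epsilon(\hat\mu^n,\hat\mu^n)| \leq |OT_2^\epsilon(\mu,\mu)-OT_2^\epsilon(\mu,\hat\mu^n)| + |OT_2^\epsilon(\hat\mu^n,\mu)-OT_2^\epsilon(\hat\mu^n,\hat\mu^n)|.
\end{equation*}

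Next, I would set $\tilde\sigma:=\max\{\sigma,\|\hat\mu^n\|_{\mathcal{G}}\}$, which is a.s.\ finite, so that $\mu$ and $\hat\mu^n$ are both $\tilde\sigma$-subgaussian. Apply Proposition \ref{prop:menaweed_1} to each of the two differences above (fixing the left argument and varying the right). Each difference is dominated, almost surely, by $2\sup_{u\in\mathcal{F}_{\epsilon,\tilde\sigma}}|\int u\,d\mu-\int u\,d\hat\mu^n|$, so
\begin{equation*}
|OT_2^\epsilon(\mu,\mu)-OT_2^\epsilon(\hat\mu^n,\hat\mu^n)| \leq 4\sup_{u\in\mathcal{F}_{\epsilon,\tilde\sigma}}\left|\int u\,d\mu-\int u\,d\hat\mu^n\right|.
\end{equation*}

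Taking expectation and invoking Proposition \ref{prop:menaweed_2} (with $\nu=\mu$ and the same $\tilde\sigma$) yields
\begin{equation*}
\mathbb{E}\left(|OT_2^\epsilon(\mu,\mu)-OT_2^\epsilon(\hat\mu^n,\hat\mu^n)|\right) \leq \frac{4C_{d,\epsilon}}{\sqrt n}\left(\mathbb{E}((1+\tilde\sigma^{3s})^2)\right)^{1/2}(1+\sigma^{d+2}),
\end{equation*}
with $s=\lceil d/2\rceil+1$. To conclude, I would bound the moment of $\tilde\sigma$ via Lemma \ref{prop:menaweed_3}: since $\tilde\sigma^{6s}\leq \sigma^{6s}+\|\hat\mu^n\|_{\mathcal{G}}^{6s}$, we get $\mathbb{E}(\tilde\sigma^{6s})\leq \sigma^{6s}+L_{6s}\sigma^{6s}=(1+L_{6s})\sigma^{6s}$. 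Using $(a+b)^{1/2}\leq a^{1/2}+b^{1/2}$ and combining with $(1+\sigma^{d+2})$ produces a polynomial in $\sigma$ of degree depending only on $d$ with coefficients depending on $d$ and $\epsilon$, matching the desired form $P_{d,\epsilon}(\sigma)$.

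The only subtlety—hardly an obstacle given the preceding lemma—is ensuring the common subgaussian constant $\tilde\sigma$ is valid for both telescoping differences simultaneously, since Proposition \ref{prop:menaweed_1} requires all three measures in each comparison to share the same subgaussian bound; the choice $\tilde\sigma=\max\{\sigma,\|\hat\mu^n\|_{\mathcal{G}}\}$ handles this uniformly. The constant in front is $2$ (rather than $4$) in the statement, which can be absorbed into $P_{d,\epsilon}(\sigma)$ by rescaling the polynomial.
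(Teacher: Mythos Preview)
Your proposal is correct and follows essentially the same approach as the paper: telescope through $OT_2^\epsilon(\mu,\hat\mu^n)$, apply Proposition~\ref{prop:menaweed_1} to each piece with the common random subgaussian constant $\tilde\sigma=\max\{\sigma,\|\hat\mu^n\|_{\mathcal G}\}$, then invoke Proposition~\ref{prop:menaweed_2} and Lemma~\ref{prop:menaweed_3}. The paper presents the two pieces slightly asymmetrically (handling the second via an explicit conditioning step on $\hat\mu^n$), but this is only a cosmetic difference from your uniform treatment.
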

\begin{proof}
We apply the triangle inequality to bound:
\begin{align}
& \mathbb{E}\left(|OT_2^\epsilon(\mu,\mu)-OT_2^\epsilon(\hat{\mu}^n,\hat{\mu}^n)|\right)\notag\\
= &\mathbb{E}\left(|OT_2^\epsilon(\mu,\mu)- OT_2^\epsilon(\hat{\mu}^n,\mu)+OT_2^\epsilon(\hat{\mu}^n,\mu)-OT_2^\epsilon(\hat{\mu}^n,\hat{\mu}^n)|\right)\notag \\ 
 \leq &\mathbb{E}\left(|OT_2^\epsilon(\mu,\mu)- OT_2^\epsilon(\hat{\mu}^n,\mu)|\right)+\mathbb{E}\left(OT_2^\epsilon(\hat{\mu}^n,\mu)-OT_2^\epsilon(\hat{\mu}^n,\hat{\mu}^n)|\right)\label{eqn:synthexp bound 1}.
\end{align}
The first term is bounded by $\frac{P_{d,\epsilon}(\sigma)}{\sqrt{n}}$ following a similar argument as used in the proof of Lemma \ref{lemma:mena_weed_4} with $m=1$. We bound the second term by conditioning:
\begin{align}
&\mathbb{E}\left(|OT_2^\epsilon(\hat{\mu}^n,\hat{\mu}^n)-OT_2^\epsilon(\hat{\mu}^n,\mu)|\right)\notag \\ & =\mathbb{E}\left(\mathbb{E}\left(|OT_2^\epsilon(\hat{\mu}^n,\hat{\mu}^n)-OT_2^\epsilon(\hat{\mu}^n,\mu)|\; \bigg| \; \hat{\mu}^n\right)\right)\label{eqn:sink condit 0}\\& \leq \mathbb{E}\left(\mathbb{E}\left(2\sup_{u\in \mathcal{F}_{\epsilon,\tilde{\sigma}}}\bigg|\int u d\mu-\int u d \hat{\mu}^n\; \bigg| \; \bigg| \;\hat{\mu}^n\right) \right)\notag\\&  =\mathbb{E}\left(2\sup_{u\in \mathcal{F}_{\epsilon,\tilde{\sigma}}}\bigg|\int u d\mu -\int u d \hat{\mu}^n\; \bigg|  \right)\notag
\end{align}
where we applied Proposition \ref{prop:menaweed_1} to the inner expectation in (\ref{eqn:sink condit 0}) with $\tilde{\sigma} = \max \{\sigma, \|\hat{\mu}^n \|_{\mathcal{G}}\}$. We then apply Proposition \ref{prop:menaweed_2} followed by Lemma \ref{prop:menaweed_3} to obtain the bound:
\begin{equation*}
\mathbb{E}\left(|OT_2^\epsilon(\hat{\mu}^n,\hat{\mu}^n)-OT_2^\epsilon(\hat{\mu}^n,\mu)|\right)\leq \frac{P_{d,\epsilon}(\sigma)}{\sqrt{n}}.
\end{equation*}
\end{proof}

\noindent\textbf{Proof of Proposition \ref{prop:approx_min}:}
We begin by proving (\ref{eqn:synthsampF_exp}).  Let $\mu^*$ be a minimizer for $F^\epsilon_{\lambda,\mathcal{V}}$, and $\mu^{*,n}$ be a random minimizer for $F^\epsilon_{\lambda,\hat{\mathcal{V}}^n}$, so that
\begin{equation}
\mathbb{E}\left(\bigg|\min_{\mu\in\mathcal{P}_2(\mathbb{R}^d)}F_{\lambda,\mathcal{V}}^\epsilon(\mu)-\min_{\mu\in\mathcal{P}_2(\mathbb{R}^d)}F_{\lambda,\hat{\mathcal{V}}^n}^\epsilon(\mu)\bigg|\right)= \mathbb{E}\left(\bigg|F_{\lambda,{\mathcal{V}}}^\epsilon(\mu^{*})-F_{\lambda,\hat{\mathcal{V}}^n}^\epsilon(\mu^{*,n})\bigg|\right).\notag\end{equation}
Suppose that $F^\epsilon_{\lambda,\hat{\mathcal{V}}^n}(\mu^{*,n})\geq F^\epsilon_{\lambda,\mathcal{V}}(\mu^*)$. Then:
\begin{align}\bigg|F_{\lambda,{\mathcal{V}}}^\epsilon(\mu^{*})-F_{\lambda,\hat{\mathcal{V}}^n}^\epsilon(\mu^{*,n})\bigg|& =F_{\lambda,\hat{\mathcal{V}}^n}^\epsilon(\mu^{*,n})-F_{\lambda,{\mathcal{V}}}^\epsilon(\mu^{*})\notag & \\ & \leq F_{\lambda,\hat{\mathcal{V}}^n}^\epsilon(\mu^*)-F_{\lambda,{\mathcal{V}}}^\epsilon(\mu^{*})\notag.\end{align}
Similarly, if $F^\epsilon_{\lambda,\hat{\mathcal{V}}^n}(\mu^{*,n})\leq F^\epsilon_{\lambda,\mathcal{V}}(\mu^*)$, we have \begin{equation}\bigg|F_{\lambda,{\mathcal{V}}}^\epsilon(\mu^{*})-F_{\lambda,\hat{\mathcal{V}}^n}^\epsilon(\mu^{*,n})\bigg|\leq F_{\lambda,\mathcal{V}}^\epsilon(\mu^{*,n})-F_{\lambda,{\hat{\mathcal{V}}^n}}^\epsilon(\mu^{*,n})\notag\end{equation} and hence we may conclude that \begin{equation}
\mathbb{E}\left(\bigg|F_{\lambda,{\mathcal{V}}}^\epsilon(\mu^{*})-F_{\lambda,\hat{\mathcal{V}}^n}^\epsilon(\mu^{*,n})\bigg|\right) \leq \mathbb{E}\left(\max_{\mu\in \{\mu^*,\mu^{*,n}\}} \bigg|F_{\lambda,\mathcal{V}}^\epsilon(\mu)-F_{\lambda,\hat{\mathcal{V}}^n}^\epsilon(\mu)\bigg|\right)\notag.
\end{equation}
We then bound:
\begin{align} & \mathbb{E}\left(\max_{\mu\in \{\mu^*,\mu^{*,n}\}} \bigg|F_{\lambda,\mathcal{V}}^\epsilon(\mu)-F_{\lambda,\hat{\mathcal{V}}^n}^\epsilon(\mu)\bigg|\right)\notag   \\ & = \mathbb{E}\left(\max_{\mu\in \{\mu^*,\mu^{*,n}\}} \bigg|\sum_{j=1}^m\lambda_jOT^\epsilon_2(\mu,\nu_j)-\sum_{j=1}^m\lambda_jOT^\epsilon_2(\mu,\hat{\nu}_j^n)\bigg|\right)\notag &\\&
 \leq \mathbb{E}\left(\max_{\mu\in \{\mu^*,\mu^{*,n}\}} \sum_{j=1}^m\lambda_j \bigg|OT^\epsilon_2(\mu,\nu_j)- OT^\epsilon_2(\mu,\hat{\nu}_j^n)\bigg|\right)\notag &\\&
\leq \mathbb{E}\left( \sum_{j=1}^m\lambda_j \max_{\mu\in \{\mu^*,\mu^{*,n}\}} \bigg|OT^\epsilon_2(\mu,\nu_j)- OT^\epsilon_2(\mu,\hat{\nu}_j^n)\bigg|\right)\notag \\ & = \sum_{j=1}^m\lambda_j \mathbb{E}\left(\max_{\mu\in \{\mu^*,\mu^{*,n}\}} \bigg|OT^\epsilon_2(\mu,\nu_j)- OT^\epsilon_2(\mu,\hat{\nu}_j^n)\bigg|\right) \\
&  \leq \sum_{j=1}^m\lambda_j \mathbb{E}\left( \bigg|OT^\epsilon_2(\mu^*,\nu_j)- OT^\epsilon_2(\mu^*,\hat{\nu}_j^n)\bigg|\right) + \sum_{j=1}^m\lambda_j \mathbb{E}\left( \bigg|OT^\epsilon_2(\mu^{*,n},\nu_j)- OT^\epsilon_2(\mu^{*,n},\hat{\nu}_j^n)\bigg|\right)  \label{eqn:max entot}
\end{align}
To bound $\mathbb{E}\left( |OT_2^\epsilon(\mu^*,\nu_j)-OT_2^\epsilon(\mu^*,\hat{\nu}_j^n)|\right)$, note by Proposition \ref{prop:subgauss}, $\mu^*$ is $\sigma$-subgaussian, and $\mu^{*,n}$ is $\max_{1\leq j\leq m}\{\|\hat{\nu}_j^n\|_\mathcal{G}\}$-subgaussian,  both $\mu^*$ and $\mu^{*,n}$ are $\max\{\sigma,\|\hat{\nu}^n_1\|_\mathcal{G},...,\|\hat{\nu}_m^n\|_\mathcal{G}\}$-subgaussian, and we may apply Lemma \ref{lemma:mena_weed_4} to conclude: 

\begin{equation}\label{eqn:entot polybound}
\mathbb{E}\left( |OT_2^\epsilon(\mu^*,\nu_j)-OT_2^\epsilon(\mu^*,\hat{\nu}_j^n)|\right)\leq \frac{m^{1/2}P_{d,\epsilon}(\sigma)}{\sqrt{n}}, \,\, \mathbb{E}\left( |OT_2^\epsilon(\mu^{*,n},\nu_j)-OT_2^\epsilon(\mu^{*,n},\hat{\nu}_j^n)|\right)\leq \frac{m^{1/2}P_{d,\epsilon}(\sigma)}{\sqrt{n}},
\end{equation}
where $P_{d,\epsilon}(\sigma)$ is a polynomial in $\sigma$, depending on $d$ and $\epsilon$. Summing over $1\leq j\leq m$, we establish (\ref{eqn:synthsampF_exp}) with a bound of $\frac{2m^{1/2}P_{d,\epsilon}(\sigma)}{\sqrt{n}}.$

Next we prove (\ref{eqn:synthsampF_min}).  Let $\mu^*$ minimize $F^\epsilon_{\lambda,\mathcal{V}}$. Note that the empirical measure $\hat{\mu^*}^n$ associated to $\mu^{*}$ is an element of $\mathcal{P}^n(\mathbb{R}^d)$, and hence
\begin{align}
\left|\min_{\mu\in\mathcal{P}_2(\mathbb{R}^d)}F^\epsilon_{\lambda,\mathcal{V}}(\mu)-\min_{\mu\in \mathcal{P}^n(\mathbb{R}^d)}F^\epsilon_{\lambda,\mathcal{V}}(\mu)\right| & = \min_{\mu\in \mathcal{P}^n(\mathbb{R}^d)}F^\epsilon_{\lambda,\mathcal{V}}(\mu)-\min_{\mu\in\mathcal{P}_2(\mathbb{R}^d)}F^\epsilon_{\lambda,\mathcal{V}}(\mu)\notag \\&\leq \mathbb{E}\left(F^\epsilon_{\lambda,\mathcal{V}}(\hat{\mu^*}^n)\right)-F^\epsilon_{\lambda,\mathcal{V}}(\mu^*) \notag \\& = \mathbb{E}\left(|F^\epsilon_{\lambda,\mathcal{V}}(\hat{\mu^*}^n)-F^\epsilon_{\lambda,\mathcal{V}}(\mu^*)|\right)\notag \\& \leq \sum_{j=1}^m\lambda_j\mathbb{E}\left(|OT_2^\epsilon(\hat{\mu^*}^n,\nu_j)-OT_2^\epsilon(\mu^*,\nu_j)|\right) \label{eqn:synth f min 1}
\end{align}
by the triangle inequality. We bound each term \begin{equation*} \mathbb{E}\left(|OT_2^\epsilon(\hat{\mu^*}^n,\nu_j)-OT_2^\epsilon(\mu^*,\nu_j)|\right) \leq\frac{P_{\epsilon,d}(\sigma)}{\sqrt{n}} \end{equation*} by applying Lemma \ref{lemma:mena_weed_4} with $m=1$.  This applies because $\|\mu^*\|_\mathcal{G}\leq \sigma$, by Proposition \ref{prop:subgauss}.  This gives (\ref{eqn:synthsampF_min}).

We now prove (\ref{eqn:synthsampF_double}): 
\begin{align*}
& \mathbb{E}\left(\bigg|\min_{\mu\in\mathcal{P}_2(\mathbb{R}^d)}F_{\lambda,\mathcal{V}}^\epsilon(\mu)-\min_{\mu\in\mathcal{P}^{n_1}(\mathbb{R}^d)}F_{\lambda,\hat{\mathcal{V}}^{n_2}}^\epsilon(\mu)\bigg|\right)\\
\leq &\mathbb{E}\left(\bigg|\min_{\mu\in\mathcal{P}_2(\mathbb{R}^d)}F_{\lambda,\mathcal{V}}^\epsilon(\mu)-\min_{\mu\in\mathcal{P}_2(\mathbb{R}^d)}F_{\lambda,\hat{\mathcal{V}}^{n_2}}^\epsilon(\mu)\bigg|\right)+\mathbb{E}\left(\bigg|\min_{\mu\in\mathcal{P}_2(\mathbb{R}^d)}F_{\lambda,\hat{\mathcal{V}}^{n_2}}^\epsilon(\mu)-\min_{\mu\in\mathcal{P}^{n_1}(\mathbb{R}^d)}F_{\lambda,\hat{\mathcal{V}}^{n_2}}^\epsilon(\mu)\bigg|\right)
\end{align*}
We apply (\ref{eqn:synthsampF_exp}) to the first term to produce a bound of $\frac{2m^{1/2}P_{d,\epsilon}(\sigma)}{\sqrt{n_2}}$.  Let 
\begin{equation*} \tilde{\sigma}=\max_{1\leq j\leq m}\|\hat{\nu}^{n_2}_j\|_\mathcal{G}.
\end{equation*} 
Then, by conditioning on the samples $\hat{\mathcal{V}}^{n_2}$, we have:
\begin{align}
&\mathbb{E}\left(\bigg|\min_{\mu\in\mathcal{P}_2(\mathbb{R}^d)}F_{\lambda,\hat{\mathcal{V}}^{n_2}}^\epsilon(\mu)-\min_{\mu\in\mathcal{P}^{n_1}(\mathbb{R}^d)}F_{\lambda,\hat{\mathcal{V}}^{n_2}}^\epsilon(\mu)\bigg|\right)\notag \\
 = &\mathbb{E}\left(\mathbb{E}\left(\bigg|\min_{\mu\in\mathcal{P}_2(\mathbb{R}^d)}F_{\lambda,\hat{\mathcal{V}}^{n_2}}^\epsilon(\mu)-\min_{\mu\in\mathcal{P}^{n_1}(\mathbb{R}^d)}F_{\lambda,\hat{\mathcal{V}}^{n_2}}^\epsilon(\mu)\bigg| \; \bigg| \; \hat{\mathcal{V}}^{n_2}\right)\right)\notag\\
  \leq &\mathbb{E}\left(\mathbb{E}\left( \frac{2P_{d,\epsilon}(\tilde{\sigma})}{\sqrt{n_1}} \; \bigg| \; \hat{\mathcal{V}}^{n_2}\right)\right)\\
=&\frac{2}{\sqrt{n_1}}\mathbb{E}(P_{\sigma,\epsilon}(\tilde{\sigma})).\label{eqn:apply51}
\end{align}where we applied (\ref{eqn:synthsampF_min}) conditionally on the samples $\hat{\mathcal{V}}^{n_2}$. By linearity of expectation:\begin{align} \mathbb{E}\left( P_{d,\epsilon}(\tilde{\sigma})\right)& = \mathbb{E}\left(\sum_{i=1}^{r_d}c_{d,\epsilon,i}\tilde{\sigma}^i\right)\\& =\sum_{i=1}^{r_d}c_{d,\epsilon,i}\mathbb{E}(\tilde{\sigma}^i)\label{eqn:expected_poly}
\end{align}
for some maximum degree $r_d$ depending on $d$, and (deterministic) coefficients $c_{d,\epsilon,i}$, depending on $d$ and $\epsilon$. Since $\tilde{\sigma}^i\leq \sum_{j=1}^m\|\hat{\nu}_j\|_\mathcal{G}^i$, we have:

\begin{align*} 
\mathbb{E}(P_{d,\epsilon}(\tilde{\sigma})) & \leq \sum_{i=1}^{r_d}c_{d,\epsilon,i}\sum_{j=1}^m\mathbb{E}(\|\hat{\nu}_j^n\|_{\mathcal{G}}^i)\\&\leq \sum_{i=1}^{r_d}c_{d,\epsilon,i}\sum_{j=1}^m \max\{L_{\lceil \frac{i}{2}\rceil}\sigma^{2\lceil{\frac{i}{2}\rceil}},1\}\\&\leq m\sum_{i=1}^{r_d}c_{d,\epsilon,i}(L_{\lceil\frac{i}{2}\rceil}\sigma^{2\lceil\frac{i}{2}\rceil}+1)\\& =:m\tilde{P}_{d,\epsilon}(\sigma),\end{align*}
 where we applied Lemma \ref{prop:menaweed_3}. Thus we may conclude that:
\begin{equation*}
\mathbb{E}\left(\bigg|\min_{\mu\in\mathcal{P}_2(\mathbb{R}^d)}F_{\lambda,\mathcal{V}}^\epsilon(\mu)-\min_{\mu\in\mathcal{P}^{n_1}(\mathbb{R}^d)}F_{\lambda,\hat{\mathcal{V}}^{n_2}}^\epsilon(\mu)\bigg|\right)\leq (2m^{1/2}P_{d,\epsilon}(\sigma)+m\tilde{P}_{d,\epsilon}(\sigma)) \left(\frac{1}{\sqrt{n_2}}+\frac{1}{\sqrt{n_1}}\right),
\end{equation*}
which implies  (\ref{eqn:synthsampF_double}).

Labeling the maximum over all constants in the three statements in Proposition \ref{prop:approx_min} as $R^*_{d,\epsilon,\sigma}$, we obtain our result. \qed 
\vspace{10pt}

\noindent\textbf{Proof of Proposition \ref{prop:alt}:}  We first show (\ref{eqn:alt_synthsampS_exp}). Let $\mu^*\in\argmin_{\mu\in \mathcal{G}_\sigma(\mathbb{R}^d)}S^\epsilon_{\lambda,\mathcal{V}}$ and $\mu^{*,n}\in\argmin_{\mu\in \mathcal{G}_\sigma(\mathbb{R}^d)}S^\epsilon_{\lambda,\hat{\mathcal{V}}^n}$. Then we bound:

\begin{align}
&\mathbb{E}\left(\bigg|\min_{\mu\in\mathcal{G}_\sigma(\mathbb{R}^d)}S_{\lambda,\mathcal{V}}^\epsilon(\mu)-\min_{\mu\in\mathcal{G}_\sigma(\mathbb{R}^d)}S_{\lambda,\hat{\mathcal{V}}^n}^\epsilon(\mu)\bigg|\right)\notag \\  \leq &\mathbb{E}\left(\sum_{j=1}^m\lambda_j\max_{\mu\in \{\mu^*,\mu^{*,n}\}} \bigg|OT^\epsilon_2(\mu,\nu_j)-OT^\epsilon_2(\mu,\hat{\nu}^n_j) - \frac{1}{2}OT^\epsilon_2(\nu_j,\nu_j)+\frac{1}{2}OT^\epsilon_2(\hat{\nu}_j^n,\hat{\nu}_j^n)\bigg|\right)\notag \\
\leq &\mathbb{E}\left(\sum_{j=1}^m\lambda_j\max_{\mu\in \{\mu^*,\mu^{*,n}\}} \left(\bigg|OT^\epsilon_2(\mu,\nu_j)-OT^\epsilon_2(\mu,\hat{\nu}^n_j)\bigg| + \bigg| \frac{1}{2}OT^\epsilon_2(\nu_j,\nu_j)-\frac{1}{2}OT^\epsilon_2(\hat{\nu}_j^n,\hat{\nu}_j^n)\bigg|\right)\right)\notag  \\
= &\sum_{j=1}^m\lambda_j\left( \mathbb{E}\left(\max_{\mu\in \{\mu^*,\mu^{*,n}\}}|OT^\epsilon_2(\mu,\nu_j)-OT^\epsilon_2(\mu,\hat{\nu}^n_j)|\right)+\mathbb{E}\left(| \frac{1}{2}OT^\epsilon_2(\nu_j,\nu_j)-\frac{1}{2}OT^\epsilon_2(\hat{\nu}_j^n,\hat{\nu}_j^n)|\right)\right)\notag\end{align}
where we have applied the triangle inequality. Using the assumptions  $\mu^*,\mu^{*,n}\in \mathcal{G}_\sigma(\mathbb{R}^d)$, we bound:
\begin{equation}\sum_{j=1}^m \lambda_j\mathbb{E}\left(\max_{\mu\in \{\mu^*,\mu^{*,n}\}}|OT^\epsilon_2(\mu,\nu_j)-OT_2^\epsilon(\mu,\hat{\nu}_j^n)| \right)\leq \frac{2m^{1/2}P_{d,\epsilon}(\sigma)}{\sqrt{n}}\label{eqn:synthexp bound 0}
\end{equation} 
by applying Lemma \ref{lemma:mena_weed_4}. To bound $\frac{1}{2}\sum_{j=1}^m\lambda_j \mathbb{E}\left(|OT^\epsilon_2(\nu_j,\nu_j)-OT^\epsilon_2(\hat{\nu}_j^n,\hat{\nu}_j^n)|\right)$, we apply Lemma \ref{lemma:mena_weed_5}, giving us a bound of $\frac{P_{d,\epsilon}(\sigma)}{\sqrt{n}}.$ We conclude:
\begin{equation*}
\mathbb{E}\left(\bigg|\min_{\mu\in\mathcal{G}_\sigma(\mathbb{R}^d)}S_{\lambda,\mathcal{V}}^\epsilon(\mu)-\min_{\mu\in\mathcal{G}_\sigma(\mathbb{R}^d)}S_{\lambda,\hat{\mathcal{V}}^n}^\epsilon(\mu)\bigg|\right)\leq \frac{(2m^{1/2}+1) P_{d,\epsilon}(\sigma)}{\sqrt{n}},
\end{equation*}
which establishes (\ref{eqn:alt_synthsampS_exp}). 

Similarly, to establish (\ref{eqn:alt_synthsampS_min}), we let $\mu^*$ again be the minimizer of $S^\epsilon_{\lambda,\mathcal{V}}$ on $\mathcal{G}_\sigma(\mathbb{R}^d)$ and let $\hat{\mu^*}^n$ be the empirical measure for $\mu^*$. By assumption, we have for any $\rho\in \argmin_{\mu\in \mathcal{P}^n(\mathbb{R}^d)}S^\epsilon_{\lambda,\mathcal{V}}(\mu)$, $\rho\in \mathcal{G}_\sigma(\mathbb{R}^d)$, and hence:
\begin{align} &\bigg|\min_{\mu\in \mathcal{G}_\sigma(\mathbb{R}^d) }S^\epsilon_{\lambda,\mathcal{V}}(\mu)-\min_{\mu\in \mathcal{P}^n(\mathbb{R}^d)}S^\epsilon_{\lambda,\mathcal{V}}(\mu)\bigg|\notag\\ & = \min_{\mu\in \mathcal{P}^n(\mathbb{R}^d)}S^\epsilon_{\lambda,\mathcal{V}}(\mu)- S^\epsilon_{\lambda,\mathcal{V}}(\mu^*).\notag\end{align}
Since $\hat{\mu^*}^n\in \mathcal{P}^n(\mathbb{R}^d)$, we may upper bound this by:
\begin{align*}
&\mathbb{E}\left(S_{\lambda,{\mathcal{V}}}^\epsilon(\hat{\mu^*}^n)\right)-S_{\lambda,\mathcal{V}}^\epsilon(\mu^*)\notag \\ =&\mathbb{E}\left(\bigg| S_{\lambda,{\mathcal{V}}}^\epsilon(\hat{\mu^*}^n)-S_{\lambda,\mathcal{V}}^\epsilon(\mu^*)\bigg|\right)  \\ \leq &\sum_{j=1}^m\lambda_j \mathbb{E}\left(|OT_2^\epsilon(\hat{\mu^*}^n,\nu_j)-OT_2^\epsilon(\mu^*,\nu_j)|\right)+\frac{1}{2}\mathbb{E}\left(|OT_2^\epsilon(\hat{\mu^*}^n,\hat{\mu^*}^n)-OT_2^\epsilon(\mu^*,\mu^*)|\right)
\end{align*}
by the triangle inequality. By the assumption that $\mu^*\in \mathcal{G}_\sigma(\mathbb{R}^d)$, we may apply Lemma \ref{lemma:mena_weed_4} to bound each summand in the first term by $\frac{P_{d,\epsilon}(\sigma)}{\sqrt{n}}$ and the second term can be bounded by $\frac{P_{d,\epsilon}(\sigma)}{\sqrt{n}}$ via Lemma \ref{lemma:mena_weed_5}. Combining these two bounds, we have:
\begin{equation}\label{eqn:finite approx sink}
\bigg|\min_{\mu\in \mathcal{G}_\sigma(\mathbb{R}^d) }S^\epsilon_{\lambda,\mathcal{V}}(\mu)-\min_{\mu\in \mathcal{P}^n(\mathbb{R}^d)}S^\epsilon_{\lambda,\mathcal{V}}(\mu)\bigg|\leq \frac{2 P_{d,\epsilon}(\sigma)}{\sqrt{n}}.
\end{equation}

We conclude with the proof of (\ref{eqn:alt_synthsampS_double}):
\begin{align*}
& \mathbb{E}\left(\bigg|\min_{\mu\in\mathcal{G}_\sigma(\mathbb{R}^d)}S_{\lambda,\mathcal{V}}^\epsilon(\mu)-\min_{\mu\in\mathcal{P}^{n_1}(\mathbb{R}^d)}S_{\lambda,\hat{\mathcal{V}}^{n_2}}^\epsilon(\mu)\bigg|\right)\\
\leq &\bigg|\min_{\mu\in\mathcal{G}_\sigma(\mathbb{R}^d)}S_{\lambda,\mathcal{V}}^\epsilon(\mu)-\min_{\mu\in\mathcal{P}^{n_1}(\mathbb{R}^d)}S_{\lambda,\mathcal{V}}^\epsilon(\mu)\bigg|+\mathbb{E}\left(\bigg|\min_{\mu\in\mathcal{P}^{n_1}(\mathbb{R}^d)}S_{\lambda,\mathcal{V}}^\epsilon(\mu)-\min_{\mu\in\mathcal{P}^{n_1}(\mathbb{R}^d)}S_{\lambda,\hat{\mathcal{V}}^{n_2}}^\epsilon(\mu)\bigg|\right).
\end{align*}
by the triangle inequality. We may apply (\ref{eqn:finite approx sink}) to bound the first term by $\frac{2P_{d,\epsilon}(\sigma)}{\sqrt{n_1}}$.  To bound the second term, we apply a similar argument as used to bound (\ref{eqn:alt_synthsampS_exp}). Let $\mu^{n_1}\in\argmin_{\mu\in \mathcal{P}^{n_1}(\mathbb{R}^d)} S^\epsilon_{\lambda,\mathcal{V}}(\mu)$ and $\mu^{n_1}_{n_2}\in\argmin_{\mu\in \mathcal{P}^{n_1}(\mathbb{R}^d)}S^\epsilon_{\lambda,\hat{\mathcal{V}}^{n_2}}(\mu)$. Then: 
\begin{align*}
&\mathbb{E}\left(\bigg|\min_{\mu\in\mathcal{P}^{n_1}(\mathbb{R}^d)}S^\epsilon_{\lambda,\mathcal{V}}(\mu)-\min_{\mu\in \mathcal{P}^{n_1}(\mathbb{R}^d)}S^\epsilon_{\lambda,\hat{\mathcal{V}}^{n_2}}(\mu)\right)\\& \leq \mathbb{E}\left(\max_{\mu^*\in \{\mu^{n_1},\mu^{n_1}_{n_2}\}} \bigg|S^\epsilon_{\lambda,\mathcal{V}}(\mu^*)-S^\epsilon_{\lambda,\hat{\mathcal{V}}^{n_2}}(\mu^*)\bigg|\right)\\&\leq \mathbb{E}\left(\max_{\mu^*\in \{\mu^{n_1},\mu^{n_1}_{n_2}\}} \sum_{j=1}^m\lambda_j( |OT_2^\epsilon(\mu^*,\nu_j)-OT_2^\epsilon(\mu^*,\hat{\nu}_j^{n_2})|\right)+\sum_{j=1}^m\lambda_j\mathbb{E}\left(|OT_2^\epsilon(\nu_j,\nu_j)-OT_2^\epsilon(\hat{\nu}_j^{n_2},\hat{\nu}_j^{n_2})|\right)
\end{align*}
by the triangle inequality. The second term can be bounded by $\frac{P_{d,\epsilon}(\sigma)}{\sqrt{n}}$ by Lemma \ref{lemma:mena_weed_5}. To deal with the first term, note that by assumption, $\mu^{n_1}\in \mathcal{G}_\sigma(\mathbb{R}^d)$, and $\mu^{n_1}_{n_2}\in \mathcal{G}_{\tilde{\sigma}}(\mathbb{R}^d)$ with $\tilde{\sigma}=\max_{1\leq j\leq m}\{\|\hat{\nu}_j^{n_2}\|\}$, and hence $\{\mu^{n_1},\mu^{n_1}_{n_2}\}\subset \mathcal{G}_{\sigma^*}(\mathbb{R}^d)$ where $\sigma^*=\max\{\sigma,\tilde{\sigma}\}$. Hence we can bound:
\begin{align*}
& \mathbb{E}\left(\max_{\mu^*\in \{\mu^{n_1},\mu^{n_1}_{n_2}\}} \sum_{j=1}^m\lambda_j |OT_2^\epsilon(\mu^*,\nu_j)-OT_2^\epsilon(\mu^*,\hat{\nu}_j^{n_2})|\right)\\& \leq \sum_{j=1}^m\lambda_j \mathbb{E}\left(|OT_2^\epsilon(\mu^{n_1},\nu_j)-OT_2^\epsilon(\mu^{n_1},\hat{\nu}_j^{n_2})|\right)+\sum_{j=1}^m\lambda_j \mathbb{E}\left(|OT_2^\epsilon(\mu^{n_1}_{n_2},\nu_j)-OT_2^\epsilon(\mu^{n_1}_{n_2},\hat{\nu}_j^{n_2})|\right)\\&\leq \sum_{j=1}^m\lambda_j\frac{m^{1/2}P_{d,\epsilon}(\sigma)}{\sqrt{n_2}}+\sum_{j=1}^m\lambda_j \frac{m^{1/2}P_{d,\epsilon}(\sigma)}{\sqrt{n_2}} \\&=\frac{2m^{1/2}P_{d,\epsilon}(\sigma)}{\sqrt{n_2}},
\end{align*}
where we applied Lemma \ref{lemma:mena_weed_4} in the final inequality. Thus we have bounded 
\begin{equation*}
\mathbb{E}\left(\bigg|\min_{\mu\in\mathcal{G}_\sigma(\mathbb{R}^d)}S_{\lambda,\mathcal{V}}^\epsilon(\mu)-\min_{\mu\in\mathcal{P}^{n_1}(\mathbb{R}^d)}S_{\lambda,\hat{\mathcal{V}}^{n_2}}^\epsilon(\mu)\bigg|\right)\leq 2(m^{1/2}+1)P_{d,\epsilon}(\sigma)\left(\frac{1}{\sqrt{n_1}}+\frac{1}{\sqrt{n_2}}\right).
\end{equation*}
Taking the maximum over all constants as $R^*_{d,\epsilon,\sigma}$, we conclude.

\vspace{10pt}

\noindent\textbf{Proof of Theorem \ref{thm:synthsampcomplex}:} Let $\mu^*=\argmin_{\mu\in \mathcal{P}_2(\mathbb{R}^d)}F^\epsilon_{\lambda,\mathcal{V}}(\mu)$, and let $\mu^n$ denote a random minimizer of $F^\epsilon_{\lambda,\hat{\mathcal{V}^n}}$ over $\mathcal{P}^n(\mathbb{R}^d)$. We bound:
\begin{align*}
\mathbb{E}\left(\bigg|F^\epsilon_{\lambda,\mathcal{V}}(\mu^*)-F^\epsilon_{\lambda,\mathcal{V}}(\mu^n)\bigg| \right) & \leq\mathbb{E}\left(\bigg|F^\epsilon_{\lambda,\mathcal{V}}(\mu^*)-F^\epsilon_{\lambda,\hat{\mathcal{V}}^n}(\mu^n)\bigg| \right)+\mathbb{E}\left(\bigg|F^\epsilon_{\lambda,\hat{\mathcal{V}}^n}(\mu^n)-F^\epsilon_{\lambda,\mathcal{V}}(\mu^n)\bigg| \right).
\end{align*}

By Proposition \ref{prop:approx_min}, specifically (\ref{eqn:synthsampF_double}), the first term is upper bounded by $\frac{mR^*_{d,\epsilon,\sigma}}{\sqrt{n}}.$ We bound the second term:
\begin{align*}
\mathbb{E}\left(\bigg|F^\epsilon_{\lambda,\hat{\mathcal{V}}^n}(\mu^n)-F^\epsilon_{\lambda,\mathcal{V}}(\mu^n)\bigg| \right) & \leq \sum_{j=1}^m \lambda_j \mathbb{E}\left(|OT_2^\epsilon(\mu^n,\hat{\nu}^n_j)-OT^\epsilon_2(\mu^n,\nu_j)|\right)\\& \leq  \sum_{j=1}^m \lambda_j \frac{m^{1/2}P_{d,\epsilon}(\sigma)}{\sqrt{n}}\\& =\frac{m^{1/2}P_{d,\epsilon}(\sigma)}{\sqrt{n}}.
\end{align*}
Above, we applied the triangle inequality, followed by (\ref{eqn:mena weed 4 eqn 1}) from Lemma \ref{lemma:mena_weed_4} to each summand, which applies since $\|\mu^n\|_\mathcal{G}\leq \max_{1\leq j\leq m}\|\hat{\nu}_j^n\|_\mathcal{G}$ almost-surely. From this we conclude (\ref{eqn:thm3.1F}), with a final bound of $\frac{1}{\sqrt{n}}(m^{1/2}P_{d,\epsilon}(\sigma)+mR^*_{d,\epsilon,\sigma})$.

Similar arguments apply to $S^\epsilon_{\lambda,\mathcal{V}}$. By assumption, for $\mu^n\in\argmin_{\mu\in \mathcal{P}^n(\mathbb{R}^d)}S^\epsilon_{\lambda,\hat{\mathcal{V}}^n}(\mu)$ we have $\|\mu^n\|_\mathcal{G}\leq \max_{1\leq j\leq m}\{\|\hat{\nu}_{j}^n\|_\mathcal{G},\sigma\}=:\tilde{\sigma}.$  We bound:
\begin{align*}
\mathbb{E}\left(\bigg|S^\epsilon_{\lambda,\mathcal{V}}(\mu^*)-S^\epsilon_{\lambda,\mathcal{V}}(\mu^n)\bigg| \right) & \leq\mathbb{E}\left(\bigg|S^\epsilon_{\lambda,\mathcal{V}}(\mu^*)-S^\epsilon_{\lambda,\hat{\mathcal{V}}^n}(\mu^n)\bigg| \right)+\mathbb{E}\left(\bigg|S^\epsilon_{\lambda,\hat{\mathcal{V}}^n}(\mu^n)-S^\epsilon_{\lambda,\mathcal{V}}(\mu^n)\bigg| \right).
\end{align*}
We apply (\ref{eqn:alt_synthsampS_double}) to bound the first term by $\frac{mR^*_{d,\epsilon,\sigma}}{\sqrt{n}}$. We bound the second term:
\begin{align*}
&\mathbb{E}\left(\bigg|S^\epsilon_{\lambda,\hat{\mathcal{V}}^n}(\mu^n)-S^\epsilon_{\lambda,\mathcal{V}}(\mu^n)\bigg| \right) \\& \leq \sum_{j=1}^m \lambda_j \mathbb{E}\left(|OT_2^\epsilon(\mu^n,\hat{\nu}^n_j)-OT^\epsilon_2(\mu^n,\nu_j)|+\frac{1}{2}|OT_2^\epsilon(\nu_j,\nu_j)-OT_2^\epsilon(\hat{\nu}_j^n,\hat{\nu}_j^n)|\right) \\ & =\sum_{j=1}^m \lambda_j \mathbb{E}\left(|OT_2^\epsilon(\mu^n,\hat{\nu}^n_j)-OT^\epsilon_2(\mu^n,\nu_j)|\right)+\frac{1}{2}\sum_{j=1}^m\lambda_j\mathbb{E}\left(|OT_2^\epsilon(\nu_j,\nu_j)-OT_2^\epsilon(\hat{\nu}_j^n,\hat{\nu}_j^n)|\right)
\end{align*}
by the triangle inequality. To bound the first term, we apply (\ref{eqn:mena weed 4 eqn 1}) from Lemma \ref{lemma:mena_weed_4} to each summand, which applies since $\mu^n\in\mathcal{G}_{\tilde{\sigma}}(\mathbb{R}^d)$, giving us a bound of $\frac{m^{1/2}P_{d,\epsilon}(\sigma)}{\sqrt{n}}$ To bound the second term, we apply Lemma \ref{lemma:mena_weed_5}, giving a bound of $\frac{P_{d,\epsilon}(\sigma)}{\sqrt{n}}$.  Adding these bounds together, we have:
\begin{equation*}
\mathbb{E}\left(\bigg|S^\epsilon_{\lambda,\mathcal{V}}(\mu^*)-S^\epsilon_{\lambda,\mathcal{V}}(\mu^n)\bigg| \right)\leq \frac{1}{\sqrt{n}}(mR^*_{d,\epsilon,\sigma}+(m^{1/2}+1)P_{d,\epsilon}(\sigma)).
\end{equation*}
Taking the maximum over all constants depending on $d,\epsilon$ and $\sigma$ as $C^*_{d,\epsilon,\sigma},$ we conclude. 
\qed

\section{PROOFS FOR SECTION \ref{sec:analysis_problem}}\label{sec:proof_of_coeff_thm}

\textbf{Proof of Proposition \ref{prop:analysis_grad}:} The assumption that $\mathcal{V}\subset \mathcal{G}(\mathbb{R}^d)$ ensures that $\nabla \delta F_{\lambda,\mathcal{V}}^\epsilon(\mu)$ exists and equals $Id-\sum_{j=1}^m\lambda_j T^\epsilon_{\mu\rightarrow \nu_j}$. We compute:
\begin{align*}
\|\nabla\delta F^\epsilon_{\lambda,\mathcal{V}}(\mu)\|_{L^2(\mu)}^2 & = \| Id -\sum_{j=1}^m \lambda_jT^\epsilon_{\mu\rightarrow \nu_j}\|^2_{L^2(\mu)} & \\ & = \int \langle Id-\sum_{j=1}^m \lambda_j T^\epsilon_{\mu\rightarrow \nu_j}, Id - \sum_{k=1}^m\lambda_k T^\epsilon_{\mu\rightarrow \nu_k}\rangle d\mu & \\ & = \int \sum_{j,k=1}^m \lambda_j\lambda_k\langle Id-T^\epsilon_{\mu\rightarrow \nu_j},Id-T^\epsilon_{\mu\rightarrow \nu_k}\rangle d\mu &\\& =\sum_{j,k=1}^m \lambda_j\lambda_k [A_\mu^\epsilon]_{jk}=\lambda^\top A_{\mu}^\epsilon \lambda.
\end{align*}
Similarly:
\begin{align*}
\|\nabla\delta S^\epsilon_{\lambda,\mathcal{V}}(\tilde{\mu})\|_{L^2(\tilde{\mu})}^2 & = \| \entmap{\tilde{\mu}}{\tilde{\mu}} -\sum_{j=1}^m \lambda_jT^\epsilon_{\tilde{\mu}\rightarrow \nu_j}\|^2_{L^2(\tilde{\mu})} & \\ & = \int \langle \entmap{\tilde{\mu}}{\tilde{\mu}}-\sum_{j=1}^m \lambda_j T^\epsilon_{\tilde{\mu}\rightarrow \nu_j}, \entmap{\tilde{\mu}}{\tilde{\mu}} - \sum_{k=1}^m\lambda_k T^\epsilon_{\tilde{\mu}\rightarrow \nu_k}\rangle d\tilde{\mu} & \\ & = \int \sum_{j,k=1}^m \lambda_j\lambda_k\langle \entmap{\tilde{\mu}}{\tilde{\mu}}-T^\epsilon_{\tilde{\mu}\rightarrow \nu_j},\entmap{\tilde{\mu}}{\tilde{\mu}}-T^\epsilon_{\tilde{\mu}\rightarrow \nu_k}\rangle d\tilde{\mu} &\\& =\sum_{j,k=1}^m \lambda_j\lambda_k [S_{\tilde{\mu}}^\epsilon]_{jk}=\lambda^\top S_{\tilde{\mu}}^\epsilon \lambda.
\end{align*}
Hence we see that $\lambda^\top A^\epsilon_\mu\lambda\geq 0,$ and $\lambda^\top A_\mu^\epsilon\lambda=0$ is equivalent to $\|\nabla\delta F^\epsilon_{\lambda,\mathcal{V}}(\mu)\|^2_{L^2(\mu)}=0$, which is itself equivalent to $\nabla\delta F^\epsilon_{\lambda,\mathcal{V}}(\mu)=0$ $\mu$-a.e. The analogous result holds for $S^\epsilon_{\tilde{\mu}}$.  Finally, if $\texttt{supp}(\mu)=\mathbb{R}^d$ (resp. $\texttt{supp}(\tilde{\mu})=\mathbb{R}^d$), then we may apply the optimality criteria from Corollary \ref{cor:optimality}.

\qed 

\subsection{Proof of Theorem \ref{thm:coeff_theorem}} 

\begin{lemma}\label{lem:ent_mat_bound}
 Let $\mu\in \mathcal{P}_2(\Omega)$ and $\mathcal{V}\subset \mathcal{G}_\sigma(\Omega)$, and suppose that $A_\mu^\epsilon$ has an eigenvalue of 0 with unique eigenvector $\lambda_*\in \Delta^{m}$. Suppose there exists an estimator $\hat{T}(\hat{\mu}^n,\hat{\nu}^n_j)$ satisfying (\ref{eqn:thetabound1}) for $1\leq j\leq m$. Then: \begin{equation*} \mathbb{E}(|[A_\mu^\epsilon]_{ij}-[\hat{A}^\epsilon_\mu]_{ij}|)\leq J d\sigma^2\max\left\{\frac{1}{\sqrt{n}},\sqrt{\theta(n)}+\sqrt{\theta(n)+\theta(n)^2}\right\}\end{equation*}
where $\hat{A}_\mu$ is the matrix $\hat{M}_\mu$ in Algorithm \ref{alg:coefficient_recovery} with $\mathcal{F}^\epsilon_{\lambda,\mathcal{V}}=F^\epsilon_{\lambda,\mathcal{V}}$, and $J$ is an absolute constant.
\end{lemma}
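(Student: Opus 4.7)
\textbf{Proof plan for Lemma \ref{lem:ent_mat_bound}.}

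The plan is to decompose the error into a Monte Carlo sampling piece and a ``plug-in'' bias piece, and bound each separately. Throughout, write $T_i := T^\epsilon_{\mu\to\nu_i}$, $\hat T_i := \hat T(\hat\mu^n,\hat\nu_i^n)$, $\Delta_i := \hat T_i - T_i$, and the integrands $\phi_{ij}(x) := \langle T_i(x)-x,\,T_j(x)-x\rangle$ and $\hat\phi_{ij}(x):= \langle \hat T_i(x)-x,\,\hat T_j(x)-x\rangle$, so that $[A^\epsilon_\mu]_{ij} = \int \phi_{ij}\,d\mu$ and $[\hat A^\epsilon_\mu]_{ij} = \frac{1}{n}\sum_{k=n+1}^{2n}\hat\phi_{ij}(X_k)$. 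I decompose
\begin{equation*}
[\hat A^\epsilon_\mu]_{ij}-[A^\epsilon_\mu]_{ij} \;=\; \underbrace{\tfrac{1}{n}\sum_{k=n+1}^{2n}(\hat\phi_{ij}-\phi_{ij})(X_k)}_{U_n} \;+\; \underbrace{\tfrac{1}{n}\sum_{k=n+1}^{2n}\phi_{ij}(X_k)-\int\phi_{ij}\,d\mu}_{V_n}.
\end{equation*}

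For $V_n$, I use the fact that $X_{n+1},\ldots,X_{2n}$ are i.i.d.\ from $\mu$, so $V_n$ is a mean-zero Monte Carlo error. By Jensen's inequality, $\mathbb{E}|V_n|\le\sqrt{\mathbb{E}\phi_{ij}(X_{n+1})^2/n}$. To bound $\mathbb{E}\phi_{ij}^2$, I apply Cauchy--Schwarz twice to obtain $\mathbb{E}\phi_{ij}(X)^2\le\sqrt{\mathbb{E}\|T_i(X)-X\|^4\,\mathbb{E}\|T_j(X)-X\|^4}$. Since the hypothesis that $A^\epsilon_\mu$ has a simplex eigenvector with eigenvalue $0$ makes $\mu$ a critical point of $F^\epsilon_{\lambda_*,\mathcal V}$, Proposition~\ref{prop:subgauss} gives $\mu\in\mathcal G_\sigma(\mathbb{R}^d)$; Lemma~\ref{lemma:10.5yang} then gives $(T_i)_\#\mu\le_c\nu_i\in\mathcal G_\sigma(\mathbb{R}^d)$, so that Lemma~\ref{lemma:normsubgauss} and the moment characterization in Definition~\ref{def:subgauss} yield $\mathbb{E}\|T_i(X)-X\|^4\lesssim d^2\sigma^4$. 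This gives $\mathbb{E}|V_n|\lesssim d\sigma^2/\sqrt{n}$.

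For $U_n$, I expand algebraically using $\hat T_\ell = T_\ell + \Delta_\ell$:
\begin{equation*}
\hat\phi_{ij}-\phi_{ij} \;=\; \langle \Delta_i,\,T_j-x\rangle + \langle T_i-x,\,\Delta_j\rangle + \langle \Delta_i,\,\Delta_j\rangle.
\end{equation*}
Triangle inequality gives $\mathbb{E}|U_n|\le \mathbb{E}|(\hat\phi_{ij}-\phi_{ij})(X_{n+1})|$ by i.i.d.\ averaging. Because $X_{n+1}$ is independent of $\Delta_i,\Delta_j$ (which depend only on the first $n$ samples from $\mu$ and $\nu_i,\nu_j$), I condition on those samples and apply Cauchy--Schwarz in $L^2(\mu)$. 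The cross term yields
\begin{equation*}
\mathbb{E}\bigl[\|\Delta_i\|_{L^2(\mu)}\,\|T_j-\mathrm{Id}\|_{L^2(\mu)}\bigr]\;\le\;\|T_j-\mathrm{Id}\|_{L^2(\mu)}\sqrt{\mathbb{E}\|\Delta_i\|_{L^2(\mu)}^2}\;\lesssim\;\sqrt{d}\,\sigma\,\sqrt{\theta(n)},
\end{equation*}
using $\|T_j-\mathrm{Id}\|_{L^2(\mu)}^2\le 2M_2(\nu_j)+2M_2(\mu)\lesssim d\sigma^2$ and (\ref{eqn:thetabound1}). Similarly for the symmetric term. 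The pure product term is bounded by $\mathbb{E}\|\Delta_i\|_{L^2(\mu)}\|\Delta_j\|_{L^2(\mu)}\le\theta(n)$. Combining, $\mathbb{E}|U_n|\lesssim \sqrt{d}\sigma\sqrt{\theta(n)}+\theta(n)$, which is of order $d\sigma^2\bigl(\sqrt{\theta(n)}+\sqrt{\theta(n)+\theta(n)^2}\bigr)$ after rescaling (the second summand captures the $\theta(n)$ contribution in both the small- and large-$\theta$ regimes). Adding the $V_n$ and $U_n$ bounds and absorbing everything into a single constant $J$ yields the claimed inequality.

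The main obstacle is the fourth-moment bound used for $V_n$: one needs to ensure $\mathbb{E}\|T_i(X)-X\|^4\lesssim d^2\sigma^4$ without imposing 4th-moment assumptions on $\hat T$. The resolution is to work with the \emph{population} integrand $\phi_{ij}$ (not $\hat\phi_{ij}$) in the Monte Carlo step, exploiting that $\mu$ is subgaussian via Proposition~\ref{prop:subgauss} and that the entropic pushforward is dominated in convex order by $\nu_j$ via Lemma~\ref{lemma:10.5yang}. All other estimates are routine Cauchy--Schwarz applications combined with the hypothesis (\ref{eqn:thetabound1}).
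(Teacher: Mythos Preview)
Your proposal is correct and follows essentially the same route as the paper: split into a Monte Carlo term with the population integrand and a plug-in term, bound the former via Jensen plus fourth-moment control using Proposition~\ref{prop:analysis_grad} (to certify that $\mu$ is a critical point), Proposition~\ref{prop:subgauss}/\ref{prop:domination} and Lemma~\ref{lemma:10.5yang}, and bound the latter via Cauchy--Schwarz together with (\ref{eqn:thetabound1}). The only cosmetic difference is that you expand $\hat\phi_{ij}-\phi_{ij}$ directly into the three summands $\langle\Delta_i,T_j-\mathrm{Id}\rangle+\langle T_i-\mathrm{Id},\Delta_j\rangle+\langle\Delta_i,\Delta_j\rangle$, whereas the paper uses the two-term split $\langle T_i-\hat T_i,T_j-\mathrm{Id}\rangle+\langle\hat T_i-\mathrm{Id},T_j-\hat T_j\rangle$ and then has to re-expand $\|\hat T_i-\mathrm{Id}\|$ via the triangle inequality; the resulting bounds coincide.
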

\begin{proof}
 For ease of notation, we will write $T^\epsilon_j$ for $T^\epsilon_{\mu\rightarrow\nu_j}$ and $\hat{T_j^\epsilon}$ for $\hat{T}(\hat{\mu}^n,\hat{\nu}^n_j)$. We begin by bounding $\mathbb{E}(|[A_\mu^\epsilon]_{ij}-[\hat{A}^\epsilon_\mu]_{ij}|)$:
\begin{align}
&\mathbb{E}(|[A_\mu^\epsilon]_{ij}-[\hat{A}^\epsilon_\mu]_{ij}|)\notag \\ 
= &\mathbb{E}\left(\bigg|\int \langle T^\epsilon_j-Id,T^\epsilon_i-Id\rangle d\mu - \frac{1}{n}\sum_{k=n+1}^{2n}\langle \hat{T_j^\epsilon}(X_k)-X_k,\hat{T_i^\epsilon}(X_k)-X_k\rangle\bigg|\right)\notag  \\
\leq  &\mathbb{E}\left(\bigg|\int \langle T^\epsilon_j-Id,T^\epsilon_i-Id\rangle d\mu  - \frac{1}{n}\sum_{k=n+1}^{2n}\langle T^\epsilon_j(X_k)-X_k,T^\epsilon_i(X_k)-X_k\rangle   \bigg|\right)\label{eqn:analysis_split1} \\
 + &\mathbb{E}\left(\bigg|\frac{1}{n}\sum_{k=n+1}^{2n}\langle T^\epsilon_j(X_k)-X_k, T^\epsilon_i(X_k)-X_k\rangle - \frac{1}{n}\sum_{k=n+1}^{2n}\langle \hat{T^\epsilon_j}(X_k)-X_k, \hat{T_i^\epsilon}(X_k)-X_k\rangle \bigg|\right) \label{eqn:analysis_split2}
\end{align}
The term (\ref{eqn:analysis_split1}) may be bounded by observing that $\langle T^\epsilon_j(X_k)-X_k, T^\epsilon_i(X_k)-X_k\rangle:=Z_k$ are i.i.d. samples from the random variable $\langle T^\epsilon_j(X) - X, T^\epsilon_i(X)-X\rangle:=Z$ with $X\sim \mu$. Using Jensen's inequality and the independence of $Z_{n+1},Z_{n+2},...,Z_{2n}$, we bound: 
\begin{align*}
\mathbb{E}_{Z_{n+1},...,Z_{2n}}\left(\left|\mathbb{E}_Z(Z)-\frac{1}{n}\sum_{k=n+1}^{2n} Z_k\right|\right) & = \mathbb{E}_{Z_{n+1},...,Z_{2n}}\left(\left|\frac{1}{n}\sum_{k=n+1}^{2n}(\mathbb{E}_Z(Z)-Z_k)\right|\right)\notag  \\ & \leq \sqrt{\mathbb{E}_{Z_{n+1},...,Z_{2n}}\left(\left|\frac{1}{n}\sum_{k=n+1}^{2n}(\mathbb{E}_Z(Z)-Z_k)\right|^2\right)} \notag\\ & = \sqrt{\frac{1}{n^2}\mathbb{E}_{Z_{n+1},...,Z_{2n}}\left(\sum_{k=n+1}^{2n}\sum_{k'=n+1}^{2n} (\mathbb{E}_Z(Z)-Z_k) (\mathbb{E}_Z(Z)-Z_{k'})\right)} \\ & = \sqrt{\frac{1}{n^2}\mathbb{E}_{Z_{n+1},...,Z_{2n}}\left(\sum_{k=n+1}^{2n}(\mathbb{E}_Z(Z)-Z_k)^2\right)}  \\ &=\sqrt{\frac{\mathbb{E}_{Z_{n+1}}((\mathbb{E}_Z(Z)-Z_{n+1})^2)}{n}}\notag 
\end{align*}
where the outer expectation in the last line is over the sample $Z_{n+1}$. Observe that $\mathbb{E}_{Z_{n+1}}(\mathbb{E}_{Z}(Z)-Z_{n+1})^2=\mathbb{E}_{Z}(\mathbb{E}_{Z}(Z)-Z)^2$ is the variance of the random variable $Z$. Hence we may bound:
\begin{align}
&\mathbb{E}_Z((\mathbb{E}_Z(Z)-Z)^2) \\& \leq \mathbb{E}_Z(Z^2)\notag\\ &  = \mathbb{E}(\langle T_i^\epsilon(X)-X,T_j^\epsilon(X)-X\rangle^2) \notag\\& \leq \mathbb{E}(\|T_i^\epsilon(X)-X\|^2\|T_j^\epsilon(X)-X\|^2)\notag 
\\&\leq \left(\mathbb{E}(\|T_i^\epsilon(X)-X\|^4)\mathbb{E}(\|T_j^\epsilon(X)-X\|^4)\right)^{1/2}\notag\\& \leq (\mathbb{E}((\|T_i^\epsilon(X)\|+\|X\|)^4)\mathbb{E}((\|T_j^\epsilon(X)\|+\|X\|)^4))^{1/2}\notag\\ & = \left(2^4\mathbb{E}\left(\left(\frac{\|T_i^\epsilon(X)\|}{2}+\frac{\|X\|}{2}\right)^4\right)\cdot 2^4\mathbb{E}\left(\left(\frac{\|T_j^\epsilon(X)\|}{2}+\frac{\|X\|}{2}\right)^4\right)\right)^{1/2}\notag\\& \leq \left(2^8\mathbb{E}\left(\frac{1}{2}(\|T_i^\epsilon(X)\|^4+\|X\|^4)\right)\mathbb{E}\left(\frac{1}{2}(\|T_j^\epsilon(X)\|^4+\|X\|^4)\right)\right)^{1/2} \notag\\& =\left(2^6(\mathbb{E}(\|T_i^\epsilon(X)\|^4)+\mathbb{E}(\|X\|^4))( \mathbb{E}(\|T_j^\epsilon(X)\|^4]+\mathbb{E}(\|X\|^4))\right)^{1/2}\notag\\& = 8\left((\mathbb{E}(\|T_i^\epsilon(X)\|^4)+\mathbb{E}(\|X\|^4))( \mathbb{E}(\|T_j^\epsilon(X)\|^4)+\mathbb{E}(\|X\|^4))\right)^{1/2}\label{eqn:fourth moments}
\end{align}
where we applied Jensen's inequality, two applications of the Cauchy-Schwarz inequality, the triangle inequality, and Jensen's inequality. By Lemma \ref{lemma:10.5yang}, we have that 
\begin{equation*}
\mathbb{E}_{X\sim\mu}(\|T_i^\epsilon(X)\|^4) \leq \mathbb{E}_{Y\sim \nu_i}(\|Y\|^4),\;\;\; \forall 1\leq i\leq m.
\end{equation*}
Furthermore, by Proposition \ref{prop:analysis_grad}, $\mu$ is a critical point, we may apply Proposition \ref{prop:domination} to bound
\begin{equation*}
\mathbb{E}_{X\sim\mu}(\|X\|^4) \leq \mathbb{E}_{Y\sim \sum_{j=1}^m\lambda_j\nu_j}(\|Y\|^4)
\end{equation*}
and combining these we have upper bounded (\ref{eqn:fourth moments}) by $16\max_k \mathbb{E}_{Y\sim \nu_k}(\|Y\|^4).$ As $\mathcal{V}\subset \mathcal{G}_\sigma(\Omega),$ by Lemma \ref{lemma:normsubgauss}, $\|Y\|$ is $q_G\sqrt{d}\sigma$-subgaussian. Hence by applying item 3 in Definition \ref{def:subgauss}, we have the upper bound 
\begin{align*}
16\max_k \mathbb{E}_{Y\sim \nu_k}(\|Y\|^4)&\leq 16 (2C_Gq_G\sqrt{d}\sigma)^4 \\& =2^{8} C_G^4q_G^4 d^2\sigma^4
\end{align*}
 
Hence we have upper bounded \ref{eqn:analysis_split1}:
\begin{align} &\mathbb{E}\left(|\mathbb{E}(Z)-\frac{1}{n}\sum_{k=n+1}^{2n} Z_k|\right)\leq \frac{16C_G^2q_G^2d\sigma^2}{\sqrt{n}}\label{eqn:analysismoments}.
\end{align}
We now bound (\ref{eqn:analysis_split2}):
\begin{align*}
&\mathbb{E}\left(\bigg|\frac{1}{n}\sum_{k=n+1}^{2n}\langle T^\epsilon_i(X_k)-X_k, T^\epsilon_j(X_k)-X_k\rangle - \frac{1}{n}\sum_{k=n+1}^{2n}\langle \hat{T^\epsilon_i}(X_k)-X_k, \hat{T_j^\epsilon}(X_k)-X_k\rangle \bigg|\right) \\
\leq &\frac{1}{n}\sum_{k=n+1}^{2n}\mathbb{E}\left(\bigg|\langle T^\epsilon_i(X_k)-X_k,T^\epsilon_j(X_k)-X_k \rangle - \langle \hat{T^\epsilon_i}(X_k)-X_k,\hat{T_j^\epsilon}(X_k)-X_k\rangle \bigg|\right)\\ 
= &\mathbb{E}\left(\bigg|\langle T^\epsilon_i(X_k)-X_k,T^\epsilon_j(X_k)-X_k \rangle - \langle \hat{T^\epsilon_i}(X_k)-X_k,\hat{T^\epsilon_j}(X_k)-X_k\rangle \bigg|\right)\end{align*}
for an arbitrary $k\in \{n+1,n+2,...,2n\}$, where we have used the triangle inequality and the fact that $X_k$ are identically distributed.  Now, 
\begin{align}
&\mathbb{E}\left(\bigg|\langle T^\epsilon_i(X_k)-X_k,T^\epsilon_j(X_k)-X_k \rangle - \langle \hat{T^\epsilon_i}(X_k)-X_k,\hat{T_j^\epsilon}(X_k)-X_k\rangle \bigg|\right)\notag \\
\leq &\mathbb{E}\left(\bigg| \langle T^\epsilon_i(X_k)-X_k,T^\epsilon_j(X_k)-X_k \rangle - \langle \hat{T^\epsilon_i}(X_k)-X_k,T^\epsilon_j(X_k)-X_k\rangle\bigg|\right)\notag \\
+&\mathbb{E}\left(\bigg| \langle \hat{T^\epsilon_i}(X_k)-X_k,T^\epsilon_j(X_k)-X_k \rangle-\langle \hat{T^\epsilon_i}(X_k)-X_k,\hat{T_j^\epsilon}(X_k)-X_k\rangle\bigg|\right)\notag \\
=&\mathbb{E}\left(\bigg| \langle T^\epsilon_i(X_k)-\hat{T^\epsilon_i}(X_k),T^\epsilon_j(X_k)-X_k \rangle\bigg|\right) + \mathbb{E}\left(\bigg| \langle \hat{T^\epsilon_i}(X_k)-X_k,T^\epsilon_j(X_k)-\hat{T_j^\epsilon}(X_k) \rangle\bigg|\right)\notag \\
\leq &\mathbb{E}\left(\|T^\epsilon_i(X_k)-\hat{T^\epsilon_i}(X_k)\|\;\| T^\epsilon_j(X_k)-X_k  \|\right)+\mathbb{E}\left(\|\hat{T^\epsilon_i}(X_k)-X_k\|\;\|T^\epsilon_j(X_k)-\hat{T_j^\epsilon}(X_k)\| \right)\notag \\  \leq & \sqrt{\mathbb{E}\left(\|T^\epsilon_i(X_k)-\hat{T^\epsilon_i}(X_k)\|^2\right)\mathbb{E}\left(\| T^\epsilon_j(X_k)-X_k  \|^2\right)}+\sqrt{\mathbb{E}\left(\|\hat{T^\epsilon_i}(X_k)-X_k\|^2\right)\mathbb{E}\left(\|T^\epsilon_j(X_k)-\hat{T_j^\epsilon}(X_k)\|^2 \right)}\notag \\ \leq & \sqrt{2\mathbb{E}\left(\|T^\epsilon_i(X_k)-\hat{T^\epsilon_i}(X_k)\|^2\right)\mathbb{E}\left(\| T^\epsilon_j(X_k)\|^2+\|X_k  \|^2\right)} + \sqrt{\mathbb{E}\left(\|\hat{T^\epsilon_i}(X_k)-X_k\|^2\right)\mathbb{E}\left(\|T^\epsilon_j(X_k)-\hat{T_j^\epsilon}(X_k)\|^2 \right)}\notag \\ \leq & \sqrt{2\mathbb{E}\left(\|T^\epsilon_i(X_k)-\hat{T^\epsilon_i}(X_k)\|^2\right)\left(M_2(\nu_j)+M_2(\mu)\right)} + \sqrt{\mathbb{E}\left(\|\hat{T^\epsilon_i}(X_k)-X_k\|^2\right)\mathbb{E}\left(\|T^\epsilon_j(X_k)-\hat{T_j^\epsilon}(X_k)\|^2 \right)}\label{eqn:lastlineanalysis}
\end{align}

In the above, we applied the triangle inequality, two applications of the Cauchy-Schwarz inequality, the triangle inequality, Young's inequality and Lemma \ref{lemma:10.5yang} to bound $\mathbb{E}\left(\|T_j^\epsilon(X_k)\|^2\right)\leq M_2(\nu_j)$. We now control:
\begin{align*}
& \mathbb{E}\left(\|\hat{T}^\epsilon_i(X_k)-X_k\|^2\right)\notag \\  
\leq&\mathbb{E}\left((\|\hat{T}^\epsilon_i(X_k)-T_i^\epsilon(X_k)\|+\|T^\epsilon_i(X_k)-X_k\|)^2\right)\notag \\ 
\leq &2\mathbb{E}\left(\|\hat{T}^\epsilon_i(X_k)-T_i^\epsilon(X_k)\|^2+\|T^\epsilon_i(X_k)-X_k\|^2\right)\notag \\
 \leq &2\mathbb{E}\left(\|\hat{T}^\epsilon_i(X_k)-T_i^\epsilon(X_k)\|^2\right)+4\mathbb{E}\left(\|T^\epsilon_i(X_k)\|^2+\|X_k\|^2\right) \notag\\ 
  = &2\mathbb{E}\left(\|\hat{T}^\epsilon_i(X_k)-T_i^\epsilon(X_k)\|^2\right)+4M_2(\nu_i)+4M_2(\mu)\label{eqn:trivial}
\end{align*}
where we applied the triangle inequality and Young's inequality twice, and the fact that  $\mathbb{E}\left(\|T_i^\epsilon(X_k)\|^2\right)\leq M_2(\nu_i)$. Since $\mu$ is a critical point of $F^\epsilon_{\lambda,\mathcal{V}}$, we may apply Proposition \ref{prop:domination} to further bound $M_2(\mu)\leq \max_{1\leq k \leq m}M_2(\nu_k)$. Summarizing, we have bounded (\ref{eqn:lastlineanalysis}) by:
\begin{align*}
&\sqrt{2\mathbb{E}\left(\|T^\epsilon_i(X_k)-\hat{T^\epsilon_i}(X_k)\|^2\right)\left(M_2(\nu_j)+\max_{1\leq k \leq m}M_2(\nu_j)\right)} \\  + &\sqrt{\left(2\mathbb{E}\left(\|\hat{T}^\epsilon_i(X_k)-T_i^\epsilon(X_k)\|^2\right)+4M_2(\nu_i)+4\max_{1\leq k \leq m}M_2(\nu_k)\right)\mathbb{E}\left(\|T^\epsilon_j(X_k)-\hat{T_j^\epsilon}(X_k)\|^2 \right)}\\\leq &\sqrt{4C_G^2q_G^2d\sigma^2\mathbb{E}(\|T_i^\epsilon(X_k)-\hat{T}^\epsilon_i(X_k)\|^2)}\\+&\sqrt{2\left(\mathbb{E}\left(\|\hat{T}^\epsilon_i(X_k)-T_i^\epsilon(X_k)\|^2\right)+16C_G^2q_G^2d\sigma^2\right)\mathbb{E}(\|T_j^\epsilon(X_k)-\hat{T}^\epsilon_j(X_k)\|^2)}
\end{align*}
where we have applied Lemma \ref{lemma:normsubgauss} followed by item 3 in Definition \ref{def:subgauss} to upper bound the second moments.

Since $X_k\sim \mu$ for $k\geq n+1$ is independent of the samples $X_1,...,X_n$ used to compute the maps $\hat{T^\epsilon_i}$ and $\hat{T_j^\epsilon}$, we may apply (\ref{eqn:thetabound1}) to obtain the upper bound (\ref{eqn:lastlineanalysis}) by:
\begin{equation*}
2C_Gq_G\sqrt{d}\sigma\sqrt{\theta(n)}+\sqrt{2\theta(n)^2+16C_G^2q_G^2d\sigma^2\theta(n)}.
\end{equation*}
Combining this with the bound on (\ref{eqn:analysis_split1}), we have
\begin{align}\label{eqn:entrybound}
\mathbb{E}\left(\bigg|[A_\mu^\epsilon]_{ij}-[\hat{A}_\mu^\epsilon]_{ij}\bigg|\right)& \leq J d\sigma^2\max\left\{\frac{1}{\sqrt{n}},\sqrt{\theta(n)}+\sqrt{\theta(n)+\theta(n)^2}\right\},
\end{align}
where $J$ is an absolute constant.
\end{proof}
We prove a similar lemma for $S^\epsilon_{\mu}:$

\begin{lemma}\label{lemma:sinkmatbound}
 Let $\mu\in \mathcal{G}_\sigma(\Omega)$ and $\mathcal{V}\subset \mathcal{G}_\sigma(\Omega)$, and suppose that $S_\mu^\epsilon$ has an eigenvalue of 0 with unique eigenvector $\lambda_*\in \Delta^{m}$. Suppose there exists an estimator $\hat{T}(\hat{\mu}^n,\hat{\nu}^n_j)$ satisfying (\ref{eqn:thetabound1}) for all $1\leq j\leq m$, and an estimator $\hat{T}(\hat{\mu}^n)$ satisfying (\ref{eqn:thetabound2}). Then: \begin{equation*} \mathbb{E}(|[S_\mu^\epsilon]_{ij}-[\hat{S}^\epsilon_\mu]_{ij}|)\leq \tilde{J} d\sigma^2\max\left\{\frac{1}{\sqrt{n}},\sqrt{\theta(n)}+\sqrt{\theta(n)+\theta(n)^2}\right\}\end{equation*}
where $\hat{S}_\mu$ is the matrix $\hat{M}_\mu$ in Algorithm \ref{alg:coefficient_recovery} with $\mathcal{F}^\epsilon_{\lambda,\mathcal{V}}=S^\epsilon_{\lambda,\mathcal{V}}$, and $\tilde{J}$ is an absolute constant.
\end{lemma}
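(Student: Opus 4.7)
The plan is to parallel the proof of Lemma \ref{lem:ent_mat_bound} essentially line-by-line, while accounting for the extra self-map $T^\epsilon_{\mu\rightarrow\mu}$ and its estimator $\hat{T}(\hat{\mu}^n)$. To compress notation, set $u_k(x):=T^\epsilon_{\mu\rightarrow\nu_k}(x)-T^\epsilon_{\mu\rightarrow\mu}(x)$ and $\hat{u}_k(x):=\hat{T}(\hat{\mu}^n,\hat{\nu}^n_k)(x)-\hat{T}(\hat{\mu}^n)(x)$, so that $[S_{\mu}^\epsilon]_{ij}=\int\langle u_i,u_j\rangle d\mu$ and $[\hat{S}_\mu^\epsilon]_{ij}=\frac{1}{n}\sum_{k=n+1}^{2n}\langle \hat{u}_i(X_k),\hat{u}_j(X_k)\rangle$. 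By the triangle inequality, I decompose $\mathbb{E}|[S_\mu^\epsilon]_{ij}-[\hat{S}_\mu^\epsilon]_{ij}|$ into a concentration term $\mathbb{E}|\int\langle u_i,u_j\rangle d\mu-\frac{1}{n}\sum_k\langle u_i(X_k),u_j(X_k)\rangle|$ and a plug-in term $\mathbb{E}|\frac{1}{n}\sum_k\langle u_i(X_k),u_j(X_k)\rangle-\frac{1}{n}\sum_k\langle\hat{u}_i(X_k),\hat{u}_j(X_k)\rangle|$, exactly in the spirit of (\ref{eqn:analysis_split1})--(\ref{eqn:analysis_split2}).

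For the concentration term, the samples $X_{n+1},\dots,X_{2n}$ are i.i.d.\ from $\mu$ and independent of the $n$ samples used to construct $\hat{u}_k$. So Jensen's inequality reduces the task to bounding the variance of $Z:=\langle u_i(X),u_j(X)\rangle$. Cauchy--Schwarz gives $\mathbb{E}Z^2\leq (\mathbb{E}\|u_i(X)\|^4\cdot\mathbb{E}\|u_j(X)\|^4)^{1/2}$, and Young's inequality yields $\|u_k\|^4\leq 8(\|T^\epsilon_{\mu\rightarrow\nu_k}(X)\|^4+\|T^\epsilon_{\mu\rightarrow\mu}(X)\|^4)$. By Lemma \ref{lemma:10.5yang} these are dominated by $\mathbb{E}_{Y\sim \nu_k}\|Y\|^4$ and $\mathbb{E}_{Y\sim\mu}\|Y\|^4$ respectively; since $\mu,\nu_k\in\mathcal{G}_\sigma(\Omega)$, Lemma \ref{lemma:normsubgauss} and property 3 of Definition \ref{def:subgauss} control each by an absolute constant times $d^2\sigma^4$. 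This yields the desired $n^{-1/2}$ rate with prefactor of order $d\sigma^2$, matching (\ref{eqn:analysismoments}).

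For the plug-in term, I use bilinearity: $\langle u_i,u_j\rangle-\langle\hat{u}_i,\hat{u}_j\rangle=\langle u_i-\hat{u}_i,u_j\rangle+\langle\hat{u}_i,u_j-\hat{u}_j\rangle$. Averaging over $k$ and applying Cauchy--Schwarz in both the inner product and the expectation, each cross-term is bounded by $(\mathbb{E}\|u_i-\hat{u}_i\|^2)^{1/2}(\mathbb{E}\|u_j\|^2)^{1/2}$ and $(\mathbb{E}\|\hat{u}_i\|^2)^{1/2}(\mathbb{E}\|u_j-\hat{u}_j\|^2)^{1/2}$. The second moments $\mathbb{E}\|u_j\|^2$ and $\mathbb{E}\|\hat{u}_i\|^2$ are bounded by an absolute constant times $d\sigma^2$ using the same subgaussianity reasoning as above (for $\hat{u}_i$, first bound $\|\hat{u}_i\|^2\leq 2\|\hat{u}_i-u_i\|^2+2\|u_i\|^2$). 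The estimation error terms split via triangle inequality and Young's inequality,
\begin{equation*}
\mathbb{E}\|u_k-\hat{u}_k\|^2\leq 2\mathbb{E}\|T^\epsilon_{\mu\rightarrow\nu_k}-\hat{T}(\hat{\mu}^n,\hat{\nu}^n_k)\|^2_{L^2(\mu)}+2\mathbb{E}\|T^\epsilon_{\mu\rightarrow\mu}-\hat{T}(\hat{\mu}^n)\|^2_{L^2(\mu)}\leq 4\theta(n),
\end{equation*}
by (\ref{eqn:thetabound1}) and (\ref{eqn:thetabound2}). Assembling these estimates gives the plug-in term a bound of the form $\tilde{C} d\sigma^2(\sqrt{\theta(n)}+\sqrt{\theta(n)+\theta(n)^2})$, and combining with the concentration rate yields the claimed bound with $\tilde{J}$ an absolute constant.

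The main (mild) obstacle relative to Lemma \ref{lem:ent_mat_bound} is that $\mu$ here is not asserted to be a critical point, so I cannot appeal to Proposition \ref{prop:domination} to control $M_2(\mu)$ or $M_4(\mu)$ by moments of the reference measures. This is resolved directly by the hypothesis $\mu\in\mathcal{G}_\sigma(\Omega)$ in the statement, which via Lemma \ref{lemma:normsubgauss} yields all the required moment bounds. Otherwise the calculation is a faithful rerun of the entropic case with the self-map contribution carried through the bilinear expansion.
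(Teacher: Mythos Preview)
Your proposal is correct and follows essentially the same approach as the paper. The only organizational difference is that the paper splits the plug-in error into two stages (first swap $T^\epsilon_{\mu\rightarrow\nu_i},T^\epsilon_{\mu\rightarrow\nu_j}$ for their estimators while keeping $T^\epsilon_{\mu\rightarrow\mu}$, then swap $T^\epsilon_{\mu\rightarrow\mu}$ for $\hat{T}(\hat{\mu}^n)$), whereas you absorb both into a single bilinear expansion via the differences $u_k-\hat{u}_k$; this is slightly cleaner but uses the same ingredients (Cauchy--Schwarz, Young's inequality, Lemma \ref{lemma:10.5yang}, subgaussian moment bounds) and produces the same final form.
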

\begin{proof}
 For ease of notation, we write $T^\epsilon_j:=T^\epsilon_{\mu\rightarrow\nu_j}$,  $\hat{T_j^\epsilon}:= \hat{T}(\hat{\mu}^n,\hat{\nu}^n_j)$, $T^\epsilon:=T^\epsilon_{\mu\rightarrow \mu}$ and $\hat{T}^\epsilon:=\hat{T}(\hat{\mu}^n)$.  We bound:
\begin{align}
& \mathbb{E}\left(\bigg|\int \langle T_i^\epsilon - T^\epsilon, T_j^\epsilon-T^\epsilon\rangle d\mu - \frac{1}{n}\sum_{k=n+1}^{2n}\langle \hat{T}_i^\epsilon(X_k)-\hat{T}^\epsilon(X_k),\hat{T}_j^\epsilon(X_k)-\hat{T}^\epsilon(X_k)\rangle\bigg|\right)\notag\\ 
\leq &\mathbb{E}\left(\bigg|\int \langle T_i^\epsilon-T^\epsilon,T_j^\epsilon -T^\epsilon\rangle d\mu -\frac{1}{n}\sum_{k=n+1}^{2n}\langle T_i^\epsilon(X_k)-T^\epsilon(X_k),T_j^\epsilon(X_k)-T^\epsilon(X_k)\rangle \bigg|\right)\notag
\\
+&\mathbb{E}\left(\bigg|\frac{1}{n}\sum_{k=n+1}^{2n}\langle T_i^\epsilon(X_k)-T^\epsilon(X_k),T_j^\epsilon(X_k)-T^\epsilon(X_k)\rangle-\frac{1}{n}\sum_{k=n+1}^{2n}\langle \hat{T}_i^\epsilon(X_k)-T^\epsilon(X_k),\hat{T}_j^\epsilon(X_k)-T^\epsilon(X_k)\rangle \bigg|\right)\notag \\ + &\mathbb{E}\left(\bigg| \frac{1}{n}\sum_{k=n+1}^{2n}\langle \hat{T}_i^\epsilon(X_k)-T^\epsilon(X_k),\hat{T}_j^\epsilon(X_k)-T^\epsilon(X_k)\rangle  
 - \frac{1}{n}\sum_{k=n+1}^{2n}\langle \hat{T}_i^\epsilon(X_k)-\hat{T}^\epsilon(X_k),\hat{T}_j^\epsilon(X_k)-\hat{T}^\epsilon(X_k)\rangle \bigg|\right).\label{eqn:sinkanalysis_1}
\end{align}
By the same sequence of steps as used to bound (\ref{eqn:analysis_split1}) by (\ref{eqn:fourth moments}), we may upper bound the first term in (\ref{eqn:sinkanalysis_1}) by
\begin{equation}\frac{1}{\sqrt{n}}\sqrt{8\left((\mathbb{E}(\|T^\epsilon_i(X)\|^4)+\mathbb{E}(\|T^\epsilon(X)\|^4))( \mathbb{E}(\|T^\epsilon_j(X)\|^4)+\mathbb{E}(\|T^\epsilon(X)\|^4))\right)^{1/2}}.\label{eqn:fourthmoment_sink}\end{equation} %
By Lemma \ref{lemma:10.5yang}, $(T^\epsilon_i)_{\#}(\mu)\leq_c \nu_i\in \mathcal{G}_\sigma(\Omega)$ for all $1\leq i \leq m$, and $(T^\epsilon)_{\#}(\mu)\leq_c \mu\in \mathcal{G}_\sigma(\Omega)$. Hence we may apply Lemma \ref{lemma:subgauss dom}, Lemma \ref{lemma:normsubgauss} and item 3 in Definition \ref{def:subgauss} as in the proof of Lemma \ref{lem:ent_mat_bound} to conclude that (\ref{eqn:fourthmoment_sink}) is upper bounded by $\frac{16C_G^2q_G^2d\sigma^2}{\sqrt{n}}$.

The second term in (\ref{eqn:sinkanalysis_1}) may be bounded in the same manner as (\ref{eqn:analysis_split2}). By the triangle inequality and noting that $X_1, X_2,\dots, X_{2n}$ are all i.i.d. we have:
\begin{align*}
& \mathbb{E}\left(\bigg|\frac{1}{n}\sum_{k=n+1}^{2n}\langle T_i^\epsilon(X_k)-T^\epsilon(X_k),T_j^\epsilon(X_k)-T^\epsilon(X_k)\rangle-\frac{1}{n}\sum_{k=n+1}^{2n}\langle \hat{T}_i^\epsilon(X_k)-T^\epsilon(X_k),\hat{T}_j^\epsilon(X_k)-T^\epsilon(X_k)\rangle \bigg|\right)\\ \leq & \frac{1}{n}\sum_{k={n+1}}^{2n}\mathbb{E}\left(|\langle T_i^\epsilon(X_k)-T^\epsilon(X_k),T_j^\epsilon(X_k)-T^\epsilon(X_k)\rangle -\langle \hat{T}^\epsilon_i(X_k)-T^\epsilon(X_k),\hat{T}^\epsilon_j(X_k)-T^\epsilon(X_k)\rangle |\right)\\ \leq &\mathbb{E}\left(\bigg|\langle T^\epsilon_i(X)-T^\epsilon(X),T^\epsilon_j(X)-T^\epsilon(X) \rangle - \langle \hat{T^\epsilon_i}(X)-T^\epsilon(X),\hat{T^\epsilon_j}(X)-T^\epsilon(X)\rangle \bigg|\right)\notag\end{align*}
where $X\sim \mu$ is independent of $X_1,X_2,...,X_n$. By the triangle inequality, we may upper bound this by: 
\begin{align*}
&\mathbb{E}\left(\bigg| \langle T^\epsilon_i(X)-T^\epsilon(X),T^\epsilon_j(X)-T^\epsilon(X) \rangle - \langle \hat{T^\epsilon_i}(X)-T^\epsilon(X),T^\epsilon_j(X)-T^\epsilon(X)\rangle\bigg|\right)\notag \\
+ &\mathbb{E}\left(\bigg| \langle \hat{T^\epsilon_i}(X)-T^\epsilon(X),T^\epsilon_j(X)-T^\epsilon(X) \rangle-\langle \hat{T^\epsilon_i}(X)-T^\epsilon(X),\hat{T_j^\epsilon}(X)-T^\epsilon(X)\rangle\bigg|\right)\notag \\ \leq & \mathbb{E}\left(\bigg|\langle T_i^\epsilon(X)-\hat{T_i^\epsilon}(X), T_j^\epsilon(X)-T^\epsilon(X)\rangle\bigg|\right) + \mathbb{E}\left(\bigg|\langle \hat{T^\epsilon_i}(X)-T^\epsilon(X),T_j^\epsilon(X)-\hat{T_j^\epsilon}(X)\rangle\bigg|\right)\notag \\ \leq & \mathbb{E}\left(\| T_i^\epsilon(X)-\hat{T_i^\epsilon}(X)\|\| T_j^\epsilon(X)-T^\epsilon(X)\|\right) + \mathbb{E}\left(\| \hat{T^\epsilon_i}(X)-T^\epsilon(X)\| \|T_j^\epsilon(X)-\hat{T_j^\epsilon}(X)\|\right) \notag\\ \leq &\mathbb{E}\left(\| T_i^\epsilon(X)-\hat{T_i^\epsilon}(X)\|\left(\| T_j^\epsilon(X)\|+\|T^\epsilon(X)\|\right)\right) + \mathbb{E}\left(\| \hat{T^\epsilon_i}(X)-T^\epsilon(X)\| \|T_j^\epsilon(X)-\hat{T_j^\epsilon}(X)\|\right)\notag 
\\ \leq &\sqrt{\mathbb{E}\left(\|T^\epsilon_i(X)-\hat{T^\epsilon_i}(X)\|^2\right)\left(2\mathbb{E}(\|T_j^\epsilon(X)\|^2)+2\mathbb{E}(\|T^\epsilon(X)\|^2)\right)}\notag \\
& \;\;\;+\sqrt{\mathbb{E}\left(\|\hat{T^\epsilon_i}(X)-T^\epsilon(X)\|^2\right)\mathbb{E}\left(\|T^\epsilon_j(X)-\hat{T_j^\epsilon}(X)\|^2 \right)}\notag  \\\leq  &\sqrt{2\theta(n)\left(M_2(\nu_j)+\mathbb{E}(\|T^\epsilon(X)\|^2)\right)}\notag \\
& \;\;\;+\sqrt{\mathbb{E}\left(\|\hat{T^\epsilon_i}(X)-T^\epsilon(X)\|^2\right)\mathbb{E}\left(\|T^\epsilon_j(X)-\hat{T_j^\epsilon}(X)\|^2 \right)}\notag\\ \leq  &\sqrt{2\theta(n)\left(M_2(\nu_j)+\mathbb{E}(\|T^\epsilon(X)\|^2)\right)}\notag \\
& \;\;\;+\sqrt{2\mathbb{E}\left(\|\hat{T^\epsilon_i}(X)-T^\epsilon_i(X)\|^2+\|T^\epsilon_i(X)-T^\epsilon(X)\|^2\right)\mathbb{E}\left(\|T^\epsilon_j(X)-\hat{T_j^\epsilon}(X)\|^2 \right)}\notag\\ \leq  &\sqrt{2\theta(n)\left(M_2(\nu_j)+\mathbb{E}(\|T^\epsilon(X)\|^2)\right)}\notag \\
& \;\;\;+\sqrt{2\mathbb{E}\left(\|\hat{T^\epsilon_i}(X)-T^\epsilon_i(X)\|^2+2\|T^\epsilon_i(X)\|^2 +2\|T^\epsilon(X)\|^2\right)\mathbb{E}\left(\|T^\epsilon_j(X)-\hat{T_j^\epsilon}(X)\|^2 \right)}\notag\\  \leq  & \sqrt{2\theta(n)\left(M_2(\nu_j)+M_2(\mu)\right)}\notag \\
& \;\;\;+\sqrt{\left(2\mathbb{E}\left(\|\hat{T}^\epsilon_i(X)-T_i^\epsilon(X_n)\|^2\right)+4M_2(\nu_\ell)+4M_2(\mu)\right)\mathbb{E}\left(\|T^\epsilon_j(X)-\hat{T_j^\epsilon}(X)\|^2 \right)}\label{eqn:trivial_application} \\
\leq & \max_{1\leq \ell \leq m}\sqrt{2(M_2(\nu_j)+M_2(\mu))\theta(n)}+\sqrt{(2\theta(n)+4M_2(\nu_\ell)+4M_2(\mu))\theta(n)}.\notag
\end{align*}
By the triangle inequality and using the fact that $X_k, k \in \{1, \cdots, 2n \}$ are all i.i.d, the third term in (\ref{eqn:sinkanalysis_1}) is upper bounded by:
\begin{align*}
&\mathbb{E}\left(\left |\langle \hat{T}^\epsilon_i(X)-T^\epsilon(X),\hat{T}^\epsilon_j(X)-T^\epsilon(X)\rangle -\langle \hat{T}^\epsilon_i(X)-\hat{T}^\epsilon(X),\hat{T}^\epsilon_j(X)-\hat{T}^\epsilon(X)\rangle \right|\right) \\  =&\mathbb{E}( |\langle \hat{T}^\epsilon_i(X)-T^\epsilon(X),\hat{T}^\epsilon_j(X)-T^\epsilon(X)\rangle -\langle \hat{T}^\epsilon_i(X)-\hat{T}^\epsilon(X),\hat{T}^\epsilon_j(X)-T^\epsilon(X)\rangle  \\ & \; \; \; + \langle \hat{T}^\epsilon_i(X)-\hat{T}^\epsilon(X),\hat{T}^\epsilon_j(X)-T^\epsilon(X)\rangle -\langle \hat{T}^\epsilon_i(X)-\hat{T}^\epsilon(X),\hat{T}^\epsilon_j(X)-\hat{T}^\epsilon(X)\rangle|) \end{align*}
where $X\sim \mu$ is independent of $X_1,X_2,...,X_n$. By the triangle inequality, we may bound this as:
\begin{align*}& \mathbb{E}\left(\left| \langle \hat{T}^\epsilon(X)-T^\epsilon(X), \hat{T}^\epsilon_j(X)-T^\epsilon(X)\rangle\right|\right) + \mathbb{E}\left(\left|\langle \hat{T}^\epsilon_i(X)-\hat{T}^\epsilon(X),\hat{T}^\epsilon(X)-T^\epsilon(X)\right|\right)\\ & \leq \mathbb{E}\left(\|\hat{T}^\epsilon(X)-T^\epsilon(X)\| \|\hat{T}^\epsilon_j(X)-T^\epsilon(X)\|\right)+\mathbb{E}\left(\|\hat{T}^\epsilon_i(X)-\hat{T}^\epsilon(X)\| \|\hat{T}^\epsilon(X)-T^\epsilon(X)\|\right) \\ & = \mathbb{E}\left(\|\hat{T}^\epsilon(X)-T^\epsilon(X)\| \|\hat{T}^\epsilon_j(X)-T^\epsilon_j(X) + T^\epsilon_j(X)-T^\epsilon(X)\|\right) \\ & \; \; \; \; \;+\mathbb{E}\left(\|\hat{T}^\epsilon_i(X)-T^\epsilon_i(X)+T^\epsilon_i(X)-\hat{T}^\epsilon(X)\| \|\hat{T}^\epsilon(X)-T^\epsilon(X)\|\right) \\ &\leq \mathbb{E}\left(\|\hat{T}^\epsilon(X)-T^\epsilon(X)\| \left(\|\hat{T}^\epsilon_j(X)-T^\epsilon_j(X)\|+\|T^\epsilon_j(X)-T^\epsilon(X)\|\right)\right) \\& \; \; \; \; +\mathbb{E}\left(\left(\|\hat{T}^\epsilon_i(X)-T^\epsilon_i(X)\|+\|T^\epsilon_i(X)-\hat{T}^\epsilon(X)\|\right) \|\hat{T}^\epsilon(X)-T^\epsilon(X)\|\right) \\ & \leq \sqrt{\mathbb{E}\left(\|\hat{T}^\epsilon(X)-T^\epsilon(X)\|^2\right)\mathbb{E}\left(2\|\hat{T}^\epsilon_j(X)-T^\epsilon_j(X)\|^2+2\|T^\epsilon_j(X)-T^\epsilon(X)\|^2\right)} \\& \; \; \; \; \; + \sqrt{\mathbb{E}\left(2\|\hat{T}^\epsilon_i(X)-T^\epsilon_i(X)\|^2+2\|T_i^\epsilon(X)-\hat{T}^\epsilon(X)\|^2\right) \mathbb{E}\left(\|\hat{T}^\epsilon(X)-T^\epsilon(X)\|^2\right)} \\ & \leq \sqrt{\mathbb{E}\left(\|\hat{T}^\epsilon(X)-T^\epsilon(X)\|^2\right)\mathbb{E}\left(2\|\hat{T}^\epsilon_j(X)-T^\epsilon_j(X)\|^2+4\|T^\epsilon_j(X)\|^2 + 4\|T^\epsilon(X)\|^2\right)} \\& \; \; \; \; \; + \sqrt{\mathbb{E}\left(2\|\hat{T}^\epsilon_i(X)-T^\epsilon_i(X)\|^2+2\|T_i^\epsilon(X)-\hat{T}^\epsilon(X)\|^2\right) \mathbb{E}\left(\|\hat{T}^\epsilon(X)-T^\epsilon(X)\|^2\right)} \\ & \leq \sqrt{2\theta(n)\left(2\theta(n)+4M_2(\nu_j) + 4M_2(\mu)\right)} \\& \; \; \; \; \; + \sqrt{\mathbb{E}\left(\left(2\|\hat{T}^\epsilon_i(X)-T^\epsilon_i(X)\|^2+4\|T_i^\epsilon(X)-T^\epsilon(X)\|^2+4\|T^\epsilon(X)-\hat{T}^\epsilon(X)\|^2\right) \theta(n)\right)} \\ & \leq \sqrt{2\theta(n)\left(2\theta(n)+4M_2(\nu_j) + 4M_2(\mu)\right)} \\& \; \; \; \; \; + \sqrt{\mathbb{E}\left(\left(2\|\hat{T}^\epsilon_i(X)-T^\epsilon_i(X)\|^2+8\|T_i^\epsilon(X)\|^2 + 8\|T^\epsilon(X)\|^2+4\|T^\epsilon(X)-\hat{T}^\epsilon(X)\|^2\right) \theta(n)\right)}  \\ & \leq \sqrt{2\theta(n)\left(2\theta(n)+4M_2(\nu_j) + 4M_2(\mu)\right)} + \sqrt{\left(2\theta(n)+8M_2(\nu_i)+8M_2(\mu)+4\theta(n)\right) \theta(n)}.
\end{align*}
Applying the bound $M_2(\mu)\leq 2C_G^2q_G^2\sigma^2d$, which applies since $\mu$ is $\sigma$ subgaussian, and combining constants not depending on $\sigma$ and $d$, we obtain:
\begin{align*}
&\mathbb{E}\left(\bigg|\int \langle T_j^\epsilon - T^\epsilon, T_i^\epsilon-T^\epsilon\rangle d\mu - \frac{1}{n}\sum_{k=n+1}^{2n}\langle \hat{T}_j^\epsilon(X_k)-\hat{T}^\epsilon(X_k),\hat{T}_i^\epsilon(X_k)-\hat{T}^\epsilon(X_k)\rangle\bigg|\right)\\
\leq & \tilde{J}d\sigma^2\max\{\frac{1}{\sqrt{n}}, \sqrt{\theta(n)}+\sqrt{\theta(n)+\theta(n)^2}\}.
\end{align*}
\end{proof}

Theorem \ref{thm:coeff_theorem} immediately follows from Lemmas \ref{lem:ent_mat_bound} and \ref{lemma:sinkmatbound} and the following lemma, which allows us to apply (expected) bounds between the entries of two matrices to the (expected) distance between their eigenvectors, which we have extracted from Corollary 2 in \citep{werenski2022measure}:

\begin{lemma}\label{lemma: generic eigenvector bounds}
Let $M\in \mathbb{R}^{m\times m}$ be a positive semidefinite matrix with eigenvalue 0 with multiplicity 1 and associated unique eigenvector $\lambda_*\in \Delta^m$. Suppose that $\hat{M}\in \mathbb{R}^{m\times m}$ is a random positive semidefinite matrix such that \begin{equation} \mathbb{E}(|[M]_{ij}-[\hat{M}]_{ij}|)\leq \mathcal{Q}\label{eqn:generic_entrybound}, \; \text{for all }\; i,j=1,\dots,m.\end{equation} Let $\hat{\lambda}:=\argmin_{\lambda\in\Delta^m}\lambda^\top \hat{M}\lambda$.  Then:
\begin{equation*}
\mathbb{E}(\|\hat{\lambda}-\lambda_*\|^2)\leq \frac{8m^3\mathcal{Q}}{\alpha_2},
\end{equation*}
where $\alpha_2$ is the second smallest eigenvalue of $M$.
\end{lemma}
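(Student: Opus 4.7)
The plan is to combine the optimality of $\hat{\lambda}$ for $\hat{M}$ with the entrywise concentration assumption (\ref{eqn:generic_entrybound}), convert this into a bound on the quadratic form $\hat{\lambda}^\top M \hat{\lambda}$, and then exploit the spectral gap of $M$ together with the simplex constraint to deduce a bound on $\|\hat{\lambda} - \lambda_*\|^2$.

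First, since $M \succeq 0$ and $M\lambda_* = 0$, the vector $\lambda_*$ is a global minimizer of $\lambda \mapsto \lambda^\top M \lambda$, with minimum value $0$. Combining $\hat{\lambda}^\top \hat{M} \hat{\lambda} \leq \lambda_*^\top \hat{M} \lambda_*$ with $\lambda_*^\top M \lambda_* = 0$ gives
\begin{equation*}
\hat{\lambda}^\top M \hat{\lambda} \leq \lambda_*^\top (\hat{M} - M) \lambda_* + \hat{\lambda}^\top (M - \hat{M}) \hat{\lambda}.
\end{equation*}
Taking expectations and bounding each bilinear form entrywise via (\ref{eqn:generic_entrybound}), the first summand contributes at most $\mathcal{Q}$ since $\lambda_* \in \Delta^m$ and $\sum_{i,j} \lambda_{*,i}\lambda_{*,j} = 1$; the second contributes at most $m^2 \mathcal{Q}$ since $\hat{\lambda}_i \hat{\lambda}_j \leq 1$ and we may crudely apply linearity of expectation. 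This yields $\mathbb{E}(\hat{\lambda}^\top M \hat{\lambda}) \leq (1+m^2)\mathcal{Q} \leq 2m^2 \mathcal{Q}$.

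Next, I would decompose $\xi := \hat{\lambda} - \lambda_*$ as $\xi = c\lambda_* + v$ with $v \perp \lambda_*$. Because $M\lambda_* = 0$, we have $\hat{\lambda}^\top M \hat{\lambda} = \xi^\top M \xi = v^\top M v \geq \alpha_2 \|v\|^2$, since $v$ lies in the orthogonal complement of the null space and $\alpha_2$ is the smallest nonzero eigenvalue of $M$. The most delicate step is recovering $\|\xi\|^2$ from $\|v\|^2$: without further structure the parallel component $c$ could be arbitrarily large. The simplex constraint saves us, since $\mathbf{1}^\top \hat{\lambda} = \mathbf{1}^\top \lambda_* = 1$ implies $\mathbf{1}^\top \xi = 0$; together with $\mathbf{1}^\top \lambda_* = 1$ this forces $c = -\mathbf{1}^\top v$, so $|c| \leq \sqrt{m}\,\|v\|$. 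Using $\|\lambda_*\|^2 \leq \|\lambda_*\|_1^2 = 1$, one obtains
\begin{equation*}
\|\xi\|^2 = c^2 \|\lambda_*\|^2 + \|v\|^2 \leq (m\|\lambda_*\|^2 + 1)\|v\|^2 \leq (m+1)\|v\|^2 \leq 2m\|v\|^2.
\end{equation*}

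Combining these pieces yields the pointwise bound $\|\hat{\lambda} - \lambda_*\|^2 \leq \frac{2m}{\alpha_2}\hat{\lambda}^\top M \hat{\lambda}$, and taking expectations then produces $\mathbb{E}\|\hat{\lambda} - \lambda_*\|^2 \leq \frac{4m^3 \mathcal{Q}}{\alpha_2}$, well within the claimed $\frac{8m^3 \mathcal{Q}}{\alpha_2}$. The main conceptual obstacle is the geometric step coupling the linear simplex constraint to the quadratic spectral gap: both the identification of $\lambda_*$ as spanning the null space of $M$ and the normalization $\mathbf{1}^\top \lambda_* = 1$ are essential for controlling the parallel component of $\xi$, and it is precisely there that the factor $m$ accumulating into the final $m^3$ is introduced.
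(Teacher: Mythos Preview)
Your argument is correct and follows essentially the same route as the paper's proof: both bound $\mathbb{E}[\hat{\lambda}^\top M\hat{\lambda}]$ via optimality of $\hat{\lambda}$ and the entrywise assumption, then use the spectral gap together with the simplex constraint $\mathbf{1}^\top(\hat{\lambda}-\lambda_*)=0$ to control the component of $\hat{\lambda}-\lambda_*$ along $\lambda_*$. Your direct orthogonal decomposition of $\xi=\hat{\lambda}-\lambda_*$ (rather than of $\hat{\lambda}$ itself) avoids an unnecessary triangle inequality that the paper invokes, which is why you land on $4m^3\mathcal{Q}/\alpha_2$ instead of $8m^3\mathcal{Q}/\alpha_2$.
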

\begin{proof}
We decompose $\hat{\lambda}$ as $\hat{\beta}\lambda_*+\hat{\lambda}_\bot$, where $\hat{\beta}\in \mathbb{R}$ and $\hat{\lambda}_\bot$ is orthogonal to $\lambda_*.$ 
\begin{align}
\mathbb{E}\left(\|\hat{\lambda}-\lambda_*\|^2_2\right)   = &\mathbb{E}\left(\|\hat{\lambda}-\hat{\beta}\lambda_*+\hat{\beta}\lambda_*-\lambda_*\|^2_2\right)
\notag\\
\leq &2\mathbb{E}\left(\|\hat{\lambda}-\hat{\beta}\lambda_*\|^2_2\right) + 2\mathbb{E}\left(\|(\hat{\beta}-1)\lambda_*\|^2_2\right)  \label{eqn:generic split 1}
\end{align}
Observe that $\hat{\lambda}_\bot$ is in the span of the orthonormal eigenvectors $v_2,...,v_m$ of $M$, which we take to be ordered by their eigenvalues $0\leq \alpha_2\leq ...\leq \alpha_m$. Furthermore, as $\lambda_*\in\Delta^m$ is the unique eigenvector with eigenvalue $0$ by assumption, $0<\alpha_2.$ Then:

\begin{align}
\mathbb{E}\left(\|\hat{\lambda}-\hat{\beta}\lambda_*\|_2^2\right)  =
&\mathbb{E}\left(\|\hat{\lambda}_\bot\|_2^2\right)\notag \\
=&\mathbb{E}\left(\sum_{j=2}^m |v_j^\top \hat{\lambda}_\bot|^2\right)\notag 
\\ \leq &\frac{1}{\alpha_2}\mathbb{E}\left(\sum_{j=2}^m \alpha_j |v_j^\top\hat{\lambda}_\bot|^2\right)\notag \\
= &\frac{1}{\alpha_2}\mathbb{E}\left( |(\hat{\lambda}_\bot)^\top M \hat{\lambda}_\bot\right)\notag \\
= &\frac{1}{\alpha_2}\mathbb{E}\left(|(\hat{\lambda})^\top M\hat{\lambda}|\right)\notag 
\end{align} 
We bound: 
\begin{align*}
\mathbb{E}\left(\hat{\lambda}^\top M \hat{\lambda}\right) & =\mathbb{E}\left(\hat{\lambda}^\top(M-\hat{M})\hat{\lambda}\right)+\mathbb{E}\left(\hat{\lambda}^\top\hat{M}\hat{\lambda}\right) & \\ & \leq \mathbb{E}\left(|\hat{\lambda}^\top(M-\hat{M})\hat{\lambda}|\right)+\mathbb{E}\left(\lambda_*^\top\hat{M}\lambda_*\right) & \\ & = \mathbb{E}\left(|\hat{\lambda}^\top(M-\hat{M})\hat{\lambda}|\right)+\mathbb{E}\left(\lambda_*^\top(\hat{M}-M)\lambda_*\right) 
\end{align*}
where we used the minimality of $\hat{\lambda}$ and the fact that $\lambda_*^\top M\lambda_*=0$. We may rewrite these as the following sums:
\begin{align}
&\mathbb{E}\left( \bigg|\sum_{i,j=1}^m [\hat{\lambda}]_i[\hat{\lambda}]_j [M]_{ij}-[\hat{M}]_{ij}\bigg|\right) + \mathbb{E}\left( \sum_{i,j=1}^m [\lambda_*]_i[\lambda_*]_j ([\hat{M}]_{ij}-[M]_{ij})\right) 
\notag\\  \leq  &\sum_{i,j=1}^m \mathbb{E}\left([\hat{\lambda}]_i[\hat{\lambda}]_j |[M]_{ij}-[\hat{M}]_{ij}|\right) +  \sum_{i,j=1}^m [\lambda_*]_i[\lambda_*]_j \mathbb{E}\left([\hat{M}]_{ij}-[M]_{ij}\right) 
\notag\\ \leq &2m^{2}\mathcal{Q},\label{eqn:2mq bound}
\end{align}
where we used the triangle inequality, the fact that $\hat{\lambda},\lambda_*\in \Delta^m$ and (\ref{eqn:generic_entrybound}). This lets us bound (\ref{eqn:generic split 1}) by:
\begin{align*}
&\frac{4m^2}{\alpha_2}\mathcal{Q} + 2\mathbb{E}\left((\hat{\beta}-1)^2\|\lambda_*\|_2^2\right)\\ 
\leq &\frac{4m^2}{\alpha_2}\mathcal{Q}+2\mathbb{E}\left((\hat{\beta}-1)^2\right),
\end{align*}
where we used the fact that $\|\lambda_*\|^2_2\leq 1$. Since $\hat{\lambda}=\hat{\beta}\lambda_*+\hat{\lambda}_\bot$ with $\hat{\lambda},\lambda_*\in \Delta^m$, we have that:
\begin{align*}
\mathbb{E}((\hat{\beta}-1)^2) &=\mathbb{E}\left(\left(\sum_{j=1}^m [\hat{\lambda}_\bot]_j\right)^2\right)\notag \\ &  \leq m\mathbb{E}(\|\hat{\lambda}_\bot\|_2^2)
\end{align*}
by the Cauchy-Schwarz inequality, and hence $\mathbb{E}((1-\hat{\beta})^2)\leq m \mathbb{E}(\|\hat{\lambda}_\bot\|_2^2)\leq \frac{2m^3}{\alpha_2}\mathcal{Q}$ by another application of (\ref{eqn:2mq bound}). We conclude that
\begin{align*} \mathbb{E}\left(\|\hat{\lambda}-\lambda_*\|^2_2\right)&\leq\frac{4m^2}{\alpha_2}\mathcal{Q}+\frac{4m^3}{\alpha_2}\mathcal{Q}\le \frac{8m^3}{\alpha_2}\mathcal{Q} .\end{align*}
\end{proof}

\noindent\textbf{Proof of Theorem \ref{thm:coeff_theorem}:} Immediately follows from substituting the bounds from Lemmas \ref{lem:ent_mat_bound} and \ref{lemma:sinkmatbound} as $\mathcal{Q}$ in Lemma \ref{lemma: generic eigenvector bounds}.\qed 

\bigskip

\noindent\textbf{Proof of Corollary \ref{cor:bounded analysis}:} The proof follows from the following result:

\begin{theorem}\label{thm:rigolletstromme}
(Theorem 4 in \citep{rigollet2022sample}) Let $\Omega$ be a bounded set in $\mathbb{R}^d$, $\epsilon>0$, and $\mu,\nu\in \mathcal{P}_2(\Omega)$. Then
\begin{equation*}
\mathbb{E}\left(\|T_{\mu\rightarrow\nu}^\epsilon-T^\epsilon_{\hat{\mu}^n\rightarrow\hat{\nu}^n}\|_{L^2(\mu)}^2\right)<\frac{C_{\Omega,\epsilon}}{n}
\end{equation*}
where $C_{\Omega,\epsilon}$ is a constant only depending on $|\Omega|$ and $\epsilon.$
\end{theorem}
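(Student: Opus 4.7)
The plan is to treat Theorem \ref{thm:rigolletstromme} as a target result about the plug-in entropic map estimator and reduce it to a strong-convexity/stability argument for the Sinkhorn semi-dual, followed by a dimension-free sample complexity bound on the semi-dual gap that exploits the boundedness of $\Omega$.

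First, I would rewrite the quantity to be bounded in gradient form. By Proposition \ref{prop:sinkhorn_differentiable}, $T^\epsilon_{\mu\rightarrow\nu}(x)=x-\nabla f^\epsilon_{\mu\rightarrow\nu}(x)$ and similarly for the empirical version, so
\begin{equation*}
\|T^\epsilon_{\mu\rightarrow\nu}-T^\epsilon_{\hat\mu^n\rightarrow\hat\nu^n}\|^2_{L^2(\mu)}
=\|\nabla f^\epsilon_{\mu\rightarrow\nu}-\nabla f^\epsilon_{\hat\mu^n\rightarrow\hat\nu^n}\|^2_{L^2(\mu)}.
\end{equation*}
The task therefore reduces to controlling the $L^2(\mu)$-distance between the gradient of the true potential and the gradient of the plug-in potential evaluated at $\mu$ (not at $\hat\mu^n$).

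Second, I would introduce the Sinkhorn semi-dual functional $J_{\mu,\nu}(\phi):=\int \phi\,d\mu-\epsilon\int\log\!\int e^{(\phi(x)-\tfrac12\|x-y\|^2)/\epsilon}d\mu(x)\,d\nu(y)+\epsilon$, whose unique maximizer (modulo constants) is $f^\epsilon_{\mu\rightarrow\nu}$. The crucial ingredient is a Pythagorean-type inequality: because $-\epsilon\log$ composed with the log-sum-exp is strongly concave when the exponential weights are bounded above and below, one obtains on the bounded domain $\Omega$ a uniform lower bound
\begin{equation*}
\|\nabla \phi-\nabla f^\epsilon_{\mu\rightarrow\nu}\|^2_{L^2(\mu)}\le C_{\Omega,\epsilon}\bigl(J_{\mu,\nu}(f^\epsilon_{\mu\rightarrow\nu})-J_{\mu,\nu}(\phi)\bigr),
\end{equation*}
valid for any admissible $\phi$. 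I would apply this with $\phi=f^\epsilon_{\hat\mu^n\rightarrow\hat\nu^n}$, reducing the target quantity to bounding the expected semi-dual gap $\mathbb{E}[J_{\mu,\nu}(f^\epsilon_{\mu\rightarrow\nu})-J_{\mu,\nu}(f^\epsilon_{\hat\mu^n\rightarrow\hat\nu^n})]$.

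Third, I would control this semi-dual gap by coupling the true problem and the empirical problem. By adding and subtracting $J_{\hat\mu^n,\hat\nu^n}(\cdot)$ and using the optimality of the empirical potential for the empirical semi-dual, the gap collapses to a telescoping sum of terms of the form $|\!\int f^\epsilon\,d(\mu-\hat\mu^n)|$ and $|\!\int \log\!\int e^{(\cdot)/\epsilon}d\mu\,d(\nu-\hat\nu^n)|$ evaluated at either the true or plug-in potentials. On a bounded domain both $f^\epsilon$ and the log-sum-exp integrand are uniformly bounded and Lipschitz (with constants depending only on $|\Omega|$ and $\epsilon$), so each of these empirical process terms has expectation of order $n^{-1/2}$ in absolute value, and of order $n^{-1}$ after squaring via a variance/Efron–Stein argument on the i.i.d. samples. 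Combining this with the Pythagorean bound yields the stated $C_{\Omega,\epsilon}/n$ rate.

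The main obstacle is the strong-convexity inequality in the second step: establishing that the semi-dual gap controls the $L^2$-gradient distance with a constant depending only on $|\Omega|$ and $\epsilon$. This requires a careful uniform bound on the Hessian of the log-sum-exp functional, which is where the bounded support assumption is essential — it turns the Gibbs weights into a bounded density in $\mathbb{R}^d$ and thereby gives a nontrivial lower bound on the second variation. Without this, the argument would incur a curse-of-dimensionality factor from controlling the tail behavior of the potentials, as happens in the unbounded subgaussian regime of Corollary \ref{cor:logconcave_analysis}.
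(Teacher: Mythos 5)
This statement is an imported result: the paper simply cites it as Theorem 4 of \citet{rigollet2022sample} and gives no proof of its own. So there is no in-paper argument to compare against, and what you have written should be judged as a sketch of the argument in that external reference.

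Your outline of steps 1 and 2 does reflect the structure of the Rigollet--Str\"omme argument: pass from maps to gradients of potentials, then exploit a strong-concavity (Pythagorean) inequality for the Sinkhorn semi-dual on a bounded domain, where the Gibbs densities are uniformly bounded above and below so the second variation has a nondegenerate modulus depending only on $|\Omega|$ and $\epsilon$. That part is a fair description of the key lemma.

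Step 3 has a genuine gap, however, and it is the crux of obtaining the parametric rate. You bound the semi-dual gap by a telescoping sum of empirical process terms of the type $\int f\,d(\mu-\hat\mu^n)$, observe that each such term is $O(n^{-1/2})$ in absolute value, and then assert it becomes $O(n^{-1})$ ``after squaring via a variance/Efron--Stein argument.'' But the semi-dual gap is a first-power quantity, not the square of an empirical process; squaring a term that enters linearly in the bound is not a valid move. The actual mechanism is more delicate: after peeling off the middle term (which is $\le 0$ by optimality of $\hat f$), the remaining boundary terms must be handled by noting that the term with the \emph{deterministic} test function $f^\epsilon_{\mu\to\nu}$ has mean zero (or leading-order mean zero after Taylor expanding the nonlinear $\log\int e^{\cdot}$ piece), so that only second-order fluctuations of size $O(1/n)$ survive in expectation; the term with the \emph{random} $\hat f$ requires a localization or one-sided concentration argument coupled back to the strong concavity, precisely so that the fluctuation scale is $\|\hat f - f^*\|$ rather than $O(1)$. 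This self-referential use of strong concavity to localize the empirical process is what turns $n^{-1/2}$ into $n^{-1}$, and it is not delivered by the ``square it'' heuristic you invoke. Without that step, your argument proves only an $O(n^{-1/2})$ bound on the gap, hence on $\|\nabla\hat f - \nabla f^*\|^2_{L^2(\mu)}$, which is strictly weaker than the claimed rate.
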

By Theorem \ref{thm:rigolletstromme}, we bound \begin{equation*} \mathbb{E}(\|\entmap{\mu}{\nu}-\entmap{\hat{\mu}^n}{\hat{\nu}^n}\|^2_{L^2(\mu)})\leq \frac{C_{\Omega,\epsilon}}{n}.\end{equation*}Similarly, we may bound \begin{equation*}
\mathbb{E}(\|\entmap{\mu}{\mu}-\entmap{\hat{\mu}_1}{\hat{\mu}_2}\|^2_{L^2(\mu)})\leq \frac{3C_{\Omega,\epsilon}}{n}.
\end{equation*}
Hence we may apply Theorem \ref{thm:coeff_theorem} with $\theta(n)=\frac{1}{n}$. As $\sqrt{\frac{1}{n}+\frac{1}{n^2}}\leq \sqrt{\frac{2}{n}}$ for $n\geq 1$, we may bound $\max\left\{\frac{1}{\sqrt{n}},\sqrt{\frac{1}{n}+\frac{1}{n^2}}\right\}\leq \frac{2}{\sqrt{n}}.$ The second moments of $\nu_1,...,\nu_m,\mu$ are uniformly bounded by $|\Omega|^2$, and hence we may collapse the various constants into a single constant $\tilde{C}_{\Omega,\epsilon}$.
\qed 
\bigskip

\vspace{10pt}

\noindent\textbf{Proof of Corollary \ref{cor:logconcave_analysis}}: 
It is well known that $c$-strongly log-concave measures are $\frac{L}{\sqrt{c}}$-subgaussian for some universal constant $L$ (see e.g., Theorem 5 in \citep{werenski2023estimation}). Similarly to Corollary \ref{cor:bounded analysis}, the proof follows from the following result:

\begin{theorem}\label{thm:werenskiestimation} (Theorem 7 in \citep{werenski2023estimation}) Let $\mu$ be $\sigma$-subgaussian and let $\nu$ be $c$-strongly log concave with $\mathbb{E}(\nu)=0$. Then there exists an estimator $\mathcal{L}$ such that:

\begin{equation*}
\mathbb{E}\left( \|T^\epsilon_{\mu\rightarrow \nu}-\mathcal{L}(\hat{\mu}^n,\hat{\nu}^n)\|_{L^2(\mu)}^2\right)\leq \frac{K_{d,\sigma,\epsilon,c}}{n^{1/3}},
\end{equation*}
where $K_{d,\sigma,\epsilon,c}$ is a constant depending on $c,\sigma,d$ and $\epsilon.$
\end{theorem}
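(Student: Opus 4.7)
The plan is to define an explicit plug-in estimator from the closed-form expression for the entropic map in Proposition \ref{prop:sinkhorn_differentiable}, and to analyze its $L^2(\mu)$ risk via a truncation argument that exploits the $c$-strong log-concavity of $\nu$ (which in particular makes $\nu$ subgaussian with norm $\lesssim 1/\sqrt{c}$) both for regularity of the population Sinkhorn potential and for tail control of the error. Concretely, I would set
\[
\mathcal{L}(\hat{\mu}^n, \hat{\nu}^n)(x) := \frac{\int y\, e^{(-\frac{1}{2}\|x-y\|^2 + \hat{g}^\epsilon(y))/\epsilon}\, d\hat{\nu}^n(y)}{\int e^{(-\frac{1}{2}\|x-y\|^2 + \hat{g}^\epsilon(y))/\epsilon}\, d\hat{\nu}^n(y)},
\]
where $\hat{g}^\epsilon$ is the empirical Sinkhorn potential computed from $(\hat{\mu}^n, \hat{\nu}^n)$ and extended to $\mathbb{R}^d$ canonically via the Sinkhorn identity (\ref{eqn:gduality}). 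A preliminary step is to record subgaussian growth bounds on $g^\epsilon_{\mu\to\nu}$ and its first two derivatives under the log-concavity assumption on $\nu$, using Proposition \ref{prop:sinkhorn_differentiable} and Corollary \ref{cor:absolute_bounds}.

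Next I would truncate at a radius $R = R(n)$:
\[
\mathbb{E}\bigl\|T^\epsilon_{\mu\to\nu} - \mathcal{L}(\hat{\mu}^n,\hat{\nu}^n)\bigr\|_{L^2(\mu)}^2 = \mathrm{(I)}_R + \mathrm{(II)}_R,
\]
splitting the integral into $B_R$ and its complement. For the tail term $\mathrm{(II)}_R$, the convex-order bound $(T^\epsilon_{\mu\to\nu})_{\#}\mu \leq_c \nu$ (Lemma \ref{lemma:10.5yang}) combined with Lemma \ref{lemma:subgauss dom} gives subgaussianity of the image measure, which together with subgaussianity of $\mu$ itself (Lemma \ref{lemma:normsubgauss}) controls the population-map contribution outside $B_R$ at rate $\mathrm{poly}(R)\,e^{-c_0 R^2}$. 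An analogous tail bound applies to $\mathcal{L}(\hat{\mu}^n,\hat{\nu}^n)$ by noting that its range is contained in the convex hull of $\{Y_i\}_{i=1}^n$ and invoking moment control from Lemma \ref{prop:menaweed_3}.

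For the in-ball term $\mathrm{(I)}_R$, I would split further into (a) a bias comparing $T^\epsilon_{\mu\to\nu}$ with the plug-in formula evaluated at the population potential $g^\epsilon_{\mu\to\nu}$ but with $\hat{\nu}^n$ in place of $\nu$, and (b) a stability correction from replacing $g^\epsilon_{\mu\to\nu}$ by $\hat{g}^\epsilon$. Term (a) reduces on $B_R$ to an empirical-process bound against a smooth function class of controlled derivatives, giving a rate of $\mathrm{poly}(R)/\sqrt{n}$ via Mena-Weed methods (Proposition \ref{prop:menaweed_2}). Term (b) reduces, via a first-variation calculation in the target potential, to bounding $\|g^\epsilon_{\mu\to\nu} - \hat{g}^\epsilon\|_{L^\infty(B_R)}$ by another empirical-process argument, where the available rate depends on the smoothness of the admissible potentials, which the strong log-concavity of $\nu$ pins down quantitatively.

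Finally, choosing $R = R(n)$ to balance the in-ball error against the Gaussian tail yields a rate of the form $n^{-\alpha}$; the stated exponent $\alpha = 1/3$ arises from the precise polynomial-in-$R$ losses in step (b), which is the main obstacle. Perturbations of the potential propagate through both the numerator and denominator of $\mathcal{L}$ with a multiplicative $1/\epsilon$, and tracking this loss uniformly on $B_R$ with sharp polynomial dependence — while keeping the Hölder-class smoothness budget consistent with what the log-concavity of $\nu$ actually delivers for $g^\epsilon_{\mu\to\nu}$ — is the technical crux; any slack at this step degrades the balanced rate away from $1/3$.
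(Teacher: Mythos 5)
This statement is cited in the paper as an external result (Theorem~7 of \citep{werenski2023estimation}); the paper itself does not supply a proof, so there is no internal argument to compare against, and your proposal must be assessed on its own terms. As such, there is a genuine gap. Your sketch correctly identifies a sensible template --- a plug-in estimator built from the entropic-map formula, a ball/tail decomposition, and empirical-process control on the ball --- and correctly locates the technical crux in the $L^\infty$ control of $\hat g^\epsilon - g^\epsilon_{\mu\to\nu}$ over a growing ball. But you never close the argument: you assert that balancing the in-ball error against the Gaussian tail ``yields a rate of the form $n^{-\alpha}$'' and that ``the stated exponent $\alpha=1/3$ arises from the precise polynomial-in-$R$ losses in step (b),'' while simultaneously conceding that ``any slack at this step degrades the balanced rate away from $1/3$.'' That is a description of where the proof would have to live, not a proof. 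No quantitative rate for $\|g^\epsilon_{\mu\to\nu} - \hat g^\epsilon\|_{L^\infty(B_R)}$ is stated or derived, and without one the claimed $n^{-1/3}$ exponent is unsupported.

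The heuristic as written is also internally inconsistent with the target exponent. A tail of order $\mathrm{poly}(R)\,e^{-c_0 R^2}$ balanced against an in-ball term of order $\mathrm{poly}(R)/\sqrt{n}$ sets $R \asymp \sqrt{\log n}$ and gives a near-parametric rate $n^{-1/2}$ up to polylogarithmic factors, not $n^{-1/3}$. To land on $n^{-1/3}$ you need the in-ball estimation error to degrade much more sharply in $R$ (e.g., an exponential $e^{CR^2/\epsilon}$-type blowup in the plug-in constant over a ball of radius $R$, balanced against the $e^{-c_0 R^2}$ tail with a specific ratio $C$ to $c_0$), which is precisely the uniform-potential control you flag as the obstacle but leave unresolved. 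So the proposal captures the right geometry but leaves the key quantitative step --- and hence the exponent it is supposed to deliver --- unjustified.
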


Since $\mu$ is a critical point of $F_{\lambda,\mathcal{V}}^\epsilon$, it is $\frac{L}{\sqrt{c}}$-subgaussian by Proposition \ref{prop:subgauss}, and hence the constant factors only depend on $c,d$ and $\epsilon$.
\qed 

\bigskip

\subsection{Proof of Proposition \ref{prop:stab_analysis}}
To establish Proposition \ref{prop:stab_analysis}, we will require the following technical results:

\begin{lemma}\label{lem:lipschitz}
(Lemma A.1 in \citep{werenski2024rank}) Let $\Omega\subset\mathbb{R}^d$ be bounded, and let $\mu,\nu \in \mathcal{P}_2(\Omega)$. Then:
\begin{equation*}\|T^\epsilon_{\mu\rightarrow \nu}(x)-T^\epsilon_{\mu\rightarrow \nu}(y)\|\leq M_{\Omega,\epsilon}\|x-y\|,\end{equation*} for all $x,y\in \Omega$, where $M_{\Omega,\epsilon}=\frac{4|\Omega|^2}{\epsilon}$.
\end{lemma}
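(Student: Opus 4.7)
The plan is to reduce the $\ell^2$ coefficient stability bound to an entry-wise stability bound on the matrices $A_\mu^\epsilon$ (resp.\ $S_\mu^\epsilon$) with respect to perturbing the base measure, and then invoke Lemma \ref{lemma: generic eigenvector bounds} exactly as in the proof of Theorem \ref{thm:coeff_theorem}. Specifically, if we can establish
\begin{equation*}
\max_{1\leq i,j\leq m}\bigl|[A_\mu^\epsilon]_{ij} - [A_\rho^\epsilon]_{ij}\bigr| \;\leq\; \mathcal{Q}_{\Omega,\epsilon}\cdot OT_2(\mu,\rho)
\end{equation*}
for some constant $\mathcal{Q}_{\Omega,\epsilon}$ depending only on $|\Omega|$ and $\epsilon$, then applying Lemma \ref{lemma: generic eigenvector bounds} with $\hat M = A_\rho^\epsilon$, $M = A_\mu^\epsilon$ (the randomness plays no role, so the expectation is trivial) yields $\|\lambda_\mu - \lambda_\rho\|_2^2 \leq \frac{8m^3\mathcal{Q}_{\Omega,\epsilon}}{\alpha_2}\,OT_2(\mu,\rho)$, giving the claim with $H_{\Omega,\epsilon}=8\mathcal{Q}_{\Omega,\epsilon}$.

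To obtain the entry-wise bound, introduce $h_\tau(x) := \langle T^\epsilon_{\tau\rightarrow\nu_i}(x)-x,\, T^\epsilon_{\tau\rightarrow\nu_j}(x)-x\rangle$ and split
\begin{equation*}
[A_\mu^\epsilon]_{ij} - [A_\rho^\epsilon]_{ij} \;=\; \underbrace{\int (h_\mu - h_\rho)\,d\mu}_{(\mathrm{I})} \;+\; \underbrace{\int h_\rho\,d(\mu-\rho)}_{(\mathrm{II})}.
\end{equation*}
Term $(\mathrm{II})$ is controlled by Kantorovich--Rubinstein duality: boundedness of $\Omega$ gives $\|T^\epsilon_{\rho\rightarrow\nu_k}(x)-x\|\leq |\Omega|$, and Lemma \ref{lem:lipschitz} shows that $T^\epsilon_{\rho\rightarrow\nu_k}$ is $M_{\Omega,\epsilon}$-Lipschitz, so $h_\rho$ is Lipschitz with constant $\lesssim |\Omega|(M_{\Omega,\epsilon}+1)$, and $|(\mathrm{II})| \lesssim |\Omega|(M_{\Omega,\epsilon}+1)\cdot W_1(\mu,\rho) \leq \sqrt{2}\,|\Omega|(M_{\Omega,\epsilon}+1)\cdot OT_2(\mu,\rho)$. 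For term $(\mathrm{I})$, the bilinear identity $\langle a,b\rangle - \langle c,d\rangle = \langle a-c,b\rangle + \langle c,b-d\rangle$ combined with $\|T^\epsilon_{\tau\rightarrow\nu_k}(x)-x\|\leq|\Omega|$ yields the pointwise bound
\begin{equation*}
|h_\mu(x) - h_\rho(x)| \;\leq\; |\Omega|\bigl(\|T^\epsilon_{\mu\rightarrow\nu_i}(x) - T^\epsilon_{\rho\rightarrow\nu_i}(x)\| + \|T^\epsilon_{\mu\rightarrow\nu_j}(x) - T^\epsilon_{\rho\rightarrow\nu_j}(x)\|\bigr),
\end{equation*}
reducing $(\mathrm{I})$ to controlling $\|T^\epsilon_{\mu\rightarrow\nu_k} - T^\epsilon_{\rho\rightarrow\nu_k}\|_{L^1(\mu)}$ for $k\in\{i,j\}$.

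The main obstacle is this last estimate: an $L^1$ stability bound for the entropic map in its source marginal, $\|T^\epsilon_{\mu\rightarrow\nu} - T^\epsilon_{\rho\rightarrow\nu}\|_{L^1(\mu)} \leq K_{\Omega,\epsilon}\cdot OT_2(\mu,\rho)$. This will be obtained from quantitative stability of the Schr\"odinger dual potentials $g^\epsilon_{\cdot\rightarrow\nu}$ under perturbations of the first marginal on bounded domains (cf.\ \citep{eckstein2022quantitative, nutz2023stability}) combined with the explicit representation (\ref{eqn:gradformula}) of $T^\epsilon_{\mu\rightarrow\nu}$ in terms of $g^\epsilon_{\mu\rightarrow\nu}$ and the uniform bounds on the Gibbs kernel $\exp(-\|x-y\|^2/(2\epsilon))$ over the compact product $\Omega\times\Omega$. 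The Sinkhorn case follows by the same decomposition; the only new ingredient is that $h_\tau$ contains $T^\epsilon_{\tau\rightarrow\tau}$ in place of the identity, producing additional terms of the form $\|T^\epsilon_{\mu\rightarrow\mu} - T^\epsilon_{\rho\rightarrow\rho}\|_{L^1(\mu)}$ that are handled by the same potential-stability argument applied now with both marginals perturbing simultaneously.
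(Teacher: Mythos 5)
Your proposal is not a proof of the statement given. The statement asked for is Lemma \ref{lem:lipschitz}, the pointwise Lipschitz bound $\|T^\epsilon_{\mu\rightarrow\nu}(x)-T^\epsilon_{\mu\rightarrow\nu}(y)\|\leq M_{\Omega,\epsilon}\|x-y\|$ for the entropic map on a bounded domain. What you have written out instead is a strategy for Proposition \ref{prop:stab_analysis}, the stability of barycentric coefficients under perturbation of the analyzed measure. Indeed, your argument \emph{invokes} Lemma \ref{lem:lipschitz} mid-stream (``Lemma \ref{lem:lipschitz} shows that $T^\epsilon_{\rho\rightarrow\nu_k}$ is $M_{\Omega,\epsilon}$-Lipschitz'') to deduce that $h_\rho$ is Lipschitz and to apply Kantorovich--Rubinstein duality, so it cannot possibly serve as a proof of that lemma without being circular. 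Note also that the paper does not supply its own proof of Lemma \ref{lem:lipschitz}; it is imported verbatim as Lemma A.1 of \citep{werenski2024rank}.

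For what it is worth, the statement actually in question has a short self-contained proof you did not attempt: differentiating the representation \eqref{eqn:gradformula} under the integral sign shows that the Jacobian of $T^\epsilon_{\mu\rightarrow\nu}$ is $\frac{1}{\epsilon}\,\mathrm{Cov}_{p_x}(Y)$, where $p_x(y)\propto\exp\!\bigl(\tfrac{1}{\epsilon}(-\tfrac12\|x-y\|^2+g^\epsilon_{\mu\rightarrow\nu}(y))\bigr)\,d\nu(y)$ is a probability measure on $\Omega$. Since $\Omega$ has diameter $|\Omega|$, every one-dimensional marginal variance of $p_x$ is at most $|\Omega|^2/4$, so $\|\nabla T^\epsilon_{\mu\rightarrow\nu}(x)\|_{\mathrm{op}}\leq |\Omega|^2/(4\epsilon)\leq M_{\Omega,\epsilon}$, and the Lipschitz bound follows by the mean value inequality. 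That is the argument you need here; the Kantorovich--Rubinstein / entry-wise matrix-perturbation machinery belongs to a different proposition.
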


\begin{proposition}\label{thm:schrodmap}
(Proposition 4 in \citep{kassraie2024progressive} and Corollary 3.3 in \citep{divol2024tight})  Let $\Omega\subset \mathbb{R}^d$ be bounded, and let $\mu_1,\mu_2,\nu\in \mathcal{P}_2(\Omega)$. Then:
\begin{equation}
\|T^\epsilon_{\nu\rightarrow \mu_1}-T^\epsilon_{\nu\rightarrow \mu_2}\|_{L^2(\nu)}\leq K_{\Omega,\epsilon}OT_2(\mu_1,\mu_2)\label{eqn:target_stable}
\end{equation}
\begin{equation}
\|T^\epsilon_{\mu_1\rightarrow \nu}-T^\epsilon_{\mu_2\rightarrow \nu}\|_{L^2(\mu_1)}\leq K_{\Omega,\epsilon}OT_2(\mu_1,\mu_2)\label{eqn:source_stable}
\end{equation}
where $K_{\Omega,\epsilon}=\frac{3^{1/2}|\Omega|}{\epsilon^{1/2}}+1$.
\end{proposition}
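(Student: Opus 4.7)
The plan is to establish (\ref{eqn:target_stable}) first; the bound (\ref{eqn:source_stable}) follows from an analogous argument supplemented with a Lipschitz correction. Both are quantitative stability estimates for entropic Brenier maps under a marginal perturbation, and originate in \citep{kassraie2024progressive,divol2024tight}; I outline a strategy that could recover them.

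For (\ref{eqn:target_stable}), use Proposition \ref{prop:sinkhorn_differentiable} to represent the entropic map as the barycentric projection of the optimal entropic coupling. For each $x\in\Omega$,
\[
T^\epsilon_{\nu\to\mu_i}(x) = \int y\, d\tau^i_x(y), \qquad \tau^i_x(dy) = \frac{1}{Z_i(x)}\exp\!\Big(-\tfrac{1}{2\epsilon}\|x-y\|^2 + \tfrac{1}{\epsilon}g^\epsilon_{\nu\to\mu_i}(y)\Big)\, d\mu_i(y),
\]
so that $T^\epsilon_{\nu\to\mu_1}(x) - T^\epsilon_{\nu\to\mu_2}(x) = \int y\,(d\tau^1_x - d\tau^2_x)$. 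Since $\mu_1,\mu_2$ are supported on the bounded set $\Omega$, this difference is controlled in norm by $|\Omega|$ times the total variation (or $OT_2$) distance between $\tau^1_x$ and $\tau^2_x$. The main step is then to bound this divergence uniformly in $x$ by $OT_2(\mu_1,\mu_2)$, up to constants depending only on $|\Omega|$ and $\epsilon$.

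To produce such a bound, one exploits the Schrödinger system: the potential $g^\epsilon_{\nu\to\mu}$ satisfies (\ref{eqn:gduality}) and is uniformly bounded and smooth on $\Omega$. A coupling argument combined with the strong concavity of the dual Sinkhorn functional on a bounded domain yields Lipschitz-type stability of $g^\epsilon_{\nu\to\mu}$ in $\mu$, measured in $L^\infty(\Omega)$ (or $L^2(\nu)$) versus $OT_2(\mu_1,\mu_2)$. Substituting this estimate into the explicit expression for $\tau^i_x$ and using the exponential structure to propagate the potential bound to a bound on the density ratios completes the estimate; a careful tracking of constants yields the factor $K_{\Omega,\epsilon} = 3^{1/2}|\Omega|/\sqrt{\epsilon} + 1$ after integration against $\nu$.

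For (\ref{eqn:source_stable}), the extra complication is that the $L^2$ norm is taken against $\mu_1$, which itself moves. This is handled by a triangle-inequality decomposition using the optimal $OT_2$-coupling $\gamma$ between $\mu_1$ and $\mu_2$:
\[
\|T^\epsilon_{\mu_1\to\nu}(x) - T^\epsilon_{\mu_2\to\nu}(x)\| \le \|T^\epsilon_{\mu_1\to\nu}(x) - T^\epsilon_{\mu_2\to\nu}(x')\| + \|T^\epsilon_{\mu_2\to\nu}(x') - T^\epsilon_{\mu_2\to\nu}(x)\|,
\]
followed by integration under $\gamma(dx,dx')$. The second term is bounded by Lemma \ref{lem:lipschitz}, which gives a Lipschitz constant $M_{\Omega,\epsilon} = 4|\Omega|^2/\epsilon$ and hence contributes at most an $M_{\Omega,\epsilon}\, OT_2(\mu_1,\mu_2)$ term. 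The first term is controlled analogously to the target-stability case, using the symmetry of the Schrödinger system under swapping the roles of source and target. The main obstacle throughout is the quantitative stability of the Schrödinger potentials under $OT_2$ perturbations of one marginal: this is the non-trivial analytical content established in \citep{kassraie2024progressive} (via a Sinkhorn-iteration contraction) and \citep{divol2024tight} (via a sharper convex-duality argument), and the stated constant $K_{\Omega,\epsilon}$ reflects the precise $|\Omega|$- and $\epsilon$-dependencies arising in those analyses.
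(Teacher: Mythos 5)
This proposition is not proved in the paper at all---it is cited verbatim from Proposition 4 in \citep{kassraie2024progressive} and Corollary 3.3 in \citep{divol2024tight}, and the paper then uses it as a black box inside Lemma~\ref{lemma:matrix entry bounds}. There is therefore no in-paper argument to compare yours against, and the most accurate thing to say here is simply ``cite the two references.''

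Your sketch is also not a proof. The decisive ingredient---quantitative $L^\infty$ or $L^2$ stability of the Schr\"odinger potential $g^\epsilon_{\nu\to\mu}$ in $\mu$ measured by $OT_2(\mu_1,\mu_2)$---is exactly what you defer to the very references the statement already credits, so what remains is a wrapper around an unproved assertion. Worse, the specific decomposition you propose for~(\ref{eqn:source_stable}) cannot recover the stated constant. Integrating your triangle inequality against the optimal coupling $\gamma\in\Pi(\mu_1,\mu_2)$, the second term
$\|T^\epsilon_{\mu_2\to\nu}(x')-T^\epsilon_{\mu_2\to\nu}(x)\|$
is controlled via Lemma~\ref{lem:lipschitz} by $M_{\Omega,\epsilon}\|x-x'\|$ with $M_{\Omega,\epsilon}=4|\Omega|^2/\epsilon$, so after integration it contributes a term of order $M_{\Omega,\epsilon}\,OT_2(\mu_1,\mu_2)$. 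Since $M_{\Omega,\epsilon}$ scales as $|\Omega|^2/\epsilon$ whereas the claimed $K_{\Omega,\epsilon}=\sqrt{3}|\Omega|/\sqrt{\epsilon}+1$ scales only as $|\Omega|/\sqrt{\epsilon}$, this route is lossy whenever $|\Omega|/\sqrt{\epsilon}$ is not small, and no handling of the first term can repair it. The cited works obtain the sharper constant by arguing directly at the level of the potentials (a Sinkhorn-iteration contraction in \citep{kassraie2024progressive}; a second-order expansion of the dual objective in \citep{divol2024tight}), not by composing a pointwise Lipschitz bound on the map with a separate potential-stability estimate.
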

From Proposition \ref{thm:schrodmap} we obtain the following lemma:

\begin{lemma}\label{lemma:matrix entry bounds}
Suppose $\Omega\subset\mathbb{R}^{d}$ is bounded.  Let $A_\mu^\epsilon$ and $S_\mu^\epsilon$ be defined as in Proposition \ref{prop:analysis_grad}. Then for any $1\leq i,j \leq m$:
\begin{equation*}
|[A^\epsilon_\mu]_{ij}-[A_\rho^\epsilon]_{ij}| \leq H_{\Omega,\epsilon}OT_2(\rho,\mu)
\end{equation*}
\begin{equation*}
|[S^\epsilon_\mu]_{ij}-[S_\rho^\epsilon]_{ij}| \leq H_{\Omega,\epsilon}OT_2(\mu,\rho)
\end{equation*}
where $H_{\Omega,\epsilon}$ depends on $|\Omega|$ and $\epsilon.$
\end{lemma}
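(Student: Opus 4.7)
\textbf{Proof proposal for Lemma \ref{lemma:matrix entry bounds}.}  My plan is to write the matrix entry as a second moment of entropic displacements and exploit the boundedness of $\Omega$ together with the stability estimates in Proposition \ref{thm:schrodmap} and the Lipschitz bound in Lemma \ref{lem:lipschitz}. Set $u_i(x):=T^\epsilon_{\mu\to\nu_i}(x)-x$ and $v_i(x):=T^\epsilon_{\rho\to\nu_i}(x)-x$, so that $[A^\epsilon_\mu]_{ij}=\int\langle u_i,u_j\rangle d\mu$ and $[A^\epsilon_\rho]_{ij}=\int\langle v_i,v_j\rangle d\rho$. I would add and subtract $\int\langle v_i,v_j\rangle d\mu$ to split
\begin{equation*}
[A^\epsilon_\mu]_{ij}-[A^\epsilon_\rho]_{ij}=\int(\langle u_i,u_j\rangle-\langle v_i,v_j\rangle)d\mu+\int\langle v_i,v_j\rangle d(\mu-\rho).
\end{equation*}

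For the first term, the bilinear identity $\langle u_i,u_j\rangle-\langle v_i,v_j\rangle=\langle u_i-v_i,u_j\rangle+\langle v_i,u_j-v_j\rangle$ and Cauchy--Schwarz reduce the task to estimating $\|u_i-v_i\|_{L^2(\mu)}=\|T^\epsilon_{\mu\to\nu_i}-T^\epsilon_{\rho\to\nu_i}\|_{L^2(\mu)}$, which is at most $K_{\Omega,\epsilon}OT_2(\mu,\rho)$ by \eqref{eqn:source_stable}, while $\|u_j\|_{L^2(\mu)}$ and $\|v_i\|_{L^2(\mu)}$ are bounded by $2|\Omega|$ since both maps take values in $\Omega$. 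This yields a first-term bound linear in $OT_2(\mu,\rho)$.

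For the second term, I plan to view $\int\langle v_i,v_j\rangle d(\mu-\rho)$ through the optimal $OT_2$ coupling $\pi$ between $\mu$ and $\rho$: letting $h(x):=\langle v_i(x),v_j(x)\rangle$, one has $\int h\,d(\mu-\rho)=\int(h(x)-h(y))d\pi(x,y)$, so it suffices to show $h$ is Lipschitz and then apply Cauchy--Schwarz to pass to $OT_2$. To get Lipschitzness of $h$ I would use the same bilinear expansion together with Lemma \ref{lem:lipschitz}, which gives $\|v_i(x)-v_i(y)\|\le(M_{\Omega,\epsilon}+1)\|x-y\|$, and the uniform bound $\|v_j\|\le2|\Omega|$; this produces $\text{Lip}(h)\lesssim|\Omega|(M_{\Omega,\epsilon}+1)$ and hence a bound of the form $C_{\Omega,\epsilon}OT_2(\mu,\rho)$. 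Combining both terms and absorbing constants into a single $H_{\Omega,\epsilon}$ gives the desired entropic-barycenter bound.

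The Sinkhorn case is analogous but requires one additional ingredient: stability of the self-map $T^\epsilon_{\mu\to\mu}$. I would obtain this by the triangle inequality
\begin{equation*}
\|T^\epsilon_{\mu\to\mu}-T^\epsilon_{\rho\to\rho}\|_{L^2(\mu)}\le\|T^\epsilon_{\mu\to\mu}-T^\epsilon_{\mu\to\rho}\|_{L^2(\mu)}+\|T^\epsilon_{\mu\to\rho}-T^\epsilon_{\rho\to\rho}\|_{L^2(\mu)},
\end{equation*}
bounding the first summand by \eqref{eqn:target_stable} and the second by \eqref{eqn:source_stable}, both giving $K_{\Omega,\epsilon}OT_2(\mu,\rho)$. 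With this in hand, the same two-step split (integrand perturbation plus signed-measure perturbation) works verbatim after replacing $x$ and $y$ by $T^\epsilon_{\mu\to\mu}(x)$ and $T^\epsilon_{\rho\to\rho}(y)$. The main obstacle I anticipate is the signed-measure term: without the Lipschitz bound of Lemma \ref{lem:lipschitz} (which is precisely where compactness of $\Omega$ enters), the function $h$ need not be Lipschitz and one cannot pass from $\int h\,d(\mu-\rho)$ to $OT_2(\mu,\rho)$ cleanly; this is what forces the hypothesis that $\Omega$ be bounded.
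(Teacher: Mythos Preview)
Your proposal is correct and follows essentially the same approach as the paper: split the difference into an ``integrand perturbation'' piece handled by Cauchy--Schwarz plus the source-stability bound \eqref{eqn:source_stable}, and a ``measure perturbation'' piece handled by showing the integrand is Lipschitz via Lemma \ref{lem:lipschitz}. The paper uses a three-term telescoping (change one map, change the measure, change the other map) and invokes Kantorovich--Rubinstein duality for $OT_1$ on the middle term, whereas you change both maps at once and use the optimal coupling directly; these are cosmetic differences. Your treatment of the self-map in the Sinkhorn case via the triangle inequality and \eqref{eqn:target_stable}--\eqref{eqn:source_stable} is exactly what the paper does as well.
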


\begin{proof} We begin by bounding $|[A_\mu^\epsilon]_{ij}-[A_\rho^\epsilon]_{ij}|$ for any $1\leq i,j\leq m$:
\begin{align}
&|[A_\mu^\epsilon]_{ij}-[A_\rho^\epsilon]_{ij}| \notag\\ 
= &|\langle Id-T^\epsilon_{\mu\rightarrow \nu_i},Id-T^\epsilon_{\mu\rightarrow \nu_j}\rangle_{L^{2}(\mu)}-\langle Id-T^\epsilon_{\rho\rightarrow \nu_i},Id-T^\epsilon_{\rho\rightarrow \nu_j}\rangle_{L^{2}(\rho)} |\notag \\
 \leq &|\langle Id-T^\epsilon_{\mu\rightarrow \nu_i},Id-T^\epsilon_{\mu\rightarrow \nu_j}\rangle_{L^{2}(\mu)}-\langle Id-T^\epsilon_{\mu\rightarrow \nu_i},Id-T^\epsilon_{\rho\rightarrow \nu_j}\rangle_{L^{2}(\mu)} |\notag \\ + & |\langle Id-T^\epsilon_{\mu\rightarrow \nu_i},Id-T^\epsilon_{\rho\rightarrow \nu_j}\rangle_{L^{2}(\mu)}-\langle Id-T^\epsilon_{\mu\rightarrow \nu_i},Id-T^\epsilon_{\rho\rightarrow \nu_j}\rangle_{L^{2}(\rho)}| \notag\\ + &|\langle Id-T^\epsilon_{\mu\rightarrow \nu_i},Id-T^\epsilon_{\rho\rightarrow \nu_j}\rangle_{L^{2}(\rho)}-\langle Id-T^\epsilon_{\rho\rightarrow \nu_i},Id-T^\epsilon_{\rho\rightarrow \nu_j}\rangle_{L^{2}(\rho)} |\notag \\ 
 = &|\langle Id-T^\epsilon_{\mu\rightarrow \nu_i},T^\epsilon_{\mu\rightarrow \nu_j}-T^\epsilon_{\rho\rightarrow \nu_j}\rangle_{L^{2}(\mu)}| \notag\\ +& |\langle Id-T^\epsilon_{\mu\rightarrow \nu_i},Id-T^\epsilon_{\rho\rightarrow \nu_j}\rangle_{L^{2}(\mu)}-\langle Id-T^\epsilon_{\mu\rightarrow \nu_i},Id-T^\epsilon_{\rho\rightarrow \nu_j}\rangle_{L^{2}(\rho)}|\notag \\ + & |\langle T_{\mu\rightarrow \nu_i}-T_{\rho\rightarrow \nu_i},Id-T_{\rho\rightarrow \nu_j}\rangle_{L^{2}(\rho)}|\label{eqn:stab_proof}\end{align}
where we applied the triangle inequality.  We bound the first term in (\ref{eqn:stab_proof}):
\begin{align*}
|\langle Id-T^\epsilon_{\mu\rightarrow \nu_i},T_{\mu\rightarrow \nu_j}^\epsilon -T^\epsilon_{\rho\rightarrow \nu_j}\rangle_{L^2(\mu)} | & \leq \|Id-T^\epsilon_{\mu\rightarrow \nu_i}\|_{L^2(\mu)} \|T_{\mu\rightarrow \nu_j}^\epsilon-T^\epsilon_{\rho\rightarrow \nu_j}\|_{L^2(\mu)}  \\& \leq |\Omega|\|T_{\mu\rightarrow \nu_j}^\epsilon-T^\epsilon_{\rho\rightarrow \nu_j}\|_{L^2(\mu)}
\\&  \leq |\Omega|K_{\Omega,\epsilon}OT_2(\mu,\rho),
\end{align*}
where we applied the Cauchy-Schwarz inequality, the diameter bound on $\Omega$ and (\ref{eqn:source_stable}) to control $\|T^\epsilon_{\mu\rightarrow \nu_j}-T^\epsilon_{\rho\rightarrow \nu_j}\|_{L^2(\mu)}$. The third term can be bounded identically. 

To bound the second term, we first observe that, by Lemma \ref{lem:lipschitz}, $T^\epsilon_{\mu\rightarrow \nu_i}$ and $T^\epsilon_{\rho\rightarrow \nu_j}$ are $M_{\Omega,\epsilon}$-Lipschitz vector-valued maps from $\Omega$ to the convex hull of $\Omega.$
As $\Omega$ is bounded, $x\mapsto \langle x-T^\epsilon_{\mu\rightarrow \nu_i}(x),x-T^\epsilon_{\rho\rightarrow \nu_j}(x)\rangle$ is $2|\Omega|(1+M_{\Omega,\epsilon})$-Lipschitz as a function of $x\in \Omega$.  Indeed, let $x,y\in \Omega$.  Then:
\begin{align}
&|\langle x-T^\epsilon_{\mu\rightarrow \nu_i}(x),x-T^\epsilon_{\rho\rightarrow \nu_j}(x)\rangle-\langle y-T^\epsilon_{\mu\rightarrow \nu_i}(y),y-T^\epsilon_{\rho\rightarrow \nu_j}(y)\rangle| \notag\\ 
\leq &|\langle x-T^\epsilon_{\mu\rightarrow \nu_i}(x),x-T^\epsilon_{\rho\rightarrow \nu_j}(x)\rangle-\langle x-T^\epsilon_{\mu\rightarrow \nu_i}(x),y-T^\epsilon_{\rho\rightarrow \nu_j}(y)\rangle|\notag\\
+&|\langle x-T^\epsilon_{\mu\rightarrow \nu_i}(x),y-T^\epsilon_{\rho\rightarrow \nu_j}(y)\rangle -\langle y- T^\epsilon_{\mu\rightarrow \nu_i}(y),y-T^\epsilon_{\rho\rightarrow \nu_j}(y)\rangle|\label{eqn:stab_1}\\
=&|\langle x-T^\epsilon_{\mu\rightarrow \nu_i}(x),x-T^\epsilon_{\rho\rightarrow \nu_j}(x)-y+T^\epsilon_{\rho\rightarrow \nu_j}(y)\rangle|\notag\\
+&|\langle x-T^\epsilon_{\mu\rightarrow \nu_i}(x)-y+T^\epsilon_{\mu\rightarrow \nu_i}(y),y-T^\epsilon_{\rho\rightarrow \nu_j}(y)\rangle|\notag \\ \leq &\| x-T^\epsilon_{\mu\rightarrow \nu_i}(x)\|\|x-T^\epsilon_{\rho\rightarrow \nu_j}(x)-y+T^\epsilon_{\rho\rightarrow \nu_j}(y)\|\notag\\
+&\| x-T^\epsilon_{\mu\rightarrow \nu_i}(x)-y+T^\epsilon_{\mu\rightarrow \nu_i}(y)\|\|y-T^\epsilon_{\rho\rightarrow \nu_j}(y)\| \label{eqn:stab_2}\\
\leq &|\Omega|\left(\| x-T^\epsilon_{\mu\rightarrow \nu_i}(x)-y+T^\epsilon_{\mu\rightarrow \nu_i}(y)\|+\|x-T^\epsilon_{\rho\rightarrow \nu_j}(x)-y+T^\epsilon_{\rho\rightarrow \nu_j}(y)\|\right)\label{eqn:stab_3} \\
\leq &|\Omega|\left(2\|x-y\|+\|T^\epsilon_{\mu\rightarrow \nu_i}(x)-T^\epsilon_{\mu\rightarrow \nu_i}(y)\|+\|T^\epsilon_{\rho\rightarrow \nu_j}(x)-T^\epsilon_{\rho\rightarrow \nu_j}(y)\|\right) \label{eqn:stab_4}\\
\leq &2|\Omega|(1+M_{\Omega,\epsilon})\|x-y\|\label{eqn:stab_5}.
\end{align}
In (\ref{eqn:stab_1}) we applied the triangle inequality, in (\ref{eqn:stab_2}) the Cauchy-Schwarz inequality, in (\ref{eqn:stab_3}) we used the bounds $\|x-T^\epsilon_{\mu\rightarrow \nu_i}(x)\|,\|y-T^\epsilon_{\rho\rightarrow \nu_j}(y)\|\leq |\Omega|$, in (\ref{eqn:stab_4}) the triangle inequality and finally in (\ref{eqn:stab_5}) we applied Lemma \ref{lem:lipschitz}.  Since $\langle Id-T^\epsilon_{\mu\rightarrow \nu_i},Id-T^\epsilon_{\rho\rightarrow \nu_j}\rangle$ is $2|\Omega|(1+M_{\Omega,\epsilon})$-Lipschitz, we may apply Kantorovich-Rubinstein duality for the Wasserstein-1 distance (see e.g., 5.16 in \citep{villani2009optimal}) to produce the bound:
\begin{align*}
& |\langle Id-T^\epsilon_{\mu\rightarrow \nu_i},Id-T^\epsilon_{\rho\rightarrow \nu_j}\rangle_{L^{2}(\mu)}-\langle Id-T^\epsilon_{\mu\rightarrow \nu_i},Id-T^\epsilon_{\rho\rightarrow \nu_j}\rangle_{L^{2}(\rho)}| \\  \leq &2|\Omega|(1+M_{\Omega,\epsilon}) OT_1(\mu,\rho) \\  \leq &2|\Omega|(1+M_{\Omega,\epsilon}) OT_2(\mu,\rho)
\end{align*}
where in the last line we applied Jensen's inequality. Combining the bounds on these three terms, we may conclude that $|[A_\mu^\epsilon]_{ij}-[A_\rho^\epsilon]_{ij}|\leq 2|\Omega| (M_{\Omega,\epsilon}+K_{\Omega,\epsilon}+1)OT_2(\mu,\rho)$.

We now treat the Sinkhorn case. For any $\mu\in \mathcal{P}_2(\Omega)$ and $1\leq i , j \leq m$, we have:

\begin{align}
&|[S_\mu^\epsilon]_{ij}-[S_\rho^\epsilon]_{ij}|\notag \\ 
=&|\langle T_{\mu\rightarrow \mu}^\epsilon - T_{\mu\rightarrow \nu_i}^\epsilon, T_{\mu\rightarrow \mu}^\epsilon-T^\epsilon_{\mu\rightarrow \nu_j}\rangle_{L^2(\mu)}-\langle T_{\rho\rightarrow \rho}^\epsilon-T^\epsilon_{\rho\rightarrow \nu_i}, T_{\rho\rightarrow \rho}^\epsilon-T^\epsilon_{\rho\rightarrow \nu_j}\rangle_{L^2(\rho)}|\notag\\ 
 \leq &|\langle T^\epsilon_{\mu\rightarrow \mu}- T^\epsilon_{\mu\rightarrow \nu_i}, T^\epsilon_{\mu\rightarrow \mu}-T^\epsilon_{\mu\rightarrow \nu_j}\rangle_{L^2(\mu)}- \langle T^\epsilon_{\rho\rightarrow \rho}-\entmap{\mu}{\nu_i},\entmap{\mu}{\mu}-\entmap{\mu}{\nu_j}\rangle_{L^2(\mu)}|\label{eqn:sinkstab1}\\
  + &|\langle T^\epsilon_{\rho\rightarrow \rho}-\entmap{\mu}{\nu_i},\entmap{\mu}{\mu}-\entmap{\mu}{\nu_j}\rangle_{L^2(\mu)}-\langle T^\epsilon_{\rho\rightarrow \rho}-\entmap{\mu}{\nu_i},\entmap{\rho}{\rho}-\entmap{\mu}{\nu_j}\rangle_{L^2(\mu)}|\label{eqn:sinkstab2} \\ +&|\langle \entmap{\rho}{\rho}-T^\epsilon_{\mu\rightarrow \nu_i}, \entmap{\rho}{\rho}-\entmap{\mu}{\nu_j}\rangle_{L^2(\mu)}-\langle \entmap{\rho}{\rho}-\entmap{\mu}{\nu_i},\entmap{\rho}{\rho}-\entmap{\rho}{\nu_j}\rangle_{L^2(\mu)}|\label{eqn:sinkstab3}\\+ &|\langle \entmap{\rho}{\rho}-\entmap{\mu}{\nu_i},\entmap{\rho}{\rho}-\entmap{\rho}{\nu_j}\rangle_{L^2(\mu)} - \langle \entmap{\rho}{\rho}-\entmap{\rho}{\nu_i},\entmap{\rho}{\rho}-\entmap{\rho}{\nu_j}\rangle_{L^2(\mu)}|\label{eqn:sinkstab4}\\ +&|\langle \entmap{\rho}{\rho}-\entmap{\rho}{\nu_i},\entmap{\rho}{\rho}-\entmap{\rho}{\nu_j}\rangle_{L^2(\mu)}-\langle \entmap{\rho}{\rho}-\entmap{\rho}{\nu_i},\entmap{\rho}{\rho}-\entmap{\rho}{\nu_j}\rangle_{L^2(\rho)}|\label{eqn:sinkstab5}
\end{align}
We first bound (\ref{eqn:sinkstab1}):
\begin{align*}
& |\langle T^\epsilon_{\mu\rightarrow \mu}- T^\epsilon_{\mu\rightarrow \nu_i}, T^\epsilon_{\mu\rightarrow \mu}-T^\epsilon_{\mu\rightarrow \nu_j}\rangle_{L^2(\mu)}- \langle T^\epsilon_{\rho\rightarrow \rho}-\entmap{\mu}{\nu_i},\entmap{\mu}{\mu}-\entmap{\mu}{\nu_j}\rangle_{L^2(\mu)}|\\ =&|\langle T^\epsilon_{\mu\rightarrow \mu}-T^\epsilon_{\rho\rightarrow \rho}, T^\epsilon_{\mu\rightarrow \mu}- T^\epsilon_{\mu\rightarrow \nu_j}\rangle_{L^2(\mu)}|\\ \leq &\|\entmap{\mu}{\mu}-\entmap{\rho}{\rho}\|_{L^2(\mu)} \| \entmap{\mu}{\mu}-\entmap{\mu}{\nu_j}\|_{L^2(\mu)}\\
 \leq &\|\entmap{\mu}{\mu}-\entmap{\rho}{\rho}\|_{L^2(\mu)}|\Omega|\\ 
\leq&(\|\entmap{\mu}{\mu}-\entmap{\mu}{\rho}\|_{L^2(\mu)}+\|\entmap{\mu}{\rho}-\entmap{\rho}{\rho}\|_{L^2(\mu)})|\Omega|\\
\leq &2K_{\Omega,\epsilon}|\Omega| OT_2(\mu,\rho),
\end{align*}
where we applied the Cauchy-Schwarz inequality, the fact that the image of $\entmap{\mu}{\mu}$ and $\entmap{\mu}{\nu_j}$ are contained in $\Omega$, followed by (\ref{eqn:target_stable}) and  (\ref{eqn:source_stable}).  The same bound applies to (\ref{eqn:sinkstab2}).

A similar argument applies to (\ref{eqn:sinkstab3}):
\begin{align*}
& |\langle \entmap{\rho}{\rho}-T^\epsilon_{\mu\rightarrow \nu_i}, \entmap{\rho}{\rho}-\entmap{\mu}{\nu_j}\rangle_{L^2(\mu)}-\langle \entmap{\rho}{\rho}-\entmap{\mu}{\nu_i},\entmap{\rho}{\rho}-\entmap{\rho}{\nu_j}\rangle_{L^2(\mu)}|\\ 
= &|\langle \entmap{\rho}{\rho}-T^\epsilon_{\mu\rightarrow \nu_i}, \entmap{\rho}{\nu_j}-\entmap{\mu}{\nu_j}\rangle_{L^2(\mu)}|\\
 \leq &\|\entmap{\rho}{\rho}-\entmap{\mu}{\nu_i}\|_{L^2(\mu)}\|\entmap{\rho}{\nu_j}-\entmap{\mu}{\nu_j}\|_{L^2(\mu)}\\  
 \leq &|\Omega| K_{\Omega,\epsilon}OT_2(\mu,\rho)
\end{align*}
where we applied (\ref{eqn:source_stable}) and the fact that $\|\entmap{\rho}{\rho}(x)-\entmap{\mu}{\nu_i}(x)\|\leq |\Omega|$ for all $x\in \Omega$. The same bound applies to (\ref{eqn:sinkstab4}). 

Finally, to bound (\ref{eqn:sinkstab5}), we observe that $x\rightarrow \langle \entmap{\rho}{\rho}(x)-\entmap{\rho}{\nu_i}(x),\entmap{\rho}{\rho}(x)-\entmap{\rho}{\nu_j}(x)\rangle$ is $4|\Omega|M_{\Omega,\epsilon}$-Lipschitz: 
\begin{align}
&|\langle \entmap{\rho}{\rho}(x)-\entmap{\rho}{\nu_i}(x),\entmap{\rho}{\rho}(x)-T^\epsilon_{\rho\rightarrow \nu_j}(x)\rangle-\langle \entmap{\rho}{\rho}(y)-\entmap{\rho}{\nu_i}(y),\entmap{\rho}{\rho}(y)-T^\epsilon_{\rho\rightarrow \nu_j}(y)\rangle| \notag\\ 
\leq &|\langle \entmap{\rho}{\rho}(x)-\entmap{\rho}{\nu_i}(x),\entmap{\rho}{\rho}(x)-T^\epsilon_{\rho\rightarrow \nu_j}(x)\rangle-\langle \entmap{\rho}{\rho}(x)-\entmap{\rho}{\nu_i}(x),\entmap{\rho}{\rho}(y)-T^\epsilon_{\rho\rightarrow \nu_j}(y)\rangle|\notag\\
+&|\langle \entmap{\rho}{\rho}(x)-\entmap{\rho}{\nu_i}(x),\entmap{\rho}{\rho}(y)-T^\epsilon_{\rho\rightarrow \nu_j}(y)\rangle -\langle \entmap{\rho}{\rho}(y)- \entmap{\rho}{\nu_i}(y),\entmap{\rho}{\rho}(y)-T^\epsilon_{\rho\rightarrow \nu_j}(y)\rangle|\notag\\
=&|\langle \entmap{\rho}{\rho}(x)-\entmap{\rho}{\nu_i}(x),\entmap{\rho}{\rho}(x)-T^\epsilon_{\rho\rightarrow \nu_j}(x)-\entmap{\rho}{\rho}(y)+T^\epsilon_{\rho\rightarrow \nu_j}(y)\rangle|\notag\\
+&|\langle \entmap{\rho}{\rho}(x)-\entmap{\rho}{\nu_i}(x)-\entmap{\rho}{\rho}(y)+\entmap{\rho}{\nu_i}(y),\entmap{\rho}{\rho}(y)-T^\epsilon_{\rho\rightarrow \nu_j}(y)\rangle|\notag \\ \leq &\| \entmap{\rho}{\rho}(x)-\entmap{\rho}{\nu_i}(x)\|\|\entmap{\rho}{\rho}(x)-T^\epsilon_{\rho\rightarrow \nu_j}(x)-\entmap{\rho}{\rho}(y)+T^\epsilon_{\rho\rightarrow \nu_j}(y)\|\notag\\
+&\| \entmap{\rho}{\rho}(x)-\entmap{\rho}{\nu_i}(x)-\entmap{\rho}{\rho}(y)+\entmap{\rho}{\nu_i}(y)\|\|\entmap{\rho}{\rho}(y)-T^\epsilon_{\rho\rightarrow \nu_j}(y)\| \notag\\
\leq &|\Omega|\left(\| \entmap{\rho}{\rho}(x)-\entmap{\rho}{\nu_i}(x)-\entmap{\rho}{\rho}(y)+\entmap{\rho}{\nu_i}(y)\|+\|\entmap{\rho}{\rho}(x)-T^\epsilon_{\rho\rightarrow \nu_j}(x)-\entmap{\rho}{\rho}(y)+T^\epsilon_{\rho\rightarrow \nu_j}(y)\|\right)\notag\\
\leq &|\Omega|\left(2\|\entmap{\rho}{\rho}(x)-\entmap{\rho}{\rho}(y)\|+\|\entmap{\rho}{\nu_i}(x)-\entmap{\rho}{\nu_i}(y)\|+\|T^\epsilon_{\rho\rightarrow \nu_j}(x)-T^\epsilon_{\rho\rightarrow \nu_j}(y)\|\right)\notag\\ \leq &4|\Omega|M_{\Omega,\epsilon}\|x-y\|,\notag
\end{align}
where we applied Lemma \ref{lem:lipschitz} in the final line. Thus we may apply Kantorovich-Rubinstein duality once more to conclude that
\begin{align*}
& |\langle \entmap{\rho}{\rho}-\entmap{\rho}{\nu_i},\entmap{\rho}{\rho}-T^\epsilon_{\rho\rightarrow \nu_j}\rangle_{L^{2}(\mu)}-\langle \entmap{\rho}{\rho}-\entmap{\rho}{\nu_i},\entmap{\rho}{\rho}-T^\epsilon_{\rho\rightarrow \nu_j}\rangle_{L^{2}(\rho)}| \\  \leq &4|\Omega|M_{\Omega,\epsilon} OT_1(\mu,\rho) \\  \leq &4|\Omega|M_{\Omega,\epsilon} OT_2(\mu,\rho).
\end{align*}
Combining the bounds on (\ref{eqn:sinkstab1})-(\ref{eqn:sinkstab5}), we have that:
\begin{equation*}
|[S_\mu^\epsilon]_{ij}-[S_\rho^\epsilon]_{ij}|\leq |\Omega|(6K_{\Omega,\epsilon}+4M_{\Omega,\epsilon})OT_2(\mu,\rho).
\end{equation*}
Taking the maximum of these to be $H_{\Omega,\epsilon}$, we conclude.
\end{proof}

\noindent \textbf{Proof of Proposition \ref{prop:stab_analysis}:} We apply Lemma \ref{lemma: generic eigenvector bounds} with $M=A^\epsilon_\mu$ and $\hat{M}=A^\epsilon_\rho$ (note this is deterministic) and can take $\mathcal{Q}=H_{\Omega,\epsilon}OT_2(\mu,\rho)$ by Lemma \ref{lemma:matrix entry bounds}. The same holds for $S^\epsilon_\mu$. \qed

\section{IMPLEMENTATION DETAILS}

\subsection{Synthesis Implementation}\label{sec:synthesis_algo}
\begin{algorithm}
\caption{ Wasserstein gradient descent for $\phi$ (WGD)}\label{alg:WGD}
\KwData{Initial measure $\mu_0\in \mathcal{P}_2(\Omega)$;

Functional $\phi:\mathcal{P}_2(\Omega)\rightarrow \mathbb{R}$;

Step-size $\eta>0$;

Iterations $k$.}

    \vspace{2mm} 
    \hrule 
    \vspace{2mm} 

$\mathcal{T}_{[0:0]}:=\mathcal{T}_0\gets Id - \eta \nabla \delta \phi(\mu_0)$ \tcp*{Compute Wasserstein-2 gradient of $\phi$ at $\mu_0$;}
$\mu_1\gets [\mathcal{T}_{[0:0]}]_{\#}(\mu_0)$ \tcp*{Transport $\mu_0$ according to $\mathcal{T}_{[0:0]}$}

\For{$0\leq i\leq k-1$}{
    $\mathcal{T}_i\gets Id-\eta\nabla\delta\phi(\mu_i)$ \tcp*{Compute Wasserstein-2 gradient of $\phi$ at $\mu_i$}
    $\mathcal{T}_{[0:i]}\gets \mathcal{T}_i\circ \mathcal{T}_{[0:i-1]}$ \tcp*{Update by composing maps}
    $\mu_{i+1}\gets [\mathcal{T}_{[0:i]}]_{\#}(\mu_0)$ \tcp*{Transport $\mu_0$ according to composed map}
}

\vspace{1mm} 
    \hrule 
    \vspace{1mm}
\textbf{Output:} $\mu_{k}$.
\end{algorithm}

We detail the implementation of the free support synthesis algorithm used in Section \ref{sec:numericalexp} (see Algorithm \ref{alg:WGD} for pseudocode). For the case of $F^\epsilon_{\lambda,\mathcal{V}}$, this corresponds to a modified version of Algorithm 3 in \citep{cuturi2014fast}, without the density update step. More generally, we may view Algorithm \ref{alg:WGD} as a forward Euler discretization of the Wasserstein-2 gradient flow \citep{ambrosio2006gradient} for a functional $\phi: \mathcal{P}_2(\Omega)\rightarrow \mathbb{R}$ with some fixed step-size $\eta$. It is also implemented for $F^\epsilon_{\lambda,\mathcal{V}}$ for the popular optimal transport package Python Optimal Transport (POT) \citep{POT2023}. See Theorem 4.1 in \citep{shen2020sinkhorn} for a proof of convergence in the bounded setting.

We use Algorithm \ref{alg:WGD} to compute (approximate) critical points of $\mathcal{F}^\epsilon_{\lambda,\mathcal{V}}$. In each trial, we construct $\hat{\nu}_j^n$ by sampling $n=20000$ samples according to $\nu_j$. We construct $\mu_0$ by sampling an empirical measure (on $n=20000$ samples) from $Unif([0,1]^5)$. We apply Algorithm \ref{alg:WGD} to $F^\epsilon_{\lambda,\hat{\mathcal{V}}^n}$ with initial measure $\mu_0$ and $\hat{\mathcal{V}}^n:=\{\hat{\nu}_j^n\}_{j=1}^m$, with stopping criterion $\|Id-\eta\nabla \delta F^\epsilon_{\lambda,\hat{\mathcal{V}}^n} (\mu_i)\|_{L^2(\mu_i)}<10^{-7}$. We denote the output by $\mu^*$. 

\subsection{Sample Complexity Experiment Implementation Details}\label{sec:samplecomplex_implementation}
 \noindent\textbf{1-Dimensional Gaussians:}  We build a random set of reference measures \begin{equation*}\mathcal{V}=\{\mathcal{N}(M_1,\sigma^2_1),\mathcal{N}(M_2,\sigma^2_2),...,\mathcal{N}(M_m,\sigma^2_m)\}\end{equation*}
 for $m\geq 1$ where $M_j $ are i.i.d. $\sim \mbox{Unif}([a_j, b_j])$ and $\sigma_i$  i.i.d. $ \sim \mbox{Unif}([c_j, d_j])$. We then sample $\tilde{\lambda}_1,\tilde{\lambda}_2,...,\tilde{\lambda}_m$ i.i.d. $\sim \mbox{Unif}([0,1])$, and define $\lambda_j:=\tilde{\lambda}_j/\sum_{i=1}^m\tilde{\lambda}_i$ for all $j$.  We then compute the exact mean $\bar{M}$ and variance $S$ of the true minimizer $\argmin_{\mu\in \mathcal{P}_2(\mathbb{R})}F_{\lambda,\mathcal{V}}^\epsilon(\mu)$, which is known in closed form via Theorem 2 in \citep{janati2020debiased} (we solve for the variance numerically using \texttt{scipy.optimize least$\_$squares}).  We then run Algorithm \ref{alg:coefficient_recovery} over $n$ samples, with $\mu=\mathcal{N}(\bar{M},S^2)$ and $\mathcal{V}$, $\lambda$ and $\epsilon$ chosen as above to recover a set of estimated coefficients $\hat{\lambda}^n$. We compute the $\ell^2$ error between $\hat{\lambda}^n$ and $\lambda$. 

In the case of $m=2$, we sample $M_1\sim \mbox{Unif}([0,2])$, $M_2\sim \mbox{Unif}([3,5])$ and $\sigma_1,\sigma_2\sim \mbox{Unif}([2,4])$. In the case of $m=3$, we sample $M_1,M_2,\sigma_1,\sigma_2$ as before, and sample $M_3\sim \mbox{Unif}([-2,0])$ and $\sigma_3\sim \mbox{Unif}([2,4]).$ We found that the $\ell^2$-distance between the coefficients essentially does not decay in $n$ (top right). This is due to the fact that in one dimension, the matrix $A^\epsilon_{\mu}$ is often singular, which violates the conditions in Theorem \ref{thm:coeff_theorem}, leading to problems with non-unique recovery of coefficients. Despite this, we empirically show that the expected $OT_2^2$-error between the barycenters associated to $\lambda$ and $\hat{\lambda}^n$ rapidly decays as $n$ increases, implying successful recovery of a \emph{valid} set of barycentric coefficients even when they are non-unique (see Appendix \ref{sec:nonunique_coeffs}).\\ 

\noindent\textbf{5-Dimensional Gaussians:}  We work with a fixed, random choice of $\lambda$, which we obtain by sampling $\tilde{\lambda}_1,\tilde{\lambda}_2,...,\tilde{\lambda}_m$ i.i.d. $\sim \mbox{Unif}([0,1])$, and define $\lambda_j:=\tilde{\lambda}_j/\sum_{i=1}^m\tilde{\lambda}_i$. For each trial, we set $\epsilon=1$, and generate each Gaussian $\nu_j:=\mathcal{N}(M_j,A_j)$ as follows.  We sample $M_1\sim \mbox{Unif}([0.2,0.4]^5),M_2\sim \mbox{Unif}([0.4,0.6]^5)$ and $M_3\sim \mbox{Unif}([0.6,0.8]^5)$. 

For each $j$, we generate covariance matrix $\Sigma_j=Rv_jv_j^\top+0.7*Id$, where $v_j\in \mathbb{R}^5$ is generated by sampling $[v_j]_{i}\sim Unif([0.2,0.5])$ for $1\leq i\leq 5$ and $R$ is a uniformly random $d$-dimensional rotation matrix. We work with a fixed choice of coefficients $\lambda$ across all trials, which we generate by sampling $\tilde{\lambda_i}\sim \mbox{Unif}([0,1])$ and setting $\lambda_i=\tilde{\lambda}_i/\sum_{j=1}^m \tilde{\lambda}_j$.  Unlike the 1D case, we no longer have access to the closed-form solution for the entropic barycenter. We implement a free support synthesis algorithm (see details in Appendix \ref{sec:synthesis_algo}) to compute an approximate fixed point $\mu^*$. We then sample $2n$ samples from $\mu^*$ and resample $n$ samples from $\{\nu_j\}_{j=1}^m$ and use these as inputs for Algorithm \ref{alg:coefficient_recovery}, as we let $n$ range over $[10,20,40,...,10240]$.\\

\noindent{\textbf{5-Dimensional Cubes}:}  We work with a fixed, random choice of $\lambda$, which we obtain by sampling $\tilde{\lambda}_1,\tilde{\lambda}_2,...,\tilde{\lambda}_m$ i.i.d. $\sim \mbox{Unif}([0,1])$, and define $\lambda_j:=\tilde{\lambda}_j/\sum_{i=1}^m\tilde{\lambda}_i$. For each trial, we set $\epsilon=1$, and generate our reference measures using random translations of $Unif([0,2]^5).$ Specifically, we let $\nu_j=Unif([0,2]^5+M_j])$ where $M_1\sim \mbox{Unif}([0.2,0.4]^5),M_2\sim \mbox{Unif}([0.4,0.6]^5)$ and $M_3\sim \mbox{Unif}([0.6,0.8]^5)$. \\

We solve the minimization problem $\min_{\lambda\in \Delta^3}\lambda^\top \hat{A}_{\mu^*}^\epsilon \lambda$ using the \texttt{cp.Minimize} function from the convex optimization module cvx.py, and denote the output $\hat{\lambda}^n$.

\subsection{Point Cloud Classification Implementation Details}\label{sec:pointcloud implementation}

\begin{algorithm}[t!]
    \caption{Classification Experiment}\label{alg:Classification Experiment}
    \KwData{
        $m$ classes;\\
        Labeled point clouds for each class: $\mathcal{A}^y = \{P_1^y, P_2^y, \ldots, P_q^y\}$ for $1 \leq y \leq m$;\\
        Number of reference measures per class: $b$;\\
        Train-test split numbers: $n_{train} + n_{test} = q$;\\
        Regularization parameter: $\epsilon$;\\
        Functional: $\mathcal{F}^\epsilon_{\lambda, \mathcal{V}} \in \{F^\epsilon_{\lambda, \mathcal{V}}, S^\epsilon_{\lambda, \mathcal{V}}\}$;\\
        Number of iterations: $K$;\\
    }
    \vspace{2mm} 
    \hrule 
    \vspace{2mm} 

    \For{$1 \leq i \leq K$}{
        \For{$1 \leq y \leq m$}{
            Partition $\mathcal{A}^y$ into training and testing sets, $\mathcal{A}^y_{train}$ and $\mathcal{A}^y_{test}$, with $|\mathcal{A}^y_{train}| = n_{train}$ and $|\mathcal{A}^y_{test}| = n_{test}$\;
Randomly sample $b$ point clouds $\mathcal{V}^y:=\{P^y_{j_1}, P^y_{j_2}, \ldots, P^y_{j_b}\}\subset \mathcal{A}^y_{train}$\;
        }
        
            \For{$1 \leq y \leq m$}{
                \For{$Q \in \mathcal{A}^y_{test}$}{
                        
                    $\tilde{y}_Q \leftarrow$ Output of Algorithm~\ref{alg:point_cloud_classification} with inputs: unlabelled point cloud $Q$, $m$ classes, labeled point clouds $\{\mathcal{V}^y\}_{y=1}^m$, regularization parameter $\epsilon$, and functional $\mathcal{F}^\epsilon_{\lambda, \mathcal{V}}$\;
                    
                    $s_{i,Q}^y \leftarrow \begin{cases} 
                        1 & \text{if } \tilde{y}_Q = y \\
                        0 & \text{otherwise} 
                    \end{cases}$\;
                }
            }
            $s_{i}\leftarrow \frac{1}{m n_{test}}\sum_{y=1}^m\sum_{Q\in \mathcal{A}^y_{test}}s_{i,Q}^y$
        }
        
    $s \leftarrow \frac{1}{K} \sum_{i=1}^{K} s_i$\;

\vspace{1mm} 
    \hrule 
    \vspace{1mm}
    
    \textbf{Output:} Overall estimated classification accuracy $s$.
\end{algorithm}

\textbf{Analysis for the barycenter functional:}  We compute the barycentric coefficients for the unregularized barycenter functional by computing a set of maps $\{T^0_{\mu\rightarrow \nu_j}\}_{j=1}^m$, where $T^0_{\mu\rightarrow \nu_j}:=\mathbb{E}_{\zeta_j}(Y|X=x)$ is the conditional expectation of an optimal coupling $\zeta_j\in \Pi(\mu,\nu_j)$ for all $1\leq j\leq m$, which we use as inputs to define a matrix $A^0_\mu$, defined by $[A_\mu^0]_{ij}= \langle Id-T^0_{\mu\rightarrow \nu_i},Id-T^0_{\mu\rightarrow \nu_j}\rangle_{L^2(\mu)}$. We solve the corresponding optimization problem $\min_{\lambda \in\Delta^m}\lambda^\top A_\mu^0\lambda$.  \\

\noindent \textbf{Analysis for the doubly-regularized functional: }For the doubly regularized problem, we must assume that $\mu$ admits a density $P_\mu$. Then the corresponding matrix is $D^{\epsilon,\tau}_\mu$, defined by $[D_\mu^{\epsilon,\tau}]_{ij}=\langle Id-T^\epsilon_{\mu\rightarrow \nu_i}+\tau\nabla\log P_{\mu},Id-T^\epsilon_{\mu\rightarrow \nu_j}+\tau\nabla\log P_{\mu}\rangle_{L^2(\mu)}$; see Appendix \ref{sec:derive_doubly_reg} for details.  In practice, we estimate $\nabla \log P_\mu$ using the $\nu$-method implemented in \citep{zhou2020nonparametric}. Our choice of kernel is the curl-free inverse multiquadric kernel with bandwidth 5.0, and regularization strength $10^{-5}$ (see \citep{zhou2020nonparametric} and the associated implementation for details). We additionally convolved $\mu$ with isotropic Gaussian noise with standard deviation $0.05$, which empirically improved the performance of our method.

\section{RECOVERY WITH NON-UNIQUE BARYCENTRIC COEFFICIENTS}\label{sec:nonunique_coeffs}

We further study the analysis problem for three one-dimensional Gaussians. In Section \ref{sec:numericalexp}, we showed that coefficient recovery fails in this setting. We claim this is due to the fact that there are multiple sets of coefficients giving rise to the same barycenter, i.e., $A_\mu^\epsilon$ is often singular. 

Let $\mu^*$ be the barycenter with respect to the original coefficients $\lambda$ and $\hat{\mu}^n$ be the barycenter with the recovered coefficients $\hat{\lambda}^n$ as $n$ ranges over the same set of values. To estimate $\mathbb{E}(OT_2^2(\mu^*,\hat{\mu}^n))$, we estimate the mean and variance of $\hat{\mu}^n$ (denoted $\hat{m},\hat{\sigma}$) and applying the closed-form formula for the optimal transport distance between two Gaussians \citep{chen2018optimal}, which specializes in this case to $OT_2^2(\mu^*,\hat{\mu}^n)=(\bar{\mu}-\hat{m})^2+S^2+\hat{\sigma}^2-2\hat{\sigma}S$. We average this quantity over 100 trials to estimate $\mathbb{E}(OT_2^2(\mu^*,\hat{\mu}^n))$. We plot our results in Figure \ref{fig:eps2_3gaussw2}, and see that $\mathbb{E}(OT_2^2(\mu^*,\hat{\mu}^n))$ decays faster than $n^{-1}$. Hence from the perspective of barycentric coefficients, our method is successful even in the case of non-unique recovery.

\begin{figure}[t!]
\centering
\includegraphics[width=0.5\textwidth]{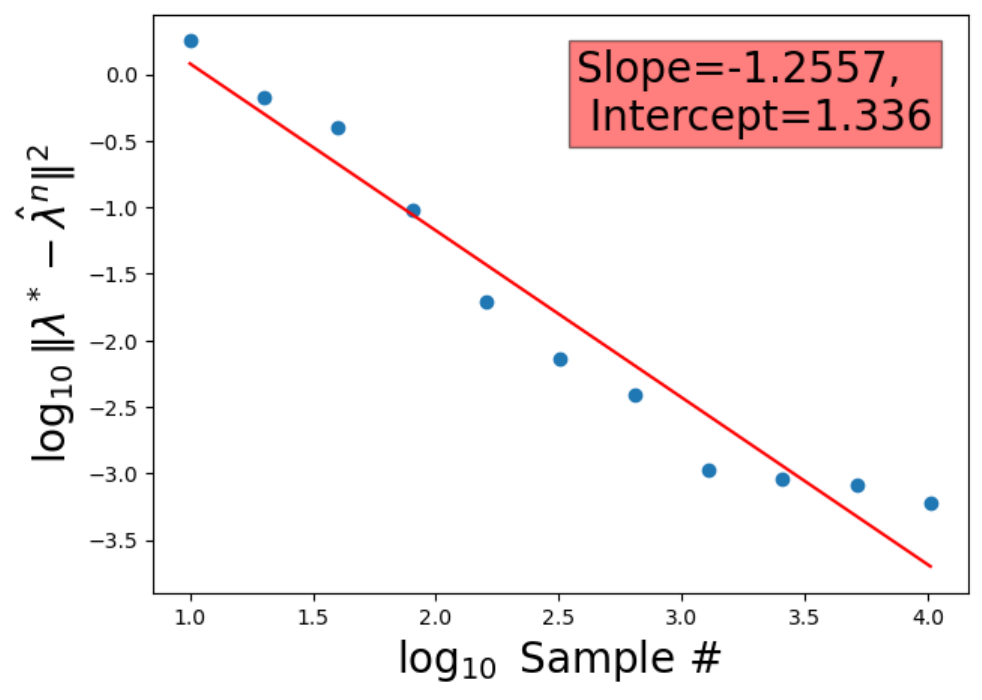}
\caption{Average $\log$ squared-$OT_2$-loss for three random 1D Gaussian measures with random weights, $\epsilon=2$.}\label{fig:eps2_3gaussw2}
\end{figure}

\section{SYNTHESIS AND ANALYSIS WITH MNIST}\label{sec:synth_and_analysis_MNIST}

We give an in-depth example of both synthesis and analysis using the entropic barycenter functional and probability measures constructed from the MNIST data set \citep{lecun1998mnist}. Given images of digits $D_1,D_2,...,D_m$, we construct reference measures $\nu_1,\nu_2,...,\nu_m$ using Algorithm \ref{alg:image_to_measure}. We fix once and for all $\epsilon=0.0002$. For a given $\lambda\in \Delta^m$ and an initial measure $\mu_0 \in \mathcal{P}_2(\mathbb{R}^d)$, we run Algorithm \ref{alg:WGD} ($k=1000$, $\eta=0.025$) to optimize $F^\epsilon_{\lambda,\mathcal{V}}$, outputting an approximate critical point $\mu^{\lambda}_k$. We then compute the matrix $A_{\mu_k}^\epsilon$ (see Proposition \ref{prop:analysis_grad}) and perform the minimization $\text{min}_{\lambda\in\Delta^3} \lambda^\top A^\epsilon_\mu\lambda$ using the \texttt{cp.Minimize} function from the convex optimization module cvx.py, and denote the output $\hat{\lambda}^i$. We perform this for $\{\lambda^i\}_{i=1}^{21}$ ranging over $21$ evenly spaced points in the simplex, which correspond to all tuples of the form $\{(\frac{i_1}{5},\frac{i_2}{5},\frac{i_3}{5})| \;i_1,i_2,i_3\in \mathbb{Z}_{\geq 0}, i+j+k=5\}$. In Figure \ref{fig:pyramid} we visualize our results. We convert $\mu_k^{\lambda^i}$ to an image using Algorithm \ref{alg:measure_to_image} which we position at $\lambda^i$ on the left hand triangle. At the corners of the triangle, we display the original images which were converted to reference measures. In the right triangle, we plot $\lambda^i$ (in red) and $\hat{\lambda}^i$ (in blue). 

From Figure \ref{fig:pyramid} we see that the fixed points produced by our synthesis method intuitively interpolate between the reference images. We note that there is some degree of shrinkage compared to the original digits, which could be due to the fact that fixed points are dominated in convex order by $\sum_{j=1}^3\lambda_j \nu_j$ by Proposition \ref{prop:domination}, where $\{\nu_j\}_{j=1}^{3}$ are the reference measures. From the triangle on the right we see that the recovered coefficients $\hat{\lambda}^i$ closely match the true coefficients $\lambda^i$. To verify this, we compute the average $\ell^2$-error $\frac{1}{21}\sum_{i=1}^{21}\|\lambda^i-\hat{\lambda}^i\|_2$ and find that it is approximately $1.269\times 10^{-5}$.

\subsection{Image-Measure Processing}
To convert images to measures and to visualize measures in $\mathbb{R}^2$ as images, we use Algorithms \ref{alg:image_to_measure} and \ref{alg:measure_to_image} respectively, which are adapted from the methods in \citep{werenski2024linearized}. Algorithm \ref{alg:image_to_measure} produces a measure by simply normalizing the intensities of an image (represented as a matrix). Algorithm \ref{alg:measure_to_image} first applies a Gaussian kernel with bandwidth $b=0.1$ to each of the data points, and builds a matrix $\mathtt{K}$ of shape $rd\times rd$, where $r=5$ is a resolution parameter and $d=28$ is the desired output size such that the $\mathtt{K}_{ij}$ is the density at $(\frac{i}{rd},\frac{j}{rd})$. The matrix is then reduced by removing entries below $\max_{ij}\texttt{K}_{ij}*\ell$, where $\ell=0.0002$ is a tolerance parameter. We then reduce $\texttt{K}$ to a $d\times d$ matrix $\texttt{I}$ and normalize intensities to return an image.
\newpage
\begin{figure}[t!]
    \centering
    \rotatebox{270}{ 
        \includegraphics[width=0.45\textwidth]{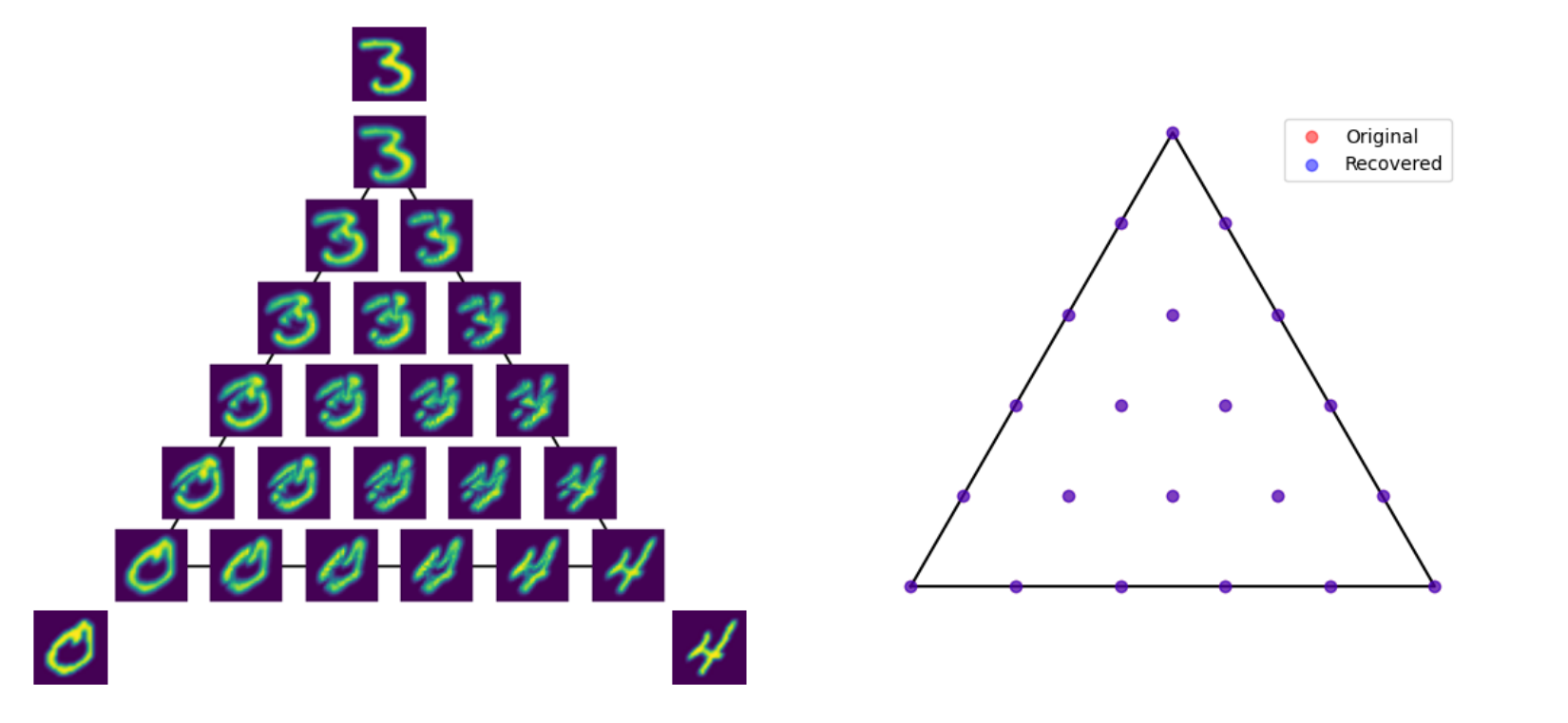} 
    }
    \caption{    \label{fig:pyramid}We show synthesized entropy-regularized barycenters on the left, with the reference measures on the corners.  On the right, original coefficients are shown as well as what our analysis algorithm recovers, showing good recovery.}

\end{figure}

\begin{algorithm}[t]
    \caption{Image to Measure Conversion}\label{alg:image_to_measure}
    \KwData{Matrix representation of an image $\mathtt{I} \in \mathbb{R}_+^{d \times d}$.}
    \vspace{2mm} 
    \hrule 
    \vspace{2mm} 
    
    Set $s \leftarrow \sum_{i,j=1}^d \mathtt{I}_{ij}$\;

    \vspace{1mm} 
    \hrule 
    \vspace{1mm}
    \textbf{Output:} $\frac{1}{s} \sum_{i,j=1}^d \mathtt{I}_{ij} \delta_{(i/d,j/d)}$\;
\end{algorithm}

\begin{algorithm}[t]
    \caption{Measure to Image Conversion}\label{alg:measure_to_image}
    \KwData{Discrete measure $\nu\in \mathcal{P}_2([0,1]^2)$\\
            Output size $d$; \\
            Resolution $r$; \\
            Bandwidth $b$; \\
            Tolerance $\ell$.
    }
    \vspace{2mm} 
    \hrule 
    \vspace{2mm} 
     Define $S=\{S_1,S_2,...,S_n\}\leftarrow \texttt{supp}(\nu)$, $S_i\in [0,1]^2$;

    Define $b_k\leftarrow \nu(S_k)$;

    Create $\mathtt{K} \in \mathbb{R}^{rd \times rd}_+$ with  
    \[
    \mathtt{K}_{ij} \leftarrow \sum_{k=1}^n b_k \exp\left ( -\frac{\|(i/rd,j/rd) - S_k\|_2^2}{b^2} \right )
    \]
    
    Set $\mathtt{K} \leftarrow  \left ( \sum_{i,j=1}^{rd} \mathtt{K}_{ij} \right )^{-1}\cdot\mathtt{K}$\;
    
    \For{ $1\leq i,j\leq rd$}{
        $\mathtt{K}_{ij} \leftarrow \mathtt{K}_{ij} \cdot \pmb{1}[\mathtt{K}_{ij} > \max_{i,j}\mathtt{K}_{ij}*\ell]$\;
    }
    
    Create $\mathtt{I} \in \mathbb{R}^{d\times d}$ with  
    \[
    \mathtt{I}_{ij} \leftarrow \sum_{k,l=1}^r \mathtt{K}_{(i-1)*r+k, (l-1)*r+l}
    \]
    \vspace{1mm} 
    \hrule 
    \vspace{1mm}
    \textbf{Output:} $\left ( \sum_{i,j=1}^d \mathtt{I}_{ij }\right )^{-1}\cdot\mathtt{I}$\;
\end{algorithm}

\newpage

\section{DERIVATION OF QUADRATIC PROGRAM FOR DOUBLY-REGULARIZED FUNCTIONAL}\label{sec:derive_doubly_reg}

In this section, we derive the form of the quadratic program used for solving the analysis problem for the $(\epsilon,\tau)$-doubly-regularized barycenter functional, $\mathcal{D}_{\lambda,\mathcal{V}}^{\epsilon,\tau}:=F^\epsilon_{\lambda,\mathcal{V}}+\tau H(\mu)$, under the assumption that the measure $\mu$ being analyzed is a critical point of the doubly-regularized functional. Here, $H$ denotes the negative differential entropy: 
\begin{equation*}\label{relative_entropy}
H(\mu) := 
\Bigg\{
    \begin{array}{lr}
        \displaystyle\int_{\mathbb{R}^d}\log\left(P_\mu(x)\right)P_\mu(x)dx, \text{if}\; \mu \; \text{is absolutely continuous with respect to the Lebesgue measure on $\Omega$}\; \\
        \;+\infty, \;\text{otherwise}
    \end{array}
\end{equation*}
where $P_\mu$ denotes the density of $\mu$.

Let $\mu\in \mathcal{P}_2(\mathbb{R}^d)$ be absolutely continuous with connected support. For convenience, we will also assume that $\mu$ has $C^\infty(\mathbb{R}^d)$ density $P_\mu$. Then we compute: 
\begin{align*} \delta\mathcal{D}^{\epsilon,\tau}_{\lambda,\mathcal{V}}(\mu)&=\delta F^\epsilon_{\lambda,\mathcal{V}}(\mu)+\delta \tau H(\mu) \\ & =\sum_{j=1}^m\lambda_j f^\epsilon_{\mu\rightarrow \nu_j}+\tau \log P_\mu.
\end{align*}
Since $\mu$ is a critical point, we have that $\|\nabla\delta \mathcal{D}^{\epsilon,\tau}_{\lambda,\mathcal{V}}(\mu)\|^2_{L^2(\mu)}=0$, which implies that:
\begin{align*}
Id-\sum_{j=1}^m\lambda_j T^\epsilon_{\mu\rightarrow \nu_j}+\tau\nabla \log P_\mu =0
\end{align*}
$\mu$-almost everywhere. We may then compute:
\begin{align*}
\|\nabla\delta \mathcal{D}^{\epsilon,\tau}_{\lambda,\mathcal{V}}(\mu)\|_{L^2(\mu)}^2 & = \|Id -\sum_{j=1}^m\lambda_j T^\epsilon_{\mu\rightarrow \nu_j}+\tau\nabla \log P_\mu \|^2_{L^2(\mu)} \\ &  =\int \langle Id-\sum_{j=1}^m\lambda_j T^\epsilon_{\mu\rightarrow \nu_j}+\tau\nabla \log P_\mu,Id-\sum_{k=1}^m\lambda_k T^\epsilon_{\mu\rightarrow \nu_k}+\tau\nabla \log P_\mu\rangle d\mu \\ & =\int \sum_{j,k=1}^m\lambda_j\lambda_k\langle Id-T^\epsilon_{\mu\rightarrow \nu_j}+\tau\nabla \log P_\mu , Id-T^\epsilon_{\mu\rightarrow \nu_k}+\tau\nabla \log P_\mu\rangle d\mu \\& = \sum_{j,k=1}^m\lambda_j\lambda_k [D^{\epsilon,\tau}_\mu]_{jk}=\lambda^\top D^{\epsilon,\tau}_\mu \lambda,
\end{align*}
where $D^{\epsilon,\tau}_\mu$ is defined by $[D^{\epsilon,\tau}_\mu]_{jk}:=\langle Id-T^\epsilon_{\mu\rightarrow \nu_j}+\tau\nabla \log P_\mu, Id-T^\epsilon_{\mu\rightarrow \nu_k}+\tau\nabla \log P_\mu\rangle_{L^2(\mu)}$. Hence if $\mu$ is a critical point for $\mathcal{D}^{\epsilon,\tau}_{\lambda,\mathcal{V}}$, then solving the analysis problem is equivalent to solving $0=\lambda^\top D^{\epsilon,\tau}_\mu \lambda$ for $\lambda\in \Delta^m$. Note that the assumed smoothness of $P_{\mu}$ and the guaranteed smoothness of the potentials, together with the connectedness of $\texttt{supp}(\mu)$ imply the reverse implication as well: $\|\nabla\delta \mathcal{D}_{\lambda,V}^{\epsilon,\tau}(\mu)\|_{L^{2}(\mu)}=0$ implies $\mu$ is a critical point.

\section{APPLICATION: POINT CLOUD COMPLETION}\label{sec:completion}
We include a preliminary demonstration of an application of the synthesis and analysis methods to point cloud completion. We work with the PointCloud-C data set \citep{ren2022benchmarking} which consists of point cloud representations of objects corrupted by various regimes of noise and occlusions. In this demonstration, we use data from the \texttt{clean}, \texttt{dropout$\_$global$\_$4} and \texttt{dropout$\_$local$\_$4} datasets. The \texttt{clean} dataset consists of point clouds with $1024$ points. The \texttt{dropout$\_$global$\_$4} dataset is obtained by selecting point clouds from the \texttt{clean} dataset and (uniformly) randomly removing $75\%$ of the data points. The \texttt{dropout$\_$local$\_$4} dataset by selecting a random set of points $x_1,x_2,...,x_\ell$ ($\ell\leq 8$) and removing the $N_i$ nearest points to each $x_i$, where $\sum_{i=1}^\ell N_i=500$ (see \citep{ren2022benchmarking} for more details). We give some examples in Figure \ref{fig:pointclouds}. The object of point cloud completion is to take a partial point cloud and to generate a new point cloud which as closely as possible matches the original point cloud. Unlike most methods in the literature, our method does not require explicit training.

\begin{figure}[t!]
    \centering
    \begin{minipage}[b]{0.3\textwidth}
        \centering
        \includegraphics[width=\textwidth]{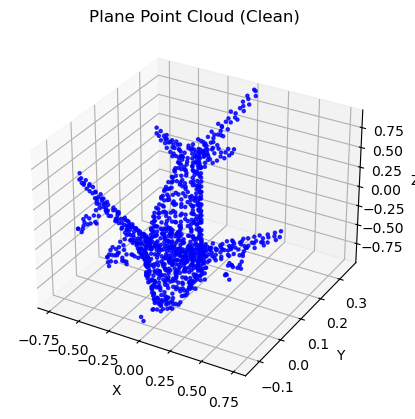}
    \end{minipage}
    \hfill
    \begin{minipage}[b]{0.3\textwidth}
        \centering
        \includegraphics[width=\textwidth]{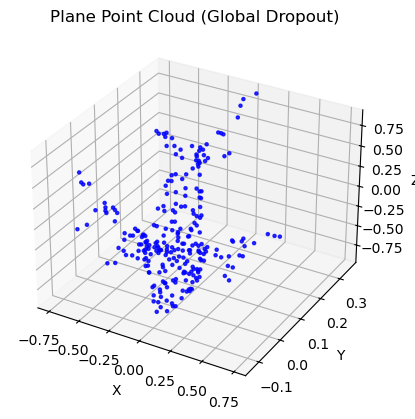}
    \end{minipage}
    \hfill
    \begin{minipage}[b]{0.3\textwidth}
        \centering
        \includegraphics[width=\textwidth]{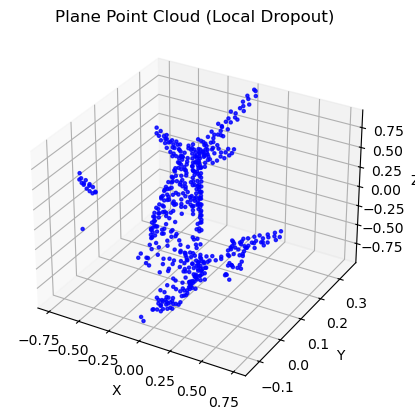}
    \end{minipage}
    \caption{(Left to right) Point clouds from \texttt{clean}, \texttt{dropout$\_$global$\_$4} and \texttt{local$\_$dropout$\_$4}.}\label{fig:pointclouds}
\end{figure}

Out method proceeds as follows.  We let $\mu_0$ be the uniform measure on a corrupted point cloud. We then select a set of random clean point clouds  $\nu_1,...,\nu_5$ from the same class of objects. We solve the analysis problem for $\mu_0$ with Algorithm \ref{alg:coefficient_recovery}, producing a set of coefficients $\lambda\in\Delta^{5}$. We then use these coefficients and reference measures to synthesize a new point cloud $\bar{\mu}_0$, using Algorithm \ref{alg:WGD}, initialized at an empirical measure (sampled from a uniform measure on the 3D cube) supported on $1024$ points. We remark that this method is similar to one developed in \citep{werenski2022measure} for image completion, with a key difference being that their method operates in a fixed-support setting, as opposed to free-support like ours. 

\begin{figure}[t!]
    \centering
    \begin{minipage}[b]{0.45\textwidth}
        \centering
        \includegraphics[width=\textwidth]{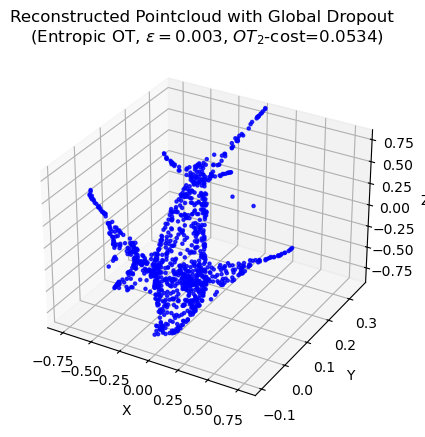}
    \end{minipage}
    \begin{minipage}[b]{0.45\textwidth}
        \centering
        \includegraphics[width=\textwidth]{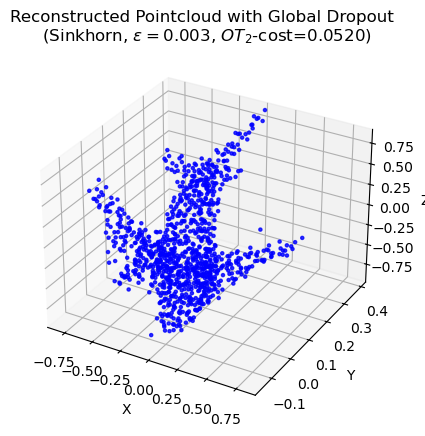}
    \end{minipage}
    \vspace{0pt} 
    \begin{minipage}[b]{0.45\textwidth}
        \centering
        \includegraphics[width=\textwidth]{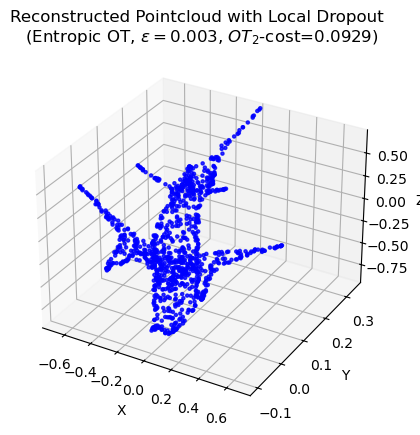}
    \end{minipage}
    \begin{minipage}[b]{0.45\textwidth}
        \centering
        \includegraphics[width=\textwidth]{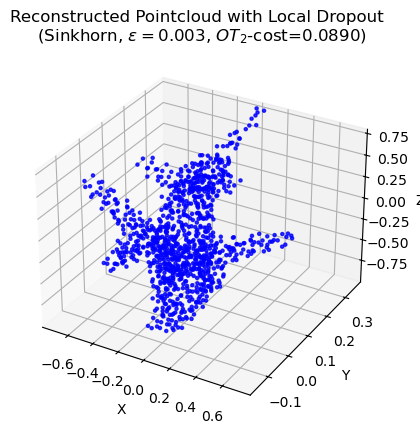}
    \end{minipage}
    \caption{(Top left) Global dropout, $F^\epsilon_{\lambda,\mathcal{V}}$ reconstruction, (top right) global dropout $S^\epsilon_{\lambda,\mathcal{V}}$ reconstruction, (bottom left) local dropout, $F^\epsilon_{\lambda,\mathcal{V}}$ reconstruction, (bottom right) local dropout $S^\epsilon_{\lambda,\mathcal{V}}$ reconstruction. $OT_2$-cost denotes the $OT_2$ distance between the (uniform measures on the) clean point cloud and the reconstruction.}
    \label{fig:pointclouds_completed}
\end{figure}

In Figure \ref{fig:pointclouds_completed} we show results for our completion method using both $F^\epsilon_{\lambda,\mathcal{V}}$ and $S^\epsilon_{\lambda,\mathcal{V}}$. We see that in all cases our methods are able to produce a completed point cloud that roughly match the original. The reconstructions using $F^\epsilon_{\lambda,\mathcal{V}}$ appear to be more singular than the original, whereas $S^\epsilon_{\lambda,\mathcal{V}}$ appear to be more diffuse. However some fine details of the original point cloud are not recovered, such as the second turbine on the left wing.  

To give a quantitative comparison, we compute the $OT_2$-distance between the (uniform measures on the) clean point cloud and the reconstruction. We see that the Sinkhorn reconstruction method performs slightly better in both cases with respect to this metric.

\end{document}